\def\1{\bm{1}}
\DeclareMathAlphabet{\mathsfit}{\encodingdefault}{\sfdefault}{m}{sl}
\SetMathAlphabet{\mathsfit}{bold}{\encodingdefault}{\sfdefault}{bx}{n}
\newcommand{ \dR }{\mathbb{R}}
\newcommand{\Exp }{ \mathbb{E} }
\newcommand{\Pg }{ p_{ \rm g } }
\newcommand{\Pd }{ p_{ \rm data } }
\newcommand{\barn }{m}
\newcommand{\tablesize }{ \scriptsize } %
\newcommand{\equationsize}{ \scriptsize } %
\newcommand{\equationsizeReg}{\footnotesize}
\newtheorem{lemma}{Lemma}
\newtheorem{prop}{Proposition}
\newtheorem{prop2}{Proposition}[section]
\newtheorem{thm}{Theorem}
\newtheorem{thm2}{Theorem}[section]
\newtheorem{Def}{Definition}[section]
\newtheorem{coro}{Corollary}[section]
\newtheorem{claim}{Claim}[section]
\newtheorem{conjecture}{Conjecture}[section]
\newtheorem{assumption}{Assumption}[section]
\newif\ifsubmit
\newcommand{\tf}[1]{}
\newcommand{\rev}[1]{}
\newcommand{\as}[1]{}
\newcommand{\ruoyu}[1]{}
\newcommand{\tf}[1]{{\color{Maroon}Tian: #1}}
\newcommand{\rev}[1]{{\color{ForestGreen} Reviewer: #1}}
\newcommand{\as}[1]{{\small{\color{blue}A: #1}}}
\newcommand{\ruoyu}[1]{{\color{red}{[Ruoyu: #1]}}}
\newif \ifaddsupplement
\newif \iflonger
\newif \ifNIPSonly
\title{Towards a Better Global Loss Landscape of GANs}
\author{%
 Ruoyu Sun, Tiantian Fang,  Alex Schwing  \\
  University of Illinois at Urbana-Champaign \\
  \texttt{ruoyus,tf6,aschwing@illinois.edu} \\
}
\begin{document}

\maketitle

\vspace{-0.5cm}

\begin{abstract}
Understanding of GAN training is still very limited. One major challenge is its  non-convex-non-concave min-max objective, which may lead to sub-optimal local minima. In this work, we perform a global landscape analysis of the empirical loss of GANs. We prove that a class of separable-GAN, including the original JS-GAN, has exponentially many bad basins which are perceived as mode-collapse. We also study the relativistic pairing GAN (RpGAN) loss which couples the generated samples and the true samples. We prove that RpGAN has no bad basins. Experiments on  synthetic data show that the predicted bad basin can indeed appear in training. We also perform  experiments to support our theory that RpGAN has a better landscape than separable-GAN. For instance, we empirically show that RpGAN performs better than separable-GAN with relatively narrow neural nets. The code is available at \url{https://github.com/AilsaF/RS-GAN}.

\end{abstract}

\section{Introduction}
Generative Adversarial Nets (GANs) \citep{goodfellow2014generative} are a successful method for learning data distributions. Current theoretical efforts to advance understanding of GANs often focus on statistics or optimization.

On the statistics side, \citet{goodfellow2014generative} built a link between the min-max formulation and the J-S (Jenson-Shannon) distance.
\citet{arjovsky2017towards}  and \citet{arjovsky2017wasserstein} 
proposed an alternative loss function based on the Wasserstein distance.  \citet{arora2017generalization} studied the generalization error and showed that both the Wasserstein distance and J-S distance are not generalizable (i.e., both require an exponential number of samples). 
Nevertheless, \citet{arora2017generalization} argue that the real metric used in practice differs from the two statistical distances, and can be generalizable with a proper  discriminator. \citet{bai2018approximability} and \citet{lin2018pacgan} analyzed the potential ``lack of diversity'': two different distributions can have the same loss, which may cause mode collapse. \citet{bai2018approximability} argue that proper balancing of generator and discriminator permits both generalization and diversity.

 On the optimization side,  cyclic behavior (non-convergence) is well recognized \cite{mescheder2018training, balduzzi2018mechanics,gidel2018negative,berard2019closer}. This is a generic issue for min-max optimization:
 a first-order algorithm may cycle around a stable point, converge very slowly or even diverge.
  The convergence issue can be alleviated by more advanced optimization algorithms such as optimism (\citet{daskalakis2017training}), averaging (\citet{yazici2018unusual}) and extrapolation (\citet{gidel2018variational}). 

Besides convergence, another general optimization challenge is to avoid sub-optimal local minima.
 It is an important issue in non-convex optimization (e.g., \citet{zhang2017hitting,sun2020optimization}), 
 and has received  great attention in matrix factorization \cite{ge2016matrix,bhojanapalli2016global,chi2019nonconvex} and supervised learning \cite{jacot2018neural, lee2019wide,allen2018convergence,zou2018stochastic,du2018gradient}.  
  For GANs, the aforementioned works \cite{mescheder2018training, balduzzi2018mechanics,gidel2018negative,berard2019closer} either analyze convex-concave games or perform local analysis. Hence they do not touch the global optimization issue of non-convex problems. 
\citet{mescheder2018training}  and   \citet{feizi2017understanding} prove global convergence only for simple settings where the true data distribution is a single point or a single Gaussian distribution. The global analysis of GANs for a fairly general data distribution is still a rarely touched direction. 

The global analysis of GANs is an interesting direction for the following reasons. \textbf{First}, from a theoretical perspective, it is an indispensable piece for a complete theory. To put our work in perspective, we 
compare representative works in supervised learning with  works on GANs in Tab.~\ref{table:compare}.
\textbf{Second},  it may help to  understand mode collapse. 
 \citet{bai2018approximability} conjectured that a lack of diversity may be caused by optimization issues, albeit convergence analysis works \cite{mescheder2018training, balduzzi2018mechanics,gidel2018negative,berard2019closer} do not link non-convergence to mode collapse. Thus we suspect that mode collapse is at least partially related to sub-optimal local minima, but a formal theory is still lacking. 
\textbf{Third}, it may help to understand the training process of GANs. Even understanding a simple 
two-cluster experiment is challenging because the loss values of min-max optimization are fluctuating during training. Global analysis can provide an additional lens in demystifying the training process.

\iffalse 
Third, judging based on BigGAN \cite{brock2018large}, 
increasing the width of deep nets can greatly improve GANs.
This is reminiscent of the popular belief that increasing the width of deep nets in image classification smoothes the landscape \cite{{livni2014computational,geiger2018jamming,li2018visualizing}}. %
We suspect that GANs  suffer from a bad landscape which may explain why increasing the width in GANs has a significant effect.  If this explanation is valid, we conjecture that a GAN loss with the better landscape can better operate with narrower deep nets, thus reducing the model size. 
\fi 
Additional related work is reviewed in  Appendix~\ref{sec: related works}.

\begin{table}[t]
\vspace{-0.5cm}
    \caption{{Comparison of theoretical works.}}
    \label{table:compare}
    \centering
    \begin{adjustbox}{max width=0.95\textwidth}
        \begin{threeparttable}
            \begin{tabular}{|c|c|c||c|c|}
    \hline
 \multirow{2}{*}{ } & \multicolumn{2}{c||}{Supervised Learning} & \multicolumn{2}{c|}{GANs}
                \\ 
    \cline{2-3} \cline{4-5}
& paper & brief description    & paper & brief description \\        
                \hline
                Generalization analysis     
             &   \cite{bartlett2017spectrally}     &
                 generalization bound for neural-nets
                     &  \cite{arora2017generalization}  & 
                generalization bound for GANs
                \\ \hline
                Convergence analysis &   \cite{reddi2018convergence} 
                 & 
                 \begin{tabular}[c]{@{}c@{}} convex problem, divergence of Adam \\
                convergence of AMSGrad \end{tabular}
                 &  \cite{daskalakis2017training}  & 
                \begin{tabular}[c]{@{}c@{}} bi-linear game, non-convergence of GDA \\
                convergence of optimistic GDA \end{tabular}
                \\ \hline
                \begin{tabular}[c]{@{}c@{}}  Global landscape \\ 
     \end{tabular}
                 &  \cite{nguyen2018loss} \cite{li2018over}    &  
                     \begin{tabular}[c]{@{}c@{}} Any distinct input data \\
            Wide neural-nets have no sub-optimal basins   \end{tabular}
                 &     \textbf{This work}  &  
                 \begin{tabular}[c]{@{}c@{}} Any distinct input data \\
              SepGAN has bad basins; RpGAN does not  \end{tabular}
                \\ \hline
            \end{tabular}
            \begin{tablenotes}
                \item$^*$  {\footnotesize {
            This table does NOT show a complete list of works. The goal is to list various types of works. Only one or two works are listed as examples of that class.
                }
                }
            \end{tablenotes}
        \end{threeparttable}
    \end{adjustbox}
    \vspace{-0.5cm}
\end{table}

\iffalse 
\begin{tabular}[c]{@{}c@{}} 
 Optimization + generalization \\ 
              (relatively strong assumptions)
              \end{tabular}
               &    \citet{brutzkus2017globally} %
               & \begin{tabular}[c]{@{}c@{}} 
               Gaussian input data, \\ 
                    global convergence of SGD     \end{tabular}
         &   \citet{farnia2018convex}   & 
         \begin{tabular}[c]{@{}c@{}} Gaussian input data, \\ 
              quadratic G, linear D \end{tabular}
               \\ \hline  
\fi

\textbf{Challenges and our solutions.}  While the idea of a global analysis is natural, there are a few  obstacles. First,  it is hard to follow a common path of supervised learning \cite{jacot2018neural, lee2019wide,allen2018convergence,zou2018stochastic,du2018gradient} 
to prove global convergence of gradient descent for GANs, because the dynamics of non-convex-non-concave games are much more complicated.
Therefore,  we resort to a  \textit{landscape analysis}. Note that our approach resembles an 
``equilibrium analysis'' in game theory. Second, it was not clear which formulation can cure the landscape issue of JS-GAN. Wasserstein GAN (W-GAN) is a candidate, but its landscape is hard to analyze due to the extra constraints. After analyzing the issue of JS-GAN, we realize that the idea of ``paring'', which is implicitly used by W-GAN, is enough to cure the issue.
This leads us to consider relativistic pairing GANs (RpGANs)
\cite{jolicoeur2018relativistic,JolicoeurMartineau2019OnRF} that couple the true data and generated data\footnote{In fact, we proposed this loss in a first version of this paper, but later found that \cite{jolicoeur2018relativistic,JolicoeurMartineau2019OnRF} considered the same loss. We adopt their name RpGAN from \cite{JolicoeurMartineau2019OnRF}.}. 
We prove that RpGANs have a better landscape than
separable-GANs (generalization of JS-GAN).
Third, it was not clear whether the theoretical finding affects practical training. We make a few conjectures based on our landscape theory
and design experiments to verify those. Interestingly, the experiments match the conjectures quite well. 
      
     \textbf{Our contributions.}
     This work provides a global landscape analysis of the empirical version of GANs. 
     Our contributions are summarized as follows: 
     \begin{itemize}[noitemsep,topsep=0pt,parsep=0pt,partopsep=0pt]
\item \textit{Does the original JS-GAN have a good landscape, provably?}
For JS-GAN \cite{goodfellow2014generative}, we prove that the outer-minimization problem  has exponentially many sub-optimal strict local minima.  Each strict local minimum corresponds to a mode-collapse situation. We also extend this result to a class of separable-GANs, covering hinge loss and least squares loss.

         \item
\textit{Is there a way to improve the landscape, provably?}
We study a class of relativistic paring GANs (RpGANs)  \cite{jolicoeur2018relativistic} that pair the true data and the generated data in the loss function. We prove that the outer-minimization problem of RpGAN has no bad strict local minima, improving upon separable-GANs. 

 \item \textit{Does the improved landscape lead to any empirical benefit?}
 Based on our theory, we predict that  RpGANs are more robust to data,
 network width and initialization than their separable counter-parts, and our experiments
 support our prediction.
  Although the empirical benefit of RpGANs was observed before \cite{jolicoeur2018relativistic}, the aspects we demonstrate are closely related to our landscape theory. In addition, using synthetic experiments we explain why
  mode-collapse (as bad basins) can slow down JS-GAN training.

     \end{itemize}

\section{Difference of Population Loss and Empirical Loss}
\label{sec: empirical and population}
\vspace{-0.1cm}

\citet{goodfellow2014generative} proved that the population
loss of GANs is convex in the space of probability densities.
We highlight that this convexity  %
highly depends on a simple property of the population loss,
which may vanish in an empirical setting.

Suppose $\Pd$ is the data distribution, $\Pg $ is a generated  distribution and
 $ D \in C_{(0,1)} (\mathbb{R}^d) $, where $ C_{(0,1)} (\mathbb{R}^d)  $ is the set of continuous functions with
 domain $ \mathbb{R}^d $ and codomain $(0,1)$.  Consider the JS-GAN formulation \citep{goodfellow2014generative}
\begin{equation}  
\begin{split}
	\min_{ \Pg  }  \phi_{\rm JS}( \Pg; \Pd ) , \text{ where } 
	\phi_{\rm JS}( \Pg ; \Pd) = \sup_{ D   }  \Exp_{x \sim \Pd ,   y \sim \Pg  } [ \log( D (x) ) + \log (1 - D( y ) ) ].   \notag 
\end{split}
\end{equation}

\begin{claim}\label{claim 1 Goodfellow}
(\cite[in proof of Prop. 2]{goodfellow2014generative})
The objective function $ \phi_{\rm JS}( \Pg; \Pd ) $  is convex in $ \Pg $.  
\end{claim}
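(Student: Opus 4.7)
The plan is to exploit the fundamental fact that a pointwise supremum of affine functions is convex. I would first fix an arbitrary discriminator $D \in C_{(0,1)}(\mathbb{R}^d)$ and analyze the inner functional
\[
F_D(\mathbb{P}_g) \;=\; \mathbb{E}_{x \sim \mathbb{P}_d}[\log D(x)] \;+\; \mathbb{E}_{y \sim \mathbb{P}_g}[\log(1 - D(y))]
\]
as a function of $\mathbb{P}_g$ alone, with $\mathbb{P}_d$ held fixed. The first term is a constant in $\mathbb{P}_g$, and the second term is an integral of the fixed function $\log(1-D(\cdot))$ against the measure $\mathbb{P}_g$. Since integration against a fixed function is a linear operation on the space of (signed) measures, $F_D$ is affine in $\mathbb{P}_g$: for any two distributions $\mathbb{P}_g^0, \mathbb{P}_g^1$ and $\lambda \in [0,1]$,
\[
F_D\bigl(\lambda \mathbb{P}_g^1 + (1-\lambda)\mathbb{P}_g^0\bigr) \;=\; \lambda F_D(\mathbb{P}_g^1) + (1-\lambda) F_D(\mathbb{P}_g^0).
\]

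Next I would invoke the general principle that the pointwise supremum of any family of convex (in particular, affine) functions is convex. Writing $\phi_{\rm JS}(\mathbb{P}_g;\mathbb{P}_d) = \sup_{D} F_D(\mathbb{P}_g)$, this immediately yields
\[
\phi_{\rm JS}\bigl(\lambda \mathbb{P}_g^1 + (1-\lambda)\mathbb{P}_g^0;\mathbb{P}_d\bigr) \;\le\; \lambda \phi_{\rm JS}(\mathbb{P}_g^1;\mathbb{P}_d) + (1-\lambda)\phi_{\rm JS}(\mathbb{P}_g^0;\mathbb{P}_d),
\]
which is exactly the claimed convexity. The argument for $\sup$-of-convex-is-convex is a two-line application of the definition: for each $D$, $F_D$ of the convex combination is bounded above by the convex combination of $F_D$ values, each of which is dominated by the corresponding supremum.

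There is essentially no serious obstacle here; the whole proof is a one-paragraph observation once one recognizes that the key structural property is the linearity of expectation in the underlying measure. The only mild subtlety worth flagging is well-definedness: one should note that the space of probability distributions on $\mathbb{R}^d$ is a convex subset of signed measures (closed under convex combinations), so the left-hand side of the convexity inequality is evaluated at a legitimate probability distribution. No measurability or integrability issue arises since $\log D$ and $\log(1-D)$ are continuous on the open interval $(0,1)$, although they need not be bounded; one can handle this by allowing $F_D$ and hence $\phi_{\rm JS}$ to take values in $(-\infty,+\infty]$, in which case the convexity inequality is interpreted in the extended sense and the argument goes through unchanged.
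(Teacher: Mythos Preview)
Your proposal is correct and is essentially identical to the paper's own argument: the paper states that the proof rests on the two facts that (i) the supremum of convex functions is convex and (ii) the inner expectation $\mathbb{E}_{x\sim\mathbb{P}_d, y\sim\mathbb{P}_g}[\log D(x)+\log(1-D(y))]$ is linear in $\mathbb{P}_g$. You have simply spelled out these two steps with slightly more care about well-definedness.
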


The proof utilizes two facts: first, the supremum of (infinitely many) convex functions is convex; second,  
$ \Exp_{x \sim \Pd ,   y \sim \Pg  } [ \log( D (x) ) + \log (1 - D( y ) ) ] $ is  a linear function of $\Pg $. The second fact is the essence of the argument, which we restate below in a more general form.

\begin{claim}\label{claim 2 any expectation}
$ \Exp_{ y \sim \Pg  } [ f^{\rm arb}( y ) ] $ is 
a linear  function of $ \Pg  $, 
where $ f^{\rm arb}( y) $ is an arbitrary function of $ y $. 
\end{claim}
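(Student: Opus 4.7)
The plan is to unpack what ``linear function of $P_g$'' means when $P_g$ is a probability distribution (rather than a vector), and then reduce the claim to linearity of integration. Concretely, I would identify the affine structure on the space of probability distributions: a convex combination $P_g = \lambda P_g^1 + (1-\lambda) P_g^2$ with $\lambda \in [0,1]$ is defined pointwise, either on measurable sets $A$ via $P_g(A) = \lambda P_g^1(A) + (1-\lambda) P_g^2(A)$ or, if densities $p_g, p_g^1, p_g^2$ exist with respect to a common reference measure, via $p_g = \lambda p_g^1 + (1-\lambda) p_g^2$. Then ``linearity'' means the map $P_g \mapsto \mathbb{E}_{y \sim P_g}[f^{\rm arb}(y)]$ respects this affine structure.

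Next, I would write the expectation as an integral, $\mathbb{E}_{y\sim P_g}[f^{\rm arb}(y)] = \int f^{\rm arb}(y) \, dP_g(y)$, and simply invoke linearity of the Lebesgue integral with respect to the measure: substituting $P_g = \lambda P_g^1 + (1-\lambda) P_g^2$ gives
\begin{equation*}
\int f^{\rm arb}(y) \, d\!\left(\lambda P_g^1 + (1-\lambda) P_g^2\right)(y) = \lambda \int f^{\rm arb}(y) \, dP_g^1(y) + (1-\lambda)\int f^{\rm arb}(y) \, dP_g^2(y),
\end{equation*}
which is exactly the required linearity. In the density-based presentation used by the rest of the paper, the same step reads $\int f^{\rm arb}(y) [\lambda p_g^1(y) + (1-\lambda) p_g^2(y)] \, dy = \lambda \int f^{\rm arb}(y) p_g^1(y)\,dy + (1-\lambda)\int f^{\rm arb}(y) p_g^2(y)\,dy$, which is elementary linearity of the ordinary integral in its integrand.

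There is essentially no technical obstacle here; the only subtlety worth flagging is what ``arbitrary'' is permitted to mean. The identity holds whenever $f^{\rm arb}$ is integrable against each $P_g^i$ (so that the expectations are well defined), and the paper's usage in Claim~\ref{claim 1 Goodfellow} takes $f^{\rm arb}(y) = \log(1 - D(y))$ with $D \in C_{(0,1)}(\mathbb{R}^d)$, in which case integrability with respect to any probability measure on $\mathbb{R}^d$ can be verified directly. I would close the proof by remarking that this linearity is the structural property driving Claim~\ref{claim 1 Goodfellow}: once one observes that $\mathbb{E}_{y \sim P_g}[f^{\rm arb}(y)]$ is affine in $P_g$, taking a pointwise supremum over $D$ of an affine-in-$P_g$ family immediately yields a convex function in $P_g$, and this is the only place where $f^{\rm arb}$ being linked to a specific loss (JS, hinge, least-squares, etc.) plays no role, which is exactly why the claim is stated for arbitrary $f^{\rm arb}$.
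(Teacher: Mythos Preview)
Your proposal is correct and matches the paper's treatment: the paper does not give a standalone proof of this claim but simply states it as the key structural fact behind Claim~\ref{claim 1 Goodfellow}, so your unpacking of ``linear in $P_g$'' as affine behavior under convex combinations and reduction to linearity of the integral is exactly the right (and only reasonable) justification.
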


Claim \ref{claim 2 any expectation} implies that 
 $ \min_{ \Pg  }  \Exp_{ y \sim \Pg  } [ f^{\rm arb}( y ) ] $ is a convex problem.
One approach to solve it is to draw finitely many samples (particles) $y_i, i=1,\dots, n$  from $\Pg$, and approximate the population loss by the empirical loss. See  Fig.~\ref{fig1:fig_two_views} for a comparison
of the probability space and the particle space.
For an arbitrarily complicated function such as $ f^{\rm arb} (y) = \sin( \| y \|^{8} + 2 \| y \|^3 + \log( \| y \|^4 + 1 ) ) $,
the population loss is convex in  $ \Pg $, but clearly the empirical loss is non-convex in $ (y_1, \dots, y_n) $.
This example indicates that studying the empirical loss may better reveal the difficulty of the problem (especially with a limited number of samples).  See Appendix~\ref{Sec: Parameter Space} for more discussions.

We focus on the empirical loss in this work. 
Suppose there are $ n $ data points $ x_1, \dots, x_n  $.  
We sample $ n $ latent variables $z_1, \dots, z_n \in  \mathbb{R}^{d_z} $  according to a  rule (e.g., i.i.d.\ Gaussian) and generate artificial data $ y_i = G( z_i ), i=1,\dots, n. $ 
The empirical version of JS-GAN addresses
$\min_{  Y }   \phi_{\rm JS}(  Y, X )$ where
\begin{equation}\label{JSGAN min-max finite, using D}
    \phi_{\rm JS}(  Y , X )  \triangleq  \sup_{ D }   
   \frac{1}{2n}  \sum_{i=1}^n [  \log( D(x_i ) ) +    \log ( 1 - D(y_i) ) ].
\end{equation}
Note that the empirical loss is considered in \citet{arora2017generalization} as well, but they study the generalization properties. We focus on the optimization properties, which is complementary to their work.

\begin{figure}[t]
\vspace{-0.5cm}
\centering
    \begin{tabular}{cc}
        \includegraphics[height = 2.5 cm]{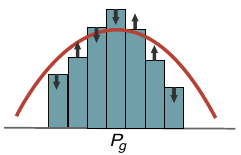}
        &
        \includegraphics[ height=3cm]{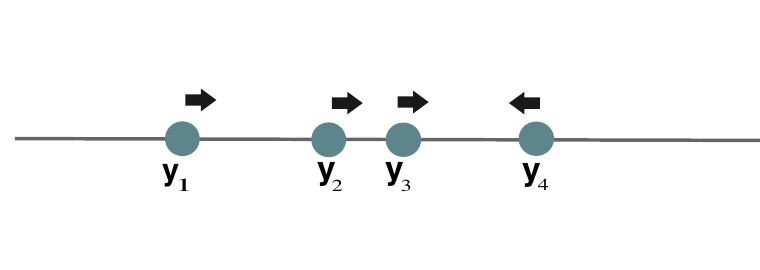}
        \\[-0.1cm]
        (a) & (b)
    \end{tabular}
    \vspace{-0.3cm}
    \caption{(a) Population loss: probability density changes;  (b) Empirical loss: samples move. 
    }
    \label{fig1:fig_two_views}
\end{figure}

 \section{Landscape Analysis of GANs: Intuition and Toy Results}
\label{sec: intuition and toy results}
\vspace{-0.1cm}
In this section, we discuss the main intuition and present  results
for a 2-point distribution.  %

\setlength{\columnsep}{10pt}
\begin{wrapfigure}{r}{0.37\textwidth}
	\vspace{-0.3cm}
	\centering
	\includegraphics[width=0.95\linewidth]{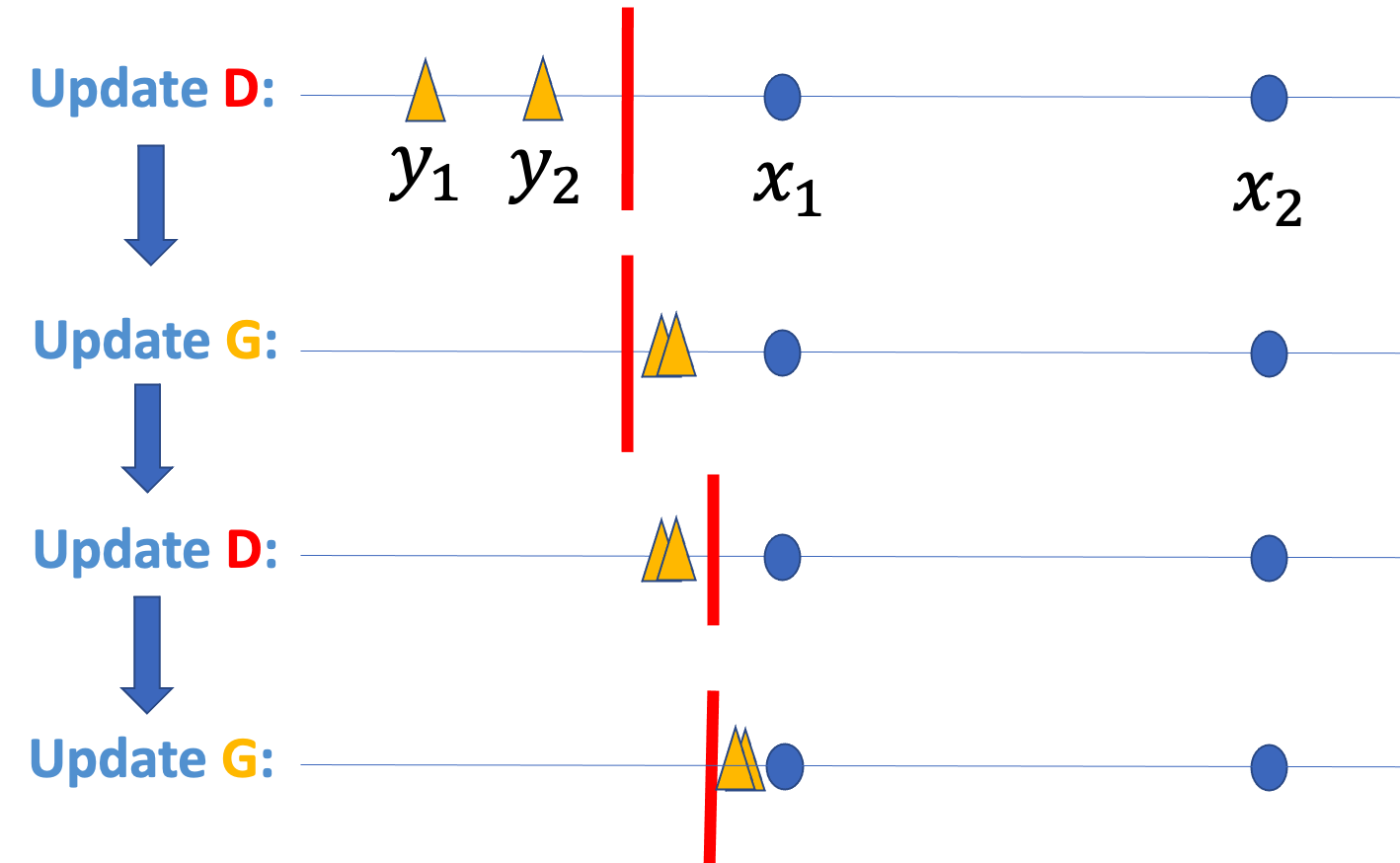}   \\
	\vspace{-0.1cm}
	\caption{{
			{ \small  Issue of separable-GAN (including JS-GAN). After updating $G$, fake data crosses  boundary to fool  $D$; after updating $D$, they are separated by $D$. Fake data may be stuck near $x_1$. 	%
		}
	}}
	\label{fig1 stuck at local min}
	\vspace{-0.5cm}
\end{wrapfigure}

\textbf{Intuition of Bad ``Local Minima'' and Separable-GAN:} 
Consider an empirical data distribution consisting
of two samples $x_1, x_2 \in \mathbb{R}. $
The generator produces two data points $ y_1, y_2  $ to match $x_1, x_2$. 
We illustrate the training process of JS-GAN in Fig.~\ref{fig1 stuck at local min}. Initially, $y_1, y_2$ are far from $x_1, x_2$, 
thus the discriminator can easily separate true data and fake data.
After the generator update,  $ y_1, y_2$ cross the decision boundary to fool the discriminator. Then, after the discriminator update, the decision boundary moves and can again separate true data and fake data. As iterations progress,
$ y_1, y_2 $ and the decision boundary may stay close to $ x_1 $, causing mode-collapse.

\iffalse 
\setlength{\columnsep}{10pt}
\begin{figure}[t]
\vspace{-0.2cm}
\centering
    \begin{tabular}{cccc}
        \includegraphics[width=0.22\linewidth]{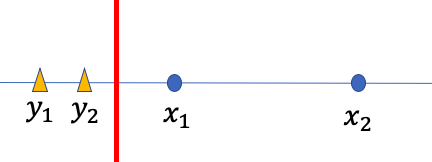} & \includegraphics[width=0.22\linewidth]{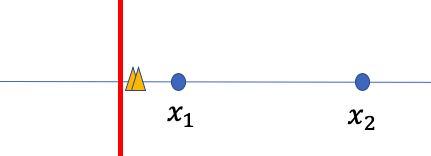} &
        \includegraphics[width=0.22\linewidth]{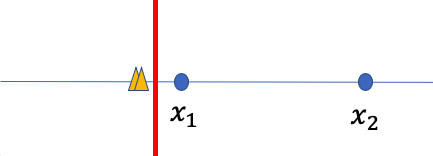} &
        \includegraphics[width=0.22\linewidth]{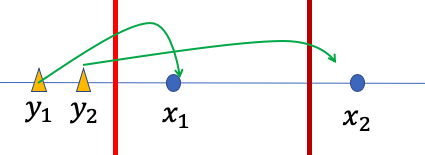}
        \\
        (a) Step 1  & (b) Step 2 & (c) Step 3 & (d) Breaking locality 
    \end{tabular}
    \vspace{-0.1cm}
    \caption{ \footnotesize{ (a)-(c) illustrate issue of JS-GAN. (a): D  separates fake and true data. (b): fake data crosses  boundary to fool  discriminator. (c): D moves a bit. (d): breaking locality by ``personalized'' judgement.  }
    }
    \label{fig1 stuck at local min old }
\vspace{-0.3cm}
\end{figure}
\fi

The intuition above is the starting point of this work. We notice that \citet{unterthiner2018coulomb,li2018implicit} presented somewhat similar intuition, and \citet{kodali2017convergence} suggested the connection between mode collapse and a bad equilibrium. Nevertheless, \citet{li2018implicit,kodali2017convergence} do not present a theoretical result, and \citet{unterthiner2018coulomb} uses a significantly
different formulation from standard GANs.
See Appendix
\ref{sec: related works} for more.

We point out that a major reason for the above issue is a single decision boundary which judges the generated samples. Therefore, this issue exists not only for the JS-GAN, but also for a large class of GANs which
we call separable-GANs: 
\begin{equation}\label{sepa GAN min-max finite}
	\min_Y \sup_f \sum_{i=1}^n h_1 ( f(x_i) ) +  h_2 ( - f(y_i ) ) ,
\end{equation}
where $h_1, h_2 $ are fixed scalar functions, such as $h_1(t) =
h_2(t) = - \log (1 + e^{-t}) $ and $ h_1(t) = h_2(t) = - \max \{ 0,  1 - t  \} $,  and  $ f $ is chosen from a function space
 (e.g., a set of neural-net functions). 
 
 \textbf{Pairing as Solution: Rp-GAN.} 
A natural solution is to use a different ``decision boundary'' for 
every generated point, e.g., pairing $x_i$ and $ y_i $,
as illustrated in Fig.~\ref{fig1b: break locality}. 

\begin{wrapfigure}{r}{0.35\textwidth}
	\centering
	\includegraphics[width=0.95\linewidth]{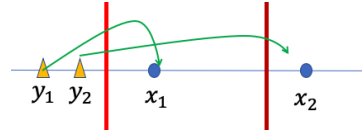}  \\
	\vspace{-0.1cm}
	\caption{{ \small Idea of RpGAN: breaking locality by ``personalized'' judgement.
	}}
	\label{fig1b: break locality}
\end{wrapfigure}
A suitable loss  is  the relativistic paring GAN (RpGAN)\footnote{
 Our motivation of considering RpGAN because it breaks locality, thus possibly admitting a better landscape. This motivation  is somewhat different from \citet{jolicoeur2018relativistic,JolicoeurMartineau2019OnRF}. }
\begin{equation}\label{rela GAN min-max finite}
  \min_Y \sup_f \sum_{i=1}^n h( f(x_i) - f(y_i ) ),
\end{equation}
where $h$ is a fixed scalar function and $ f $ is chosen from a function space. RS-GAN (relative standard GAN) is a special case where $ h( t ) = -\log( 1 + e^{ - t })$.
More specifically, RS-GAN addresses $\min_{  Y }   \phi_{\rm RS}(  Y, X )$ where
\begin{equation}\label{CPGAN min-max finite}
    \phi_{\rm RS}(  Y , X )  \triangleq  \sup_{  f }   \frac{1}{ n }  \sum_{i = 1}^n \log \frac{1}{  1 \!+\!   \exp( f  ( y_i )   \!-\!  f (  x_i )    )     )  }.
\end{equation}
W-GAN \cite{arjovsky2017towards} can be viewed as a variant of RpGAN where $ h( t ) = t $, with extra Lipschitz constraint.

We wonder how the issue of seperable-GANs
relates to ``local minima'' and  how  ``pairing'' helps. 
We present results for  JS-GAN and RS-GAN for the two-point case below.

\iffalse 
For the two-point example we just described, the RS-GAN discriminator provides two boundaries, one at $ -0.2 $ and one at $ 0.5 $. Consequently, $ y_1 $ will try  to cross $ -0.2$ to approach $x_1 = 0$, and $ y _2 $ will try to cross $ 0.5 $ to approach
$x_2 = 1$. Thus it can better learn the true distribution. 
Therefore, RS-GAN is adopting a ``personalized criterion'' for the generated data.
Following this intuition, we will rigorously prove next that 
RS-GAN has a better landscape than JS-GAN.  
\fi

\iffalse 
We remark that a similar intuition was mentioned
in \citet{goodfellow2016deep} that
``the generator is thus asked to map every $ z$ value to the single
$x$  coordinate that the discriminator believes is most likely to be real rather than fake'', 
\fi

\textbf{Global Landscape of 2-Point Case:} %
\iffalse 

\fi 
\iffalse 
Consider two samples $z_1, z_2 $, and the task of empirical
optimization is to learn $ G $ such that $ G( z_1 ) = y_1 , G(z_2) = y_2 $ match  $  x_1, x_2   $. We consider the function space optimization,
thus we consider optimization over $y_1, y_2 $. 
\fi 
Depending on the positions of $y_1, y_2$, there are four 
 states $ s_0 , s_{ 1 \text{a} }, s_{ 1 \text{b} }, s_2  $. 
 They represent the four cases
$   |  \{ x_1, x_2  \}  \!\cap\!  \{ y_1, y_2  \}   | = 0 $, 
 $  y_1 = y_2 \in  \{ x_1, x_2  \} $, 
$   |  \{ x_1, x_2  \}  \!\cap\!  \{ y_1, y_2  \}   | \!\!=\!\! 1, $
and $   \{ x_1, x_2  \}  \!=\! \{ y_1, y_2  \}   $ respectively.
Training often starts from the ``no-recovery'' state $ s_0 $,
and ideally should end at the ``perfect-recovery'' state $ s_2 $. %
There are two intermediate states: $ s_{1a}$ means all generated points
fall into one mode (``mode collapse'');
$s_{1b} $ means one generated point is the true data point while the other
is not a desired data point, which we  call ``mode dropping''\footnote{Both may be called mode collapse. Here we differentiate ``mode collapse'' and ``mode dropping''.}.
The first three states can transit to each other (assuming continuous
change of $Y$), but only $ s_{ 1 \text{b} }  $ can transit to $ s_2 $.
We illustrate the landscape of  $ \phi_{\rm JS}( Y; X ) $
 and $ \phi_{\rm RS}(Y; X) $ in Fig.~\ref{fig1_GAN_landscape},
by indicating the values in different states.
The detailed computation is given next.

\iffalse 
\begin{figure}[t]
\vspace{-0.5cm}
\centering
    \begin{tabular}{cc}
        \includegraphics[width=0.35\linewidth]{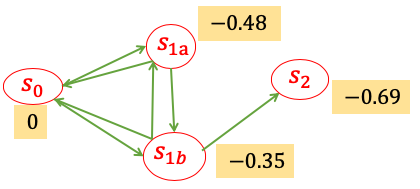} & \includegraphics[width=0.35\linewidth]{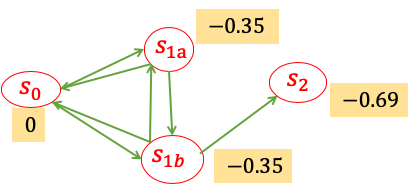} \\
        (a) JS-GAN & (b) RS-GAN
    \end{tabular}
    \vspace{-0.3cm}
    \caption{The four states with loss values. Left: JS-GAN.
    Right: RS-GAN. $s_0$ means recovering no data point, the starting state. $ s_{ 1 \text{a} } $  means $  y_1 = y_2 \in  \{ x_1, x_2  \} $,
a mode collapse pattern. $  s_{ 1 \text{b} }  $ means 
$   |  \{ x_1, x_2  \}  \!\cap\!  \{ y_1, y_2  \}   | \!\!=\!\! 1 $,
a mode dropping pattern. $  s_{ 2 }  $ means recovering both data points, a desired destination. The numbers are their outer objective value $ \phi ( Y; X ) = \sup_{ f } L(Y ,  f; X) . $ For JS-GAN, the outer loss function is $ \phi_{\rm JS}(Y; X) $; for RS-GAN, the outer loss function is $ \phi_{\rm RS}(Y; X) $. }
    \label{transition graph}
\vspace{-0.3cm}
\end{figure}

\fi

\textbf{JS-GAN 2-Point Case:} 
The range of $ \phi_{\rm JS}( Y , X ) $ is $ [ - \log 2, 0 ] $.
The value for the four states are: 
\begin{claim}\label{2-dim GAN all values}       %
The minimal value of $\phi_{\rm JS}(  Y , X ) $ is $ - \log 2 $, achieved at $ \{ y_1, y_2  \}  = \{ x_1, x_2  \} $. 
    $$
   \phi_{\rm JS}(Y, X) =  \begin{cases}
    - \log 2 \approx -0.6931    %
    &      \hspace{-0.2cm}\text{if }  \{ x_1, x_2  \}  \!=\! \{ y_1, y_2  \},   \\
    -\log 2 /2   \approx   -0.3467   %
    &     \hspace{-0.2cm}\text{if }     |  \{ x_1, x_2  \}  \!\cap\!  \{ y_1, y_2  \}   | \!\!=\!\! 1, \\
\frac{1}{4} (  2 \log 2 - 3 \log 3 )  \approx  -0.4774     %
    &      \hspace{-0.2cm}\text{if }    y_1 = y_2 \in  \{ x_1, x_2  \} ,    \\
    0       
    &     \hspace{-0.2cm}\text{if }     |  \{ x_1, x_2  \}  \!\cap\!  \{ y_1, y_2  \}   | \!\!=\!\! \emptyset.
    \end{cases}  
    $$
\end{claim}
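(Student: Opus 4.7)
\textbf{Proof plan for Claim \ref{2-dim GAN all values}.}

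The plan is to exploit the fact that the discriminator $D$ is an unconstrained continuous $(0,1)$-valued function, so the inner supremum decouples across distinct points in the finite set $S=\{x_1,x_2,y_1,y_2\}$. First I would establish a ``pointwise separation'' lemma: since $S\subset\R^d$ is a finite set of isolated points, for any choice of target values $v_z\in(0,1)$, $z\in S$, there is a continuous $D:\R^d\to(0,1)$ with $D(z)=v_z$ (construct via bump functions around each $z$). Consequently
\begin{equation*}
\sup_{D\in C_{(0,1)}(\R^d)} \sum_{i=1}^2\bigl[\log D(x_i)+\log(1-D(y_i))\bigr]
=\sum_{z\in S}\sup_{v\in(0,1)}\bigl[a_z\log v + b_z\log(1-v)\bigr],
\end{equation*}
where $a_z,b_z$ are the multiplicities of $z$ in $(x_1,x_2)$ and $(y_1,y_2)$ respectively. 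For $a_z,b_z>0$, the inner sup is attained at $v=a_z/(a_z+b_z)$ with value $a_z\log\frac{a_z}{a_z+b_z}+b_z\log\frac{b_z}{a_z+b_z}$; for $a_z=0$ (resp.\ $b_z=0$) the sup equals $0$, approached by $v\to 0^+$ (resp.\ $v\to 1^-$).

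Next I would case-split on the configuration and tabulate the multiplicities $(a_z,b_z)$:
\begin{itemize}[noitemsep,topsep=0pt,parsep=0pt,partopsep=0pt]
\item $s_2$: two shared locations, each with $(1,1)$; each contributes $-2\log 2$, total $-4\log 2$.
\item $s_{1b}$: one shared location with $(1,1)$ contributing $-2\log 2$, plus one ``lone $x$'' with $(1,0)$ and one ``lone $y$'' with $(0,1)$ each contributing $0$; total $-2\log 2$.
\item $s_{1a}$: one shared location with $(1,2)$ (one true, two coincident fakes) contributing $\log\frac{1}{3}+2\log\frac{2}{3}=2\log 2-3\log 3$, plus one lone $x$ contributing $0$; total $2\log 2-3\log 3$.
\item $s_0$: no shared location; every point has $a_z=0$ or $b_z=0$, so the total sup is $0$.
\end{itemize}
Dividing each total by $2n=4$ recovers the four values in the claim. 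The global minimum $-\log 2$ is then read off by comparing the four quantities numerically, which also pins down $\{y_1,y_2\}=\{x_1,x_2\}$ as the unique minimizing configuration.

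The only non-mechanical step is justifying the decoupling of the supremum over a continuous-function class into a sum of pointwise one-dimensional suprema. This is essentially the only place where the structure of $C_{(0,1)}(\R^d)$ enters, and it is handled by the bump-function construction above; once it is in place, everything reduces to optimizing a concave univariate function at each site. I expect the residual subtlety to be that in the $a_z=0$ or $b_z=0$ cases the optimal value is a supremum rather than a maximum (since $D$ cannot attain $0$ or $1$), but this does not affect the stated values of $\phi_{\rm JS}$, which are suprema by definition.
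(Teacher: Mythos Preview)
Your proposal is correct and follows essentially the same route as the paper: a case split on the configuration $(y_1,y_2)$, decoupling the supremum over $D$ into independent one-dimensional optimizations at each distinct site, and reading off the optimal $D$-values ($1/2$, $1/3$, or limits $0,1$) in each case. The only difference is cosmetic: you package the decoupling as a preliminary ``pointwise separation'' lemma with an explicit bump-function justification, whereas the paper simply writes out the four objectives and optimizes term by term without stating the lemma.
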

We  illustrate the landscape of  $  \phi_{\rm JS}(Y, X) $ in Fig.~\ref{fig1_GAN_landscape}(a).
As a corollary of the above claim, the outer optimization  of the original GAN has a bad strict local minimum at state $ s_{\rm 1a} $ (a mode-collapse).
\begin{coro}\label{coro of bad basin in GAN}
    $ \bar{Y} = (   x_1, x_1  ) $   is a  sub-optimal strict local-min of the function $  g(Y) =     \phi_{\rm JS}(Y, X)  .  $ 
\end{coro}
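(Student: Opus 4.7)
The plan is to reduce the claim to Claim 2 by identifying which ``state'' each point in a small neighborhood of $\bar{Y} = (x_1, x_1)$ occupies, and then to observe that $\bar{Y}$'s state value is strictly smaller than any neighbor's. First, I would apply Claim 2 to read off $\phi_{\rm JS}(\bar{Y}, X) = \frac{1}{4}(2\log 2 - 3\log 3) \approx -0.4774$ (the value associated with state $s_{1\text{a}}$), and record that this is strictly larger than the global minimum $-\log 2 \approx -0.6931$, so $\bar{Y}$ is sub-optimal. Thus only the local-minimum property requires actual argument.

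Next, I would establish the strict local minimum property by a case analysis on a small ball around $\bar{Y}$. Choose $\epsilon < \tfrac{1}{2}|x_1 - x_2|$ (assuming the data is non-degenerate, i.e., $x_1 \neq x_2$) and take an arbitrary $Y' = (y_1', y_2')$ with $\|Y' - \bar{Y}\| < \epsilon$ and $Y' \neq \bar{Y}$. The key observation is that the choice of $\epsilon$ forbids $y_i'$ from coinciding with $x_2$, so the intersection cardinality $|\{x_1, x_2\} \cap \{y_1', y_2'\}|$ is controlled only by how many of $y_1', y_2'$ equal $x_1$. This leaves three cases: (i) exactly one of $y_i'$ equals $x_1$, placing $Y'$ in state $s_{1\text{b}}$ with value $-\tfrac{\log 2}{2} \approx -0.3467$; (ii) neither equals $x_1$, placing $Y'$ in state $s_0$ with value $0$; (iii) both equal $x_1$, which is excluded since that is $\bar{Y}$ itself. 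In all admissible cases, Claim 2 gives $\phi_{\rm JS}(Y', X) > \phi_{\rm JS}(\bar{Y}, X)$, which is exactly the definition of a strict local minimum.

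The main obstacle, if any, is not computational but bookkeeping: ensuring the case analysis is exhaustive and that the ``small enough'' $\epsilon$ prevents unintended collisions of some $y_i'$ with $x_2$ (which would otherwise put $Y'$ into state $s_{1\text{b}}$ or $s_2$ and confuse the comparison). There is no need to recompute suprema over discriminators, since Claim 2 already provides $\phi_{\rm JS}$ pointwise on all four states. As a side remark, the argument is symmetric in $\{x_1, x_2\}$ and in the coordinates of $Y$, so $(x_2, x_2)$ is also a strict local minimum by the same reasoning, foreshadowing the exponentially-many-bad-basins phenomenon that the paper develops for the general $n$-point case.
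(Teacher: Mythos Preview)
Your proposal is correct and follows essentially the same argument as the paper's own proof: both fix a radius small enough to prevent any perturbed coordinate from hitting $x_2$, then perform the same two-case analysis (exactly one coordinate remains at $x_1$, giving value $-\tfrac{1}{2}\log 2$; neither does, giving value $0$) and compare against $\phi_{\rm JS}(\bar Y,X)\approx -0.4774$ from Claim~\ref{2-dim GAN all values}. Your explicit mention of sub-optimality and the side remark about symmetry are nice additions but do not change the structure of the argument.
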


\textbf{RS-GAN 2-Point Case:} 
The range is still  $ \phi_{\rm RS}(  Y , X ) \in [ - \log 2,  0 ] $. The values are:
\begin{claim}\label{claim: 2-dim RS-GAN all values}
The minimal value of $\phi_{\rm RS}(  Y , X ) $ is $ - \log 2 $, achieved at $ \{ y_1, y_2  \}  = \{ x_1, x_2  \} $.   In addition,
$$
\phi_{\rm RS}(  Y , X )  =  \begin{cases}
- \log 2 \approx -0.6931                 &      \text{if }  \{ x_1, x_2  \}  = \{ y_1, y_2  \} ,  \\
- \frac{1}{2}   \log 2   \approx -0.3466         &     \text{if }    | \{ i: \exists j, \text{ s.t. } x_i  = y_j  \} |,       \\
0          &     \text{otherwise. }
\end{cases}  
$$
\end{claim}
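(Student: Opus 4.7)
The plan is to reduce the functional supremum to a scalar optimization by passing to the ``relative scores'' $a_i := f(x_i) - f(y_i)$, and then to case-analyze on which coincidences among $\{x_1, x_2, y_1, y_2\}$ force $f$-values to coincide. Writing $\sigma(t) = 1/(1 + e^{-t})$, the inner objective becomes
$$L(f) = \tfrac{1}{2}\bigl[\log \sigma(a_1) + \log \sigma(a_2)\bigr].$$
Since $f$ ranges over a sufficiently rich function space (e.g.\ continuous functions on $\mathbb{R}^d$), the set of achievable $(a_1, a_2) \in \mathbb{R}^2$ depends only on the equality pattern of the four arguments: every argument that appears only once contributes an independent scalar $f$-value, while each coincidence imposes a linear equality on $(a_1, a_2)$. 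I would first establish this reduction and then invoke it in each of the four states.

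For state $s_2$ ($\{y_1, y_2\} = \{x_1, x_2\}$), I split into the aligned sub-case $y_i = x_i$, which forces $a_1 = a_2 = 0$ and so $L \equiv -\log 2$, and the swap sub-case $y_1 = x_2,\, y_2 = x_1$, which imposes the coupling $a_1 = -a_2$. I would then maximize $g(t) := \log\sigma(t) + \log\sigma(-t)$ over $t \in \mathbb{R}$, using $g'(t) = \sigma(-t) - \sigma(t)$ to locate the unique critical point $t = 0$ with value $g(0) = -2\log 2$, again yielding $\phi_{\rm RS} = -\log 2$. For state $s_1$ (covering both $s_{1\text{a}}$ and $s_{1\text{b}}$), exactly one of $a_1, a_2$ is forced to $0$ while the other is unconstrained, so the supremum of $\log\sigma(\cdot)$ over $\mathbb{R}$ equals $0$ (attained only in the limit), giving $\phi_{\rm RS} = \tfrac{1}{2}(-\log 2 + 0) = -\tfrac{1}{2}\log 2$. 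For state $s_0$ (no coincidences), $a_1$ and $a_2$ are independently free, so the supremum is $0$.

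The main obstacle is the swap sub-case of $s_2$, where the pair $(a_1, a_2)$ is tied by $a_1 = -a_2$ and one has to verify that this coupling still permits attaining the optimum $-\log 2$; this reduces to the elementary observation that $g$ is strictly concave (since $g''(t) = -2\sigma(t)\sigma(-t) < 0$), so $g(0) = -2\log 2$ is the unique maximum. A secondary subtlety worth noting is that in states $s_0$ and $s_1$ the supremum is attained only as $a_i \to +\infty$, so it is a $\sup$ rather than a $\max$; this distinction will matter later when one argues about strict local minima of $\phi_{\rm RS}$. Combining the four computations yields the piecewise formula, and since only state $s_2$ attains $-\log 2$, the minimum claim follows.
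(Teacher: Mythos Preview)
Your approach is essentially the same as the paper's: both reduce the functional $\sup$ to the achievable set of score differences $(a_1,a_2)$ and then case-analyze on the coincidence pattern among $\{x_1,x_2,y_1,y_2\}$. Your concavity argument for the swap sub-case ($g''(t)=-2\sigma(t)\sigma(-t)<0$) is a clean justification where the paper simply asserts the value.

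There is, however, one genuine gap in your treatment of $s_{1\text{b}}$. You claim that in state $s_1$ ``exactly one of $a_1,a_2$ is forced to $0$ while the other is unconstrained.'' This is correct only when the single coincidence is \emph{aligned} ($y_i=x_i$ for some $i$). But $s_{1\text{b}}$, as defined via $|\{x_1,x_2\}\cap\{y_1,y_2\}|=1$, also contains the \emph{cross-paired} configurations, e.g.\ $y_1=x_2$ and $y_2\notin\{x_1,x_2\}$. In that sub-case $a_1=f(x_1)-f(x_2)$ and $a_2=f(x_2)-f(y_2)$; with three distinct arguments $x_1,x_2,y_2$ the pair $(a_1,a_2)$ is unconstrained in $\mathbb{R}^2$, so $\phi_{\rm RS}=0$, not $-\tfrac12\log 2$. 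Your own reduction principle (``each coincidence imposes a linear equality'') would have caught this had you applied it here: the single coincidence $y_1=x_2$ imposes no constraint of the form $a_i=0$. The paper's proof sidesteps this by partitioning instead according to $m_i:=\mathbf{1}[y_i=x_i]$, so that cross-paired $s_{1\text{b}}$ configurations land in Case~3 (value $0$). This gap does not affect the conclusion about the global minimum, but it does affect the piecewise formula; note that the middle condition in the claim as stated is itself imprecise, and the paper's proof implicitly uses the aligned-pairing criterion rather than the set-intersection one.
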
 
We  illustrate  $  \phi_{\rm RS}( Y , X) $ in Fig.~\ref{fig1_GAN_landscape}(b). Importantly, note that the only basin is the global minimum. In contrast, the landscape of JS-GAN contains a bad basin at a mode-collapsed pattern. 

The proofs of Claim \ref{2-dim GAN all values} and
Claim \ref{claim: 2-dim RS-GAN all values} are given
in Appendix \ref{appen: toy proof}. 
We briefly explain the main insight provided by these proofs.
For the mode-collapsed pattern $s_{\rm 1a}$,
the loss value of JS-GAN is
$ - \frac{1}{4}  \min_{s, t} [ \log ( 1 + e^{-t} ) +  2 \log ( 1 + e^t)
 + \log ( 1 + e^{-s} ) ]  
 =  \frac{1}{4} ( \log \frac{1}{3} + 2 \log \frac{2}{3})  \approx -0.48
\neq - \frac{ r }{ 2 } \log 2 $ for any integer $r$. 
This creates an ``irregular'' value among other loss values of 
the form $ - \frac{ r }{ 2 } \log 2 $. 
In contrast,  for pattern $s_{\rm 1a}$,
the loss value of RS-GAN is $ - \frac{1}{2}\min_{s, t}
[ \log ( 1 + e^{ t - t }  ) + \log (1 + e^{ t - s } ) ] =  -\frac{1}{2} \log 2 $, which is of the form $ - \frac{ r }{ 2 } \log 2 $. 
Therefore, for the $2$-point case, RS-GAN has a better landscape because it avoids the ``irregular'' value  of JS-GAN due to its ``pairing''.
 This insight is the foundation of the general theory presented
 in the next section.

\begin{figure}[t]
\vspace{-0.5cm}
\centering
    \begin{tabular}{cc}
        \includegraphics[width=0.33\linewidth]{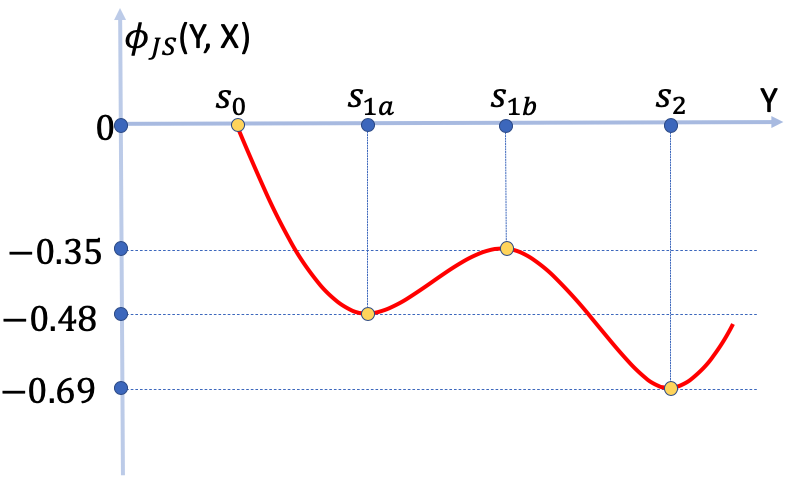} & \includegraphics[width=0.35\linewidth]{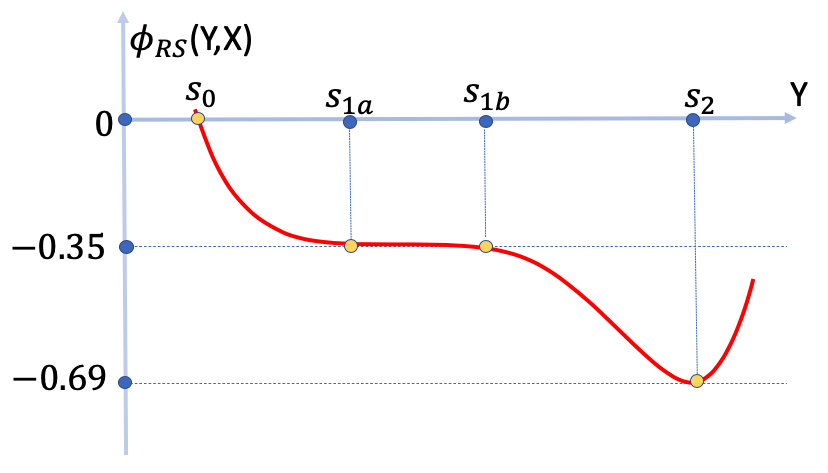} \\[-0.5cm]
        {\footnotesize (a) JS-GAN} &  {\footnotesize (b) RS-GAN}
    \end{tabular}
    \vspace{-0.3cm}
    \caption{ \small  Landscape for GAN outer optimization  $ \min_Y  \phi(Y, X) $.
        It is not a rigorous figure because:
        (i) there are only four possible values, thus the function is piece-wise linear while we use smooth curves for accessibility.    %
        (ii) the landscape should be two-dimensional, but 
        we  illustrate them in 1D space. 
    Nevertheless, it is still useful for understanding
    GAN training, as discussed later in Section \ref{sec: case study}
    and Appendix~\ref{subsec: details of experiments}.
    }
    \label{fig1_GAN_landscape}
\vspace{-0.3cm}
\end{figure}

\vspace{-0.1cm}

\section{Main Theoretical Results}
\label{subsec: main results}

\subsection{Landscape Results in Function Space} 
We present our main theoretical results, extending the landscape results from $ n = 2 $ to general $ n $. 

Denote 
$\xi( m ) \triangleq  \sup_{ t \in \mathbb{R} } ( h_1 (t) + m h_2 ( - t) ) $. 
\begin{assumption}\label{assumption 1}
 $ \sup_{t \in \mathbb{R} } h_1 (t) =  \sup_{t \in \mathbb{R} } h_2 (t)  =  0 $.  
\end{assumption}

\begin{assumption}\label{assumption 2}
   $     \xi( m ) > m \xi(1) ,  ~ \forall ~ m \in [2, n].  $
\end{assumption}

\begin{assumption}\label{assumption 3}
 $ \xi (m) < \xi (m - 1), ~ \forall ~  m \in [1, n]  $.
\end{assumption}

It is easy to prove that under Assumption \ref{assumption 1}, 
$ \xi(m-1) \geq \xi(m) \geq m \xi(1)$ 
always holds.
Assumption \ref{assumption 2} and  Assumption \ref{assumption 3}
require strict inequalities, thus do not always hold (e.g., for constant functions).
Nevertheless, most non-constant functions satisfy these assumptions. 

The separable-GAN (SepGAN) problem (empirical loss, function space) is
\begin{equation}\label{sep GAN def}
 \min_{ Y \in \mathbb{R}^{d \times n}  }  g_{\rm SP}( Y ) , \text{ where }   g_{\rm SP}( Y )  = \frac{1}{2n }  \sup_{  f \in C( \mathbb{R}^d ) }
       \sum_{i = 1}^n    [  h_1 (  f( x_i ) ) + h_2  ( - f(y_i ) )  ].
\end{equation}

\begin{thm}\label{prop: GAN all values, extension}
 Suppose   $x_1 , x_2, \dots, x_n \in \mathbb{R}^d $ are distinct.
Suppose  $h_1, h_2 $ satisfy Assumptions \ref{assumption 1},
\ref{assumption 2} and \ref{assumption 3}. Then 
for separable-GAN loss $ g_{\rm SP}( Y )$  defined in Eq.~\eqref{sep GAN def}, we have: 
    (i)    The global minimal value is 
    $ - \frac{1}{2} \sup_{t \in \mathbb{R} } ( h_1(t) + h_2(-t))   $, which
    is achieved iff  $ \{   y_1  ,  \dots, y_n  \}  = \{ x_1, \dots, x_n  \}  $. 
    (ii)    If $y_i \in \{ x_1, \dots, x_n  \}, i\in\{1,2, \dots, n\}  $ and $y_i = y_j $ for some $ i \neq j$, then $ Y $ is a sub-optimal strict local minimum. Therefore, $ g_{\rm SP}( Y )$  has  $ (  n^n - n !  )$  sub-optimal strict local minima. 
\end{thm}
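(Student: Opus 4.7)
The plan is to reduce $g_{\rm SP}(Y)$ to a closed-form sum over the multiplicities $m_i = |\{j : y_j = x_i\}|$, and then read both parts off that formula.

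\textbf{Step 1 (decoupling the inner sup).} Because $f \in C(\mathbb{R}^d)$ can take arbitrary prescribed values at any finite set of distinct points (via continuous bump-function interpolation), the supremum decouples across the distinct points of $\{x_i\} \cup \{y_j\}$. The contribution at $x_i$ is $\sup_t[h_1(t) + m_i h_2(-t)] = \xi(m_i)$, while a point that is some $y_j$ but no $x_i$ contributes $\sup_t m\,h_2(-t) = 0$ by Assumption~\ref{assumption 1}. One arrives at
\[
g_{\rm SP}(Y) = \frac{1}{2n}\sum_{i=1}^n \xi(m_i), \qquad \textstyle\sum_i m_i \le n, \quad m_i \in \{0,1,\ldots,n\}.
\]

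\textbf{Step 2 (part (i)).} Assumption~\ref{assumption 1} yields both $\xi(0) = 0$ and the baseline inequality $\xi(m) \ge m\xi(1)$ noted after Assumption~\ref{assumption 3}. Assumption~\ref{assumption 3} then forces $\xi(1) < 0$, and Assumption~\ref{assumption 2} strengthens the baseline to strict inequality for $m \ge 2$, so $\xi(m) = m\xi(1)$ iff $m \in \{0,1\}$. If $\sum_i m_i = n$, summing gives $\sum_i \xi(m_i) \ge n\xi(1)$ with equality iff every $m_i \in \{0,1\}$, which (since there are $n$ of them summing to $n$) forces all $m_i = 1$, i.e., $\{y_j\} = \{x_i\}$. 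If instead $\sum_i m_i < n$, then $\sum_i \xi(m_i) \ge (\sum_i m_i)\,\xi(1) > n\xi(1)$ because $\xi(1) < 0$. This identifies the global value and its minimizer set.

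\textbf{Step 3 (part (ii) and counting).} Let $Y$ satisfy $y_j = x_{\sigma(j)}$ for a non-injective $\sigma : [n] \to [n]$. Some $m_k \ge 2$, so by the strict form of Assumption~\ref{assumption 2}, $\sum_i \xi(m_i) > n\xi(1)$, giving sub-optimality. For the strict local-min property, fix $\epsilon < \tfrac{1}{2}\min_{i\ne j}\|x_i - x_j\|$ and take any $Y' \ne Y$ with $\|Y' - Y\|_\infty < \epsilon$; for each $j$, either $y_j' = x_{\sigma(j)}$ (if coordinate $j$ is unperturbed) or $y_j' \notin \{x_1,\ldots,x_n\}$ by the choice of $\epsilon$. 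Hence the new multiplicities satisfy $m_k' \le m_k$ for every $k$, with equality everywhere only when $Y' = Y$. Assumption~\ref{assumption 3} promotes any strict drop $m_k' < m_k$ into a strict increase $\xi(m_k') > \xi(m_k)$, while the remaining terms only weakly increase, so $g_{\rm SP}(Y') > g_{\rm SP}(Y)$. Finally, configurations of the stated type are in bijection with maps $\sigma : [n] \to [n]$ (there are $n^n$); the $n!$ bijections give exactly the global minima from Step~2, leaving $n^n - n!$ sub-optimal strict local minima. The delicate point is this last step: one needs the strict form of Assumption~\ref{assumption 3} to turn ``some multiplicity strictly decreased'' into ``the $\xi$-sum strictly increased,'' and distinctness of the $x_i$'s to prevent a small perturbation from accidentally hopping a $y_j$ onto a different $x_k$.
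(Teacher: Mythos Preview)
Your proof is correct and follows essentially the same approach as the paper: decouple the inner supremum across distinct points to obtain $g_{\rm SP}(Y)=\tfrac{1}{2n}\sum_i\xi(m_i)$, use $\xi(m)\ge m\,\xi(1)$ (strict for $m\ge 2$ by Assumption~\ref{assumption 2}) to identify the global minimum, and use the strict monotonicity of $\xi$ (Assumption~\ref{assumption 3}) together with the separation $\epsilon<\tfrac12\min_{i\ne j}\|x_i-x_j\|$ for the strict-local-min argument. Your explicit split into the cases $\sum_i m_i=n$ versus $\sum_i m_i<n$, and your explicit derivation of $\xi(1)<0$ from Assumption~\ref{assumption 3}, are slightly cleaner than the paper's compressed chain, but the substance is identical.
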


Remark 1: $h_1(t) = h_2(t) = - \log (1 + e^{-t}) $  satisfy Assumptions \ref{assumption 1}, \ref{assumption 2} and \ref{assumption 3}, thus Theorem \ref{prop: GAN all values, extension} applies
to JS-GAN. It also applies to hinge-GAN with $ h_1(t) = h_2(t) = - \max \{ 0,  1 - t  \} $ and LS-GAN (least-square GAN) 
with $h_1(t) = - (1 - t)^2, h_2(t) = - t^2$.

Next we consider RpGANs. 
The RpGAN problem (empirical loss, function space) is
\begin{equation}\label{phi R def}
  \min_{ Y \in \mathbb{R}^{d \times n}  }  g_{\rm R} ( Y ) , \text{ where }    g_{\rm R} ( Y )  = \frac{1}{ n } 
\sup_{  f \in C( \mathbb{R}^d ) } \sum_{i = 1}^n    [  h (  f( x_i ) - f(y_i ) )  ]. 
\end{equation}

\begin{Def}
(global-min-reachable)
 We say a point $ w $ is global-min-reachable for a function $ F(w) $
 if there exists a continuous path from $ w $ to one global minimum of $ F$ along which the value of $ F(w) $ is non-increasing. 
\end{Def}

\begin{assumption}\label{assumption cp-1}
 $ \sup_{t \in \mathbb{R} } h(t) = 0 $
and $ h( 0 ) < 0 .  $
\end{assumption}

\begin{assumption}\label{assumption cp-2}
 $ h $ is a concave function in $\mathbb{R} $. 
\end{assumption}

\iffalse 
\begin{assumption}\label{assumption cp-2}
  $ \inf_{ t \in \mathbb{R } } h(t) = -\infty.   $
\end{assumption}
\fi

\begin{thm}\label{prop: RS-GAN all values, extension}
 Suppose   $x_1 , x_2, \dots, x_n \in \mathbb{R}^d $ are distinct.
Suppose  $h$ satisfies Assumptions \ref{assumption cp-1} and
 \ref{assumption cp-2}. Then for RpGAN loss $ g_{\rm R} $
defined in Eq.~\eqref{phi R def}: 
(i)    The global minimal value is  $ h(0)  $, which is achieved iff  $ \{   y_1  ,  \dots, y_n  \}  = \{ x_1, \dots, x_n  \}  $. 
(ii) Any $Y $ is global-min-reachable for the function $g_{\rm R}( Y )  $. 
\end{thm}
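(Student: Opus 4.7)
I would split the two parts: first derive the value of the global minimum and characterize its minimizers (part (i)), then construct a continuous non-increasing path for global-min-reachability (part (ii)). Part (i) is essentially a Jensen-inequality computation together with an explicit test discriminator $f$ for the strict inequality. Part (ii) is the substantive part, and the approach is to move the generated points onto an appropriate permutation of the data one coordinate at a time, choosing the paths so that the ``constraint graph'' only gains cycles.

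\textbf{Plan for (i).} Taking $f\equiv 0$ in the inner sup immediately gives the trivial lower bound $g_{\rm R}(Y)\geq h(0)$. For the matching direction, if $\{y_1,\dots,y_n\}=\{x_1,\dots,x_n\}$ as multisets, pick $\pi$ with $y_i=x_{\pi(i)}$; then for every continuous $f$ the telescoping identity $\sum_i(f(x_i)-f(y_i))=\sum_i f(x_i)-\sum_i f(x_{\pi(i)})=0$ holds, and Jensen's inequality applied to the concave $h$ yields $\tfrac{1}{n}\sum_i h(f(x_i)-f(y_i))\leq h(0)$, with equality at $f\equiv 0$. For the strict inequality when the multiset condition fails, I would split into two sub-cases. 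If some $y_j\notin\{x_1,\dots,x_n\}$, build a continuous $f$ via a bump function with $f(y_j)=-M$ and $f\equiv 0$ on every $x_k$ and every $y_k\neq y_j$; the indices $i$ with $y_i=y_j$ then contribute $h(M)$ and the remaining indices contribute $h(0)$, so the value approaches $\tfrac{n-m}{n}h(0)>h(0)$ as $M$ is driven to the argmax of $h$ (using $\sup h=0$), where $m\geq 1$ is the multiplicity of $y_j$ in $\{y_k\}$. If instead all $y_i\in\{x_k\}$ but some $x_\ell$ is unused, a symmetric construction with $f(x_\ell)=M$ and $f\equiv 0$ elsewhere yields $\tfrac{n-1}{n}h(0)>h(0)$. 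Both sub-cases use $h(0)<0$.

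\textbf{Plan for (ii).} I would construct a continuous path $Y(s)$, $s\in[0,1]$, from $Y$ to some permutation configuration $Y^{\ast}=(x_{\sigma(1)},\dots,x_{\sigma(n)})$ with $g_{\rm R}(Y(s))$ non-increasing, by processing one coordinate at a time. The key structural lemma I would prove first is a decoupling of the inner sup along the \emph{constraint graph} whose vertices are the distinct points in $\{x_i\}\cup\{y_j\}$ and whose edges are $\{(x_i,y_i)\}$: using $\sup h=0$ together with Jensen's inequality along each cycle (via the identity $\sum_{e\in C}t_e=0$ among the edge differences $t_e=f(\cdot)-f(\cdot)$), each bridge edge contributes $\sup h=0$ to the sup while each edge on a cycle contributes $h(0)$, so $g_{\rm R}(Y)=c(Y)\cdot h(0)/n$, where $c(Y)$ is the number of cyclic edges. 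Decreasing $g_{\rm R}$ then reduces to \emph{increasing} $c(Y)$. The iteration proceeds: at stage $k$, with $k$ coordinates already matched to distinct data points, pick an unmatched index $i$ and an unused $x_j$, and move $y_i$ to $x_j$ along a continuous path that avoids every other $x_\ell$ and every other $y_m$, possible via piecewise-linear detours in $\mathbb{R}^d$; during the interior of the move the edge $e_i$ remains a bridge, so $c(Y(s))$ is unchanged, and at the endpoint a self-loop or a new cycle-closing edge appears, strictly decreasing $g_{\rm R}$. After $n$ stages we reach a permutation configuration, which by part (i) is a global minimizer.

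\textbf{Main obstacle.} The subtle point is that $g_{\rm R}$ is lower semicontinuous and exhibits downward jumps precisely on the measure-zero coincidence locus: a generic continuous path off that locus has $g_{\rm R}\equiv 0$, and any path that merely ``visits'' a coincident configuration will see $g_{\rm R}$ jump back up as soon as it leaves. The construction must therefore commit to every new coincidence exactly at the endpoint of its stage and never disturb previously established coincidences; this is what forces the one-coordinate-at-a-time iteration and the detour-based path choice. Verifying that such detours exist in general position---and, in the degenerate case $d=1$ where detours are constrained, choosing the target permutation $\sigma$ and processing order based on the combinatorial structure of the initial $Y$ so that no transient cycle is destroyed mid-path---is the main technical difficulty. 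Once the decoupling formula $g_{\rm R}=c\cdot h(0)/n$ is in hand, however, the non-increasing property along each stage reduces to a purely combinatorial check on the evolving constraint graph.
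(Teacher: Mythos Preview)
Your plan is essentially the paper's: both prove the decoupling formula $g_{\rm R}(Y)=\tfrac{c(Y)}{n}\,h(0)$ via a cycle/tree decomposition of the constraint graph, and both establish global-min-reachability by rerouting one bridge edge at a time so as to increase $c(Y)$. Your part~(i) argument is correct and in fact a little more direct than the paper's, which recovers~(i) as a special case of the general value formula.

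There is, however, one concrete gap in the iteration for part~(ii). You write ``pick an unmatched index $i$ and an unused $x_j$'' and then assert that ``at the endpoint a self-loop or a new cycle-closing edge appears.'' That assertion is false for a generic unused $j$: the new edge $i\to j$ closes a cycle only when the unique out-walk from $j$ eventually revisits $i$, and an unused leaf $j$ need not lie in the subtree below $i$. For instance, with $n=4$ and $y_1=x_2$, $y_2=x_1$, $y_3=x_1$, $y_4=x_2$, the vertices $3,4$ are tree leaves and $x_3,x_4$ are unused; moving $y_3$ to $x_4$ produces a new bridge $3\to 4$ and leaves $c(Y)=2$ unchanged, so $g_{\rm R}$ does not strictly decrease at that stage. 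The paper sidesteps this by always taking $j=i$: pick any vertex $v$ whose outgoing edge is a tree edge and move $y_v$ to $x_v$, creating a self-loop. This choice guarantees $c(Y)$ increases by exactly one and requires no ``unused'' hypothesis on the target. With that single amendment your construction goes through; for $d=1$ the paper also uses a small local device (stop the move at the first suitable $x$-vertex encountered) rather than the global pre-selection of $\sigma$ and processing order you sketch.
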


This result  sanity checks  the loss $ g_{\rm R}( Y  ) $:
its global minimizer is indeed the desired empirical distribution.
In addition, it establishes a significantly different optimization landscape for RpGAN.

Remark 1: $ h(t) =  - \log(1 + e^{-t})$ satisfies
 Assumption \ref{assumption cp-1} and \ref{assumption cp-2}, thus Theorem 
 \ref{prop: RS-GAN all values, extension} applies to RS-GAN. 
 It also applies to Rp-hinge-GAN with $h(t) = 
- \max \{ 0, a - t \} $ and Rp-LS-GAN with $h(t) = - (a - t)^2$, for any positive constant $a$.

Remark 2: The W-GAN loss is $ \frac{1}{n} \sup_f \sum_i h(f(x_i) - f(y_i))$ where $ h( t) = t $; however, since $ \sup_{t} h(t) = \infty  $ it does not satisfy Assumption \ref{assumption cp-1}. The unboundedness of $h(t) = t$ necessitates extra constraints, 
which make the landscape analysis of W-GAN challenging; see Appendix~\ref{appen: W-GAN}.  Analyzing the landscape of W-GAN is an interesting future work. 

\iffalse 
Remark 3: While
we do recommend practitioners to consider RpGAN, we expect that there can be other losses with similar or better landscape properties than RpGAN. 
Part of our goal is to emphasize the importance of
a global landscape analysis of GANs. 
\fi

To prove Theorem \ref{prop: GAN all values, extension}, careful computation suffices; see  Appendix \ref{appen: proof of Thm 1}.
The proof of Theorem \ref{prop: RS-GAN all values, extension} is a bit involved.
We first build a graph with nodes representing $x_i$'s and $y_i$'s, 
then decompose the graph into cycles and trees, 
and finally compute the loss value by grouping the terms according to cycles and trees and calculate the contribution of each cycle and tree. 
The detailed proof is given in Appendix  \ref{appen: proof of Thm 2}.

\subsection{Landscape Results in Parameter Space} 

We now consider a deep net generator $G_w$ with $w \in \mathbb{R}^K $
and a deep net discriminator $ f_{\theta}  $ with  $ \theta \in \mathbb{R}^{ J } $. Different from before, where we optimize over $ y_i$ and $ f $ (function space), we now optimize over $ w $ and $\theta$ (parameter space).

We first present a technical assumption.
For $ Z = ( z_1, \dots, z_n ) \in \mathbb{R}^{d_z \times n} $, $Y = (y_1, \dots, y_n) \in \mathbb{R}^{d \times n} $
and $\mathcal{W} \subseteq \mathbb{R}^K $, define 
a set $ G^{-1}(Y; Z, \mathcal{W} )
\triangleq \{ w \in \mathcal{W} \mid G_w( z_i ) = y_i, ~\forall ~i \} $.
  \begin{assumption}\label{G repres assumption 2}
  (path-keeping property of generator net):
  For any distinct $z_1, \dots, z_n \in \mathbb{R}^{d_z} $, any continuous path $Y(t), t \in [0, 1]$ in the space 
 $ \mathbb{R}^{d \times n}  $ and any $w_0 \in G^{-1}(Y (0) ; Z , \mathcal{W})  $, 
 there is continuous path $ w(t), t \in [0,1]  $ such that
 $ w(0) = w_0 $ and $ Y(t) = G_{w(t)}(Z)  , t \in [0, 1] $. 
\end{assumption}

Intuitively, this assumption relates the paths in the function
space to the paths in the parameter space,
 thus the results in function space can be transferred
 to the results in parameter space. 
The formal results involve two extra assumptions on representation power of $f_{\theta}$ and $G_w$ (see Appendix~\ref{appen: param space results} for details).
Informal results are as follows: 

\begin{prop}\label{prop: GAN param space}(informal)
Consider the separable-GAN problem  $  \min_{ w \in \mathbb{R}^{K}  } 
    \varphi_{\rm sep}( w ) , $ where
\begin{equation} 
\varphi_{\rm sep}( w )  = \sup_{ \theta }
  \frac{1}{2 n} \sum_{i=1}^n   [ h_1 (   f_{ \theta}( x_i )  ) + h_2 (  - f_{\theta} ( G_w( z_i ) )  )   ].
       \end{equation}
  Suppose $h_1, h_2$ satisfy the  assumptions of Theorem \ref{prop: GAN all values, extension}. 
  Suppose $ G_{w}  $ satisfies Assumption \ref{G repres assumption 2} (with certain $\mathcal{W}$). Suppose $  f_{\theta} $ and $G_w$ have enough representation power (formalized in Appendix~\ref{appen: param space results}).
Then there exist at least $ (  n^n - n !  )$ distinct $ w  \in \mathcal{W } $ that are not global-min-reachable for  $ \varphi_{\rm sep}( w )  $. 
\end{prop}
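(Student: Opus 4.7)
The plan is to transfer Theorem~\ref{prop: GAN all values, extension}(ii) from function space to parameter space through the pullback identity $\varphi_{\rm sep}(w) = g_{\rm SP}(G_w(Z))$. This identity holds as soon as the discriminator family $\{f_\theta\}$ is rich enough that $\sup_\theta$ coincides with $\sup_{f\in C(\mathbb{R}^d)}$ when evaluated on the finite configurations $\{x_1,\dots,x_n,G_w(z_1),\dots,G_w(z_n)\}$; this is exactly the role of the ``enough representation power'' hypothesis on $f_\theta$. Under this identity, $\varphi_{\rm sep}$ is the composition of the function-space loss $g_{\rm SP}$ with the generator evaluation map $w\mapsto G_w(Z)$, so the landscape analysis downstairs pulls back cleanly.

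First I would enumerate the $n^n-n!$ bad function-space configurations supplied by Theorem~\ref{prop: GAN all values, extension}(ii): tuples $Y^*=(y_1,\dots,y_n)$ with every $y_i\in\{x_1,\dots,x_n\}$ but not a permutation, each a sub-optimal strict local minimum of $g_{\rm SP}$. Using the representation-power hypothesis on $G_w$, I produce for each such $Y^*$ a parameter $w^*\in\mathcal{W}$ with $G_{w^*}(Z)=Y^*$. Distinct $Y^*$ give distinct $w^*$ because they have distinct images under $w\mapsto G_w(Z)$, so this yields the claimed $n^n-n!$ candidates.

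The core step is to show each such $w^*$ is not global-min-reachable, by contradiction. Suppose a continuous path $w:[0,1]\to\mathcal{W}$ exists with $w(0)=w^*$, $\varphi_{\rm sep}(w(t))$ non-increasing, and $\varphi_{\rm sep}(w(1))=\min_w\varphi_{\rm sep}(w)$. Then $Y(t):=G_{w(t)}(Z)$ is continuous, starts at $Y^*$, and by the pullback identity $g_{\rm SP}(Y(t))=\varphi_{\rm sep}(w(t))$ is non-increasing and terminates at the function-space global minimum, which by Theorem~\ref{prop: GAN all values, extension}(i) is strictly smaller than $g_{\rm SP}(Y^*)$. Hence $Y$ is not constantly $Y^*$; letting $t_0$ be the infimum of times at which $Y(t)\neq Y^*$, continuity gives $Y(t_0)=Y^*$, and along any sequence $t_k\downarrow t_0$ with $Y(t_k)\neq Y^*$ one has $Y(t_k)\to Y^*$. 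Strict local minimality of $Y^*$ then forces $g_{\rm SP}(Y(t_k))>g_{\rm SP}(Y^*)=g_{\rm SP}(Y(t_0))$ for large $k$, contradicting monotonicity.

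The main obstacle is stating the representation-power assumptions precisely enough to validate the pullback identity \emph{uniformly in $w$} along the path, and to make sure $\mathcal{W}$ is compatible with Assumption~\ref{G repres assumption 2}. Assumption~\ref{G repres assumption 2} is the natural vehicle here: it guarantees that function-space paths can be realized inside $\mathcal{W}$, which is what certifies that the global minimum of $g_{\rm SP}$ is actually attainable by some $w\in\mathcal{W}$ (so restricting optimization to $\mathcal{W}$ does not spuriously raise the minimum used in the contradiction), and that $G^{-1}(Y^*;Z,\mathcal{W})$ is nonempty for each bad $Y^*$. Apart from this setup, the argument is a routine lifting via the first-exit-time device, and the combinatorial count $n^n-n!$ is simply inherited from Theorem~\ref{prop: GAN all values, extension}.
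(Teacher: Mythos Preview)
Your proposal is correct and follows essentially the same route as the paper: establish the pullback identity $\varphi_{\rm sep}(w)=g_{\rm SP}(G_w(Z))$ via discriminator expressivity, realize each of the $n^n-n!$ bad function-space configurations by some $w^*\in\mathcal{W}$ via generator expressivity, and then derive a contradiction by pushing a hypothetical non-increasing parameter-space path forward to function space through $Y(t)=G_{w(t)}(Z)$. Your first-exit-time argument is a slightly more explicit version of the paper's one-line ``a non-increasing path in parameter space maps to a non-increasing path in function space, contradicting strict local minimality.''

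One small clarification on the role of Assumption~\ref{G repres assumption 2}: the path-keeping property lifts function-space paths \emph{up} to parameter space, which is what the companion RpGAN result needs (to show every $w$ \emph{is} GMR). For the present SepGAN direction you only need to push parameter-space paths \emph{down}, and that requires nothing beyond continuity of $w\mapsto G_w(Z)$. The nonemptiness of $G^{-1}(Y^*;Z,\mathcal{W})$ and the attainability of the function-space minimum by some $w$ are both consequences of the generator \emph{representation-power} assumption (the one you allude to informally), not of path-keeping. The paper lists Assumption~\ref{G repres assumption 2} in the formal hypothesis for uniformity with the RpGAN proposition, but its proof of this direction does not invoke it.
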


\begin{prop}\label{prop: CPGAN param space}(informal)
 Consider the RpGAN problem 
 $  \min_{ w \in \mathbb{R}^{K}  }  \varphi_{\rm R}( w ) , $  where  
 \begin{equation} 
 \varphi_{\rm R}( w )  = \sup_{ \theta }  \frac{1}{ n } \sum_{i=1}^n   [ h ( f_{ \theta}( x_i ) )  -  f_{ \theta}( G_w(z_i) )  ]. 
 \end{equation}
  Suppose $h $ satisfies  the  assumptions of Theorem \ref{prop: RS-GAN all values, extension}. Suppose $ G_{w}  $ and $  f_{\theta} $ satisfy
  the same assumptions as Proposition \ref{prop: GAN param space}.
 Then any $ w  \in \mathcal{W} $ is global-min-reachable for
 $ \varphi_{\rm R}( w )  $.
\end{prop}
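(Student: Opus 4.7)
The strategy is to reduce the parameter-space statement to the function-space statement of Theorem~\ref{prop: RS-GAN all values, extension} via two bridges: (i) the representation-power hypothesis on $f_{\theta}$, which should upgrade $\sup_{\theta} $ to the same value as $\sup_{f \in C(\mathbb{R}^d)}$ and thereby yield the identity $\varphi_{\rm R}(w) = g_{\rm R}(G_w(Z))$ for all $w \in \mathcal{W}$; and (ii) the path-keeping property (Assumption~\ref{G repres assumption 2}), which lifts continuous paths in the sample space $\mathbb{R}^{d \times n}$ to continuous paths in the parameter space $\mathcal{W}$.

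\textbf{Step 1 (path in $Y$-space).} Fix $w \in \mathcal{W}$ and set $Y(0) = G_w(Z) \in \mathbb{R}^{d \times n}$. Theorem~\ref{prop: RS-GAN all values, extension}(ii) furnishes a continuous path $Y(t)$, $t \in [0,1]$, along which $g_{\rm R}(Y(t))$ is non-increasing and whose endpoint $Y(1)$ satisfies $\{y_1(1),\dots,y_n(1)\} = \{x_1,\dots,x_n\}$, i.e.\ is a global minimizer of $g_{\rm R}$ with value $h(0)$.

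\textbf{Step 2 (lift and verify monotonicity).} By Assumption~\ref{G repres assumption 2}, there is a continuous $w(t)$, $t \in [0,1]$, with $w(0)=w$ and $G_{w(t)}(Z)=Y(t)$. Combining this with the identity $\varphi_{\rm R}(w(t)) = g_{\rm R}(G_{w(t)}(Z)) = g_{\rm R}(Y(t))$ from the discriminator representation hypothesis, the map $t \mapsto \varphi_{\rm R}(w(t))$ inherits the non-increasing property from Step~1. The generator representation hypothesis guarantees that $w(1) \in \mathcal{W}$ realises a global minimum of $\varphi_{\rm R}$, since $\varphi_{\rm R}(w(1)) = g_{\rm R}(Y(1)) = h(0)$, which by Theorem~\ref{prop: RS-GAN all values, extension}(i) is the global minimum of $\varphi_{\rm R}$ as well. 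Hence $w$ is global-min-reachable.

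\textbf{Main obstacle.} The technical crux is formalising the two representation-power hypotheses precisely enough to make the identity $\varphi_{\rm R}(w) = g_{\rm R}(G_w(Z))$ hold uniformly along the path. In the function-space proof of Theorem~\ref{prop: RS-GAN all values, extension} the near-optimal discriminators are continuous functions that take prescribed values on a finite set $\{x_i\} \cup \{y_i(t)\}$ whose geometry changes with $t$; one must therefore assume the parametric family $\{f_{\theta}\}$ can interpolate arbitrary values at $2n$ points in $\mathbb{R}^d$, and must verify this uniformly as the points $y_i(t)$ move and, in particular, approach or coincide with some $x_j$. A second delicate point is that Assumption~\ref{G repres assumption 2} has to be invoked on paths that pass through those very coincidence configurations — exactly the degenerate configurations excluded from the separable-GAN analysis — so the statement of the generator hypothesis in the appendix must be tailored to tolerate them.
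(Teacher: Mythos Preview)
Your proposal is correct and follows essentially the same route as the paper: establish the value correspondence $\varphi_{\rm R}(w)=g_{\rm R}(G_w(Z))$ via the discriminator representation assumption, pull the non-increasing $Y$-path from Theorem~\ref{prop: RS-GAN all values, extension}, and lift it to parameter space via Assumption~\ref{G repres assumption 2}. Your ``main obstacle'' worries are slightly overstated: the identity $\varphi_{\rm R}(w)=g_{\rm R}(G_w(Z))$ is a pointwise statement (no uniformity in $t$ is needed), and when some $y_i(t)$ coincides with an $x_j$ the interpolation task only becomes easier (fewer than $2n$ distinct points, with the required values automatically consistent since both sides use the same $f$), so the discriminator assumption as stated in the appendix already covers those degenerate configurations.
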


Remark 1: The existence of a decreasing path does not necessarily
mean an algorithm can follow it. Nevertheless, our results 
already distinguish SepGAN and RpGAN. We will illustrate that
these results can improve our understanding of GAN training  
in Sec.~\ref{sec: case study}, and present experiments supporting
 our theory in Sec.~\ref{sec:experiments}.

Remark 2: The two results rely on a few 
assumptions of  neural-nets including Assumption
\ref{G repres assumption 2}. 
These assumptions can be satisfied by certain
over-parameterized neural-nets, in which case $ \mathcal{W} $ is a certain dense subset of $ \mathbb{R}^K $ or $ \mathbb{R}^K $ itself. 
For details see Appendix \ref{app-sub: suffcient conditions for repres assumptions}.

\subsection{Discussion of Implications}

These results distinguish the SepGAN and RpGAN landscapes.
\iffalse 
Intuitively, the JS-GAN landscape has many basins of attraction, and a local search  on $ w $ might fall into one of them,
exhibiting mode collapse. 
For RS-GAN, there is no strict local-min that traps a local search,
thus the algorithm may more likely converge to a global minimum.
\fi 
Theoretically, there is evidence regarding the benefit of losses without sub-optimal basins.  \citet{bovier2004metastability} proved that it takes the Langevin diffusion at least $ e^{\omega(h)}$ time to escape a depth-$h$ basin. A recent work \cite{zhang2017hitting} proved that the hitting time of SGLD (stochastic gradient Langevin dynamics) is positively related to the height of the barrier, and SGLD may escape basins with low barriers relatively fast. 
 The theoretical insight is that a landscape without a bad basin permits better quality solutions or a faster convergence to good-quality solutions. 

We now discuss the possible gap between our theory and practice. 
We proved that a mode collapse $ Y^* $ is a bad basin in the generator space, which indicates that $( Y^*, D^*( Y^* )  )$ is
an attractor in the joint space of $(Y,D)$ and hard to escape by gradient descent ascent (GDA). In GAN training, the dynamics are not the same as GDA dynamics due to various reasons (e.g., sampling, unequal $D$ and $G$ updates),
and basins could be escaped with enough training time
  (e.g., \cite{zhang2017hitting}).
In addition, a randomly initialized $ (Y, D) $ might be
far away from the basins at $( Y^*, D^*( Y^* ))$, and properly chosen
 hyper-parameters (e.g., learning rate) may re-position the dynamics so as to avoid attraction to bad basins.
Further, it is known that adding neurons can smooth the landscape of deep nets (e.g., eliminating bad basins in neural-nets \cite{li2018over}),
thus wide nets might help escape basins in the $(Y, D)$-space faster. 
In short, the effect of bad basins may be mitigated via the following factors: 
(i) proper initial $ D $ and $Y$;
(ii) long enough training time;
(iii) wide neural-nets; 
(iv) enough hyper-parameter tuning. 
These factors make it relatively hard to 
detect the existence of bad basins and their influences. 
We  support our landscape theory, by identifying
 differences of SepGAN and RpGAN in synthetic and real-data experiments.

\section{Case Study of Two-Cluster Experiments}\label{sec: case study}
\vspace{-0.3cm}

Although in Section \ref{sec: intuition and toy results}
we argue that, \textit{intuitively}, mode collapse can happen for training
JS-GAN for two-point generation, it does not necessarily mean  mode collapse really appears in practical training.
 We discuss a two-cluster experiment, an extension of two-point generation,
in order to build a link between theory and practice. We aim to understand the following question: does mode collapse really appear as a ``basin'', and how does it affect training? 

Suppose the true data are two clusters around $ c_1 =0 $ and $ c_2 = 4 $. We sample $ 100 $ points from the two clusters as $x_i$'s,
and sample $z_1, \dots, z_{100}$  uniformly from an interval.
 We use 4-layer neural-nets for the discriminator and generator.
 We use the non-saturating versions of JS-GAN and RS-GAN.
 
 \iffalse 
The training process of GANs is difficult to understand, even for the two-cluster experiments. One major difficulty is the lack of a proper metric to monitor the training process. The plots of D loss and G loss over iterations are often drawn, but many patterns of the plots are hard to interpret.
We demonstrate that the theory can help provide a partial understanding of the fluctuation. Based on this understanding, we provide an explanation of why RS-GAN is faster than JS-GAN. 
\fi

\textbf{Mode collapse as bad basin can appear.} 
We visualize the movement of fake data in Fig.~\ref{fig:2clustercomp}, 
and plot the loss value of D (indicating the discriminator) over iterations in Fig.~\ref{fig5 D loss and D image}(a,b). %
 Interestingly, the minimal $D$ losses are around $ 0.48$, which is the value of $ \phi_{\rm JS} $ at state $s_{1a}$. 
It is easy to check that the optimal $ D = D^*(s_{1a}) $ for a mode collapse state $s_{1 \rm a}$ satisfies $ \{ D ( c_1), D(c_2) \}  = \{ 1 , 1/3\} $, 
 and Fig.~\ref{fig5 D loss and D image}(c)
 shows that at iteration $ 2800 $ the $D $ actually 
  becomes $ D^* $. 
This provides a concrete example that  training 
gets stuck at a mode collapse due to the  bad-basin-effect. 
 We also notice that there are a few more attempts to approach 
 the bad attractor $ ( s_{1a}, D^*(s_{1a} ) )  $ (e.g., from iteration
 $2000$ to $2500$).
 In RS-GAN training, the minimal loss is around $ 0.35 $,
  which is also the value of $ \phi_{\rm RS} $ at state $s_{1a}$.
The attracting power of $(s_{1a}, D^*( s_{1a}))$ is weaker than for  JS-GAN. 
 Thus it only attracts the iterates for a very short time.
 RS-GAN needs 800 iterations to escape, which is about 3 times faster than the escape for JS-GAN.

 \begin{figure}[t]
\centering
\begin{tabular}{>{\centering}m{11mm}>{\centering}m{16mm} >{\centering}m{16mm}>{\centering}m{16mm} >{\centering}m{16mm} >{\centering}m{16mm} >{\centering\arraybackslash}m{16mm}}
\small{JS-GAN:} & 
\includegraphics[width=15mm]{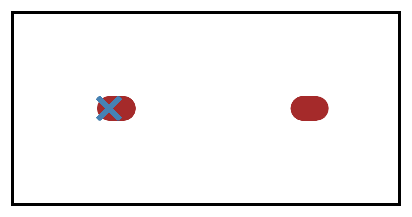}&\includegraphics[width=15mm]{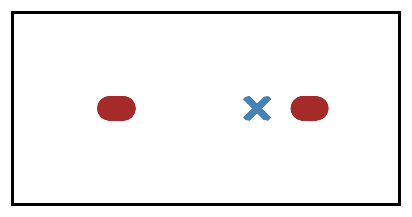}&\includegraphics[width=15mm]{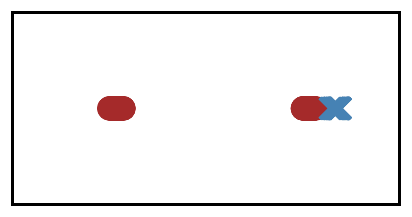}&\includegraphics[width=15mm]{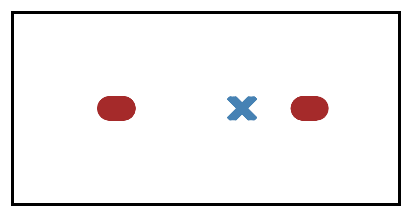}&\includegraphics[width=15mm]{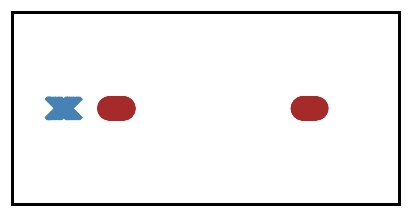}&\includegraphics[width=15mm]{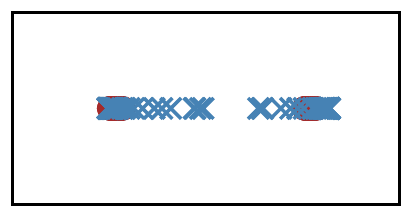}\\
\small{RS-GAN:} &  \includegraphics[width=15mm]{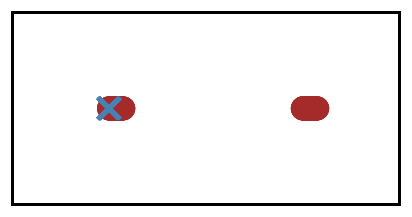}&\includegraphics[width=15mm]{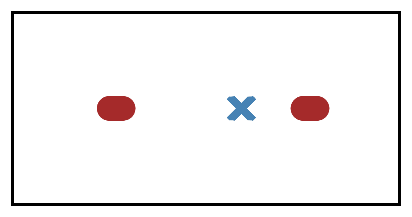}&\includegraphics[width=15mm]{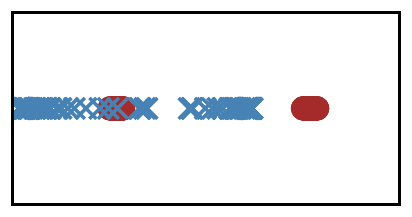}&\includegraphics[width=15mm]{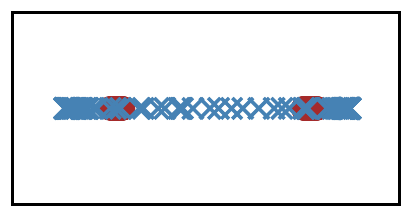}&\includegraphics[width=15mm]{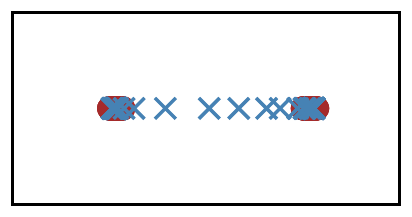}&\includegraphics[width=15mm]{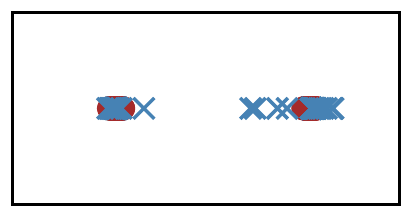}\\
\end{tabular}
\caption{Training process of JS-GAN and RS-GAN for two-cluster data.
True data are red, fake data are blue. RS-GAN escapes from mode collapse faster
than JS-GAN. 
}
\label{fig:2clustercomp}
\vspace{-0.2cm}
\end{figure}

  \begin{figure}[t]
\vspace{-0.2cm}
\centering
    \begin{tabular}{ccc}
        \includegraphics[width=0.3\linewidth]{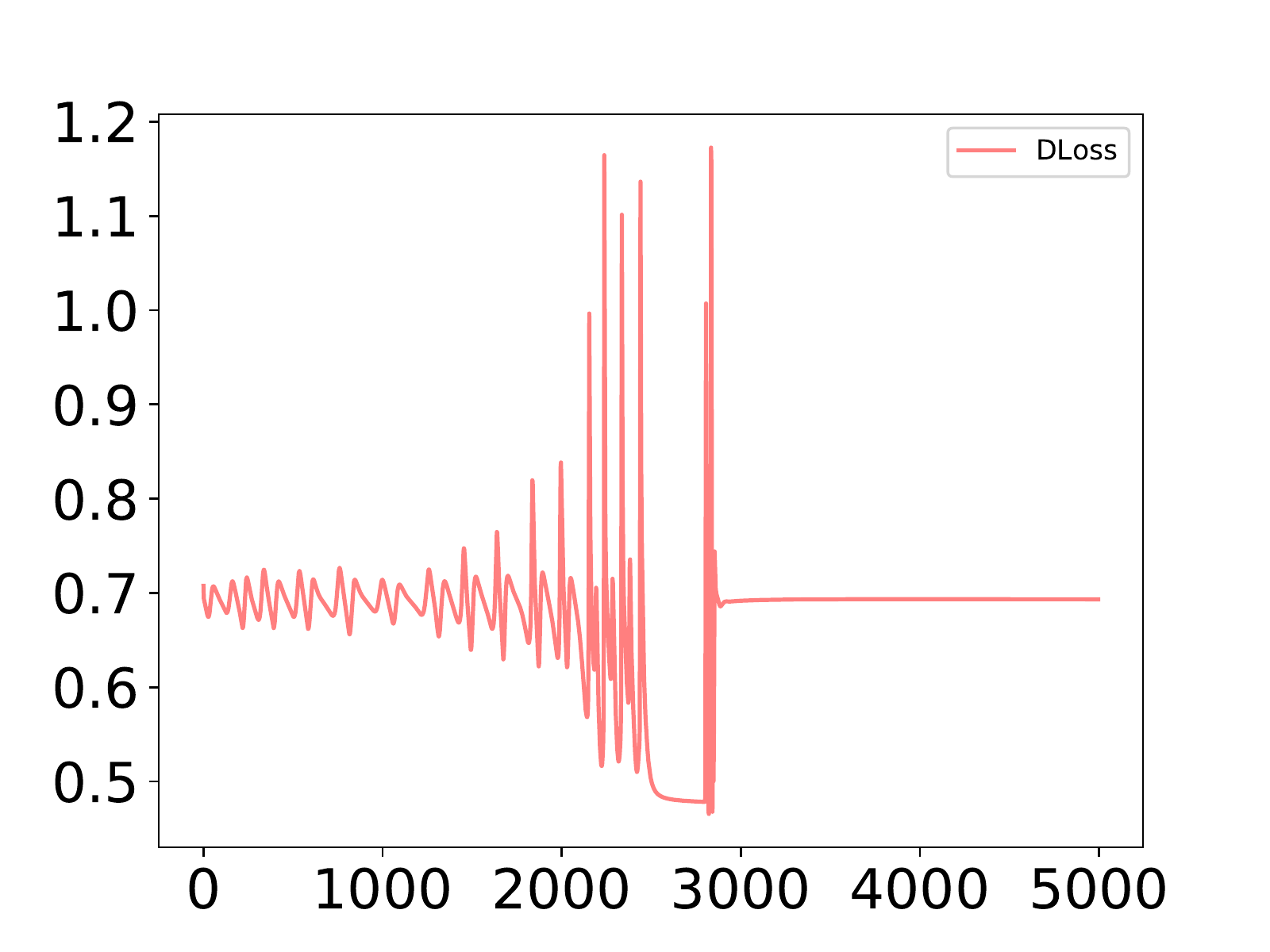} & \includegraphics[width=0.3\linewidth]{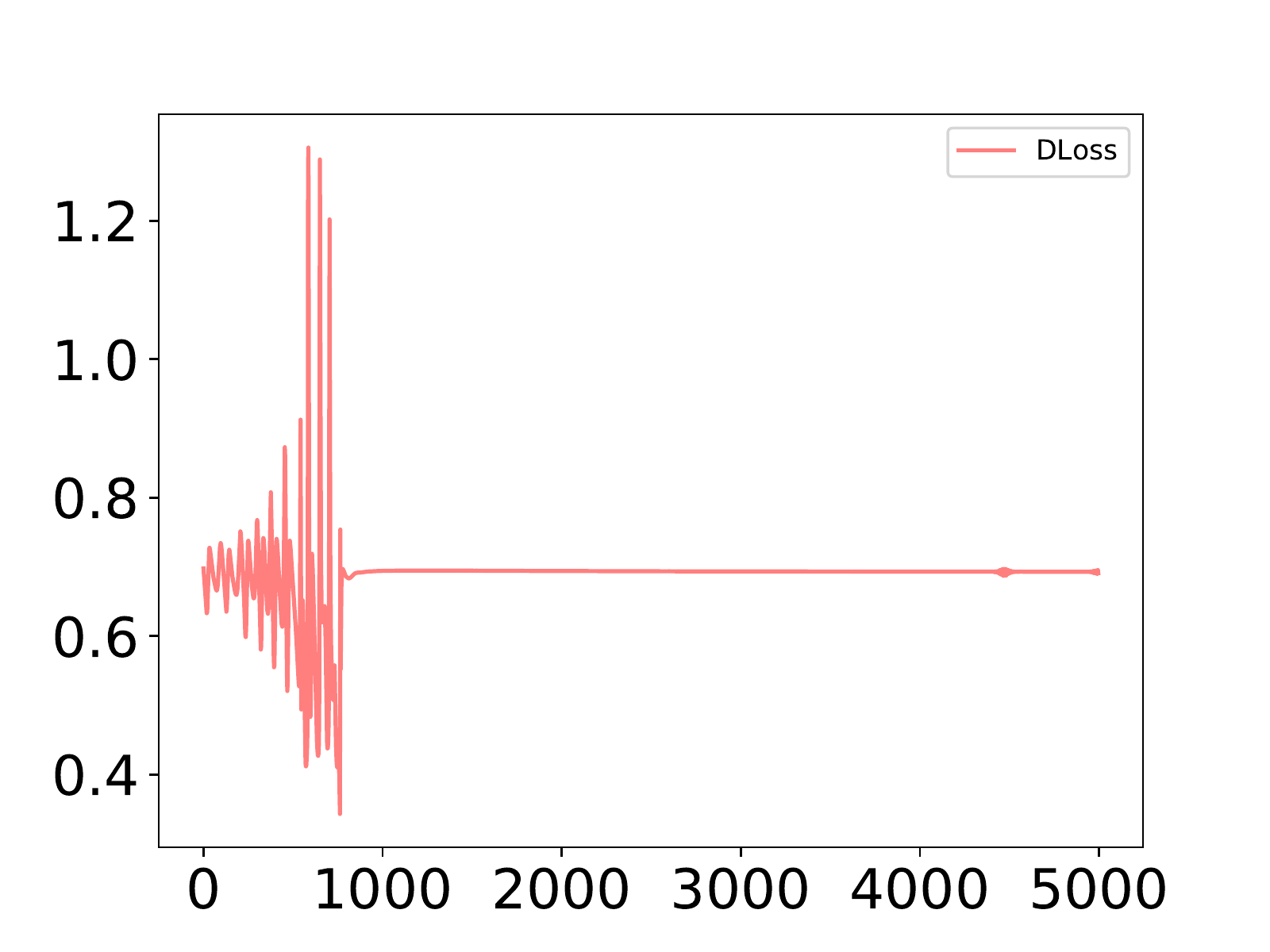} &
          \includegraphics[width=0.3\linewidth]{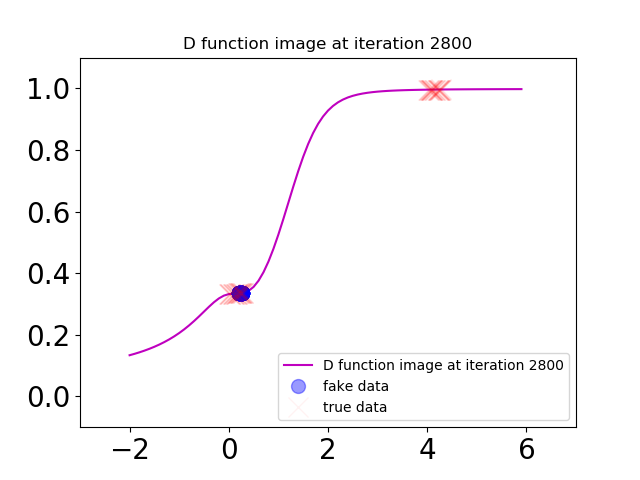}
        \\[-2mm]
        {\footnotesize (a) JS-GAN}  & {\footnotesize (b) RS-GAN} &  {\footnotesize (c) JS-GAN, D image}
    \end{tabular}
    \vspace{-0.2cm}
    \caption{(a) and (b): Evolution of $D$ loss over iterations. 
    RS-GAN is 3-4$\times$ faster than JS-GAN.
    (c) For JS-GAN training in (a), we plot $ (Y,D) $ together at iteration 2800. $ Y $ are represented in blue points, and they are near $c_1 = 0$.
   $ D $ is near the optimal $ D^*(s_{1 \rm a}) $
   since $ D( 0 )  \approx 1/3 $ and $ D( 4) \approx 1. $
  Interestingly, this bad attractor $ ( Y, D ) $ is similar to the one discussed in  Fig.~\ref{fig1:fig_two_views}, so the intuition of ``local-min''  is verified in (c). 
    }
    \label{fig5 D loss and D image}
\vspace{-0.3cm}
\end{figure}

\textbf{Effect of width:} 
We see a clear effect of width on convergence speed.
As the networks become wider, both JS-GAN and RS-GAN
 converge faster. We find that the reason of faster convergence
is because wider nets make JS-GAN escape mode collapse faster.
 See details in Appendix~\ref{subsec: details of experiments}.

More experiment details and findings are 
presented in  Appendix~\ref{subsec: details of experiments}.

\iffalse 
We draw the D loss for one such case in Fig. \ref{fig5 D loss and D image}(c).
We observe that for JS-GAN the D loss is concentrated around value $0.48$
from 5k to 50k iterations, indicating that the training was constantly trapped
by the bad basin at $( s_{1a}), D^*( s_{1a}) )  $. 
In contrast, RS-GAN recovers the two modes within 2k iterations. 
This experiment verifies that the mode collapse can
possibly trap the training for a very long time (if not forever),
even with inexact $D$ updates in practice. 
\fi

\iffalse 
We suspect this is due to two possible reasons.
First, width can create more paths on the landscape, allowing higher chance
to escape a bad attracting region.
Second, width can also allow the generated points to travel faster (intuition of \cite{jacot2018neural}).
\fi

\iffalse 
Again, we emphasize that in most scenarios, 
we do not expect $Y$-space basins to completely trap the iterates,
but rather to slow down the training process. 
\fi

\begin{table}[t]
\centering
\setlength{\tabcolsep}{2pt}
\scalebox{0.8}{
\begin{tabular}{l|cccc|cccc}
\toprule
&  \multicolumn{4}{c}{ \bf{CIFAR-10}}     & \multicolumn{4}{c}{ \bf{STL-10}} \\
          & Inception Score $\uparrow$ &  FID $\downarrow$ & FID Gap & Model size & Inception Score $\uparrow$ &  FID $\downarrow$ & FID Gap & Model size\\ \midrule
 Real Dataset  &  11.24$\pm$0.19    &   5.18  & &  & 24.45$\pm$0.41    &  5.34 & \\
 \hline 
 \multicolumn{5}{l}{\bf{Standard CNN} }  \\
 WGAN-GP  & 6.68$\pm$0.06  &  39.66  &  & &  8.11$\pm$0.09  &  55.64  &  &   \\
JS-GAN  & 6.27$\pm$0.10 &  49.13  &  \multirow{2}{*}{15.34} & \multirow{2}{*}{100\%} & 8.01$\pm$0.07    &    50.38  & \multirow{2}{*}{2.16} & \multirow{2}{*}{100\%} \\
RS-GAN  & 7.02$\pm$0.07       &  33.79   &  &  &  7.62$\pm$0.08    &   52.54   &  &  \\ %
JS-GAN+ SN  & 7.42$\pm$0.08  &  28.07  & \multirow{2}{*}{0.91} & \multirow{2}{*}{100\%}  &  8.32$\pm$0.10    &  44.06   & \multirow{2}{*}{0.18} & \multirow{2}{*}{100\%}    \\
RS-GAN+ SN  &  7.32$\pm$0.08     &  27.16   &  &  &  8.29$\pm$0.13    &  43.88  &  & \\ 
JS-GAN+SN; GD channel/2  & 6.85$\pm$0.08  &  33.90   &  \multirow{2}{*}{1.16} & \multirow{2}{*}{29.0\%}  & 7.69$\pm$0.05   &   57.16   & \multirow{2}{*}{4.69} & \multirow{2}{*}{32.9\%}   \\
RS-GAN+SN; GD channel/2  & 6.74$\pm$0.04       &  32.74   &  &   &  7.95$\pm$0.10    &  52.47   &  &  \\ 
JS-GAN + SN; GD channel/4  & 5.83$\pm$0.07  &  52.63  &  \multirow{2}{*}{7.26} & \multirow{2}{*}{9.2\%} & 6.90$\pm$0.06     &  72.96     & \multirow{2}{*}{9.35}  &\multirow{2}{*}{11.9\%}   \\
RS-GAN + SN; GD channel/4  &  5.94$\pm$0.09  &  45.37  &  &    &  7.27$\pm$0.11    & 63.61 &  &  \\\hline
\multicolumn{5}{l}{\bf{ResNet}}  \\
JS-GAN+ SN  &  8.12$\pm$0.14   &  20.13   & \multirow{2}{*}{0.82} & \multirow{2}{*}{100\%}   & 8.87$\pm$0.07 &  36.33  & \multirow{2}{*}{1.56} & \multirow{2}{*}{100\% }    \\
RS-GAN + SN  &   7.92$\pm$0.13    & 19.31  &  &    &  8.96$\pm$0.10 & 34.77  &  &   \\
JS-GAN + SN; GD channel/2  & 7.67$\pm$0.04 &  23.29   & \multirow{2}{*}{1.51}  &\multirow{2}{*}{27.5\%}   &   8.45$\pm$0.05  & 44.39   & \multirow{2}{*}{2.21} &\multirow{2}{*}{29.0\% } \\
RS-GAN + SN; GD channel/2  &  7.63$\pm$0.07     &  21.78  &  &   &  8.47$\pm$0.09    & 42.18  &  &  \\
JS-GAN + SN; GD channel/4  &  6.65$\pm$0.06   &  45.20   & \multirow{2}{*}{13.94} &\multirow{2}{*}{10.4\%}  &8.21 $\pm$0.12  & 53.57  & \multirow{2}{*}{1.48} &\multirow{2}{*}{9.2\%}  \\
RS-GAN+ SN; GD channel/4  & 7.08$\pm$0.05       &  31.26   & &   &  8.46$\pm$0.11   & 52.09   & & \\ 
JS-GAN + SN; BottleNeck   & 7.60$\pm$0.07  &  26.98   &  \multirow{2}{*}{1.54}  & \multirow{2}{*}{16.8\%}  &   8.29$\pm$0.05   &  50.38  & \multirow{2}{*}{3.80} &\multirow{2}{*}{19.2\%}  \\
RS-GAN+ SN; BottleNeck   &  7.57$\pm$0.09     &   25.44  &  &   &  8.52$\pm$0.11    &  46.58  & & \\
\bottomrule
\end{tabular}
}
\caption{Inception score (IS) (higher is better) and Frech\'et Inception distance (FID) (lower is better)  for JS-GAN, WGAN-GP and RS-GAN on CIFAR-10 and STL-10. 
We also show FID gap between JS-GAN and RS-GAN, and show the relative model size of narrow nets vs.\ regular nets (``regular'': CNN and ResNet of \cite{miyato2018spectral}).  }
\label{Tab:metric}
\vspace{-0.6cm}
\end{table}

\vspace{-0.2cm}
\section{Real Data Experiments}
\label{sec:experiments}
\vspace{-0.2cm}

RpGANs have been tested by  \citet{jolicoeur2018relativistic}, and are shown to be better than their SepGAN counterparts in a variety of settings\footnote{That paper tested a number of variants, and some of them are not directly covered by our results. }.
In addition, RpGAN and its variants have been used in super-resolution (ESRGAN) \cite{wang2018esrgan} and a few recent GANs \cite{xiangli2020real,berthelot2020creating}.
Therefore, the effectiveness of RpGANs has been justified  to some extent. We do not attempt to re-run the experiments merely for the purpose of justification. 
Instead, our goal is to use experiments to support our landscape theory.

Based on the discussions in Sec. \ref{sec: empirical and population},
Sec.~\ref{subsec: main results}
and Sec.~\ref{sec: case study}, 
 we conjecture that 
RpGANs have a bigger advantage over SepGAN (A) with narrow deep nets, (B) in high resolution image generation, (C) with imbalanced data.
Finally, (D) there exists some bad initial $ D $ that makes
  SepGANs  much worse than RpGANs.
  In the main text, we present results on
  the logistic loss (i.e., JS-GAN and RS-GAN). 
  Results on other losses are given in the appendix.

  \iffalse 
  We also show that the benefit of EMA (exponential moving average)~\cite{yazici2018unusual} and RS-GAN can be additive. 
In the appendix, we test two more corollaries of our landscape theory:
faster convergence of RS-GAN, and a large performance gap 
for a certain initial $D$. 
These experiments  support  our conjecture that 
RS-GAN has a better landscape than JS-GAN which leads to a performance gap.
\fi 

\textbf{Experimental setting for (A).}
For (A), we test on CIFAR-10 and STL-10 data. 
For the optimizer, we use Adam with the discriminator's learning rate  $0.0002$. For CIFAR-10 on ResNet, we set $\beta_1=0$ and $\beta_2=0.9$ in Adam; for others, $\beta_1=0.5$ and $\beta_2=0.999$. 
We tune the generator's learning rate and run $100k$ iterations in total. 
We report the Inception score (IS) and Frech\'et Inception distance (FID). IS and FID are evaluated on $50k$ and $10k$ samples respectively. 
More details of the setting are shown in Appendix \ref{expsetting},
and the experimental settings for other cases besides (A) are shown
 in the corresponding parts in the appendix. 
Generated images are shown in Appendix \ref{sec: high resolution}.

\textbf{Regular architecture and effect of spectral norm (SN).}
We use the two neural architectures in~\cite{miyato2018spectral}: 
standard CNN and ResNet, and report results in Table~\ref{Tab:metric}.
First, without spectral normalization (SN),
 RS-GAN achieves much higher accuracy than JS-GAN and WGAN-GP on CIFAR-10.
Second, with SN, RS-GAN achieves 1-2 points lower FID 
 score than JS-GAN, i.e., it's slightly better.
 We suspect that SN smoothens the landscape, thus greatly reducing the gap between JS-GAN and RS-GAN. Note that the scores of JS-GAN and WGAN-GP (both without and with SN) are comparable to or better than the scores in Table 2 of \citet{miyato2018spectral}.

 \textbf{Narrow nets.}
For both CNN and ResNet, we reduce the number of channels for all convolutional layers in the  generator and  discriminator to (1) half, (2) quarter and (3) bottleneck (for ResNet structure). The experimental results are provided in Table~\ref{Tab:metric}.
We consider the gap between RS-GAN and JS-GAN for regular width
as a baseline. 
For narrow nets,
the gap between RS-GAN and JS-GAN is similar or larger in most cases,
and can be much larger (e.g. $ > 13 $ FID) in some cases.
The fluctuations in the gaps are consistent with landscape theory: if JS-GAN training  gets stuck at a bad basin then the performance is bad; if it converges to a good basin, then the performance is reasonably good. 
In CIFAR-10, compared to SN-GAN with the conventional ResNet (FID=20.13), we can achieve a relatively close result by using RS-GAN with 28\% parameters (half channel, FID=21.78).

\iffalse 
\footnote{
Many empirical works  design compact neural-nets, e.g., network pruning \cite{han2015deep} and architecture search \cite{tan2019efficientnet}.
In contrast, the study of compact GANs is relatively scarce (e.g., \cite{li2020gan}).
See the appendix for a discussion. 
}
\fi

\textbf{High resolution data experiments.} Sec.~\ref{sec: empirical and population} discusses that the non-convexity of JS-GAN will become a more severe issue when the number of samples is limited compared to the data space (e.g., high resolution space or limited data points). We conduct experiments with LSUN Church and Tower images of size $256\times256$. %
RS-GAN can generate higher visual quality images than JS-GAN (Appendix~\ref{sec: high resolution}). Similarly, using another model architecture,  \cite{jolicoeur2018relativistic} achieves a better FID score with RSGAN on the CAT dataset, which contains a small number of images (e.g., 2k $256 \times 256$ images).

\textbf{Imbalanced data experiments.} 
For imbalanced data, we find more evidence for the existence of JS-GAN's bad basins.The reason:  JS-GAN would have a deeper bad basin, and hence a higher  chance to get stuck. We conduct ablation experiments on 2-cluster data and MNIST. Both cases show that JS-GAN ends up with mode collapse while RS-GAN can generate data with proportions similar to the imbalanced true data. Check Appendix~\ref{sec: imbalance} for more.

\begin{wrapfigure}{r}{0.35\textwidth}
\centering
\vspace{-0.3cm}
\includegraphics[width=1\linewidth]{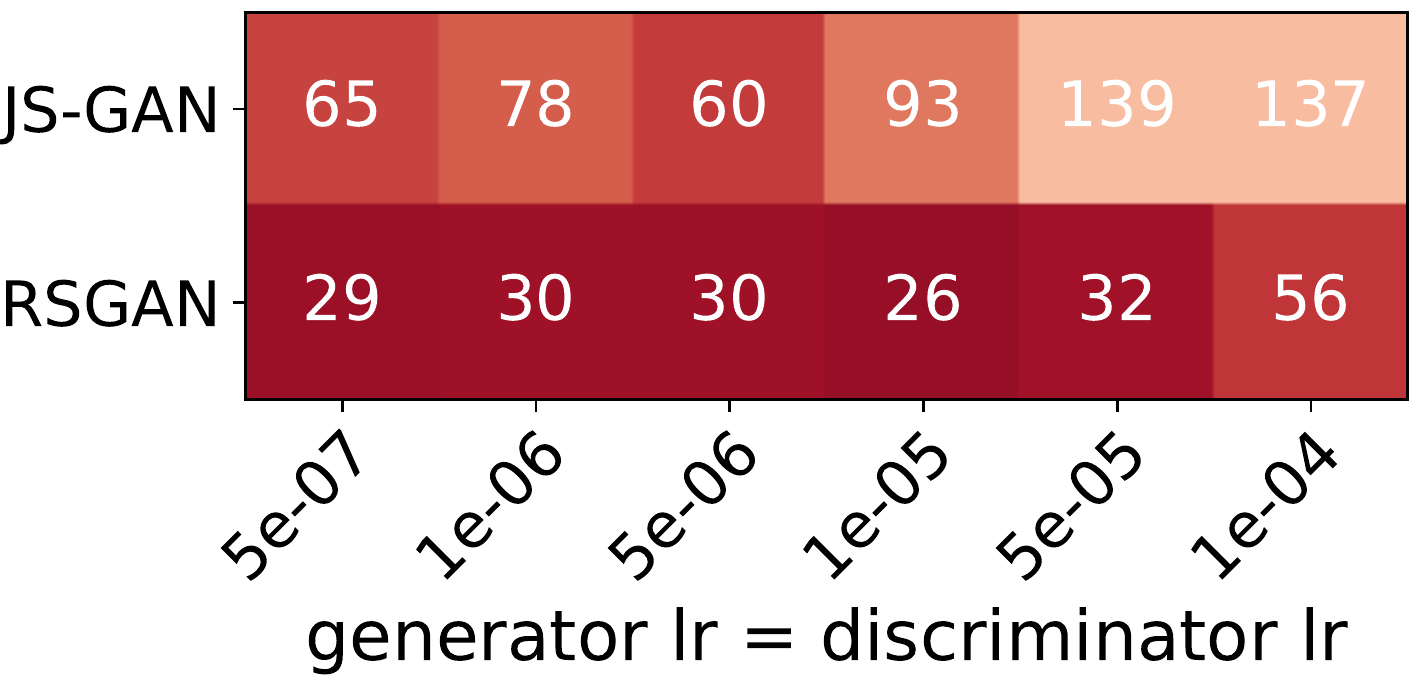}
   \label{fig8: MNIST}
\vspace{-0.7cm}
\end{wrapfigure}
\textbf{Bad initial point experiments.} 
A better landscape is more robust to initialization. %
On MNIST data, we find  a  discriminator (not random)
which permits RS-GAN to converge to a much better solution 
than JS-GAN when used as the starting point.
The FID scores are reported in the  table to the right. The gap is at least 30 FID scores (a much higher gap than the gap for a random initialization). 
Check Appendix \ref{app: bad initial experiment} for more. 

\iffalse
\begin{figure}[htbp!]
\centering
\vspace{-0.3cm}
\includegraphics[width=0.4\linewidth]{figure/MNSIT_FID_goodGcrapyD.pdf}
   \label{fig8: MNIST}
\vspace{-0.3cm}
\end{figure}
\fi

\textbf{Combining with EMA.}
It is known that non-convergence can be alleviated via EMA \cite{yazici2018unusual},
and our theory predicts that the global landscape issue can be alleviated by RpGAN. Non-convergence and global landscape are orthogonal: no matter whether iterates are near a sub-optimal local basin or a globally-optimal basin, the algorithm may cycle. 
Therefore, we conjecture that the effect of EMA 
and the effect of RS-GAN are ``additive''.
Our simulations show that EMA can improve both JS-GAN and RS-GAN,
and the gap is approximately preserved after adding EMA. 
Combining EMA and RS-GAN, we achieve a similar result to the baseline (JS-GAN + SN, no EMA, FID = 20.13) using 16.8\% parameters (Resnet with bottleneck plus EMA, FID=21.38). See Appendix~\ref{expsetting} for more.

\textbf{General RpGAN:} %
We conduct additional experiments on other losses, including hinge loss and least squares loss. See Appendix~\ref{sec: hingereal} and~\ref{sec: lsreal} for more.

\vspace{-0.2cm}

\section{Conclusion}
\vspace{-0.3cm}

Global optimization landscape,
together with statistical analysis and convergence analysis, 
are important theoretical angles. %
In this work, we study the global landscape of GANs. Our major questions are: (1) Does the original JS-GAN formulation have a good landscape? (2) If not, 
is there a simple way to improve the landscape in theory?
(3) Does the improved landscape lead to better performance?
 First, studying the empirical versions of SepGAN (extension of
 JS-GAN) we prove that it has exponentially many bad basins,
 which are mode-collapse patterns. 
 Second, we prove that a simple coupling idea (resulting in RpGAN)
 can remove  bad basins in theory. 
 Finally, we verify a few predictions based on the landscape
 theory, e.g., RS-GAN has a bigger advantage over JS-GAN
for narrow nets.

 \newpage
 \section*{Acknowledgements} This work is supported in part by NSF under Grant $\#$ 1718221, 2008387, 1755847 and MRI $\#$1725729, and NIFA award 2020-67021-32799. We thank Sewoong Oh for pointing
out the connection of the earlier version of our work to \cite{jolicoeur2018relativistic}.
 
 \iffalse 
 formulations. We assume that the true data distribution is the empirical distribution consisting of $n $ points. In this formulation,  the samples are moving during the optimization process, which is what is happening in practical training. This analysis  also mimics an $n$-mode distribution. In fact, it captures the macro-learning part of the learning process. 

We show that using this perspective,  the JS-GAN formulation has exponentially many bad basins, and they correspond to mode collapse. We also show that  RS-GAN does not have bad basins. Further, we analyze the training dynamics for  log-linear discriminators and show that there is a global Lyapunov function for RS-GAN. Our simulation results on synthetic data and real data also provide some empirical evidence that RS-GAN has a better landscape than JS-GAN.
\fi 

 \ifNIPSonly

\clearpage

\section*{Broader Impact}
Generative adversarial nets (GANs) are an important tool for modeling of high-dimensional distributions. However, the theoretical understanding of GANs is still limited. While recent work has investigated GANs form a statistical and an optimization viewpoint, little is known about their global landscape.

This paper is a first step to add theory about the global landscape of GANs. We think this research will have a societal impact as it enables practitioners to make a more informed decision about the type of loss function that should be optimized.

For example, we show that RS-GAN has benefits: (1) fewer bad basins, permitting a more stable optimization; (2) better results for narrow deep net generators, permitting its use on smaller devices, and promoting the development of smart devices and smart home services; (3) better performance on high-resolution image generation, which can be helpful in the fashion, animation, film and television industries. 

Since we are focusing on the optimization of GANs,  we do not think that this research has any ethical disadvantages beyond those of GANs. Illegal fake images or videos may be the main concern related to GAN.

We hope that the research community finds these results interesting and that people will join our effort in uncovering more theory for GANs.

\fi 

{\footnotesize
\bibliography{reference}

\begin{thebibliography}{92}
\providecommand{\natexlab}[1]{#1}
\providecommand{\url}[1]{\texttt{#1}}
\expandafter\ifx\csname urlstyle\endcsname\relax
  \providecommand{\doi}[1]{doi: #1}\else
  \providecommand{\doi}{doi: \begingroup \urlstyle{rm}\Url}\fi

\bibitem[Adler and Lunz(2018)]{adler2018banach}
J.~Adler and S.~Lunz.
\newblock Banach wasserstein gan.
\newblock In \emph{NeurIPS}, 2018.

\bibitem[Allen-Zhu et~al.(2019)Allen-Zhu, Li, and Song]{allen2018convergence}
Z.~Allen-Zhu, Y.~Li, and Z.~Song.
\newblock A convergence theory for deep learning via over-parameterization.
\newblock In \emph{ICML}, 2019.

\bibitem[Arjovsky and Bottou(2017)]{arjovsky2017towards}
M.~Arjovsky and L.~Bottou.
\newblock Towards principled methods for training generative adversarial
  networks.
\newblock In \emph{ICLR}, 2017.

\bibitem[Arjovsky et~al.(2017)Arjovsky, Chintala, and
  Bottou]{arjovsky2017wasserstein}
M.~Arjovsky, S.~Chintala, and L.~Bottou.
\newblock Wasserstein gan.
\newblock In \emph{ICML}, 2017.

\bibitem[Arora et~al.(2017)Arora, Ge, Liang, Ma, and
  Zhang]{arora2017generalization}
S.~Arora, R.~Ge, Y.~Liang, T.~Ma, and Y.~Zhang.
\newblock Generalization and equilibrium in generative adversarial nets
  ({GAN}s).
\newblock In \emph{ICML}, 2017.

\bibitem[Azizian et~al.(2019)Azizian, Mitliagkas, Lacoste-Julien, and
  Gidel]{azizian2019tight}
W.~Azizian, I.~Mitliagkas, S.~Lacoste-Julien, and G.~Gidel.
\newblock A tight and unified analysis of extragradient for a whole spectrum of
  differentiable games.
\newblock \emph{arXiv preprint arXiv:1906.05945}, 2019.

\bibitem[Bai et~al.(2018)Bai, Ma, and Risteski]{bai2018approximability}
Y.~Bai, T.~Ma, and A.~Risteski.
\newblock Approximability of discriminators implies diversity in gans.
\newblock \emph{arXiv preprint arXiv:1806.10586}, 2018.

\bibitem[Balduzzi et~al.(2018)Balduzzi, Racaniere, Martens, Foerster, Tuyls,
  and Graepel]{balduzzi2018mechanics}
D.~Balduzzi, S.~Racaniere, J.~Martens, J.~Foerster, K.~Tuyls, and T.~Graepel.
\newblock The mechanics of n-player differentiable games.
\newblock \emph{arXiv preprint arXiv:1802.05642}, 2018.

\bibitem[Bartlett et~al.(2017)Bartlett, Foster, and
  Telgarsky]{bartlett2017spectrally}
P.~L. Bartlett, D.~J. Foster, and M.~J. Telgarsky.
\newblock Spectrally-normalized margin bounds for neural networks.
\newblock In \emph{NeurIPS}, 2017.

\bibitem[Bengio and LeCun(2007)]{Bengio+chapter2007}
Y.~Bengio and Y.~LeCun.
\newblock Scaling learning algorithms towards {AI}.
\newblock In \emph{Large Scale Kernel Machines}. MIT Press, 2007.

\bibitem[Berard et~al.(2019)Berard, Gidel, Almahairi, Vincent, and
  Lacoste-Julien]{berard2019closer}
H.~Berard, G.~Gidel, A.~Almahairi, P.~Vincent, and S.~Lacoste-Julien.
\newblock A closer look at the optimization landscapes of generative
  adversarial networks.
\newblock \emph{arXiv preprint arXiv:1906.04848}, 2019.

\bibitem[Berthelot et~al.(2017)Berthelot, Schumm, and Metz]{berthelot2017began}
D.~Berthelot, T.~Schumm, and L.~Metz.
\newblock Began: Boundary equilibrium generative adversarial networks.
\newblock \emph{arXiv preprint arXiv:1703.10717}, 2017.

\bibitem[Berthelot et~al.(2020)Berthelot, Milanfar, and
  Goodfellow]{berthelot2020creating}
D.~Berthelot, P.~Milanfar, and I.~Goodfellow.
\newblock Creating high resolution images with a latent adversarial generator.
\newblock \emph{arXiv preprint arXiv:2003.02365}, 2020.

\bibitem[Bhojanapalli et~al.(2016)Bhojanapalli, Neyshabur, and
  Srebro]{bhojanapalli2016global}
S.~Bhojanapalli, B.~Neyshabur, and N.~Srebro.
\newblock Global optimality of local search for low rank matrix recovery.
\newblock In \emph{NeurIPS}, 2016.

\bibitem[Bianchini and Gori(1996)]{bianchini1996optimal}
M.~Bianchini and M.~Gori.
\newblock Optimal learning in artificial neural networks: A review of
  theoretical results.
\newblock \emph{Neurocomputing}, 1996.

\bibitem[Bi{\'n}kowski et~al.(2018)Bi{\'n}kowski, Sutherland, Arbel, and
  Gretton]{binkowski2018demystifying}
M.~Bi{\'n}kowski, D.~J. Sutherland, M.~Arbel, and A.~Gretton.
\newblock Demystifying mmd gans.
\newblock In \emph{ICLR}, 2018.

\bibitem[Bovier et~al.(2004)Bovier, Eckhoff, Gayrard, and
  Klein]{bovier2004metastability}
A.~Bovier, M.~Eckhoff, V.~Gayrard, and M.~Klein.
\newblock Metastability in reversible diffusion processes i. sharp asymptotics
  for capcities and exit times.
\newblock \emph{JEMS}, 2004.

\bibitem[Brock et~al.(2018)Brock, Donahue, and Simonyan]{brock2018large}
A.~Brock, J.~Donahue, and K.~Simonyan.
\newblock Large scale gan training for high fidelity natural image synthesis.
\newblock \emph{arXiv preprint arXiv:1809.11096}, 2018.

\bibitem[Chi et~al.(2019)Chi, Lu, and Chen]{chi2019nonconvex}
Y.~Chi, Y.~M. Lu, and Y.~Chen.
\newblock Nonconvex optimization meets low-rank matrix factorization: An
  overview.
\newblock \emph{IEEE Transactions on Signal Processing}, 67\penalty0
  (20):\penalty0 5239--5269, 2019.

\bibitem[Chu et~al.(2019)Chu, Blanchet, and Glynn]{chu2019probability}
C.~Chu, J.~Blanchet, and P.~Glynn.
\newblock Probability functional descent: A unifying perspective on gans,
  variational inference, and reinforcement learning.
\newblock \emph{arXiv preprint arXiv:1901.10691}, 2019.

\bibitem[Cully et~al.(2017)Cully, Chang, and Demiris]{cully2017magan}
R.~W.~A. Cully, H.~J. Chang, and Y.~Demiris.
\newblock Magan: Margin adaptation for generative adversarial networks.
\newblock \emph{arXiv preprint arXiv:1704.03817}, 2017.

\bibitem[Daskalakis and Panageas(2018)]{daskalakis2018limit}
C.~Daskalakis and I.~Panageas.
\newblock The limit points of (optimistic) gradient descent in min-max
  optimization.
\newblock In \emph{NeurIPS}, 2018.

\bibitem[Daskalakis et~al.(2018)Daskalakis, Ilyas, Syrgkanis, and
  Zeng]{daskalakis2017training}
C.~Daskalakis, A.~Ilyas, V.~Syrgkanis, and H.~Zeng.
\newblock Training gans with optimism.
\newblock In \emph{ICLR}, 2018.

\bibitem[Deshpande et~al.(2018)Deshpande, Zhang, and
  Schwing]{deshpande2018generative}
I.~Deshpande, Z.~Zhang, and A.~Schwing.
\newblock Generative modeling using the sliced wasserstein distance.
\newblock In \emph{CVPR}, 2018.

\bibitem[Deshpande et~al.(2019)Deshpande, Hu, Sun, Pyrros, Siddiqui, Koyejo,
  Zhao, Forsyth, and Schwing]{IDeshpandeCVPR2019}
I.~Deshpande, Y.-T. Hu, R.~Sun, A.~Pyrros, N.~Siddiqui, S.~Koyejo, Z.~Zhao,
  D.~Forsyth, and A.~G. Schwing.
\newblock {Max-Sliced Wasserstein Distance and its use for GANs}.
\newblock In \emph{CVPR}, 2019.

\bibitem[Ding et~al.(2019)Ding, Li, and Sun]{ding2019sub}
T.~Ding, D.~Li, and R.~Sun.
\newblock Sub-optimal local minima exist for almost all over-parameterized
  neural networks.
\newblock \emph{arXiv preprint arXiv:1911.01413}, 2019.

\bibitem[Du et~al.(2018)Du, Lee, Li, Wang, and Zhai]{du2018gradient}
S.~S. Du, J.~D. Lee, H.~Li, L.~Wang, and X.~Zhai.
\newblock Gradient descent finds global minima of deep neural networks.
\newblock \emph{arXiv preprint arXiv:1811.03804}, 2018.

\bibitem[Farnia and Ozdaglar(2020)]{farnia2020gans}
F.~Farnia and A.~Ozdaglar.
\newblock Gans may have no nash equilibria.
\newblock \emph{arXiv preprint arXiv:2002.09124}, 2020.

\bibitem[Farnia and Tse(2018)]{farnia2018convex}
F.~Farnia and D.~Tse.
\newblock A convex duality framework for gans.
\newblock In \emph{NeurIPS}, 2018.

\bibitem[Feizi et~al.(2017)Feizi, Farnia, Ginart, and
  Tse]{feizi2017understanding}
S.~Feizi, F.~Farnia, T.~Ginart, and D.~Tse.
\newblock Understanding gans: the lqg setting.
\newblock \emph{arXiv preprint arXiv:1710.10793}, 2017.

\bibitem[Ge et~al.(2016)Ge, Lee, and Ma]{ge2016matrix}
R.~Ge, J.~D. Lee, and T.~Ma.
\newblock Matrix completion has no spurious local minimum.
\newblock In \emph{NeurIPS}, 2016.

\bibitem[Geiger et~al.(2018)Geiger, Spigler, d'Ascoli, Sagun, Baity-Jesi,
  Biroli, and Wyart]{geiger2018jamming}
M.~Geiger, S.~Spigler, S.~d'Ascoli, L.~Sagun, M.~Baity-Jesi, G.~Biroli, and
  M.~Wyart.
\newblock The jamming transition as a paradigm to understand the loss landscape
  of deep neural networks.
\newblock \emph{arXiv preprint arXiv:1809.09349}, 2018.

\bibitem[Gidel et~al.(2018)Gidel, Berard, Vignoud, Vincent, and
  Lacoste-Julien]{gidel2018variational}
G.~Gidel, H.~Berard, G.~Vignoud, P.~Vincent, and S.~Lacoste-Julien.
\newblock A variational inequality perspective on generative adversarial
  networks.
\newblock \emph{arXiv preprint arXiv:1802.10551}, 2018.

\bibitem[Gidel et~al.(2019)Gidel, Hemmat, Pezeshki, Lepriol, Huang,
  Lacoste-Julien, and Mitliagkas]{gidel2018negative}
G.~Gidel, R.~A. Hemmat, M.~Pezeshki, R.~Lepriol, G.~Huang, S.~Lacoste-Julien,
  and I.~Mitliagkas.
\newblock Negative momentum for improved game dynamics.
\newblock In \emph{AISTATS}, 2019.

\bibitem[Goodfellow et~al.(2014)Goodfellow, Pouget-Abadie, Mirza, Xu,
  Warde-Farley, Ozair, Courville, and Bengio]{goodfellow2014generative}
I.~Goodfellow, J.~Pouget-Abadie, M.~Mirza, B.~Xu, D.~Warde-Farley, S.~Ozair,
  A.~Courville, and Y.~Bengio.
\newblock Generative adversarial nets.
\newblock In \emph{NeurIPS}, 2014.

\bibitem[Gulrajani et~al.(2017)Gulrajani, Ahmed, Arjovsky, Dumoulin, and
  Courville]{gulrajani2017improved}
I.~Gulrajani, F.~Ahmed, M.~Arjovsky, V.~Dumoulin, and A.~Courville.
\newblock Improved training of wasserstein gans.
\newblock In \emph{NeurIPS}, 2017.

\bibitem[Huang et~al.(2017)Huang, Li, Poursaeed, Hopcroft, and
  Belongie]{huang2016stacked}
X.~Huang, Y.~Li, O.~Poursaeed, J.~Hopcroft, and S.~Belongie.
\newblock Stacked generative adversarial networks.
\newblock In \emph{CVPR}, 2017.

\bibitem[Jacot et~al.(2018)Jacot, Gabriel, and Hongler]{jacot2018neural}
A.~Jacot, F.~Gabriel, and C.~Hongler.
\newblock Neural tangent kernel: Convergence and generalization in neural
  networks.
\newblock In \emph{NeurIPS}, 2018.

\bibitem[Jin et~al.(2019)Jin, Netrapalli, and Jordan]{jin2019minmax}
C.~Jin, P.~Netrapalli, and M.~I. Jordan.
\newblock Minmax optimization: Stable limit points of gradient descent ascent
  are locally optimal.
\newblock \emph{arXiv preprint arXiv:1902.00618}, 2019.

\bibitem[Johnson and Zhang(2019)]{johnson2019framework}
R.~Johnson and T.~Zhang.
\newblock A framework of composite functional gradient methods for generative
  adversarial models.
\newblock \emph{IEEE transactions on pattern analysis and machine
  intelligence}, 2019.

\bibitem[Jolicoeur-Martineau(2018)]{jolicoeur2018relativistic}
A.~Jolicoeur-Martineau.
\newblock The relativistic discriminator: a key element missing from standard
  gan.
\newblock In \emph{ICLR}, 2018.

\bibitem[Jolicoeur-Martineau(2019)]{JolicoeurMartineau2019OnRF}
A.~Jolicoeur-Martineau.
\newblock On relativistic f-divergences.
\newblock In \emph{ICML}, 2019.

\bibitem[Karras et~al.(2019)Karras, Laine, and Aila]{karras2019style}
T.~Karras, S.~Laine, and T.~Aila.
\newblock A style-based generator architecture for generative adversarial
  networks.
\newblock In \emph{Proceedings of the IEEE conference on computer vision and
  pattern recognition}, pages 4401--4410, 2019.

\bibitem[Karras et~al.(2020)Karras, Laine, Aittala, Hellsten, Lehtinen, and
  Aila]{karras2020analyzing}
T.~Karras, S.~Laine, M.~Aittala, J.~Hellsten, J.~Lehtinen, and T.~Aila.
\newblock Analyzing and improving the image quality of stylegan.
\newblock In \emph{Proceedings of the IEEE/CVF Conference on Computer Vision
  and Pattern Recognition}, pages 8110--8119, 2020.

\bibitem[Kodali et~al.(2017)Kodali, Abernethy, Hays, and
  Kira]{kodali2017convergence}
N.~Kodali, J.~Abernethy, J.~Hays, and Z.~Kira.
\newblock On convergence and stability of gans.
\newblock \emph{arXiv preprint arXiv:1705.07215}, 2017.

\bibitem[Kolouri et~al.(2018)Kolouri, Martin, and Rohde]{kolouri2018sliced}
S.~Kolouri, C.~E. Martin, and G.~K. Rohde.
\newblock Sliced-wasserstein autoencoder: An embarrassingly simple generative
  model.
\newblock \emph{arXiv preprint arXiv:1804.01947}, 2018.

\bibitem[Lee et~al.(2019)Lee, Xiao, Schoenholz, Bahri, Novak, Sohl-Dickstein,
  and Pennington]{lee2019wide}
J.~Lee, L.~Xiao, S.~Schoenholz, Y.~Bahri, R.~Novak, J.~Sohl-Dickstein, and
  J.~Pennington.
\newblock Wide neural networks of any depth evolve as linear models under
  gradient descent.
\newblock In \emph{Advances in neural information processing systems}, pages
  8572--8583, 2019.

\bibitem[Lei et~al.(2019)Lei, Lee, Dimakis, and Daskalakis]{lei2019sgd}
Q.~Lei, J.~D. Lee, A.~G. Dimakis, and C.~Daskalakis.
\newblock Sgd learns one-layer networks in wgans.
\newblock \emph{arXiv preprint arXiv:1910.07030}, 2019.

\bibitem[Li et~al.(2017{\natexlab{a}})Li, Chang, Cheng, Yang, and
  P{\'o}czos]{li2017mmd}
C.-L. Li, W.-C. Chang, Y.~Cheng, Y.~Yang, and B.~P{\'o}czos.
\newblock Mmd gan: Towards deeper understanding of moment matching network.
\newblock In \emph{NeurIPS}, 2017{\natexlab{a}}.

\bibitem[Li et~al.(2018{\natexlab{a}})Li, Ding, and Sun]{li2018over}
D.~Li, T.~Ding, and R.~Sun.
\newblock Over-parameterized deep neural networks have no strict local minima
  for any continuous activations.
\newblock \emph{arXiv preprint arXiv:1812.11039}, 2018{\natexlab{a}}.

\bibitem[Li et~al.(2017{\natexlab{b}})Li, Madry, Peebles, and
  Schmidt]{li2017limitations}
J.~Li, A.~Madry, J.~Peebles, and L.~Schmidt.
\newblock On the limitations of first-order approximation in gan dynamics.
\newblock \emph{arXiv preprint arXiv:1706.09884}, 2017{\natexlab{b}}.

\bibitem[Li et~al.(2018{\natexlab{b}})Li, Madry, Peebles, and
  Schmidt]{li2017towards}
J.~Li, A.~Madry, J.~Peebles, and L.~Schmidt.
\newblock Towards understanding the dynamics of generative adversarial
  networks.
\newblock In \emph{ICML}, 2018{\natexlab{b}}.

\bibitem[Li and Malik(2018)]{li2018implicit}
K.~Li and J.~Malik.
\newblock Implicit maximum likelihood estimation.
\newblock \emph{arXiv preprint arXiv:1809.09087}, 2018.

\bibitem[Li et~al.(2017{\natexlab{c}})Li, Schwing, Wang, and Zemel]{LiNIPS2017}
Y.~Li, A.~G. Schwing, K.-C. Wang, and R.~Zemel.
\newblock {Dualing GANs}.
\newblock In \emph{NeurIPS}, 2017{\natexlab{c}}.

\bibitem[Liang et~al.(2018{\natexlab{a}})Liang, Sun, Lee, and
  Srikant]{liang2018adding}
S.~Liang, R.~Sun, J.~D. Lee, and R.~Srikant.
\newblock Adding one neuron can eliminate all bad local minima.
\newblock In \emph{Advances in Neural Information Processing Systems}, pages
  4350--4360, 2018{\natexlab{a}}.

\bibitem[Liang et~al.(2018{\natexlab{b}})Liang, Sun, Li, and
  Srikant]{liang2018understanding}
S.~Liang, R.~Sun, Y.~Li, and R.~Srikant.
\newblock Understanding the loss surface of neural networks for binary
  classification.
\newblock \emph{arXiv preprint arXiv:1803.00909}, 2018{\natexlab{b}}.

\bibitem[Liang et~al.(2019)Liang, Sun, and Srikant]{liang2019revisiting}
S.~Liang, R.~Sun, and R.~Srikant.
\newblock Revisiting landscape analysis in deep neural networks: Eliminating
  decreasing paths to infinity.
\newblock \emph{arXiv preprint arXiv:1912.13472}, 2019.

\bibitem[Lin et~al.(2018)Lin, Khetan, Fanti, and Oh]{lin2018pacgan}
Z.~Lin, A.~Khetan, G.~Fanti, and S.~Oh.
\newblock Pacgan: The power of two samples in generative adversarial networks.
\newblock In \emph{NeurIPS}, 2018.

\bibitem[Liu and Chaudhuri(2018)]{liu2018inductive}
S.~Liu and K.~Chaudhuri.
\newblock The inductive bias of restricted f-gans.
\newblock \emph{arXiv preprint arXiv:1809.04542}, 2018.

\bibitem[Livni et~al.(2014)Livni, Shalev-Shwartz, and
  Shamir]{livni2014computational}
R.~Livni, S.~Shalev-Shwartz, and O.~Shamir.
\newblock On the computational efficiency of training neural networks.
\newblock In \emph{NeurIPS}, 2014.

\bibitem[Makkuva et~al.(2019)Makkuva, Taghvaei, Oh, and
  Lee]{makkuva2019optimal}
A.~V. Makkuva, A.~Taghvaei, S.~Oh, and J.~D. Lee.
\newblock Optimal transport mapping via input convex neural networks.
\newblock \emph{arXiv preprint arXiv:1908.10962}, 2019.

\bibitem[{Mao} et~al.(2016){Mao}, {Li}, {Xie}, {Lau}, {Wang}, and
  {Smolley}]{lsgan2016mao}
X.~{Mao}, Q.~{Li}, H.~{Xie}, R.~Y.~K. {Lau}, Z.~{Wang}, and S.~P. {Smolley}.
\newblock {Least Squares Generative Adversarial Networks}.
\newblock \emph{arXiv e-prints}, 2016.

\bibitem[Mao et~al.(2017)Mao, Li, Xie, Lau, Wang, and
  Paul~Smolley]{mao2017least}
X.~Mao, Q.~Li, H.~Xie, R.~Y. Lau, Z.~Wang, and S.~Paul~Smolley.
\newblock Least squares generative adversarial networks.
\newblock In \emph{ICCV}, 2017.

\bibitem[Mazumdar et~al.(2019)Mazumdar, Jordan, and
  Sastry]{mazumdar2019finding}
E.~V. Mazumdar, M.~I. Jordan, and S.~S. Sastry.
\newblock On finding local nash equilibria (and only local nash equilibria) in
  zero-sum games.
\newblock \emph{arXiv preprint arXiv:1901.00838}, 2019.

\bibitem[Mescheder et~al.(2018)Mescheder, Geiger, and
  Nowozin]{mescheder2018training}
L.~Mescheder, A.~Geiger, and S.~Nowozin.
\newblock Which training methods for gans do actually converge?
\newblock In \emph{ICML}, 2018.

\bibitem[Metz et~al.(2017)Metz, Poole, Pfau, and
  Sohl-Dickstein]{metz2016unrolled}
L.~Metz, B.~Poole, D.~Pfau, and J.~Sohl-Dickstein.
\newblock Unrolled generative adversarial networks.
\newblock In \emph{ICLR}, 2017.

\bibitem[Miyato et~al.(2018)Miyato, Kataoka, Koyama, and
  Yoshida]{miyato2018spectral}
T.~Miyato, T.~Kataoka, M.~Koyama, and Y.~Yoshida.
\newblock Spectral normalization for generative adversarial networks.
\newblock In \emph{ICLR}, 2018.

\bibitem[Mohamed and Lakshminarayanan(2016)]{mohamed2016learning}
S.~Mohamed and B.~Lakshminarayanan.
\newblock Learning in implicit generative models.
\newblock \emph{arXiv preprint arXiv:1610.03483}, 2016.

\bibitem[Mroueh and Sercu(2017)]{mroueh2017fisher}
Y.~Mroueh and T.~Sercu.
\newblock Fisher gan.
\newblock In \emph{NeurIPS}, 2017.

\bibitem[Mroueh et~al.(2017)Mroueh, Sercu, and Goel]{mroueh2017mcgan}
Y.~Mroueh, T.~Sercu, and V.~Goel.
\newblock Mcgan: Mean and covariance feature matching gan.
\newblock \emph{arXiv preprint arXiv:1702.08398}, 2017.

\bibitem[Nagarajan and Kolter(2017)]{nagarajan2017gradient}
V.~Nagarajan and J.~Z. Kolter.
\newblock Gradient descent gan optimization is locally stable.
\newblock In \emph{NeurIPS}, 2017.

\bibitem[Nguyen and Hein(2017)]{nguyen2017loss}
Q.~Nguyen and M.~Hein.
\newblock The loss surface of deep and wide neural networks.
\newblock In \emph{ICML}, 2017.

\bibitem[Nguyen et~al.(2018)Nguyen, Mukkamala, and Hein]{nguyen2018loss}
Q.~Nguyen, M.~C. Mukkamala, and M.~Hein.
\newblock On the loss landscape of a class of deep neural networks with no bad
  local valleys.
\newblock \emph{arXiv preprint arXiv:1809.10749}, 2018.

\bibitem[Nowozin et~al.(2016)Nowozin, Cseke, and Tomioka]{nowozin2016f}
S.~Nowozin, B.~Cseke, and R.~Tomioka.
\newblock f-gan: Training generative neural samplers using variational
  divergence minimization.
\newblock In \emph{NeurIPS}, 2016.

\bibitem[Poole et~al.(2016)Poole, Alemi, Sohl-Dickstein, and
  Angelova]{poole2016improved}
B.~Poole, A.~A. Alemi, J.~Sohl-Dickstein, and A.~Angelova.
\newblock Improved generator objectives for gans.
\newblock \emph{arXiv preprint arXiv:1612.02780}, 2016.

\bibitem[Radford et~al.(2016)Radford, Metz, and
  Chintala]{radford2015unsupervised}
A.~Radford, L.~Metz, and S.~Chintala.
\newblock Unsupervised representation learning with deep convolutional
  generative adversarial networks.
\newblock In \emph{ICLR}, 2016.

\bibitem[Reddi et~al.(2018)Reddi, Kale, and Kumar]{reddi2018convergence}
S.~J. Reddi, S.~Kale, and S.~Kumar.
\newblock On the convergence of adam and beyond.
\newblock In \emph{ICLR}, 2018.

\bibitem[Salimans et~al.(2016)Salimans, Goodfellow, Zaremba, Cheung, Radford,
  Chen, and Chen]{salimans2016improved}
T.~Salimans, I.~Goodfellow, W.~Zaremba, V.~Cheung, A.~Radford, X.~Chen, and
  X.~Chen.
\newblock Improved techniques for training gans.
\newblock In \emph{NeurIPS}, 2016.

\bibitem[Sanjabi et~al.(2018)Sanjabi, Ba, Razaviyayn, and
  Lee]{sanjabi2018convergence}
M.~Sanjabi, J.~Ba, M.~Razaviyayn, and J.~D. Lee.
\newblock On the convergence and robustness of training gans with regularized
  optimal transport.
\newblock In \emph{NeurIPS}, 2018.

\bibitem[Sun et~al.(2020)Sun, Li, Liang, Ding, and Srikant]{sun2020global}
R.~Sun, D.~Li, S.~Liang, T.~Ding, and R.~Srikant.
\newblock The global landscape of neural networks: An overview.
\newblock \emph{IEEE Signal Processing Magazine}, 37\penalty0 (5):\penalty0
  95--108, 2020.

\bibitem[Sun(2020)]{sun2020optimization}
R.-Y. Sun.
\newblock Optimization for deep learning: An overview.
\newblock \emph{Journal of the Operations Research Society of China}, pages
  1--46, 2020.

\bibitem[Tran et~al.(2017)Tran, Ranganath, and Blei]{Tran2017DeepAH}
D.~Tran, R.~Ranganath, and D.~M. Blei.
\newblock Deep and hierarchical implicit models.
\newblock In \emph{NeurIPS}, 2017.

\bibitem[Unterthiner et~al.(2018)Unterthiner, Nessler, Seward, Klambauer,
  Heusel, Ramsauer, and Hochreiter]{unterthiner2018coulomb}
T.~Unterthiner, B.~Nessler, C.~Seward, G.~Klambauer, M.~Heusel, H.~Ramsauer,
  and S.~Hochreiter.
\newblock Coulomb gans: Provably optimal nash equilibria via potential fields.
\newblock In \emph{International Conference on Learning Representations}, 2018.

\bibitem[Venturi et~al.(2018)Venturi, Bandeira, and Bruna]{venturi2018spurious}
L.~Venturi, A.~S. Bandeira, and J.~Bruna.
\newblock Spurious valleys in two-layer neural network optimization landscapes.
\newblock \emph{arXiv preprint arXiv:1802.06384}, 2018.

\bibitem[Wang et~al.(2018)Wang, Yu, Wu, Gu, Liu, Dong, Qiao, and
  Change~Loy]{wang2018esrgan}
X.~Wang, K.~Yu, S.~Wu, J.~Gu, Y.~Liu, C.~Dong, Y.~Qiao, and C.~Change~Loy.
\newblock Esrgan: Enhanced super-resolution generative adversarial networks.
\newblock In \emph{ECCV}, 2018.

\bibitem[Wu et~al.(2019)Wu, Huang, Li, Thoma, and Van~Gool]{wu2017sliced}
J.~Wu, Z.~Huang, W.~Li, J.~Thoma, and L.~Van~Gool.
\newblock Sliced wasserstein generative models.
\newblock In \emph{CVPR}, 2019.

\bibitem[Xiangli et~al.(2020)Xiangli, Deng, Dai, Loy, and Lin]{xiangli2020real}
Y.~Xiangli, Y.~Deng, B.~Dai, C.~C. Loy, and D.~Lin.
\newblock Real or not real, that is the question.
\newblock \emph{arXiv preprint arXiv:2002.05512}, 2020.

\bibitem[Yaz{\i}c{\i} et~al.(2019)Yaz{\i}c{\i}, Foo, Winkler, Yap, Piliouras,
  and Chandrasekhar]{yazici2018unusual}
Y.~Yaz{\i}c{\i}, C.-S. Foo, S.~Winkler, K.-H. Yap, G.~Piliouras, and
  V.~Chandrasekhar.
\newblock The unusual effectiveness of averaging in gan training.
\newblock In \emph{ICLR}, 2019.

\bibitem[Zhang et~al.(2018)Zhang, Goodfellow, Metaxas, and
  Odena]{zhang2018self}
H.~Zhang, I.~Goodfellow, D.~Metaxas, and A.~Odena.
\newblock Self-attention generative adversarial networks.
\newblock In \emph{ICML}, 2018.

\bibitem[Zhang et~al.(2020)Zhang, Xiao, Sun, and Luo]{zhang2020single}
J.~Zhang, P.~Xiao, R.~Sun, and Z.-Q. Luo.
\newblock A single-loop smoothed gradient descent-ascent algorithm for
  nonconvex-concave min-max problems.
\newblock \emph{arXiv preprint arXiv:2010.15768}, 2020.

\bibitem[Zhang et~al.(2017)Zhang, Liang, and Charikar]{zhang2017hitting}
Y.~Zhang, P.~Liang, and M.~Charikar.
\newblock A hitting time analysis of stochastic gradient langevin dynamics.
\newblock \emph{arXiv preprint arXiv:1702.05575}, 2017.

\bibitem[Zou et~al.(2018)Zou, Cao, Zhou, and Gu]{zou2018stochastic}
D.~Zou, Y.~Cao, D.~Zhou, and Q.~Gu.
\newblock Stochastic gradient descent optimizes over-parameterized deep relu
  networks.
\newblock \emph{arXiv preprint arXiv:1811.08888}, 2018.

\end{thebibliography}
}
\bibliographystyle{abbrvnat}

\ifaddsupplement

\appendix

\onecolumn

\clearpage

\section*{\Large Appendix: Towards a Better Global Loss Landscape of GANs}

 The code is available at \href{https://github.com/AilsaF/RS-GAN}{https://github.com/AilsaF/RS-GAN}.
This appendix consists of additional experiments, related work, proofs, other results and various discussions. 

\iflonger 
\begin{itemize}
    \item \textbf{Additional experiments: } Include (a) more discussions of 2-cluster experiments (following Sec.~\ref{sec: case study}); (b) experiments for imbalanced data; (c) experiments for hinge loss and least-squares loss; (d) experiments for bad initial point (following Sec.~\ref{sec:experiments}); (e) experiments for high resolution image generation (following Sec.~\ref{sec: empirical and population})
    
    \item \textbf{Related work, discussions, proofs and other results:}
Include: related work; discussions of empirical loss (following Sec.~\ref{sec: empirical and population}); proofs of the main results Theorem \ref{prop: GAN all values, extension} and Theorem \ref{prop: RS-GAN all values, extension};
  details and proofs of other results in Sec. \ref{subsec: main results}. 
\end{itemize}
\fi 

{\footnotesize 
\tableofcontents
}

\section{ Related Work }\label{sec: related works}

We provide a more detailed overview of related work in this section. 

\textbf{Global analysis in supervised learning.}
Recently,  global landscape analysis has attracted much attention. See \citet{sun2020optimization,sun2020global,bianchini1996optimal} for  surveys
and \cite{liang2018adding,liang2019revisiting,ding2019sub,liang2018understanding,jacot2018neural,allen2018convergence,zou2018stochastic,du2018gradient}
for some recent works. It is widely believed that wide networks have a nice loss landscape and thus local minima are less of a concern
(e.g., \cite{livni2014computational,geiger2018jamming,li2018over}).
 However, this claim only holds for supervised learning, and it is not clear whether local  minima cause training difficulties for GANs.

\iffalse 
A few recent works in supervised learning \cite{jacot2018neural,allen2018convergence,zou2018stochastic,du2018gradient} 
 proved the convergence of gradient descent to global minima for wide neural networks.
They implicitly utilize the assumption that the ``shell'' loss function $\ell(\hat{y}, y)$
 is convex; in other words, the loss function is convex
 in the function space of $\hat{y} =f(x)$ where
 $f$ is the neural network.  These works further argue that a wide neural net does not cause extra training difficulties in the parameter space. 
Although the convexity of the ``shell'' loss function holds for supervised learning (e.g., quadratic loss and cross entropy loss are convex), it does not hold  for GANs. We think the lack of progress on the global convergence analysis of GANs is 
partially due to the lack of understanding of the optimization landscape of the shell loss function. This is one of the main motivations for us to study which shell loss function has a good landscape.
\fi 

\textbf{Single-mode analysis.}
For single-mode data, \citet{feizi2017understanding} 
and  \citet{mescheder2018training} provide a global analysis of  GANs. They consider a 
single point $0$ and a single Gaussian respectively. 
 \iflonger  
 \cite{mescheder2018training} consider the case that the true distribution $\Pd$ is a single point $ 0 $.
  Interestingly, we found that their proof cannot be directly generalized to the case that $ \Pd $ is a single non-zero point. 
 Thus even for the single non-zero point setting, it is natural to use the  RpGAN formulation for a better global landscape. 
 \cite{feizi2017understanding} consider learning a Gaussian $\Pd$ by a new ``quadratic GAN''.
 \fi 
\citet{feizi2017understanding} differs from ours in a few aspects.
First, they consider the single-mode setting which does not
have an issue of mode collapse. %
Second, they assume $\Pd$ is a Gaussian distribution, while we consider an arbitrary  empirical distribution.
  Third, they analyze ``quadratic-GAN,'' which is not common in practice,
while we analyze commonly used GAN formulations (including JS-GAN).

\iffalse 
One may wonder whether the work by  \citet{feizi2017understanding} can
be extended to a multi-Gaussian distribution. As we discuss later,
there is a macro-learning effect and micro-learning effect for learning a multi-mode distribution. Our work on $n$-point distributions captures the macro-learning effect, and \citet{feizi2017understanding} captures the micro-learning effect for Gaussian data.
It is  an interesting future work to combine the analysis of \citet{feizi2017understanding} and our analysis to study the multi-Gaussian case.
\fi 

\textbf{Mode collapse.}
Mode collapse  is one of the major challenges for GANs which received a lot of attention. 
There are a few high-level hypotheses, such as improper loss functions  \cite{arjovsky2017towards, arora2017generalization} and weak discriminators   \cite{metz2016unrolled,salimans2016improved,arora2017generalization,li2017towards}. Interestingly, RpGAN both changes the loss function and improves the discriminator. 
The theoretical analysis of mode collapse is relatively scarce.
\citet{lin2018pacgan} makes a key observation that two
distributions with the same total variation (TV) distance to true distribution 
do not exhibit the same degree of mode collapse. 
\iffalse 
\sout{For instance, consider $ Q = U[0,1] $,  $ P_1 =  U[ 0.2, 1]$ and $P_2 = 0.6 U [0, 0.5   ]  +   1.4 U ( [ 0.5, 1 ] )  $, where $U[a,b]$ is the uniform distribution on $[a,b]$. Then $TV(Q, P_1) = TV (Q, P_2) $, but $P_1$ has mode collapse while $P_2$ does not. They proposed to pack the samples, i.e., consider the distance of the product distribution $ P^m $ and $Q^m$. The main theoretical result is that ``$TV(P^m , Q^m)$  is a better loss to penalize strong mode collapse than $TV(P, Q)$.''}
\fi 
They proposed to pack the samples (PacGAN) to alleviate mode collapse. 
This work is rather different from ours.
First, they analyze the TV distance, %
while we analyzed SepGANs and RpGANs.
Second, their analysis is statistical, while
our analysis is about optimization. %
As for the empirical guidance, RpGAN and PacGAN are complimentary and can be used together (suggested by the author of \cite{jolicoeur2018relativistic}).
 There are a few more works that discuss mode collapse and/or local minima; we defer the discussion to Appendix \ref{app-sub: related works on local min}.

  \textbf{Theoretical studies of loss functions. }
The early work on GANs \cite{goodfellow2014generative} built a link between the min-max formulation and the J-S distance to justify the formulation.
 \citet{arjovsky2017towards} pointed out some possible drawbacks of J-S distance, and proposed a new loss based on Wasserstein distance, referred to as WGAN. Later, \citet{arora2017generalization} point out that 
both Wasserstein distance and J-S distance are not generalizable,
but they also argued that this is not too scary since people
are not directly minimizing these two distances but a class of metrics referred to as 
``neural-network distance.''

\textbf{Convergence analysis.}
Many recent works analyze convergence of GANs and/or min-max optimization, e.g.,
 \citep{daskalakis2017training,daskalakis2018limit,azizian2019tight,gidel2018negative,mazumdar2019finding,yazici2018unusual,jin2019minmax,sanjabi2018convergence,zhang2020single}. These works
 often only analyze local stability or convergence
    to local minima (or stationary points), making it different from our work.
     \citet{lei2019sgd} studied the convergence of WGAN,
 but restricted to 1-layer neural nets.

 \textbf{Other theoretical analysis.}
 There are a few other theoretical analysis of GANs, e.g., \cite{mohamed2016learning,liu2018inductive,farnia2018convex,binkowski2018demystifying,balduzzi2018mechanics,li2017limitations,makkuva2019optimal,lei2019sgd}.
 Most of these works are not directly related to our work.

 \iffalse 
 \citet{mohamed2016learning} analyzed GANs in the general
  framework of implicit generative models, and summarized
 four approaches to solve the problem.  
 \citet{liu2018inductive} analyzed the inductive bias
 in the restricted f-GAN: for instance,
 they analyzed the  globally optimal solution of a KL-GAN with linear
 discriminator. Different from our work,
 they do not analyze the local optima or the convergence.  \citet{farnia2018convex}
 proposed a convex duality framework to explain why regularizers used
 in f-GAN and W-GAN can greatly improve the performance.
 \citet{binkowski2018demystifying} discussed  the bias in the gradient during GAN training and the effect of this bias.
 \citet{balduzzi2018mechanics} studied
 the dynamics of $n$-player games. 
 \citet{li2017limitations} studied the dynamics
  of a Gaussian mixture model with two modes,
  but they do not extend their result to a more general setting.
   \citet{makkuva2019optimal} studied the problem of learning an optimal transport mapping using input convex neural networks. 
\citet{lei2019sgd} studied the global convergence of WGAN with 1-layer neural network. \tf{maybe we want to eliminate this paragraph if these works don't connect to ours directly}
\fi

\textbf{Other GAN Variants.} %
There are many GAN variants, e.g., WGAN~\cite{arjovsky2017wasserstein, arjovsky2017towards,gulrajani2017improved}
and variants~\cite{wu2017sliced, kolouri2018sliced, adler2018banach,deshpande2018generative,IDeshpandeCVPR2019},
$f$-GAN \cite{nowozin2016f},
SN-GAN \cite{miyato2018spectral}, self-attention GAN 
\cite{zhang2018self}, StyleGAN \cite{karras2019style,karras2020analyzing}
and many more~\cite{mao2017least,mroueh2017fisher, berthelot2017began, mroueh2017mcgan, cully2017magan, LiNIPS2017, li2017mmd, salimans2016improved, nowozin2016f,poole2016improved,metz2016unrolled,huang2016stacked,radford2015unsupervised,Bengio+chapter2007,li2017mmd}.  
Our analysis framework (analyzing global landscape of empirical loss) can potentially be applied to more variants mentioned above.

\vspace{-0.2cm}
 \subsection{Related Works on Local Minima and Mode Collapse}\label{app-sub: related works on local min}
 We discuss a few related works on local minima and mode collapse, including
 \citet{kodali2017convergence}, \citet{li2018implicit} 
 and \citet{unterthiner2018coulomb} that are mentioned in the main text. 
 
\textbf{DRAGAN.}
 \citet{kodali2017convergence} suggested the connection between mode collapse and a bad equilibrium based on the following empirical observation: a sudden increase of the gradient norm of the discriminator during training is associated with a sudden drop of the IS score. 
 However, \citet{kodali2017convergence} don't 
 present formal theoretical results %
  on the relation between mode collapse and a bad equilibrium.

\textbf{IMLE.}
 \citet{li2018implicit} proposed implicit maximum likelihood estimation (IMLE).
 \iflonger 
 some ideas that are  related to RpGANs and our analysis in Section~\ref{sec: intuition and toy results}.  \citet{li2018implicit} argue that a GAN allows mode dropping
 since it minimizes the distinguishability between data and samples. 
 In contrast,  maximum likelihood ensures that each sample is
close to some data example, thus ``disallowing'' mode dropping. 
To combine the advantages of GANs (implicit model) and maximum likelihood 
(explicit model), they propose implicit maximum likelihood estimation (IMLE).
\fi 
The empirical version of IMLE in the parameter space is the following:
{\equationsizeReg
\begin{equation}
  \min_{ w }  \sum_{j=1}^n \min_{ i \in \{1, \dots, m \} }
   \| x_i -  G_w( z_j ) \|^2.
\end{equation}
}
In other words, for each generated sample $y_j = G_w( z_j )$,
the loss is the distance from $y_j$ to the closest true sample $ x_i $. 
Interestingly, IMLE and RpGAN  both couple the true data and the fake data in the loss.
The differences are two fold:
first, IMLE does not have an extra discriminator $f_{\theta}$,
while RpGAN has;
second, IMLE compares $y_j$ with all $x_i $ (so as to find the nearest
neighbor) while RpGAN compares $y_j$ with an arbitrary $ x_j $. 
See Table~\ref{tab: compare RpGAN, IMLE and ColumbGAN} for a comparison.
Note that \citet{li2018implicit} 
don't  present formal theoretical results on the landscape. 
\iflonger 
We suspect that their proposed loss has a better loss landscape than JS-GAN,
but cannot ensure the elimination of bad basins. This requires further
theoretical analysis. 
\fi 

\begin{table}[htbp]
 \footnotesize 
\caption{Models that couple true data and fake data in the loss}
    \centering
    \begin{threeparttable}
    \begin{tabular}{|c|c|c|c|}
      \hline 
   Model name  &  Empirical form of loss \tnote{i}  & 
    Form of coupling & Optimization \\
      \hline 
   RpGAN \cite{jolicoeur2018relativistic}  &  
   $  \max_{f } \sum \limits_j h( f(  x_j ) - f ( y_j )     ) $ & 
 pairing  &  min-max  \tnote{ii}  \\
     \hline 
   RaGAN \tnote{iii} $\; $ \cite{jolicoeur2018relativistic}  &  
   $   \max_f \sum \limits_j h( \frac{1}{ n } \sum\limits_{i=1}^n f ( x_i ) \! - \! f ( y_j )   ) $ &  comparing with average & min-max  \\
  \hline 
  (max-)sliced-WGAN &  
   $   \max \limits_{ |f|_L \leq 1 } \sum \limits_{i=1}^n [  f(X)_{(i)} - f(Y)_{(i)}  ]^2 $ \tnote{iv}  & pairing sorted output & min-max   \\
  \cite{deshpande2018generative,IDeshpandeCVPR2019}   &  
  &  &   \\
    \hline 
   IMLE \cite{li2018implicit} 
    & $ \sum \limits_j \min_{ i \in [ n ]} \| y_j -  x_{ i }  \|^2  $  &    comparing with closest &  min \\
      \hline 
      Coulomb-GAN  &  $ \sum_{i, j} k ( x_i ,  x_j ) 
 + \sum_{i, j} k ( y_i ,  y_j ) $ 
 &   & non-zero-sum   \\
\cite{unterthiner2018coulomb} & $  - 2 \sum_{i, j} k ( x_i , y_j )   \quad \tnote{v} $  & all-pairs   & game \tnote{vi}  \\
      \hline 
    \end{tabular}
     \begin{tablenotes}
     \scriptsize 
     \item[i] We show the empirical form of the loss 
 in the function space. Rigorously speaking, the provided form is the
  the loss for one mini-batch; in practice, in different iterations
  of SGD we will use different samples of $x_i, y_j$. 
  For the emprical loss in the parameter space, we shall replace $ f $ by $f_{\theta}$
  and $y_j $ by $ G_w(z_j) $. 
     \item[ii] %
Besides the  zero-sum game form (min-max form), RpGAN can be easily modified to a non-zero-sum game form (``non-saturating version'' proposed in \cite{goodfellow2014generative}). 
    \item[iii] The precise expression of RaGAN (relativistic averaging GAN) shall be 
  $  \sum_j h_1( \frac{1}{ n } \sum_{ i =1}^n f_{\theta} ( x_i ) - f_{\theta} ( y_j )   )
  + \sum_i h_2 ( \frac{1}{ n } \sum_{ j =1}^n f_{\theta} ( y_j ) - f_{\theta} ( x_i )   )  $, but for simplicity we only present one term in the table.
 \item[iv] Here $  f(X)_{(1)} \leq \dots \leq f(X)_{ (n) }  $ and 
$  f(Y)_{(1)} \leq \dots \leq f(Y)_{ (n) }  $ are the sorted versions of $f(x_i)$'s and $f(y_i)$'s respectively.
 \item[v] Here $k$ is the Coulomb kernel, defined
 as $  k(u, v) = \frac{1}{ (\sqrt{ \| u - v \|^2 + \epsilon^2 })^{\alpha } } $
where $ u, v\in \mathbb{R}^d$,  $ \alpha \leq d - 2 $ and $ \epsilon > 0 $. 
The original form of Coulomb-GAN is a non-zero-sum game, but
 it is straightforward to transfer the formulation to a pure minimization form since
the discriminator-minimization problem has a closed form solution
(used in the proof of \cite[Theorem 2]{unterthiner2018coulomb}). 
We presented the transformed minimization problem here. 
  \item[vi] Coulomb-GAN is presented as a non-zero-sum game, but as mentioned
  earlier it can be transformed to a minimization problem. 
The original Coulomb-GAN uses a smoothing operator in the generator loss;
in this empirical form, we omit the smoothing operator for easier comparison (thus
it is not the same as Coulomb-GAN). In the table, we show the resulting loss in the pure minimization form. 
Unlike SepGAN and RpGAN that can be written as either min-max form or non-zero-sum game form, we point out that there is no min-max form for Coulomb-GAN, since the design principle of Coulomb-GAN is very different from typical GANs.
    \vspace{0.2cm}
      \end{tablenotes}
\end{threeparttable}
\label{tab: compare RpGAN, IMLE and ColumbGAN}
\end{table}

\textbf{Coulomb-GAN.}
 \citet{unterthiner2018coulomb} argued that mode collapse can be
 a local Nash equilibrium in an example of two clusters (see \cite[Appendix A.1]{unterthiner2018coulomb}).
They further proposed ColumbGAN and claimed that every local Nash equilibrium is a global Nash equilibrium (see \cite[Theorem 2]{unterthiner2018coulomb}). Their study is different from ours in a few aspects.
\textbf{First}, they
still consider the pdf $p_g$, though  restrict the possible movement of  $ p_g $
(according to a continuity equation).
In contrast, we consider the empirical loss in particle space. 
\textbf{Second}, the bad landscape of JS-GAN is discussed
in words for the 2-cluster case \cite[Appendix A.1]{unterthiner2018coulomb}, but not
formally proved. In contrast, we prove rigorous result for the general case.
\textbf{Third}, they do not study parameter space (though with informal discussion).
 \textbf{Fourth}, they do not present landscape-related experiments,
 such as the narrow-net experiments we have done.
 
\iflonger 
There is a major difference from our analysis of the two-cluster example: \cite[Appendix A.1]{unterthiner2018coulomb} still considers the pdf $p_g$, though it restricts the possible movement of  $ p_g $ based on the gradient update; in contrast, we consider the particle space. 
In addition, \cite[Appendix A.1]{unterthiner2018coulomb}
is intended to be intuitive, not a  mathematical proof. 
\fi 
\iffalse 
There are a few differences though. 
 \textbf{First}, 
\cite[Appendix A.1]{unterthiner2018coulomb} still considers
the pdf $p_g$, though it restricts the possible movement of  $ p_g $ based on the gradient update. 
In contrast, we consider the particle space, which is different from \cite[Appendix A.1]{unterthiner2018coulomb} and  \cite[Appendix A.1]{unterthiner2018coulomb}. \tf{why two 86, appendix A.1}
\textbf{Second}, the argument in \cite[Appendix A.1]{unterthiner2018coulomb} 
is intended to be intuitive, %
while our Corollary~\ref{coro of bad basin in GAN} is formally proved.
In fact, the feasible region of the pdf is not formally defined in \cite[Appendix A.1]{unterthiner2018coulomb}, thus \cite[Appendix A.1]{unterthiner2018coulomb} does not constitute a formal proof. 
For the case of two clusters, it is ambiguous what the exact local Nash equilibria are, and the argument can only handle approximate local Nash equilibria. For instance,
\fi

\textbf{Common idea: Coupling true data and fake data.}
Interestingly, similar to IMLE and RpGAN,
  ColumbGAN also coupled the true data and fake data in the loss functions.
 RpGAN, RaGAN (a variant of RpGAN considered in \cite{jolicoeur2018relativistic}), IMLE and ColumbGAN differ in two aspects: the specific form of coupling (pairing, comparing with average,
  comparing with the closest, all possible pairs), and the specific form of optimization
  (pure minimization, min-max, non-zero-sum game). 
  See the comparison in Table~\ref{tab: compare RpGAN, IMLE and ColumbGAN}.
  It is interesting that all three lines of work choose to couple true data and fake data
  to resolve the issue of mode collapse.
We suspect
it is hard to prove similar results on the landscape of empirical loss for IMLE and Coulomb-GAN.

\textbf{Relation to (max)-sliced Wasserstein GAN.} 
We point out that the sliced Wasserstein GAN (sliced-WGAN) \citep{deshpande2018generative} and the max-sliced Wasserstein GAN (max-sliced-WGAN) \citep{IDeshpandeCVPR2019} also couple the true data
 and fake data. 
For any function $f$, denote $f(X) = (f(x_1), \dots, f(x_n))$ and 
$f(Y) = (f(y_1), \dots, f(y_n))$.
The empirical version of the max-sliced Wasserstein GAN
can be written as 
\begin{equation}\label{eq of max sliced GAN, 1st version}
 \min_{Y} \max_{ |f|_L \leq 1 } W_2(   f(  X ) ,   f( Y )  )^2 .
\end{equation}
Here  $ f $ is a neural net with codomain $\mathbb{R}$, and $W_2$ is the Wasserstein-2-distance.
Denote $  f(X)_{(1)} \leq \dots \leq f(X)_{ (n) }  $ and 
$  f(Y)_{(1)} \leq \dots \leq f(Y)_{ (n) }  $ as the sorted versions of $f(x_i)$'s and $f(y_i)$'s respectively.
Then Eq.~\eqref{eq of max sliced GAN, 1st version} is equivalent to
\begin{equation}\label{eq of max sliced GAN, 2nd version}
\text{(max-)sliced-WGAN}\footnote{
{\scriptsize Note that max-sliced-WGAN in \citet{IDeshpandeCVPR2019}  uses
$ \min_{Y} \max_{ \| v \| \leq 1,  | g |_L \leq 1 } W_2(   v^T g(  X ) ,  v^T g( Y )  )^2 , $ while sliced-WGAN in \citet{deshpande2018generative} uses
$\min_{Y} \mathbb{E}_{\|v\|=1} \max_{ | g |_L \leq 1 } W_2(   v^T g(  X ) ,  v^T g( Y )  )^2$.  %
In Eq.~\eqref{eq of max sliced GAN, 1st version}  we use $ f(u) $ to replace $v^T g(u)$ 
to simplify the expression; although technically, $f$ and $v^T g$ are not equivalent,
 this minor difference does not affect our discussion.
}
}
: \min_{Y} \max_{ |f|_L \leq 1 } \sum_{i=1}^n [  f(X)_{(i)} - f(Y)_{(i)}  ]^2.
\end{equation}
 This form is quite close to RpGAN (when $h(t) = t^2 $): the only differences
 are the sorting of $f(X), f(Y)$ and the extra constraint $|f|_L \leq 1 $.
 The extra constraint $|f|_L \leq 1 $ is due to unbounded $h$, and can be removed
 if we use an upper bounded $ h $ (which leads to a sorting version of RpGAN).
 See the comparison of max-sliced-WGAN with RpGAN and other models 
 in Table \ref{tab: compare RpGAN, IMLE and ColumbGAN}. 

\textbf{Nash equilibria for Gaussian data.}
A very recent work \citet{farnia2020gans}  shows that for a non-realizable case (with a linear generator) Nash equilibria may not exist for learning a Gaussian distribution. This setting is quite different from ours.

\section{2-Cluster Experiments: Details and More Discussions}\label{subsec: details
of experiments}

In this part, we present details of the experiments in Section~\ref{sec: case study} and other complementary experiments.

\textbf{Experimental Setting.}\label{sec: 2clustersetting}
 The code is provided in ``GAN$\_$2Cluster.py''.
We sample $ 100 $ points from two clusters of data near $0$ and $4$ (roughly $50$ 
in each cluster). 
We use GD with momentum parameter $ 0.9 $ for both $D$ and $G$. 
The default learning rate is (Dlr, Glr) $= (10^{-2}, 10^{-2} )  $.
\iflonger 
The learning rates are chosen to be these because
bigger learning rates such as $( 2 \times 10^{-2}, 2 \times 10^{-2} )   $  sometimes lead to unstable results, and smaller learning rates make the algorithm slower.
\fi 
The default inner-iteration-number for the discriminator and the generator are (DIter, GIter) $= (10, 10 )  $.
The discriminator and generator net are a 4-layer network (with 2 hidden layers) with sigmoid activation and tanh activation respectively.
The default neural network width (Dwidth, Gwidth) $= ( 10,5 )  $.
We will also discuss the results of other hyperparameters. 
The default number of training iterations is  MaxIter $= 5000 $.
We use the non-saturating versions for both JS-GAN and RS-GAN.

\iffalse 
\textbf{Variations in reproduction.} 
We fix the random seed to be $ 2 $ to generate Figure \ref{fig5 D loss and D image} (for other seeds, the D loss plots are similar).   %
Across different runs, the D loss plots are not the same, which
may be due to numerical errors. 
We show two runs for JS-GAN and two runs for RS-GAN in Figure \ref{fig 6 standard 2 cluster}.
\fi 

\textbf{Understanding the effect of mode collapse, by checking D loss evolution and data movement.}
In the main text, we discussed that mode collapse can slow down training of JS-GAN.
For easier understanding of the training process, we add the visualization of the data movement (which is possible since we are dealing with 1-dimensional data)
in Figure~\ref{fig 6 standard 2 cluster}. 
We use the y-axis to denote the data position, and x-axis to denote the iteration. 
The blue curves represent the movement of all fake data during training, 
and the red straight lines represent the position of true data (two clusters).
The training time may vary across different runs, but overall the time for JS-GAN is about 2-4 times longer than that for RS-GAN.
\iflonger 
In a few cases, JS-GAN gets completely attracted to 
mode collapse for a certain period, in which case
the corresponding $D$ function is almost the same
as the optimal discriminator for mode collapse pattern $ D^*(s_{1 \rm a}) $ 
(similar to Figure~\ref{fig5 D loss and D image}(c)).
\fi 

\iffalse 
The training process consists of two stages:
in Stage 1, the cluster of generated points jump  between the two clusters; in Stage 2, the generated points split and converge to two separate clusters. 
In the discriminator loss, the transition point from fluctuation
to staying flat indicates the transition from Stage 1 to Stage 2. 
The length of Stage 1 (time spent on escaping mode collapse) may vary across different runs, but overall the escaping time for JS-GAN is about 2-4 times
longer than that for RS-GAN.
\fi

\iffalse 
For JS-GAN, for about half of the time we get a similar D-loss plot as Figure \ref{fig5 D loss and D image}, and another half of the time the D-loss can be stuck around $0.48$ for longer time.
For RS-GAN, the convergence speed is only slightly different.
Different runs exhibit the same major message of Figure  \ref{fig5 D loss and D image} that RS-GAN escapes mode collapse faster than JS-GAN. 
 In different runs, the movement of the generated data 
  are closely related to the D-loss plot: when
  D loss converges to around 0.7, the generated data begin to split
  into two clusters and then it takes another 1k-2k iterations
  to converge to the true data clusters. 
 For JS-GAN, when the D loss gets stuck at $ 0.48 $, the
  D image is similar to Figure \ref{fig5 D loss and D image} (c),
  i.e., the training gets stuck at the predicted attractor
  $ ( s_{1\rm a}, D^*( s_{1 \rm a}) ) $.  
  \fi

\begin{figure}[t]
\vspace{-0.2cm}
\centering
    \begin{tabular}{cccc}
        \includegraphics[width=0.2\linewidth, height = 1.8cm]{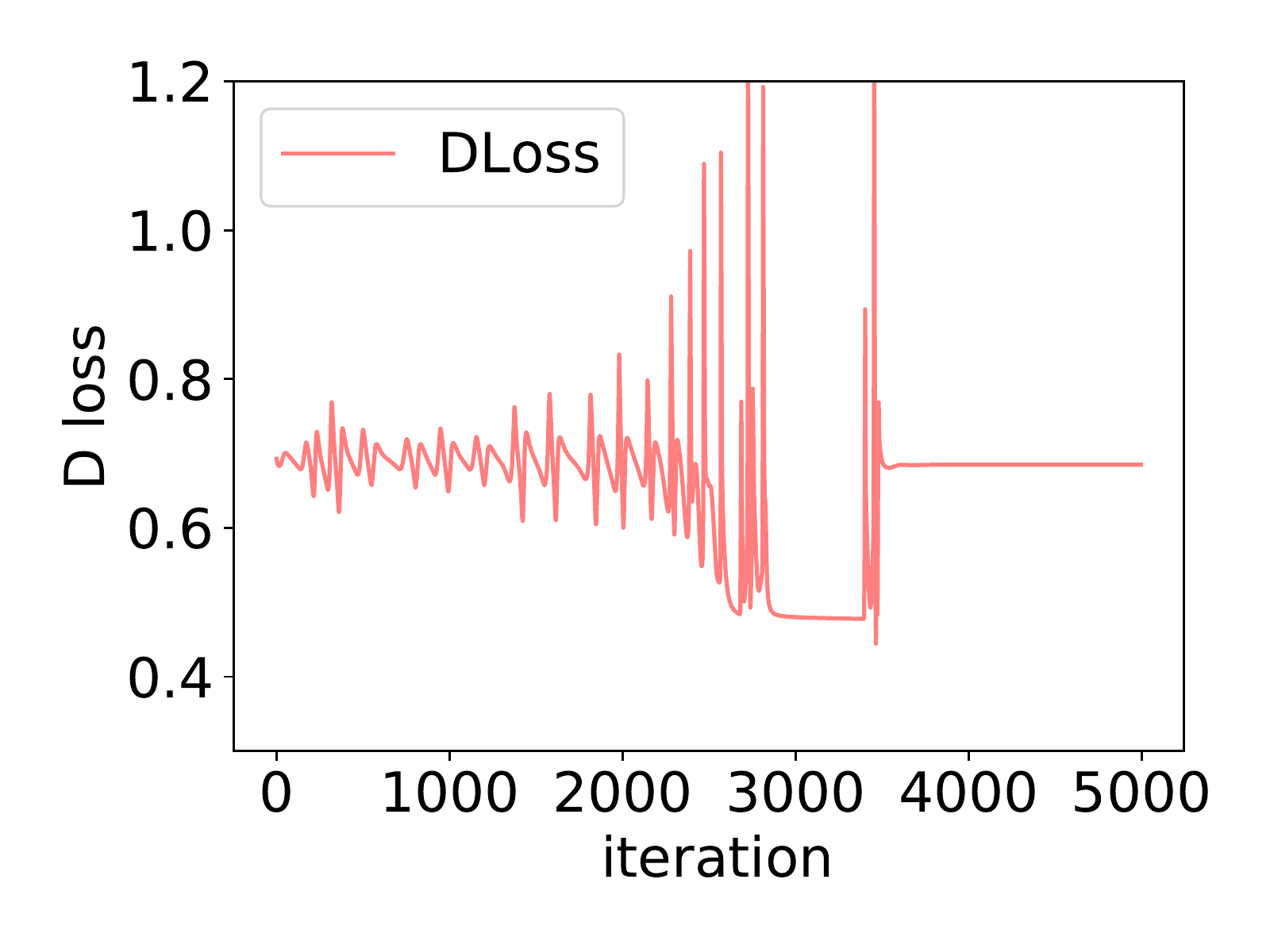} & \includegraphics[width=0.2\linewidth, height = 1.8cm]{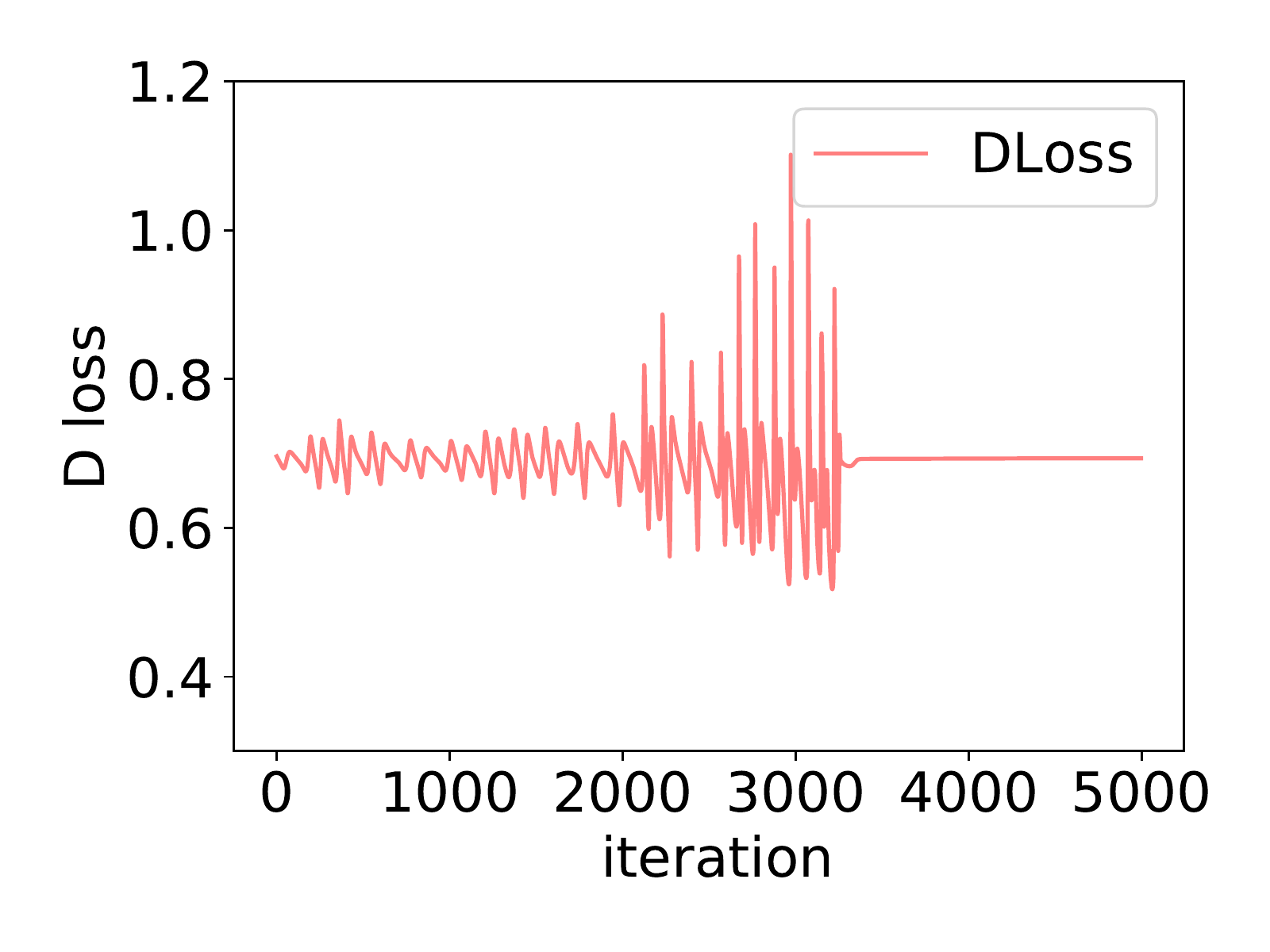} &
        \includegraphics[width=0.2\linewidth, height = 1.8cm]{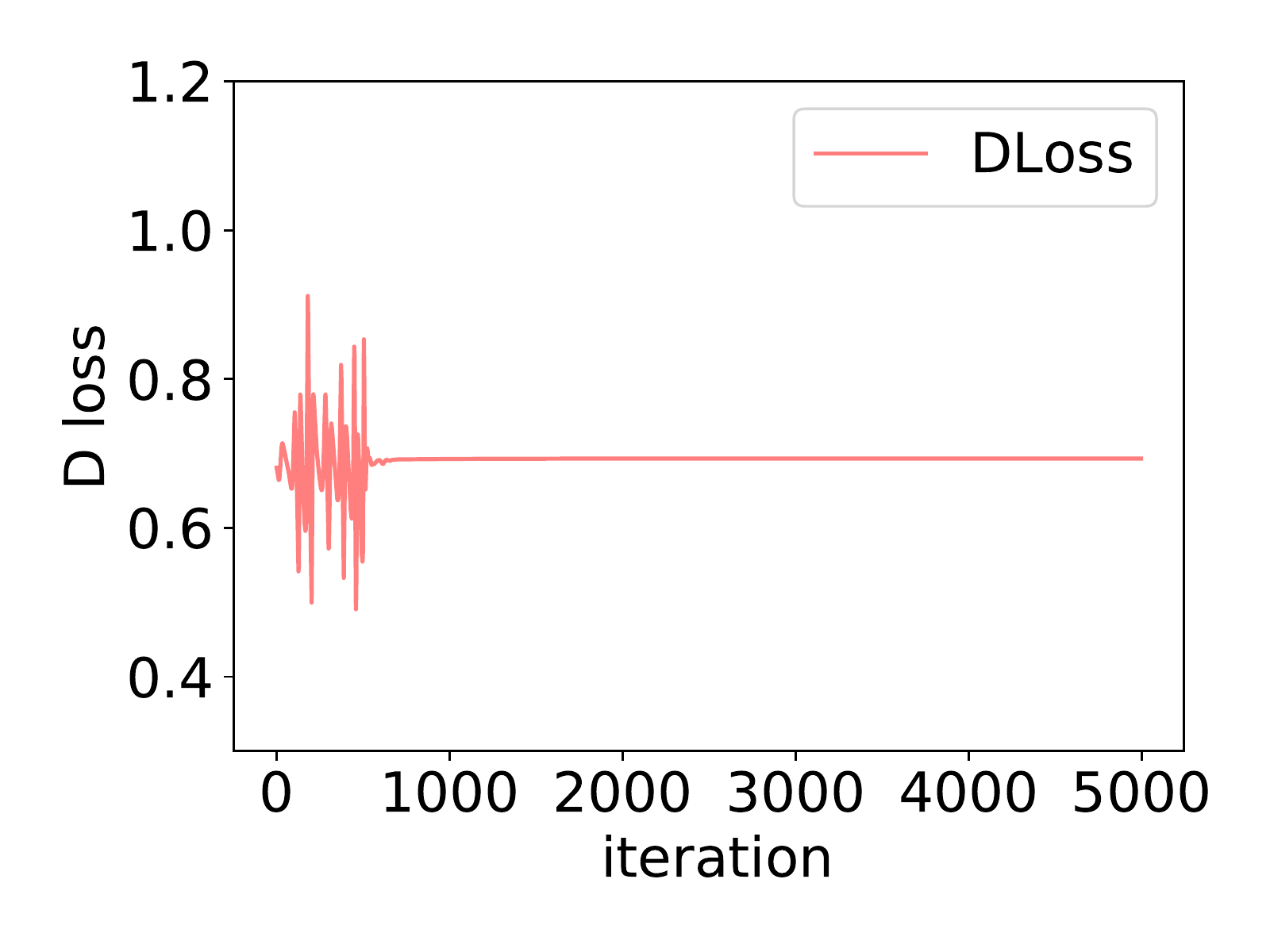} &
        \includegraphics[width=0.2\linewidth, height = 1.8cm]{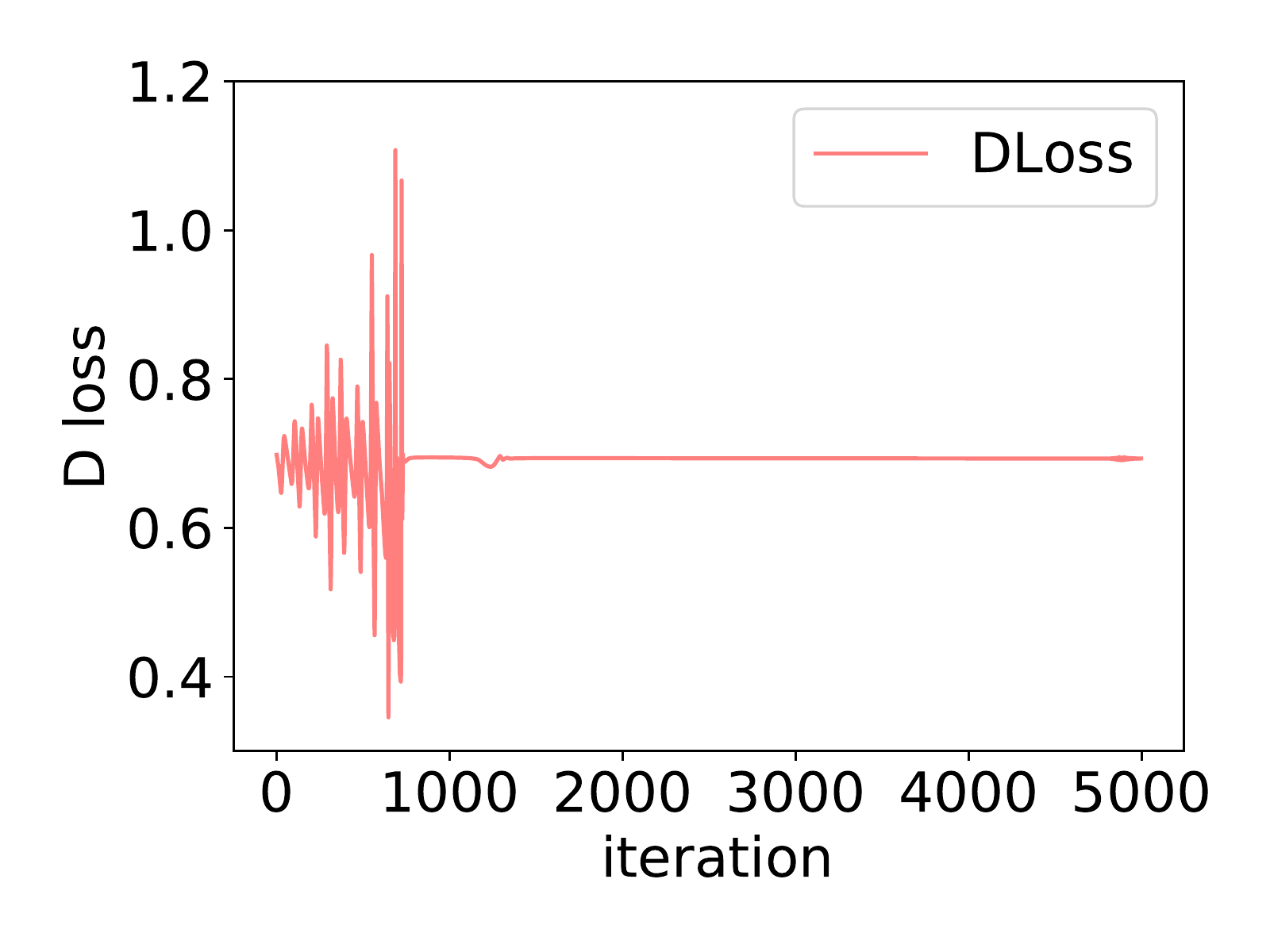} 
        \\
        \includegraphics[width=0.2\linewidth, height = 1.8cm]{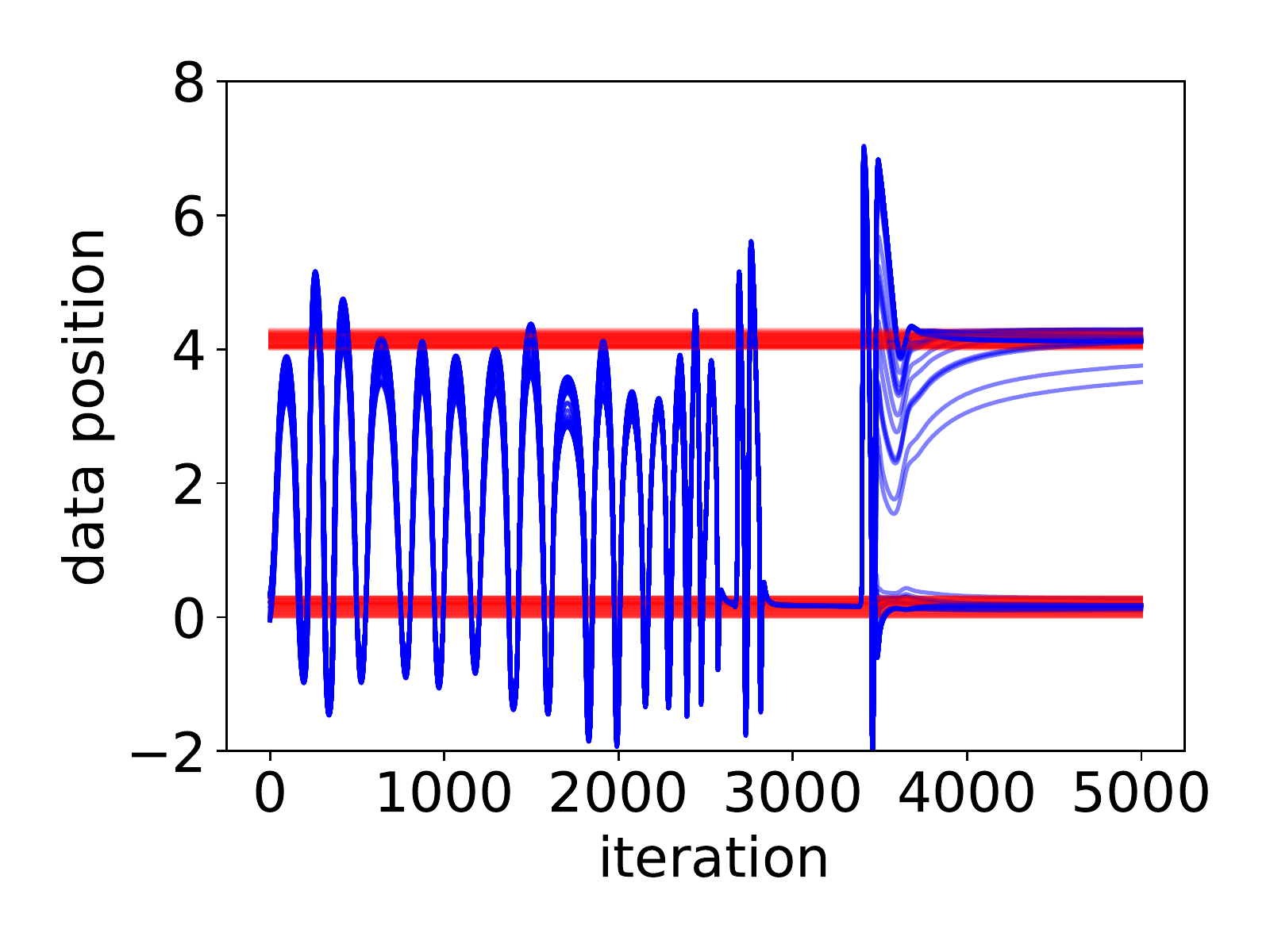} & \includegraphics[width=0.2\linewidth, height = 1.8cm]{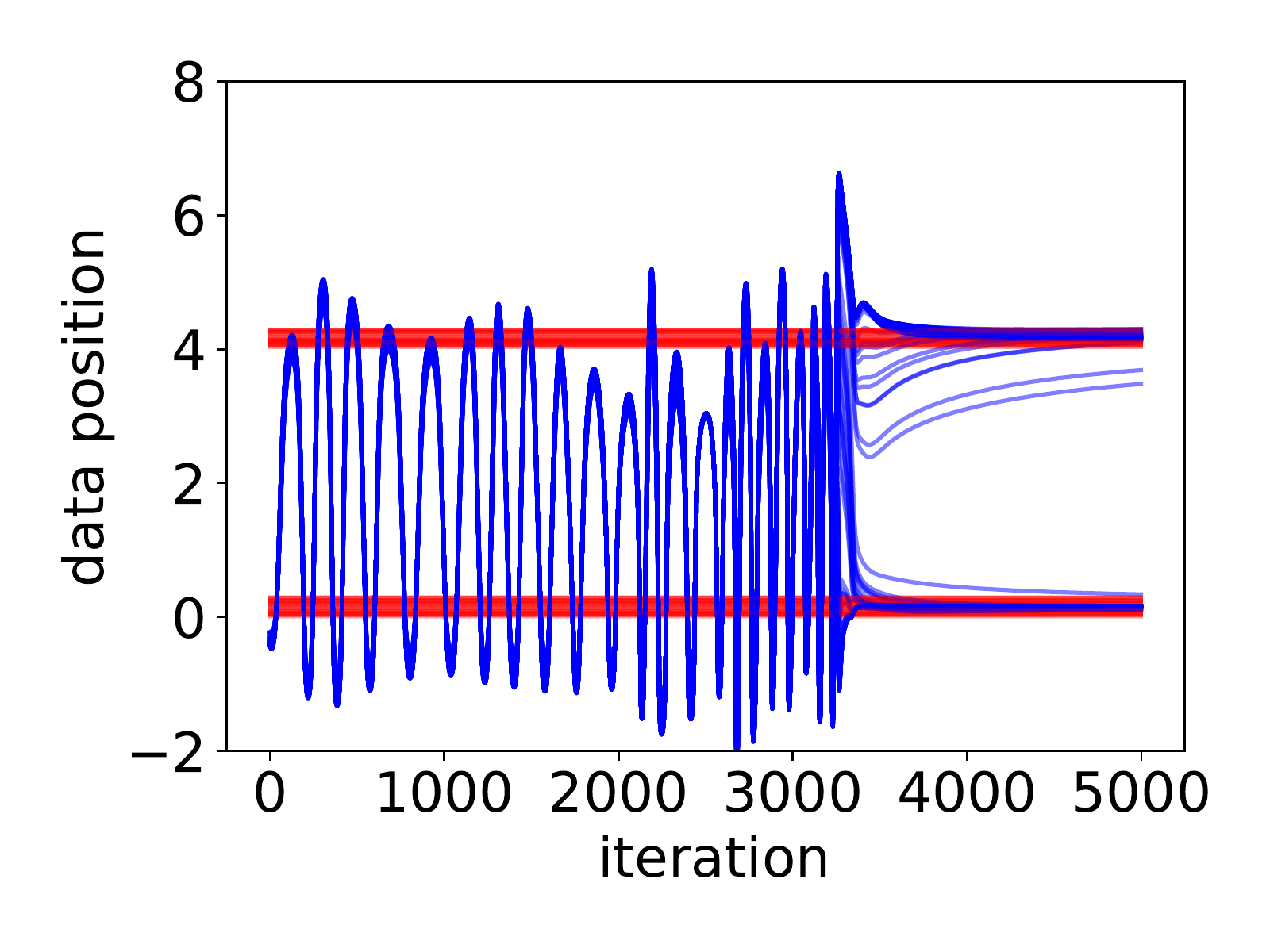} &
        \includegraphics[width=0.2\linewidth, height = 1.8cm]{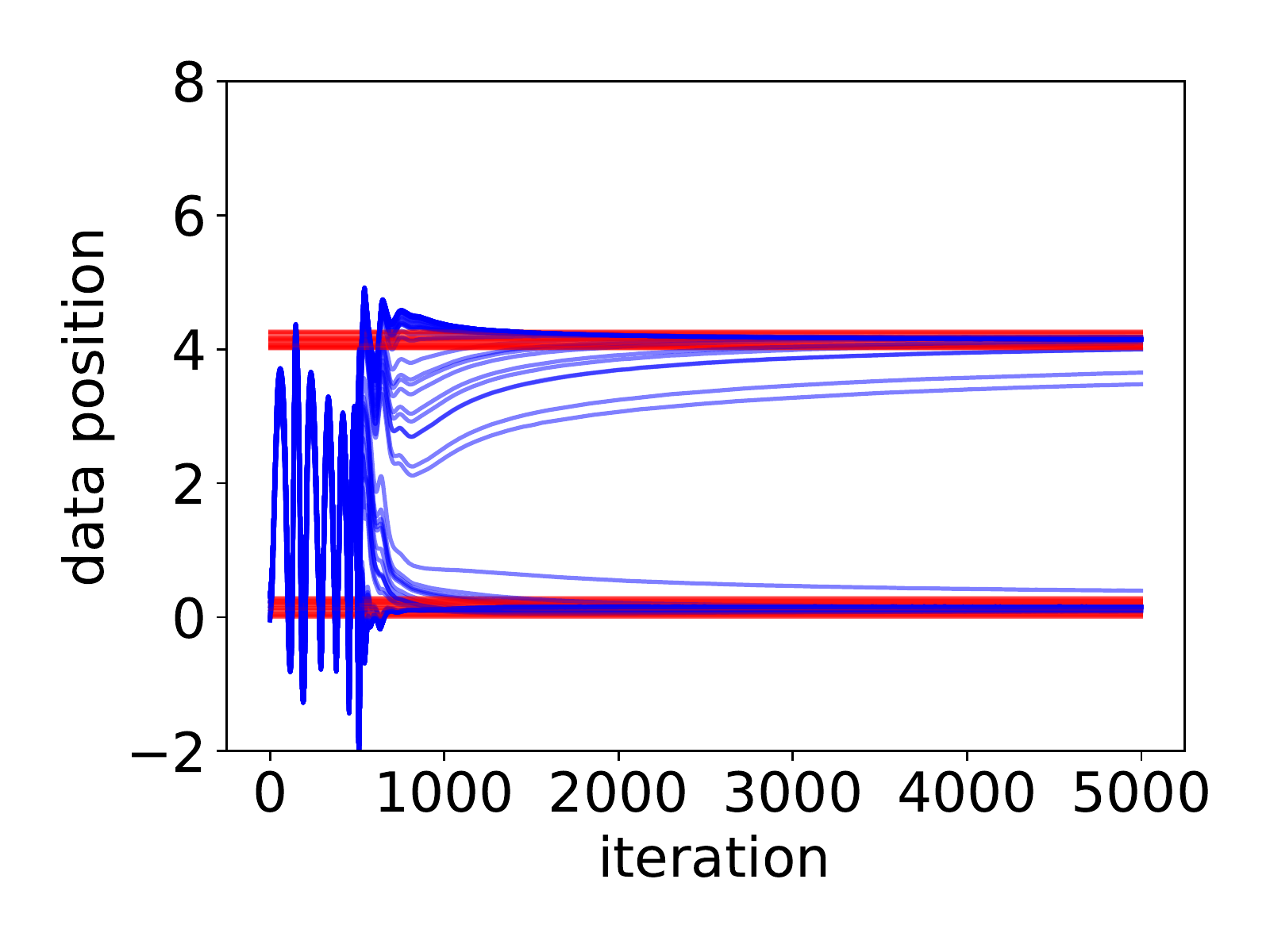} &
        \includegraphics[width=0.2\linewidth, height = 1.8cm]{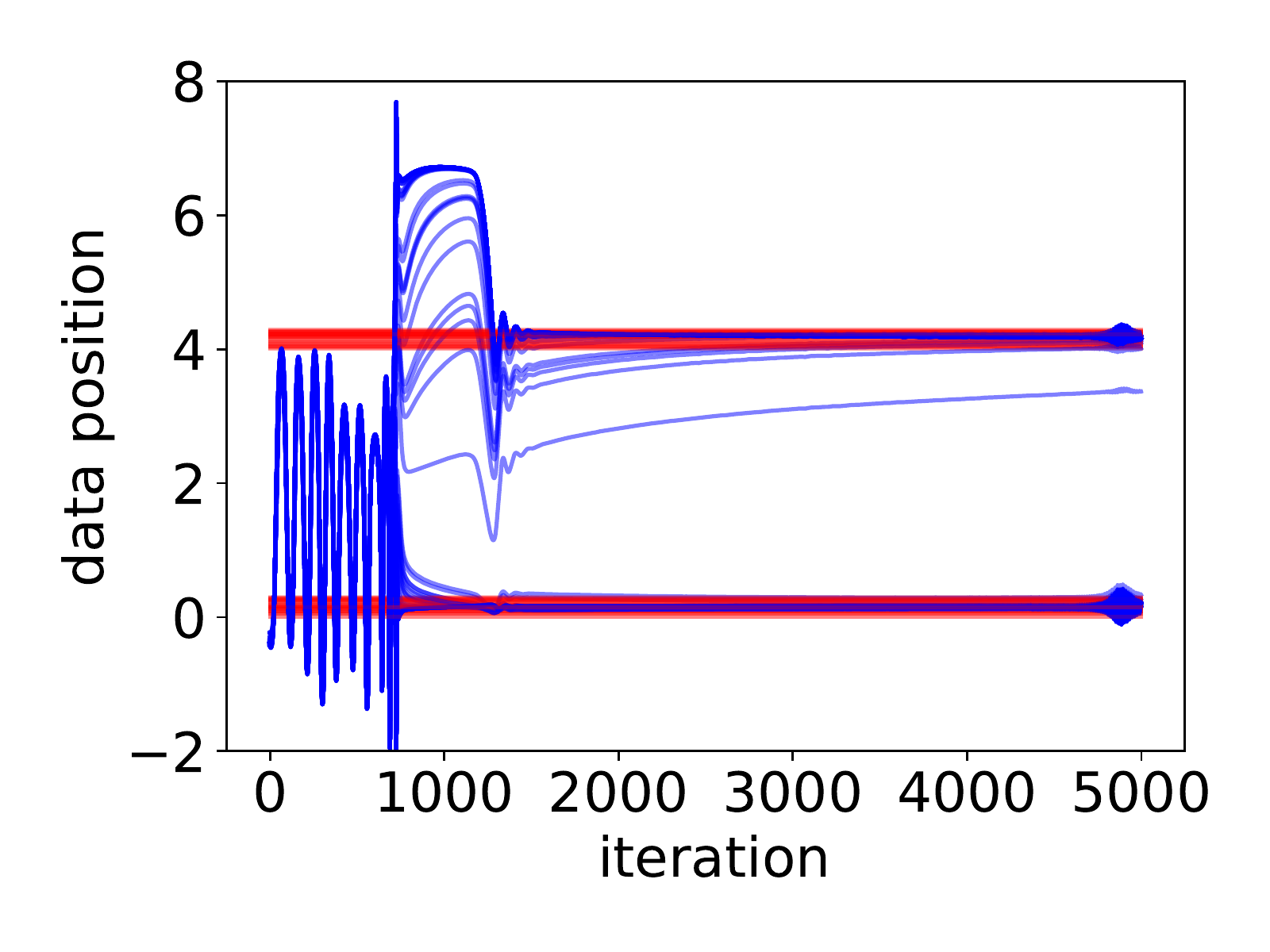} 
        \\
{\scriptsize (a) JS-GAN 1st run } & 
{\scriptsize  (b) JS-GAN 2nd run } &
{\scriptsize  (c) RS-GAN 1st run } &
{\scriptsize  (d) RS-GAN 2nd run }
    \end{tabular}
    \vspace{-0.2cm}
\captionsetup{font={scriptsize}} 
    \caption{Comparison of JS-GAN and RS-GAN for
    two different runs. First row: D loss; second row: fake data movement during training.
    }
    \label{fig 6 standard 2 cluster}
\vspace{-0.1cm}
\end{figure}

\textbf{Effect of width.}
The default width is (Dwidth, Gwidth) $= ( 10,5 )  $. 
We tested two other settings:  $ ( 20, 10  ) $ and $ (5, 3) $.
For the wide-network setting, the convergence of both JS-GAN and RS-GAN are much faster,
but RS-GAN is still faster than JS-GAN in most cases; see Fig.~\ref{fig7 wide net 2 cluster}.
For  the narrow-network setting, RS-GAN can recover two modes in all five runs, while JS-GAN fails in two of the five runs (within 5k iterations). 
See  Fig.~\ref{fig7 narrow net 2 cluster} for one success case of JS-GAN
and one failure case of JS-GAN.
In the failure case, JS-GAN completely gets stuck at mode collapse,
and the $D$ loss is stuck at around $0.48 $, consistent with our theory.

\iflonger 
Comparing Figure~\ref{fig7 wide net 2 cluster}
to Figure~\ref{fig 6 standard 2 cluster},
we  see that the range of the $D$ loss is much
smaller in the wide-network setting, for both JS-GAN and RS-GAN.
In particular, for JS-GAN wide-net setting the $D$ loss is often larger than 0.6,
 while for JS-GAN standard-net setting the $D$ loss often gets close to the critical value $ 0.48 $. That means that for the wide network,
the iterates are not fully attracted to the bad basin $ (s_{1 \rm a}, D^*(s_{1 \rm a})) $, but just attracted half-way. 
 \fi

\begin{figure}[t]
\vspace{-0.2cm}
\centering
    \begin{tabular}{cccc}
   \includegraphics[width=0.2\linewidth, height = 1.8cm]{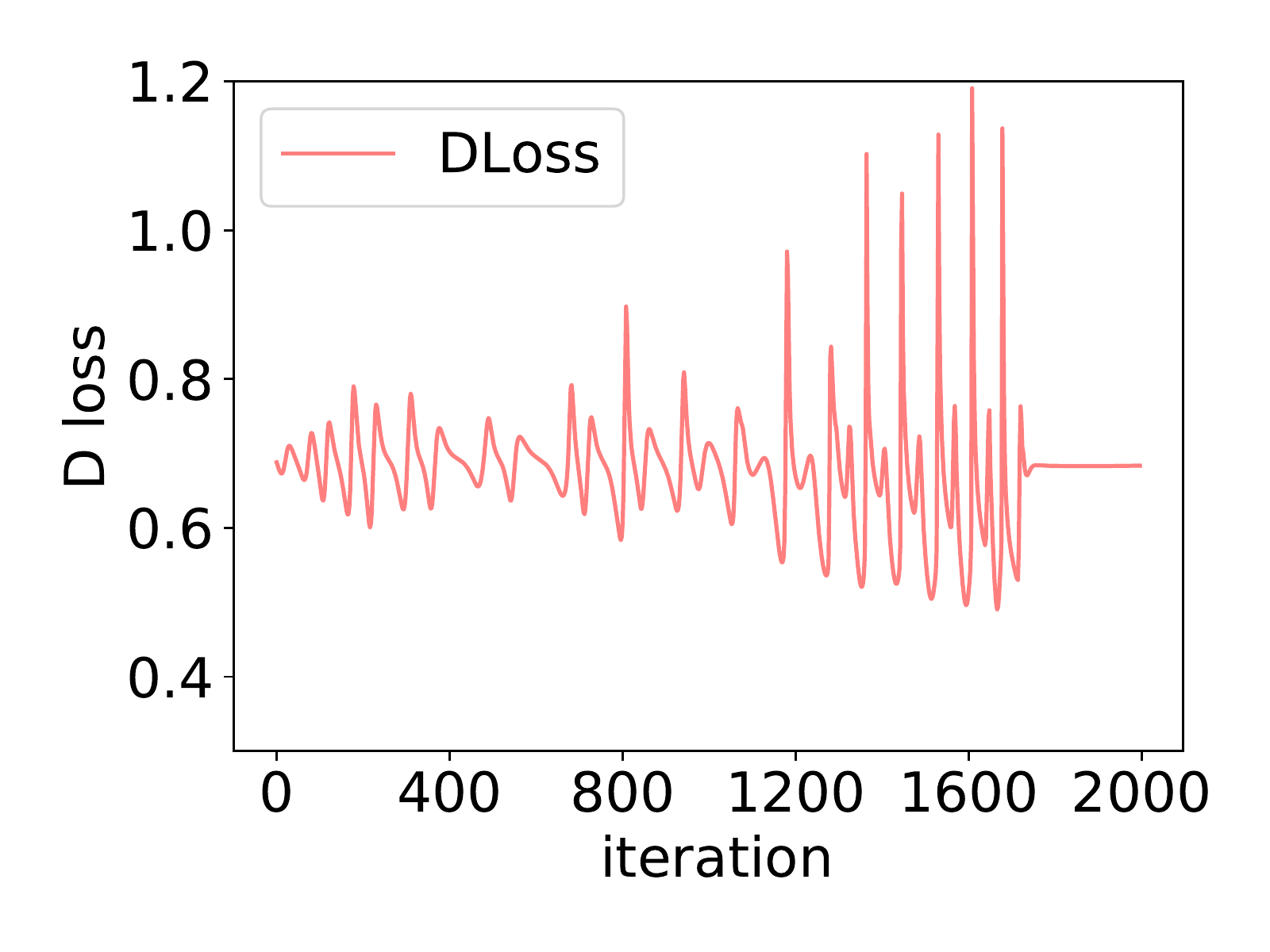} & 
        \includegraphics[width=0.2\linewidth, height = 1.8cm]{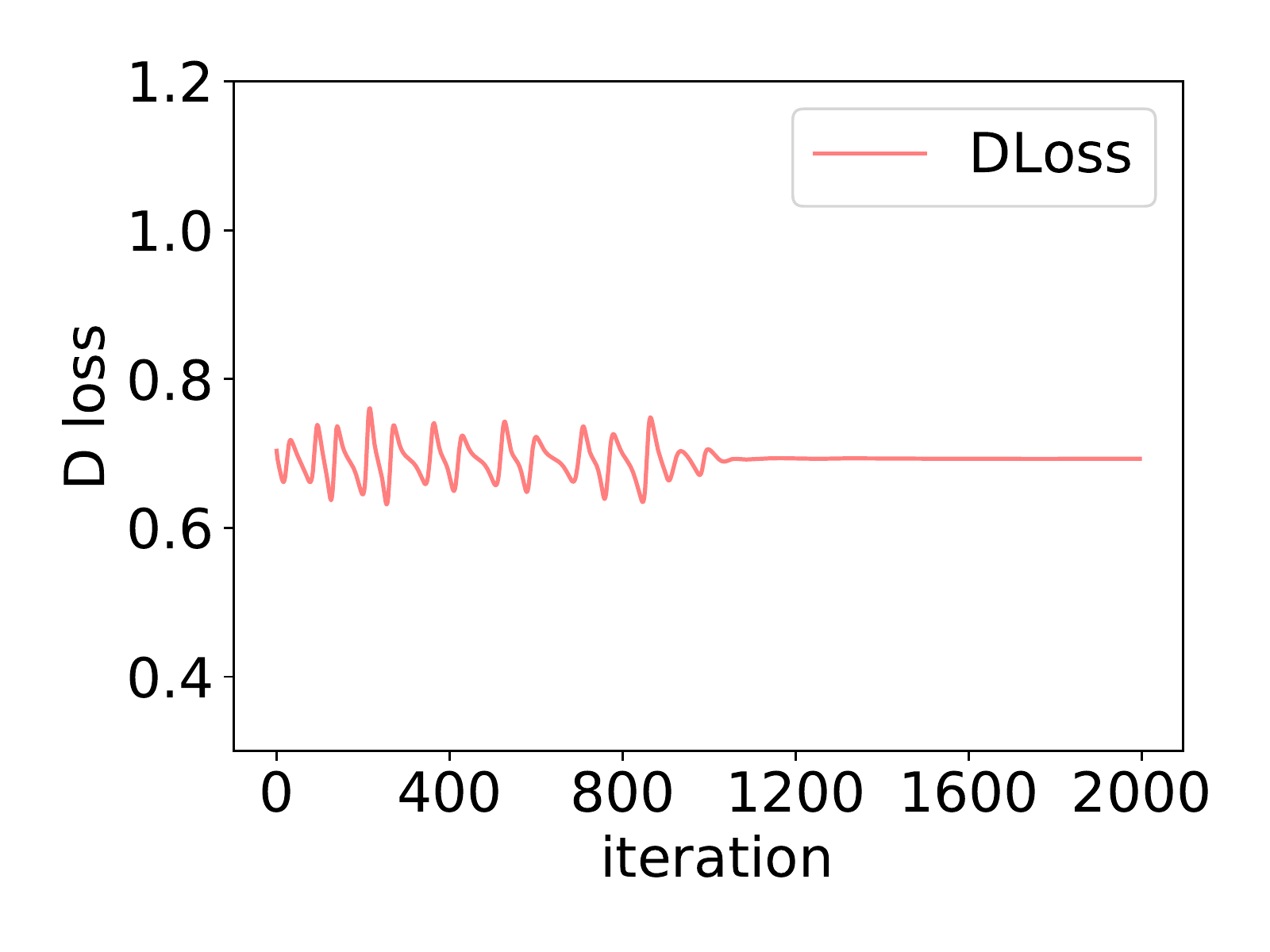} &
       \includegraphics[width=0.2\linewidth, height= 1.8cm]{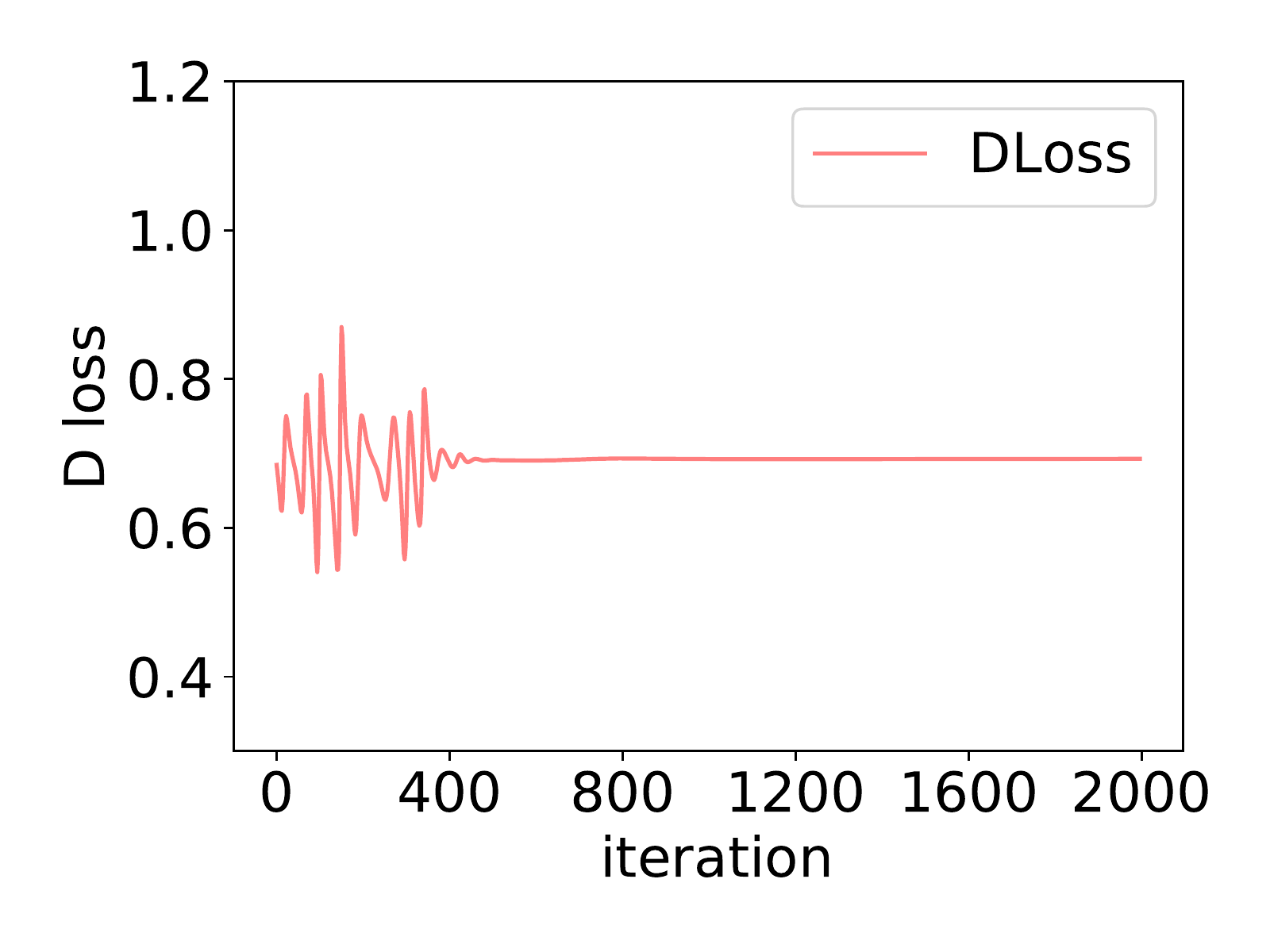} &
               \includegraphics[width=0.2\linewidth, height = 1.8cm]{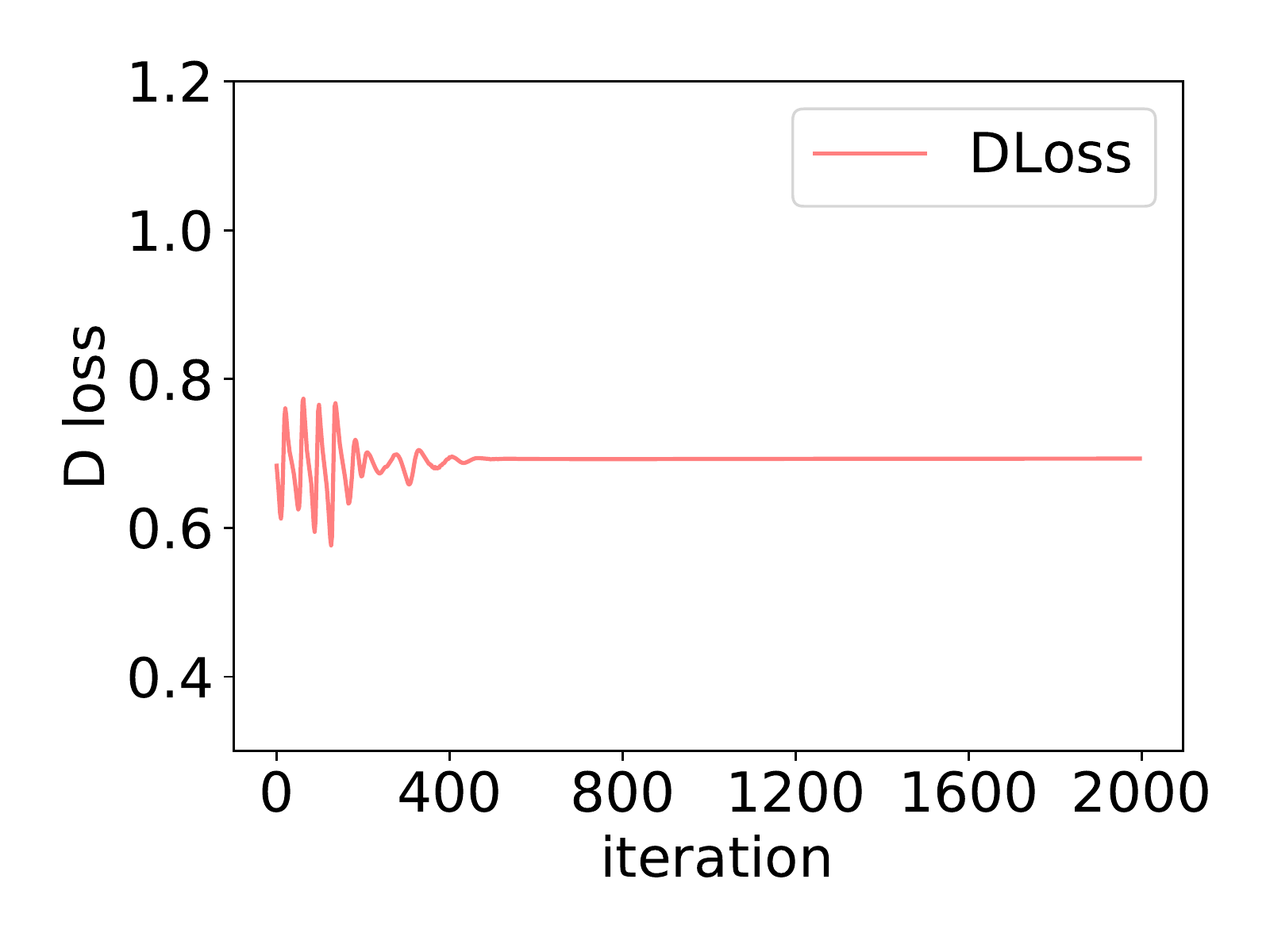} 
        \\
{\scriptsize (a) JS-GAN 1st run } & 
{\scriptsize  (b) JS-GAN 2nd run } &
{\scriptsize  (c) RS-GAN 1st run } &
{\scriptsize  (d) RS-GAN 2nd run }
    \end{tabular}
    \vspace{-0.2cm}
\captionsetup{font={scriptsize}} 
    \caption{Wide network (Dwidth, Gwidth) $= ( 20, 10)  $: JS-GAN and RS-GAN in two different runs.  %
    Compare to regular widths (Dwidth, Gwidth) $= ( 10, 5)  $,
    both GANs converge faster. 
Anyhow, RS-GAN is still 2-3 times faster than JS-GAN.
    }
    \label{fig7 wide net 2 cluster}
\vspace{-0.3cm}
\end{figure}

\iffalse 
\includegraphics[width=0.2\linewidth, height = 1.8cm]{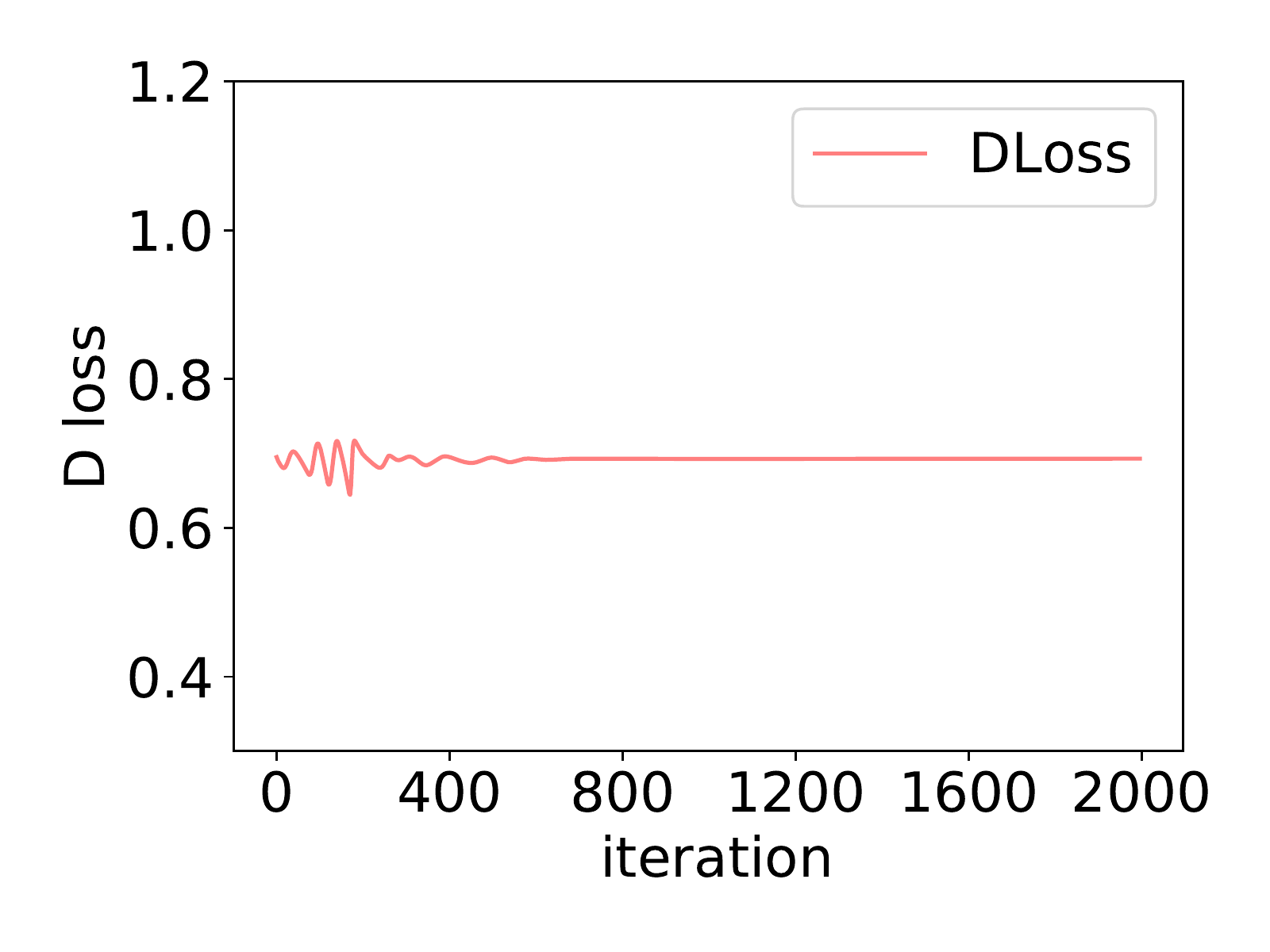} &
 \includegraphics[width=0.2\linewidth, height = 1.8cm]{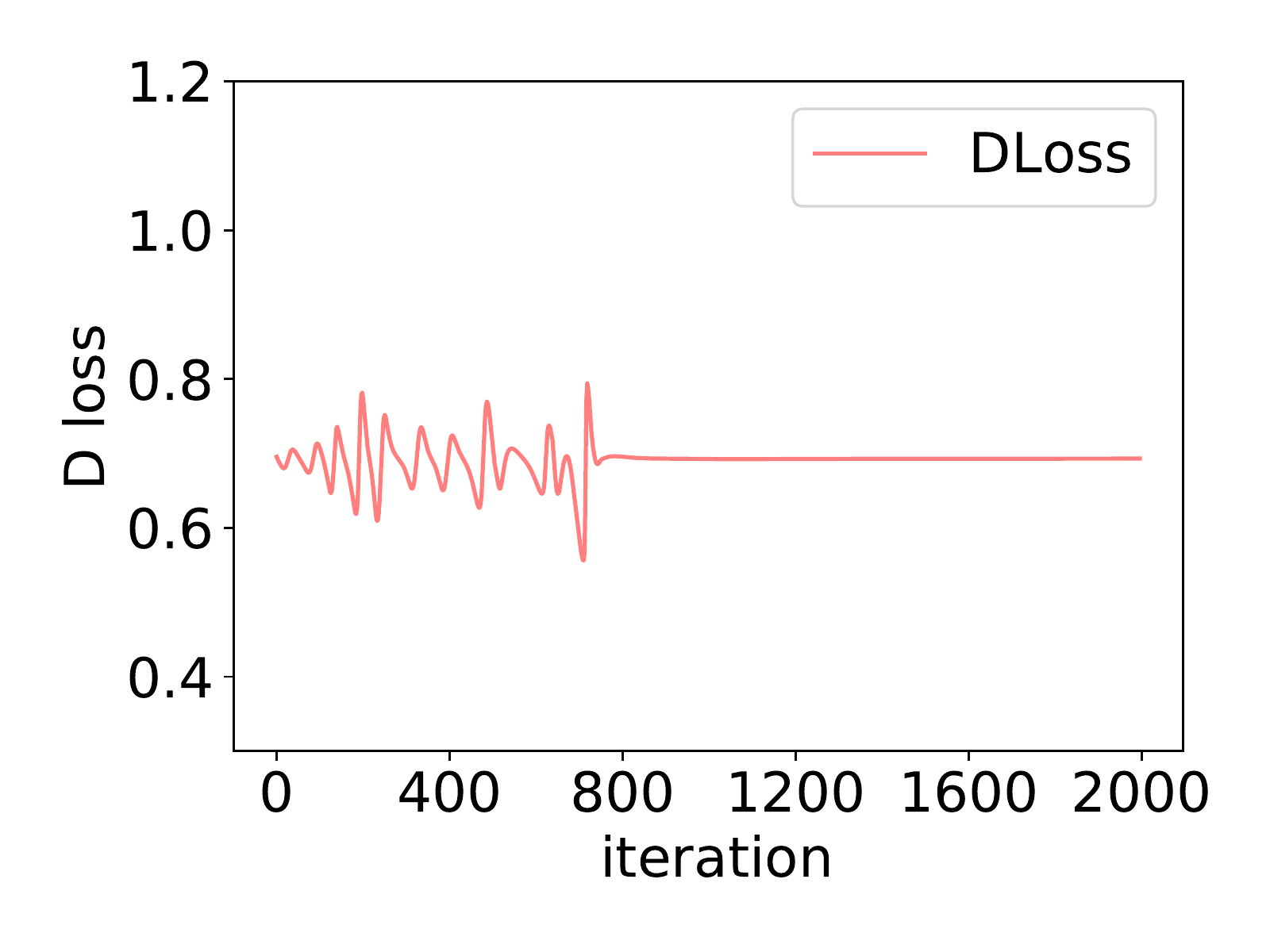} 
 \includegraphics[width=0.2\linewidth, height = 1.8cm]{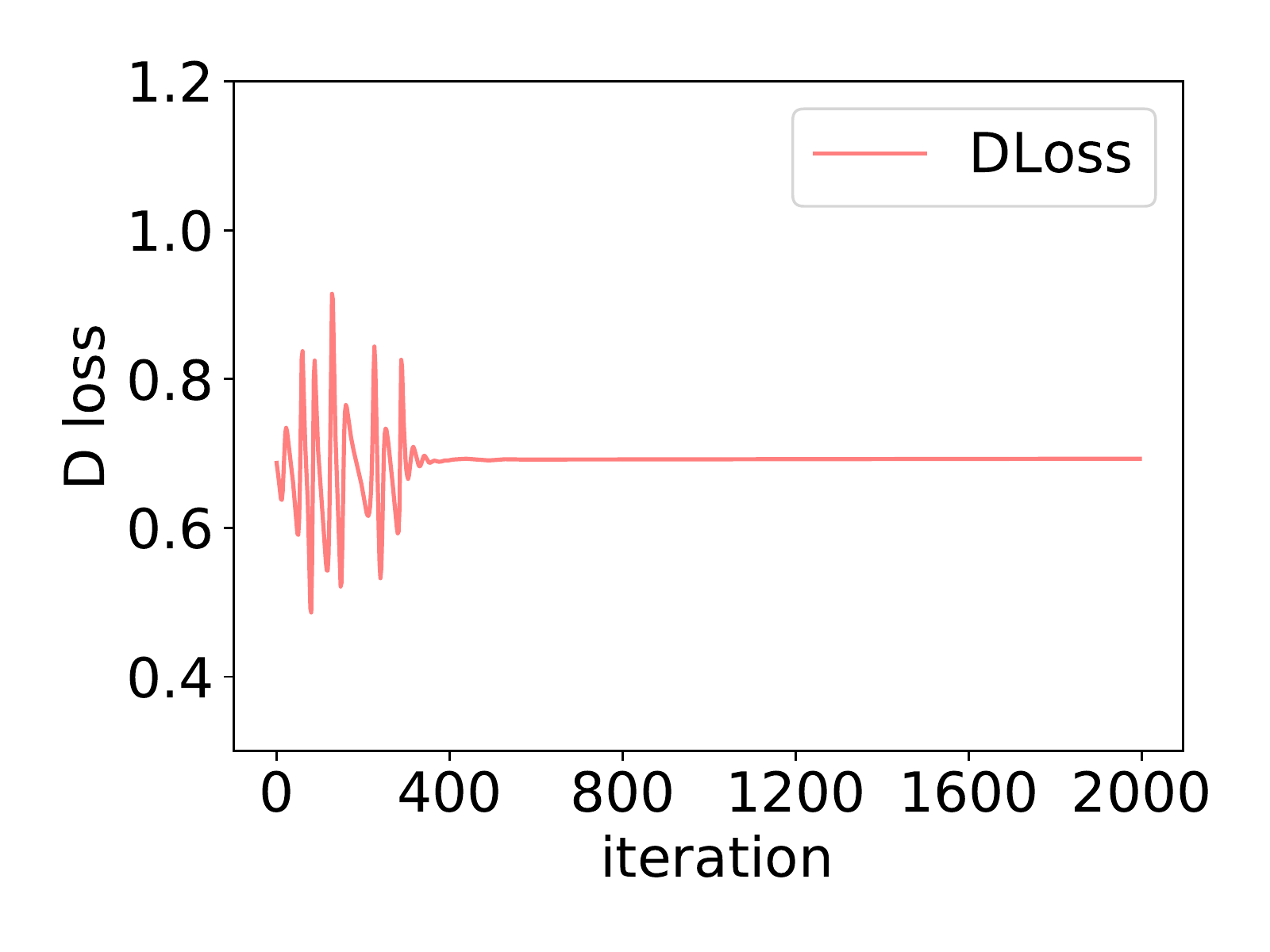} &
        \includegraphics[width=0.2\linewidth, height = 1.8cm]{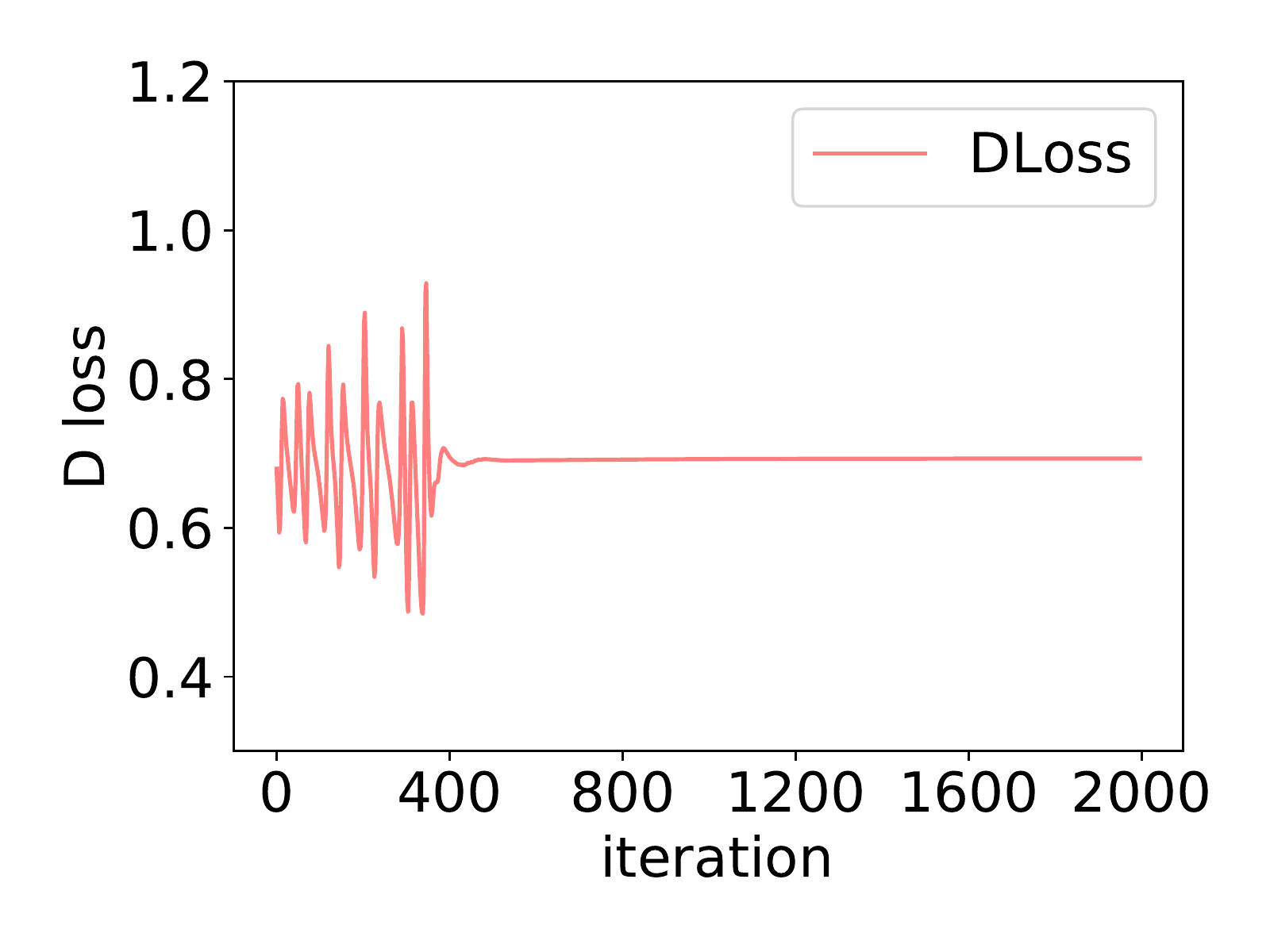} 
\fi

\begin{figure}[t]
\vspace{-0.0cm}
\centering
    \begin{tabular}{cccc}
   \includegraphics[width=0.2\linewidth, height = 1.8cm]{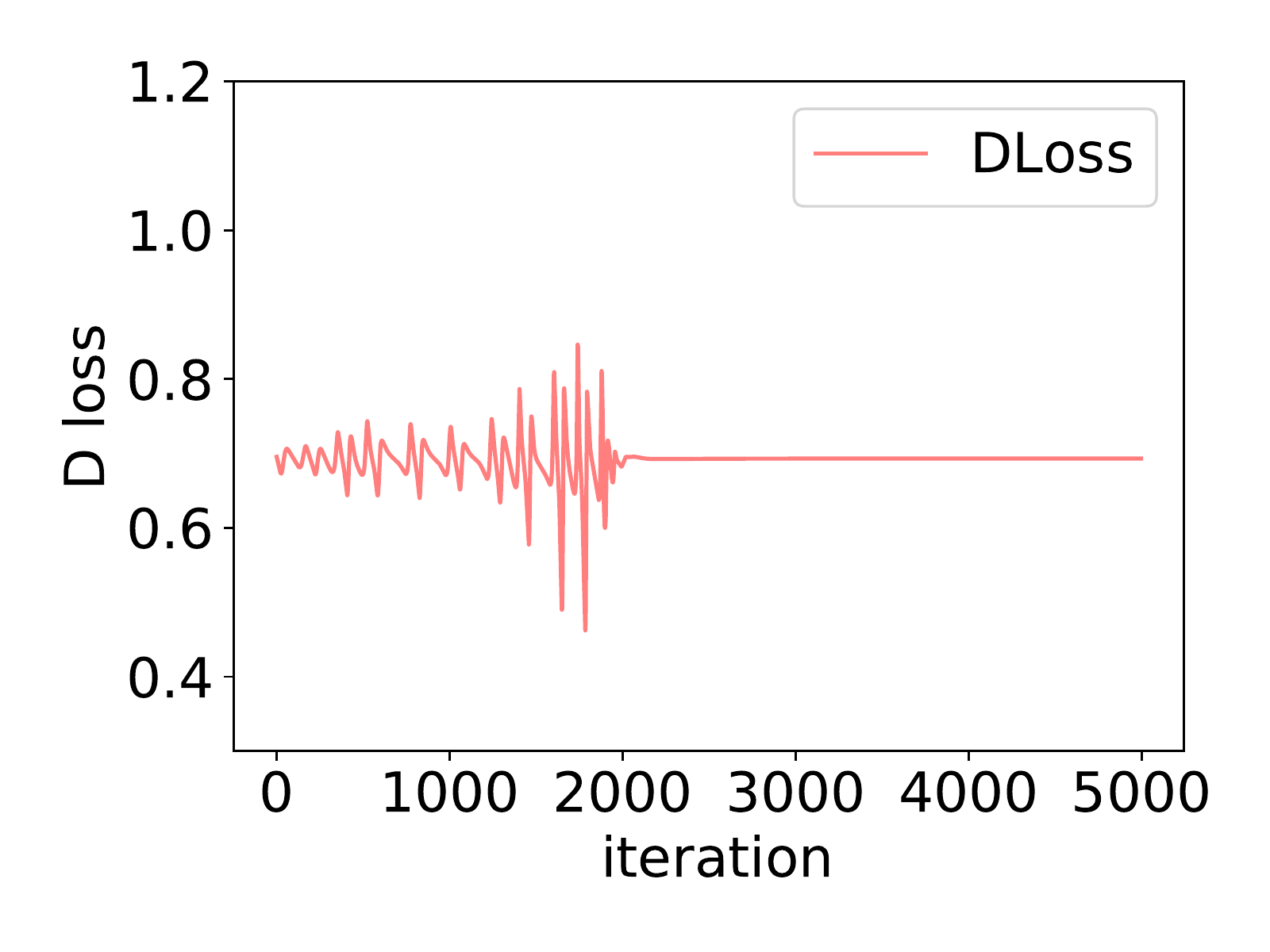} & \includegraphics[width=0.2\linewidth, height = 1.8cm]{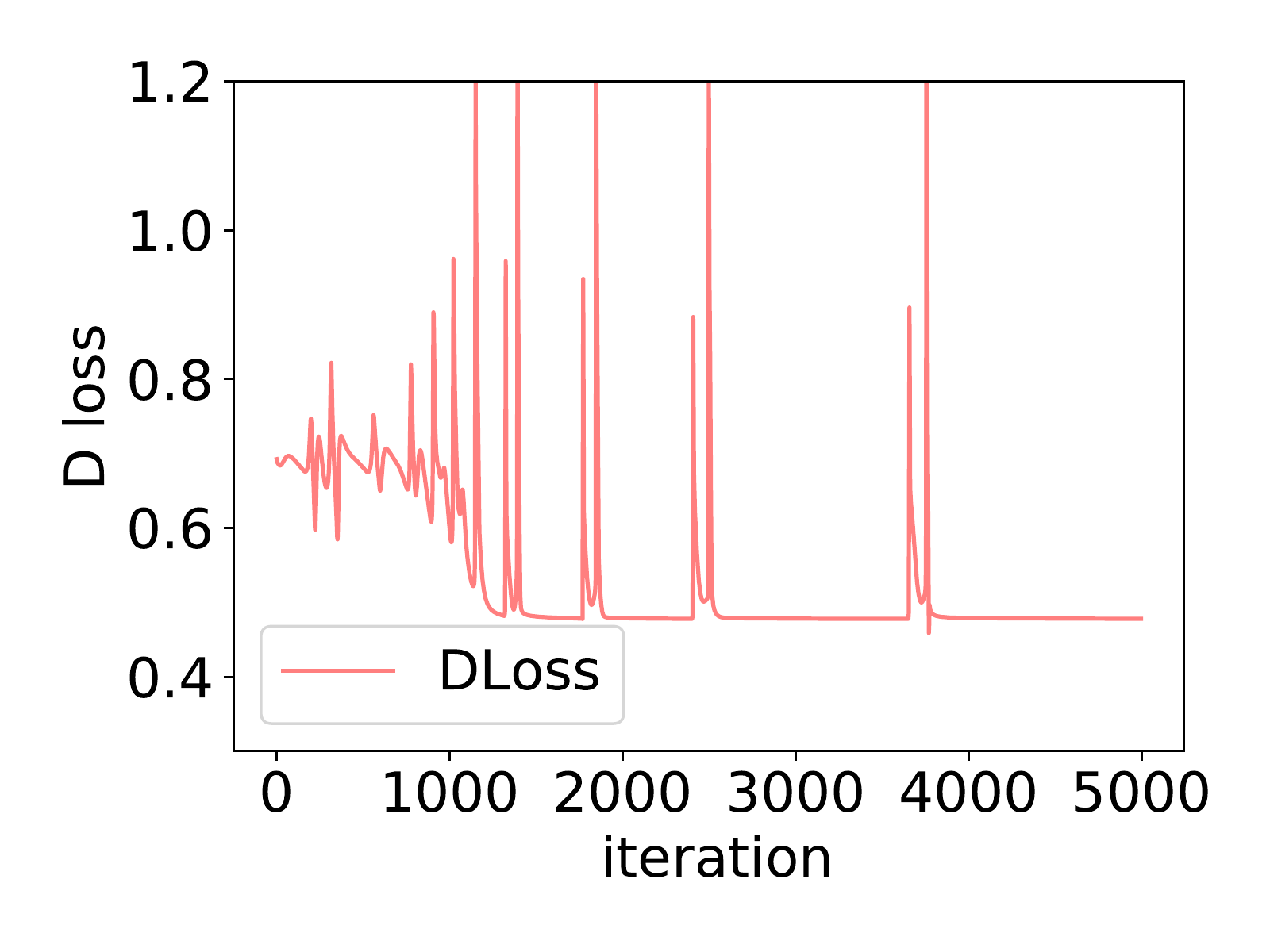} &
        \includegraphics[width=0.2\linewidth, height = 1.8cm]{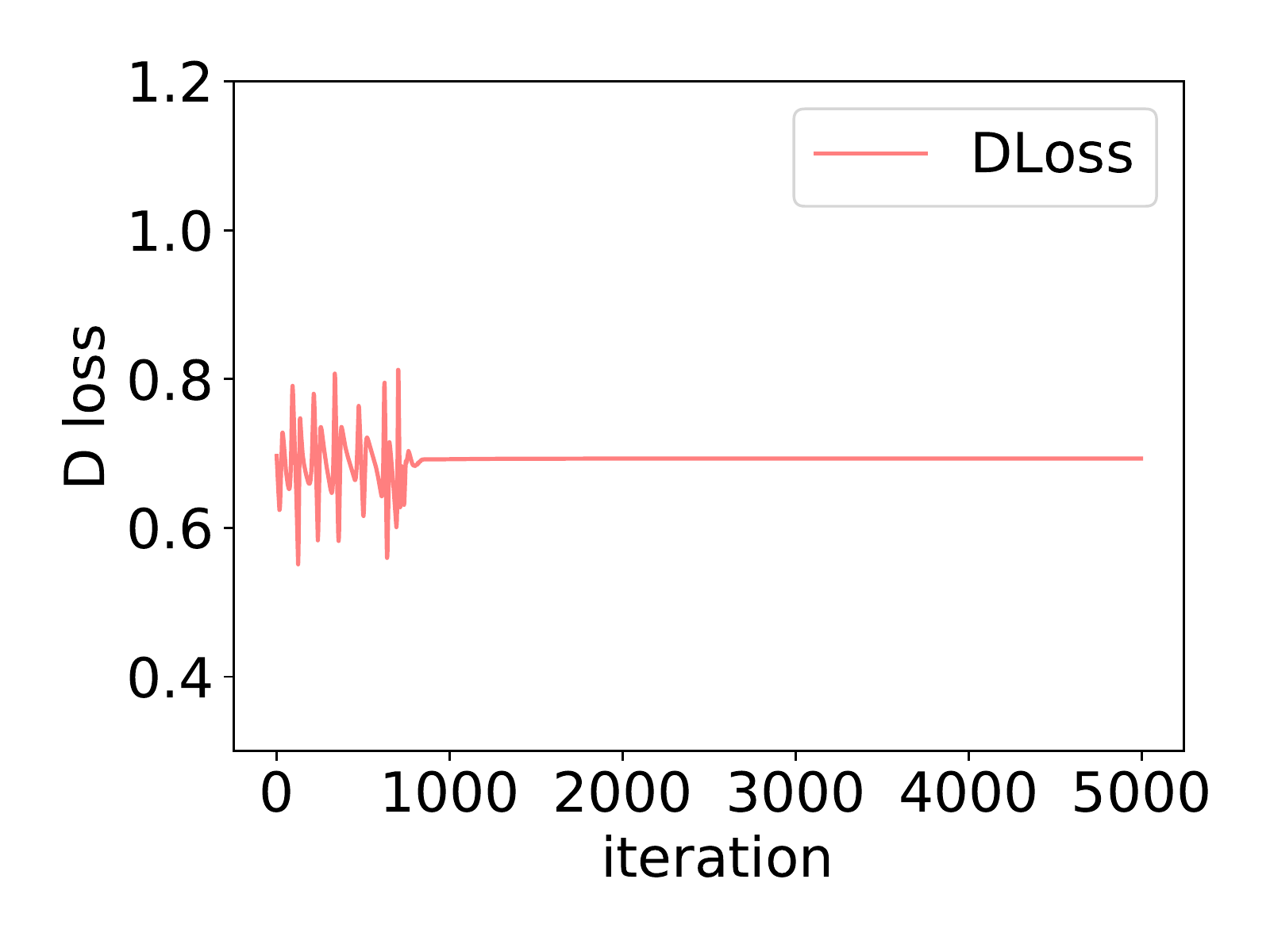} &
        \includegraphics[width=0.2\linewidth, height = 1.8cm]{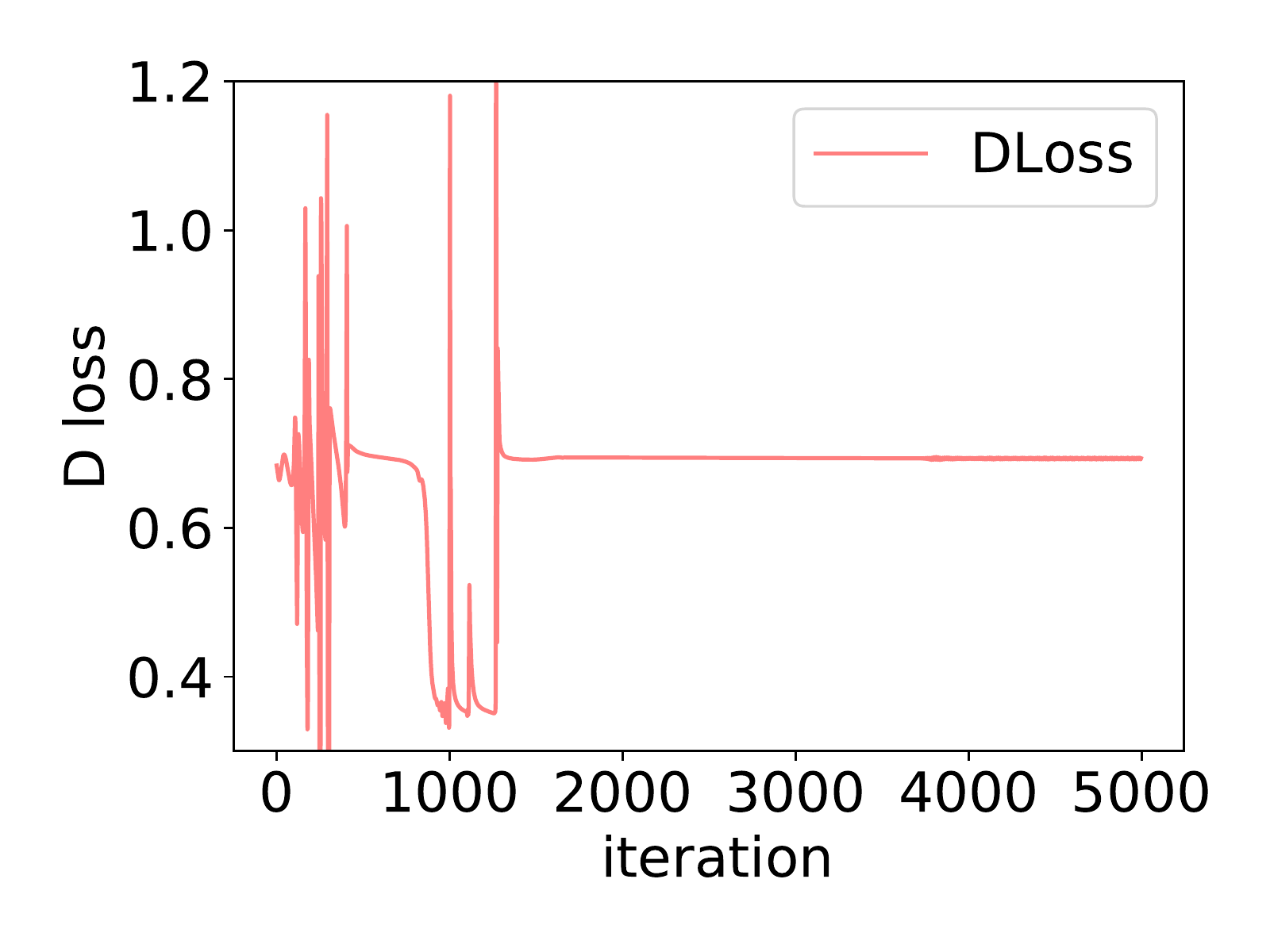}
        \\
 \includegraphics[width=0.2\linewidth, height= 1.8cm]{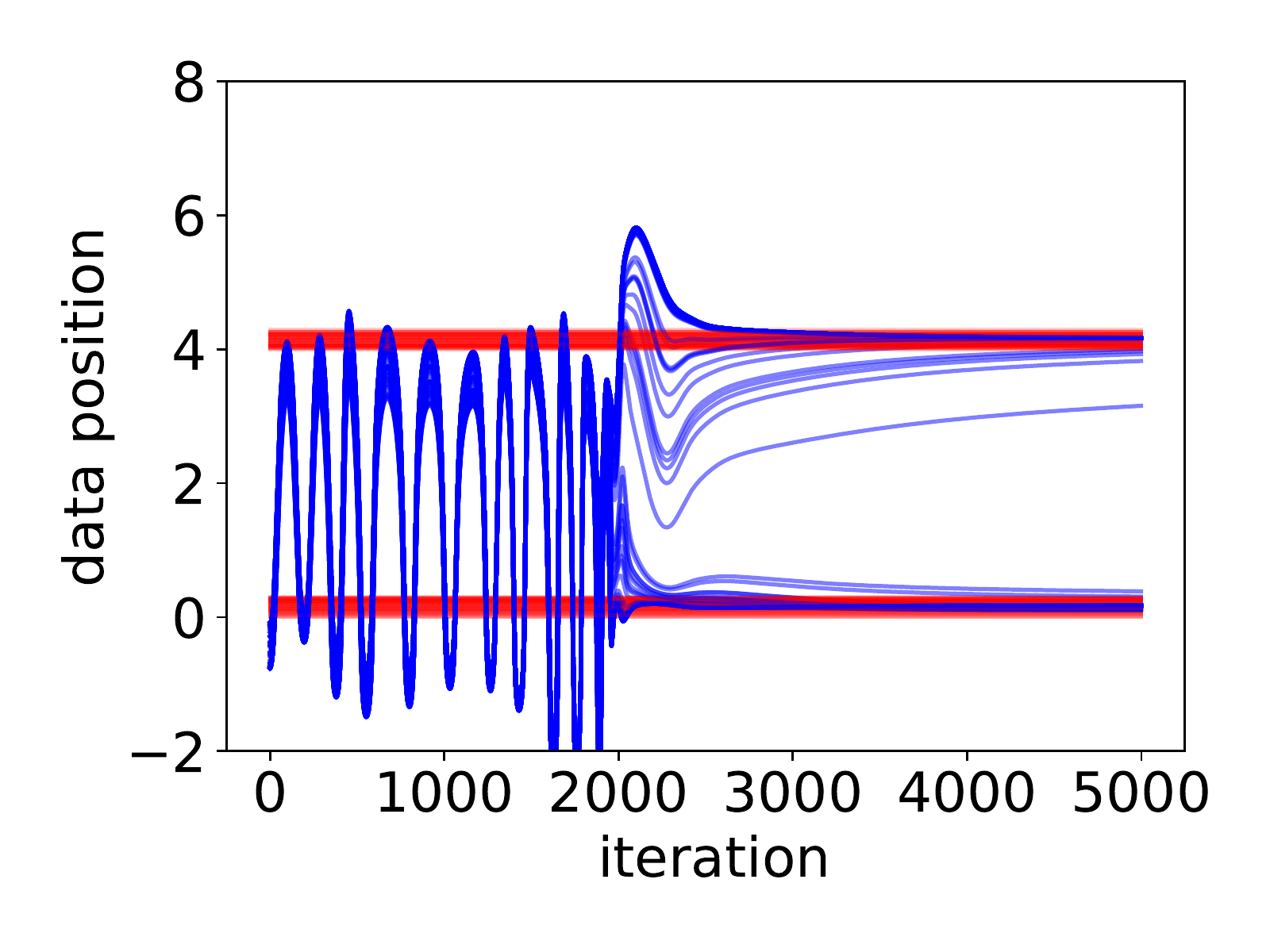} & \includegraphics[width=0.2\linewidth, height = 1.8cm]{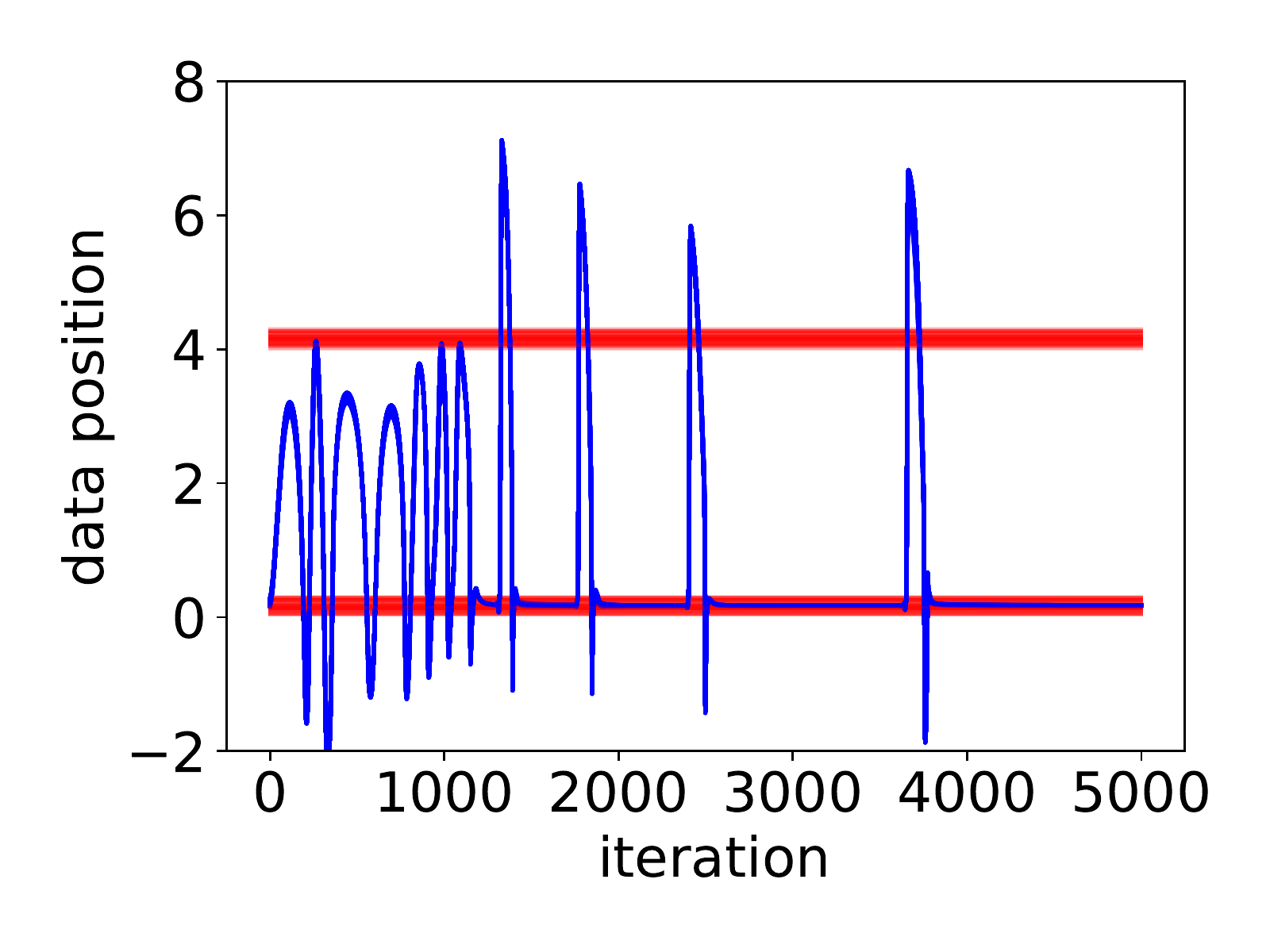} &
        \includegraphics[width=0.2\linewidth, height = 1.8cm]{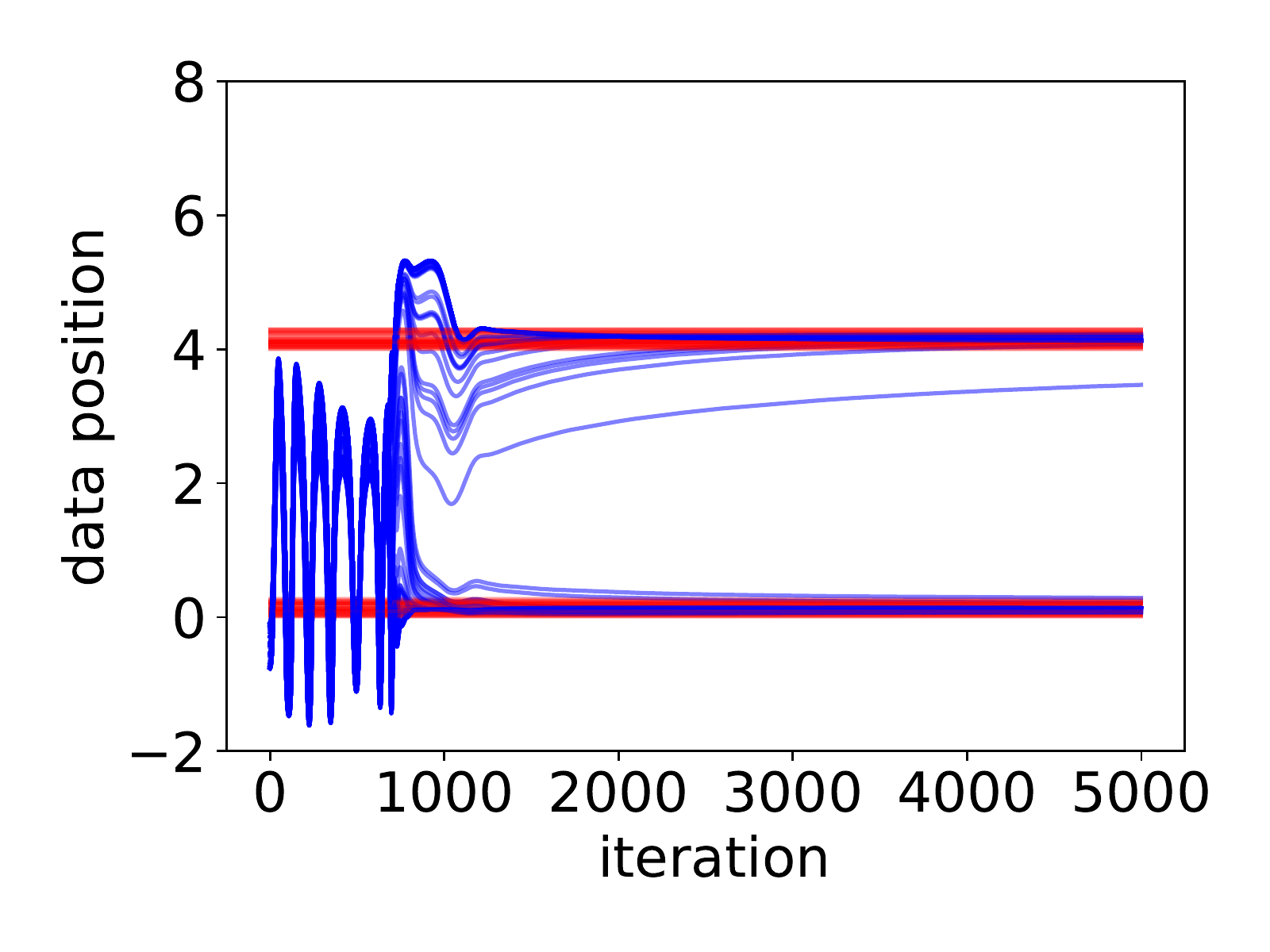} &
        \includegraphics[width=0.2\linewidth, height = 1.8cm]{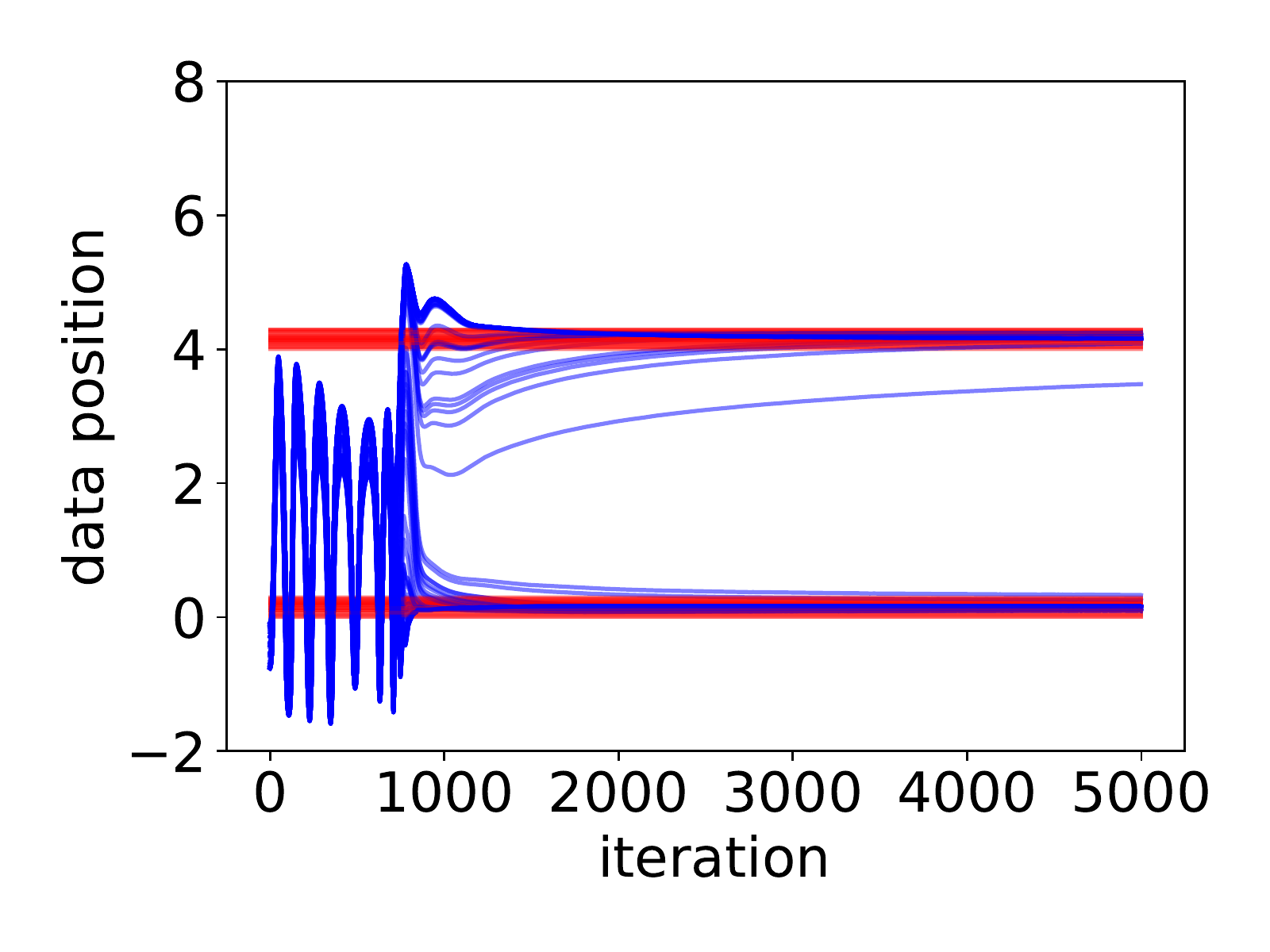} 
        \\
{\scriptsize (a) JS-GAN 1st run } & 
{\scriptsize  (b) JS-GAN 2nd run } &
{\scriptsize  (c) RS-GAN 1st run } &
{\scriptsize  (d) RS-GAN 2nd run }
    \end{tabular}
    \vspace{-0.2cm}
\captionsetup{font={scriptsize}} 
    \caption{Narrow network setting: Comparison of JS-GAN and RS-GAN in two runs.  
    RS-GAN is a few times faster than JS-GAN in general.
    Compare to default widths (D width 10, G width 5), 
    both GANs converge slower.
    In one case (b), JS-GAN gets stuck at mode collapse.
    }
    \label{fig7 narrow net 2 cluster}
\vspace{-0.3cm}
\end{figure}

\iffalse 
\textbf{Other lessons.}
GAN training is sensitive to hyper-parameters, even for
this two-cluster setting. 
To ensure that our findings are not biased by hyper-parameter choice,
we study a large set of hyper-parameters and
summarize a few findings below.
First, tuning hyper-parameters (learning rate, $G$ and $D$  iterations)
can affect the speed of escaping mode collapse.
 Proper hyper-parameters can help JS-GAN avoid 
  converging to the bad $D^*$ as shown in 
  Fig.~\ref{fig:2clustercomp}. We can recognize this
  situation if  the $D$ loss is fluctuating but does not hit $ 0.48 $. In this case, $(s_{1 \rm a}, D^*(s_{1 \rm a}))$ only slows down the training by attracting the iterates half-way. 
In any case, due to the differing attraction power, RS-GAN is still faster than JS-GAN. Second, mode collapse can be due to  neural-net issues  (e.g., training getting stuck). %
However, such cases do not exhibit the pattern reported in Fig.~\ref{fig5 D loss and D image}.
For details of these findings (and other findings),
see Appendix~\ref{subsec: details of experiments}. 
\fi

\textbf{Other hyperparameters.}
Besides the width, the learning rates
  and (DIter, GIter) will also affect the training process. 
 As for (DIter, GIter), we use $(10, 10)$ as default,
 but other choices such as $ (5, 2) $ and $(1 , 1)$ also work. 
As for learning rates, we use $ (0.01, 0.01) $ as default,
but smaller learning rates such as  $ (0.001, 0.001) $ also work.
Different from the default hyper-parameters,
for some hyper-parameters, the D loss of JS-GAN does not reach $0.48$,
indicating that the basin only attracts the iterates half-way. 
Nevertheless, in most settings RS-GAN is still faster than JS-GAN.

\iffalse 
the training behavior for other hyper-parameters
are qualitatively similar to the behavior in Figure~\ref{fig 6 standard 2 cluster} and Figure~\ref{fig7 wide net 2 cluster}. 
\fi

\iflonger 
We present a few main findings from various hyper-parameter settings below. 
The two-stage behavior is observed for almost all experiments (except narrow nets). 
 In the first stage, the generated points swing between the two true clusters; in the second stage, there are two possibilities: a) success case, where the generated points split into two clusters and converge to the true clusters; b) failure case, where the generated points fall into one mode. We believe that the attraction power of mode collapse makes the generated points swing between the two true clusters, and whether the training leads to success or failure case depends on the hyper-parameter choice (proper balance of $D$ and $G$). The convergence time depends on the time to escape mode collapse. 
 The major reason, as we argued, is due to the 
smaller attraction power of mode collapse for RS-GAN,
which is supported by our theoretical result that
RS-GAN optimization landscape has no bad basin. 

\textbf{Judgement of success of failure based on D loss}
As a corollary, we can tell whether the training succeeds
or not by checking the $D$ loss plot. 
If the $D$ loss is still fluctuating, then we need to train longer.
If the $D$ loss gets stuck at 0.48 for JS-GAN (or 0.35 for RS-GAN),
then we can claim the iterates probably fell into mode collapse.
If the $D$ loss first fluctuates and then converges to 0.7, 
then the training probably successfully recovers the two modes.
We highlight again that the above simple criterion of success was observed
for almost all experiments except narrow nets.
This is nontrivial since we expected that divergence
or strange behavior could happen; but we find that with long enough
training time (and not too crazy learning rates), 
the training either succeeds or falls into mode collapse. 
\fi

\section{Result and Experiments for Imbalanced Data Distribution}\label{sec: imbalance}
In the main results, we assume $x_i$'s are distinct.
In this section, we allow $x_i$'s to be in general positions, i.e., they can overlap.
The 2-point model can only approximate two balanced  clusters. 
Allowing $x_i$'s to overlap, we are able to analyze imbalanced two clusters. 
We will show: 
(i) a theoretical result for 2-cluster data; (ii) experiments on
imbalanced 2-cluster data and MNIST. 

\subsection{Imbalanced Data: Math Results for Two-Clusters}\label{sec: imbalance formulation}
Assume there are $n$ true data points $X = (x_1, \dots, x_n)$  in two modes with proportion $\alpha$ and $1-\alpha$ respectively, where $\alpha>0.5$.
More precisely, assume $x_1  = x_2 = \dots = x_{n \alpha } $ 
and $ x_{n\alpha+1  } = \dots = x_n$, 
and denote two multi-sets $ \mathcal{X}_1 = \{ x_1  , x_2 , \dots , x_{n \alpha } \} $
 and $ \mathcal{X}_2 = \{ x_{n\alpha+1  }  , x_2 , \dots , x_{n } \} $.
Denote $ Y =  (  y_1, \dots, y_n )  $ as the tuple of all generated points,
and let $ \mathcal{Y} $ be the multiset $ \{  y_1, \dots, y_n \} . $

\begin{claim}\label{claim of JS GAN imbalanced}
Consider the JS-GAN loss defined in Eq.~\eqref{JSGAN min-max finite, using D},
where $X$ is defined above. We have 
\begin{equation}
\begin{split}
\phi_{\rm JS}(Y, X) = q_{\alpha}(m_1) + q_{1 - \alpha}(m_2), \text {if } | 
    \mathcal{X}_1 \!\cap\!  \mathcal{Y}  |\!\!=\!\! m_1  , | 
\mathcal{X}_2 \!\cap\! \mathcal{Y} |\!\!=\!\! m_2 ,  \\
 \text{where } q_{\alpha}(m) \triangleq \frac{\alpha}{2}\log (\alpha n) + \frac{m}{2n}\log m - \frac{\alpha n+m}{2n}\log (\alpha n+m)  .
 \end{split}
\end{equation}
As a result, the global minimal loss is $ - \log 2 $, which is achieved iff
    $ \mathcal{Y}   = \mathcal{X}_1 \cup  \mathcal{X}_2  $. 
\end{claim}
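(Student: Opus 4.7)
The key observation is that the JS-GAN loss decouples across distinct points in the combined support of $X$ and $Y$, because $D$ is an arbitrary continuous function on $\mathbb{R}^d$ and the loss only reads off its values at finitely many locations. First, I would enumerate the distinct points $p$ in $\{x_i\}\cup\{y_i\}$ and let $n_p$ (resp.\ $m_p$) be the number of $x_i$'s (resp.\ $y_i$'s) equal to $p$. Writing $d_p := D(p)$, the supremum in \eqref{JSGAN min-max finite, using D} splits as
\begin{equation}
\phi_{\rm JS}(Y,X) \;=\; \frac{1}{2n}\sum_p \sup_{d_p\in(0,1)} \bigl[\, n_p \log d_p + m_p \log(1-d_p)\,\bigr].
\end{equation}
Concavity in $d_p$ gives the optimizer $d_p^\star = n_p/(n_p+m_p)$ when both $n_p,m_p>0$, with maximum value $n_p\log n_p + m_p\log m_p - (n_p+m_p)\log(n_p+m_p)$; when either multiplicity vanishes the sup is $0$, attained in the limit $d_p\to 0$ or $d_p\to 1$. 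Adopting $0\log 0 = 0$ makes the formula uniform across all $p$.

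Second, I would specialize to the two-cluster support. Only $p\in\{a_1,a_2\}$ (where the $x_i$'s sit) contributes, since any $y_i$ placed elsewhere has $n_p=0$ and contributes $0$. Substituting $(n_{a_1},m_{a_1})=(\alpha n, m_1)$ and $(n_{a_2},m_{a_2})=((1-\alpha)n, m_2)$ and rearranging matches the claimed $q_\alpha(m_1)+q_{1-\alpha}(m_2)$ term by term.

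Third, for the global minimum I would show by calculus that $\partial q_\alpha/\partial m = \tfrac{1}{2n}\log\tfrac{m}{\alpha n+m}<0$, so $q_\alpha$ is strictly decreasing in $m$. Hence any minimizer must saturate $m_1+m_2=n$: if not, relocating a ``wasted'' $y_i$ onto $a_1$ or $a_2$ strictly decreases the loss. On the segment $m_1+m_2=n$, the function $\psi(m_1):=q_\alpha(m_1)+q_{1-\alpha}(n-m_1)$ is strictly convex (since $q_\alpha''(m) = \tfrac{\alpha}{2m(\alpha n+m)}>0$), so its unique critical point is its minimum. Setting $\psi'(m_1)=0$ reduces algebraically to $m_1=\alpha n$, whence $m_2=(1-\alpha)n$ and $\mathcal{Y}=\mathcal{X}_1\cup\mathcal{X}_2$. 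Direct substitution gives the minimum value $-\alpha\log 2 -(1-\alpha)\log 2 = -\log 2$.

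The only real subtlety, and the main ``obstacle,'' is the open-interval codomain constraint $D\in C_{(0,1)}$: at points $p$ with $n_p=0$ or $m_p=0$ the pointwise supremum $0$ is not attained by any continuous $D$. I would handle this by noting that the supremum is still well defined (approached by continuous $D$ that are near-constant at $p$ and interpolate smoothly elsewhere), so the identity above is an equality of suprema, not of maxima. Everything else is routine bookkeeping plus a one-dimensional convex optimization.
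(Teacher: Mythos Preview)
Your proposal is correct and follows essentially the same route as the paper: decouple the supremum over $D$ across the distinct points in the combined support, optimize each term via $d_p^\star = n_p/(n_p+m_p)$, observe that points outside $\{a_1,a_2\}$ contribute zero, and identify the two remaining contributions with $q_\alpha(m_1)$ and $q_{1-\alpha}(m_2)$. For the global minimum the paper simply asserts ``it is easy to show'' the inequality and the equality condition, whereas you actually supply the argument (monotonicity of $q_\alpha$ forces $m_1+m_2=n$, then strict convexity of $\psi$ pins down $m_1=\alpha n$); that extra detail is welcome and correct.
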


\iffalse 
As a result, we have
   \begin{equation*}
   \phi_{\rm JS}(Y, X) = 
  \begin{cases}
    - \log 2 \approx -0.6931 
    &      \hspace{-0.2cm}\text{if }  \mathcal{Y} = \mathcal{X}_1 \cup  \mathcal{X}_2   \\
q_{\alpha}(m_1) + q_{1 - \alpha}(m_2) 
    &      \hspace{-0.2cm}\text{if } |  \mathcal{X}_1 \!\cap\! \mathcal{Y} |\!\!=\!\! m_1 , | \mathcal{X}_2 \!\cap\!  \mathcal{Y} |\!\!=\!\! m_2,  \\
    0       &     \hspace{-0.2cm}\text{if }     |  \mathcal{X}_i \!\cap\! \mathcal{Y}  | \!\!=\!\! 0, i=1, 2
    \end{cases}  
   \end{equation*}
   \fi

\begin{coro}\label{coro of strict local min imbalanced}
 Suppose $ \hat{Y} = ( \hat{y}_1, \dots, \hat{y}_n ) $ satisfies
  $ |  \mathcal{X}_1  \cap  \hat{ \mathcal{Y} } |\!\!=\!\! n_1   , | 
\mathcal{X}_2  \cap  \hat{ \mathcal{Y} } |\!\!=\!\!  n - n_1,  $
where $ \hat{ \mathcal{Y} } = \{ \hat{y}_1, \dots, \hat{y}_n  \} $ is the multiset of all $ \hat{y}_j$'s, then  $ \hat{Y} $ is a strict local minimum.
Moreover, if $ n_1 \neq  n \alpha , $
then $ \hat{Y} $ is a sub-optimal strict local minimum.
\end{coro}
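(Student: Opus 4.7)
The plan is to apply Claim~\ref{claim of JS GAN imbalanced} together with a strict monotonicity property of $q_\alpha$, paralleling the local-minimality argument in Theorem~\ref{prop: GAN all values, extension}(ii). Let $c_1$ denote the common location of the points in $\mathcal{X}_1$ and $c_2$ the common location of those in $\mathcal{X}_2$. Since $\hat{Y}$ has $n_1$ coordinates equal to $c_1$ and $n-n_1$ equal to $c_2$, Claim~\ref{claim of JS GAN imbalanced} directly gives
\[\phi_{\rm JS}(\hat{Y},X)=q_\alpha(n_1)+q_{1-\alpha}(n-n_1).\]

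Next, I would establish that $q_\alpha$ is strictly decreasing on $\{0,1,\ldots,n\}$. Differentiating the continuous extension yields $q_\alpha'(m)=\frac{1}{2n}\log\frac{m}{\alpha n+m}<0$ for every $m>0$, and $q_\alpha$ is continuous at $m=0$ (since $\tfrac{m}{2n}\log m\to 0$), so by the mean value theorem $q_\alpha(m)<q_\alpha(m-1)$ for every integer $m\ge 1$; similarly for $q_{1-\alpha}$. This is the imbalanced analogue of Assumption~\ref{assumption 3}.

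For strict local minimality, pick $\delta<\tfrac{1}{2}\|c_1-c_2\|$ and consider any $Y\neq\hat{Y}$ with $\|Y-\hat{Y}\|_\infty<\delta$. By the choice of $\delta$, any displaced coordinate $y_j\neq\hat{y}_j$ cannot jump from one mode to the other, so $y_j\notin\{c_1,c_2\}$. Writing $m_1(Y),m_2(Y)$ for the number of coordinates of $Y$ equal to $c_1,c_2$ respectively, it follows that $m_1(Y)\le n_1$ and $m_2(Y)\le n-n_1$ with strict inequality in at least one index. Applying Claim~\ref{claim of JS GAN imbalanced} to $Y$ and invoking the strict monotonicity above gives $\phi_{\rm JS}(Y,X)>\phi_{\rm JS}(\hat{Y},X)$, proving strict local minimality. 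For the ``moreover'' statement, Claim~\ref{claim of JS GAN imbalanced} identifies the global minimum as $-\log 2$, attained only when $\mathcal{Y}=\mathcal{X}_1\cup\mathcal{X}_2$, i.e., when exactly $n\alpha$ coordinates lie at $c_1$. If $n_1\neq n\alpha$, then $\hat{Y}$ fails to attain $-\log 2$ and is therefore sub-optimal.

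The main (minor) subtlety is the discontinuity of $\phi_{\rm JS}(\cdot,X)$ at configurations whose coordinates sit exactly on $\{c_1,c_2\}$: $\phi_{\rm JS}$ is only upper semi-continuous in $Y$, jumping upward as a coordinate leaves a mode. Fortunately this works in our favor, since the upward jumps are precisely what produces the strict inequality $\phi_{\rm JS}(Y,X)>\phi_{\rm JS}(\hat{Y},X)$ for arbitrarily small non-trivial perturbations. All other steps reduce to routine calculus (monotonicity of $q_\alpha$) and to re-indexing the formula of Claim~\ref{claim of JS GAN imbalanced}.
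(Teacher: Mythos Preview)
Your proposal is correct and follows essentially the same approach as the paper: compute $\phi_{\rm JS}(\hat{Y},X)$ via Claim~\ref{claim of JS GAN imbalanced}, prove $q_\alpha$ (and $q_{1-\alpha}$) is strictly decreasing by differentiation, and then argue that any small enough nontrivial perturbation can only lower the counts $m_1,m_2$, forcing a strict increase of the objective. The paper's proof is virtually identical---it uses the same derivative computation (phrased via $\eta(t)=t\log t-(\alpha n+t)\log(\alpha n+t)$) and the same perturbation radius argument; your remark on upper semi-continuity is a correct gloss but not needed beyond what you already use.
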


The proofs of Claim \ref{claim of JS GAN imbalanced} and Corollary \ref{coro of strict local min imbalanced} are given in Appendix \ref{app-sub: proof of imbalanced}.

\begin{figure}
    \centering
    \includegraphics[height = 2.5cm, width = 10cm]{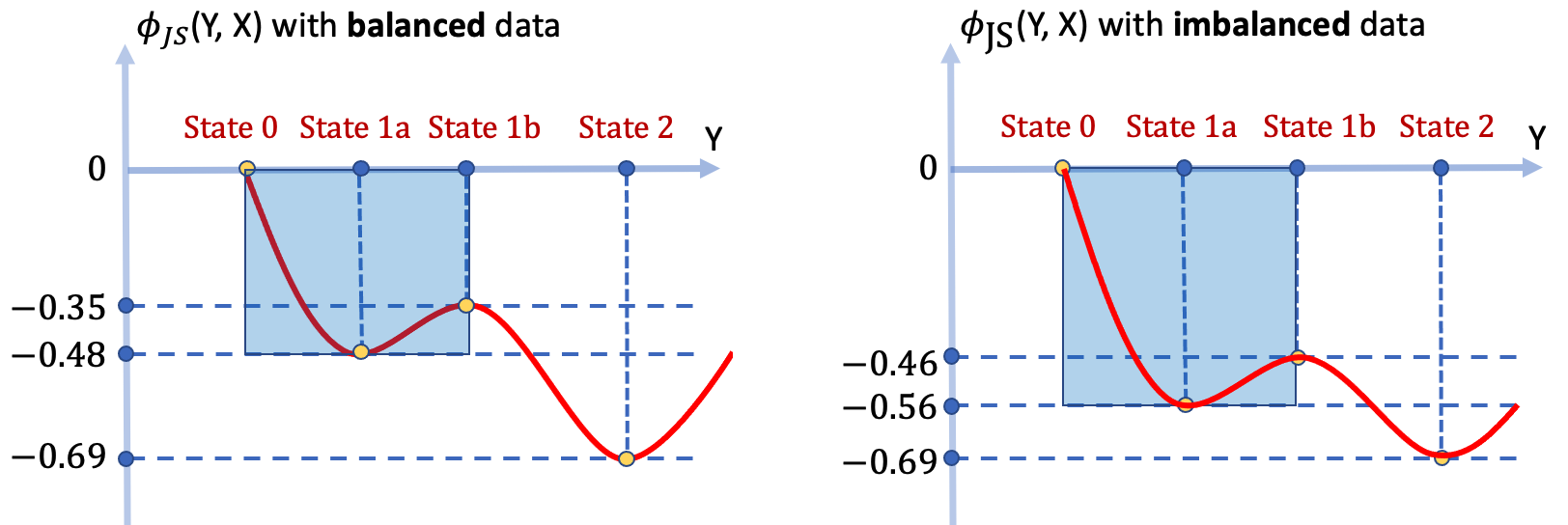}
    \captionsetup{font={scriptsize}}
    \caption{Illustration of the landscape
     of JS-GAN for balanced two clusters with $\alpha = 0.5$ (left)
     and imbalanced two clusters with $ \alpha = 2/3 $ (right).
  Denote $ m_i \triangleq  | \mathcal{X}_i \!\cap\! \mathcal{Y} | , i=1,2$.
Here state 0, state 1a, state 1b, state 2 represent 
$(m_1, m_2) = ( 0, 0)$, $(n \alpha, 0)$, $(n \alpha,  0),$
$( n \alpha, n (1 - \alpha) )  $ respectively.
By Claim \ref{claim of JS GAN imbalanced}, for $\alpha = 1/2$, $q_{\alpha}(n) \approx -0.48$ and $q_{\alpha}(\alpha n) \approx -0.35$; for $\alpha = 2/3$, $q_{\alpha}(n) \approx -0.56$ and $q_{\alpha}(\alpha n) \approx -0.46$. 
Different from the 2-point-case landscape in Fig ~\ref{fig:2clustercomp}, 
 there should be some intermediate patterns (satisfying  $ m_1  \leq  n,  m_2  =  0)$, but for simplicity we do not show them. 
From state 1a to state 2, $Y$ can go through state 1b  or go through state 0, but we only show the path through state 1b.
 We view the gap between state 0 and state 1a as an approximation of the ``depth'' of the basin. }
    \label{fig: imbalanced}
\end{figure}

\iflonger 
It is easy to show that $ q_{ \alpha}(n)$, the loss value at $ (m_1, m_2) = (n , 0)$, is  $q_{ \alpha}(n) = \frac{\alpha }{2} \log \frac{ \alpha  }{ \alpha + 1 } + \frac{1}{2} \log \frac{1}{ \alpha + 1}. $
The computation is as follows:
\begin{align*}
q_{\alpha}(n) & = \frac{ \alpha }{2 } \log (n \alpha ) + \frac{n}{2n } \log n - \frac{ \alpha n + n }{ 2 n }
\log( \alpha n + n ) \\
& = \frac{\alpha }{2} \log \frac{ \alpha  }{ \alpha + 1 } + \frac{1}{2} \log \frac{1}{ \alpha + 1} . 
\end{align*}
This value is independent of $n$ and only depends on $\alpha$; in addition, 
$  q_{\alpha}( n   ) $ is a strictly decreasing function of $\alpha$. 
When $\alpha = 1/2$, $ q_{\alpha}(n)  = \frac{1}{4} \log \frac{1}{3} +  \frac{1}{2} \log \frac{2}{3} \approx 0.4774.  $ When $\alpha = 2/3$, $ q_{\alpha}(n) \approx -0.5608 $.  When $\alpha = 1$, $ q_{\alpha}(n) = -\log 2 \approx -0.6931 $. 
\fi 
 
 Denote $ m_1 \triangleq  | \mathcal{X}_2 \!\cap\! \mathcal{Y} |,  m_2 \triangleq  | \mathcal{X}_1 \!\cap\!  \mathcal{Y}  | . $ The value $  q_{\alpha}( n   ) $ indicates the value of $\phi(Y, X)$ at the mode collapsed pattern (state 1a) where $m_1 = n, m_2 = 0$. 
Note that  $q_{ \alpha}(n) = \frac{\alpha }{2} \log \frac{ \alpha  }{ \alpha + 1 } + \frac{1}{2} \log \frac{1}{ \alpha + 1} $
 is a strictly decreasing function of $\alpha$. 
 When $\alpha = 1/2$, $ q_{\alpha}(n)  = \frac{1}{4} \log \frac{1}{3} +  \frac{1}{2} \log \frac{2}{3} \approx -0.4774 $; when $\alpha = 2/3$, $ q_{\alpha}(n) \approx -0.5608 $. 
The more imbalanced the data are (larger $\alpha$),
the smaller $ q_{\alpha}(n)  $, and further the deeper the basin.
In Figure \ref{fig: imbalanced}, we compare the loss landscape  of the balanced case $\alpha = 1/2$ and the imbalanced case $\alpha = 2/3$.

\iflonger 
To illustrate the effect of $\alpha$ (which measures the level of imbalance)
on the loss landscape, we compare the loss landscape for $\alpha = 1/2$ and $\alpha = 2/3$
in Figure \ref{fig: imbalanced}. 
 The mode collapse creates a deeper basin for the imbalanced data case (right
 sub-figure of Figure \ref{fig: imbalanced}) than the balanced data case (the left sub-figure of Figure \ref{fig: imbalanced}, which is the same as Fig.~\ref{fig1_GAN_landscape}(a)).
 \fi 

We suspect that the deeper basin in the imbalanced case will make it harder to escape mode collapse for JS-GAN. We then make the following prediction:
for JS-GAN, mode collapse is a more severe issue for imbalanced data than it is for balanced data. For RS-GAN, the performance does not change much as data becomes more imbalanced.
\iffalse 
\begin{align*}
&  \textit{Prediction: for JS-GAN, mode collapse is a more severe issue for imbalanced data than balanced data. }   \\
 & \quad \quad \textit{ For RS-GAN, the performance does not change much as data become more imbalanced. }
\end{align*}
\fi 
We will verify this prediction in the next subsections.

\subsection{Experiments}
\textbf{2-Cluster Experiments.}
For the balanced case, the experiment is described in Appendix~\ref{sec: 2clustersetting}. 
Both JS-GAN and RS-GAN can converge to the two-mode-distribution.
For the imbalanced case where $\alpha=\frac{2}{3}$,
with other hyper-parameters unchanged, JS-GAN falls into mode collapse while 
RS-GAN generates the true distribution (2/3 in mode 1 and 1/3 in mode 2) (see Fig.~\ref{2 cluster imbalance}). 
The loss $\phi_{\rm JS}(Y, X)$ ends up at  approximately -0.56, which
matches Claim \ref{claim of JS GAN imbalanced}.

\textbf{MNIST experiments.} 
To ease visualization, we create an MNIST  sub-dataset only containing 5's and 7's. We use the CNN structure of Tab.~\ref{table: CNN structure} and train for $30k$ iterations. For the balanced case, the number of 5's and 7's are identical (i.e., ratio 1:1). Both JS-GAN and RS-GAN generate a roughly equal number of 5's and 7's, as shown in Fig.~\ref{mnist imbalance}(a,b). For the imbalanced case with $4$ times more $7$'s than $5$'s (ratio 1:5), JS-GAN only generates 7's, while RS-GAN
generates 13 5's among 64 generated samples, aligning with the true data distribution (see Fig.~\ref{mnist imbalance}(c,d)).

The above two experiments verify our earlier prediction
that RS-GAN is robust to imbalanced data while JS-GAN easily gets stuck at
mode collapse for imbalanced data.

\begin{figure}[t]
\vspace{-0.2cm}
\centering
    \begin{tabular}{cccc}
   \includegraphics[width=0.2\linewidth, height= 1.8cm]{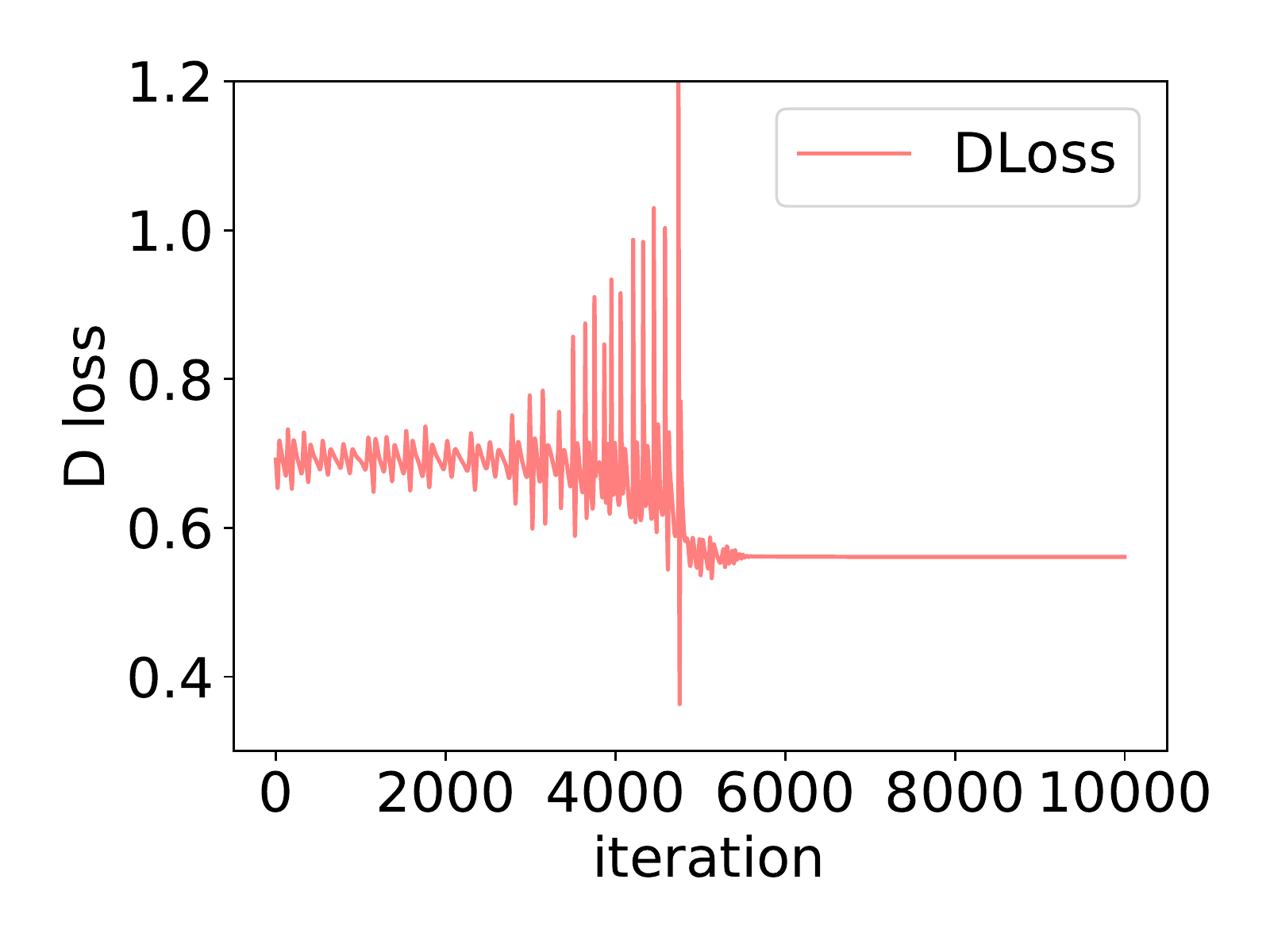} 
   & \includegraphics[width=0.2\linewidth, height = 1.8cm]{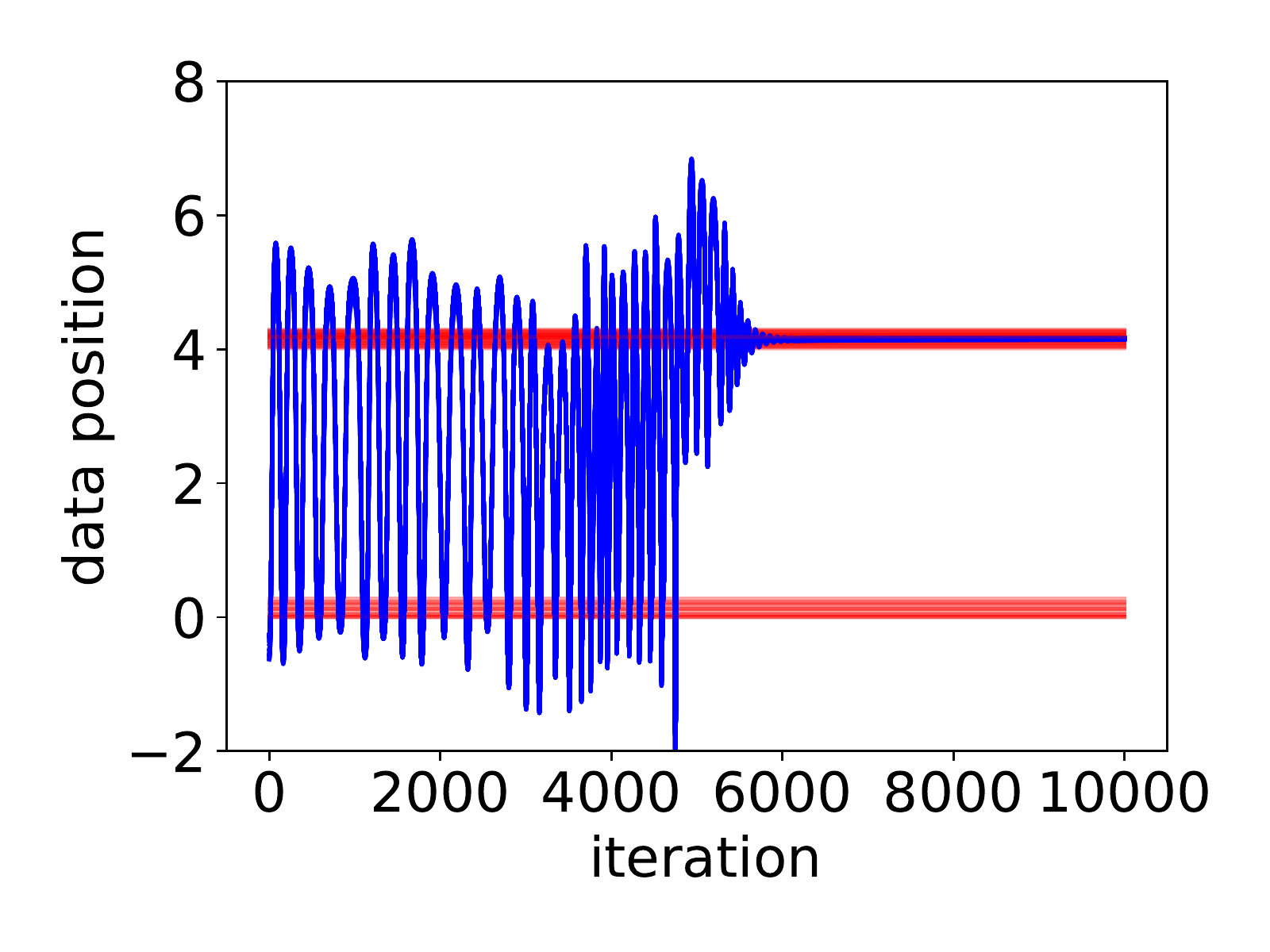} &
 \includegraphics[width=0.2\linewidth, height= 1.8cm]{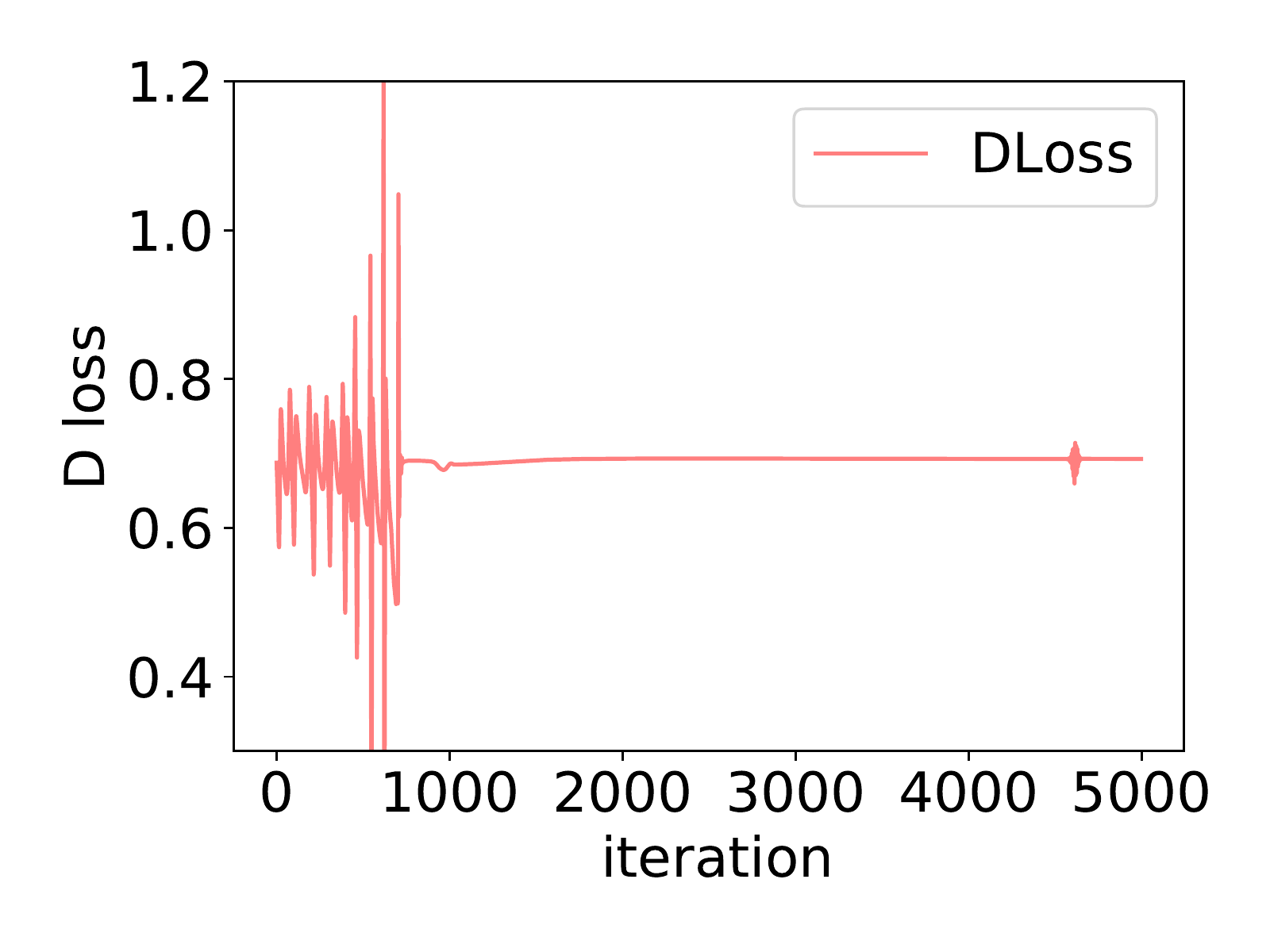} 
 & \includegraphics[width=0.2\linewidth, height =
 1.8cm]{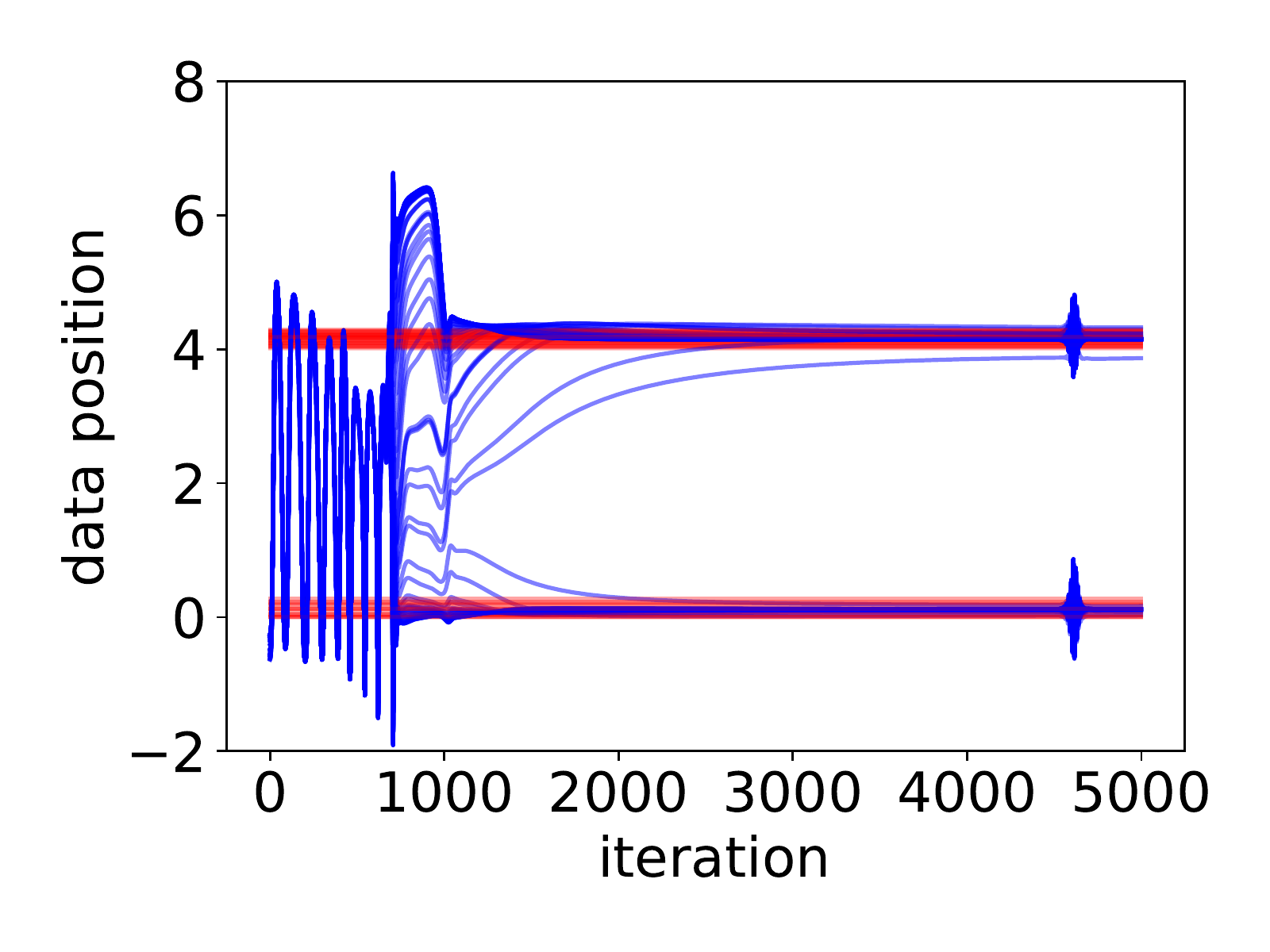} 
        \\
{\scriptsize (a) JS-GAN D loss } & 
{\scriptsize  (b) JS-GAN Data Evolution } &
{\scriptsize  (c) RS-GAN  D loss  } &
{\scriptsize  (d) RS-GAN Data Evolution }
    \end{tabular}
    \vspace{-0.1cm}
    \caption{Imbalanced 2-cluster result: comparison of JS-GAN in (a) and (b), and RS-GAN in (c) and (d). (a) and (c): evolution of D loss;
    (b) and (d): data position movement during training. 
    }
    \label{2 cluster imbalance}
\end{figure}

\begin{figure}[t]
\vspace{-0.2cm}
\centering
    \begin{tabular}{cccc}
   \includegraphics[width=2.3cm, height = 2.3cm]{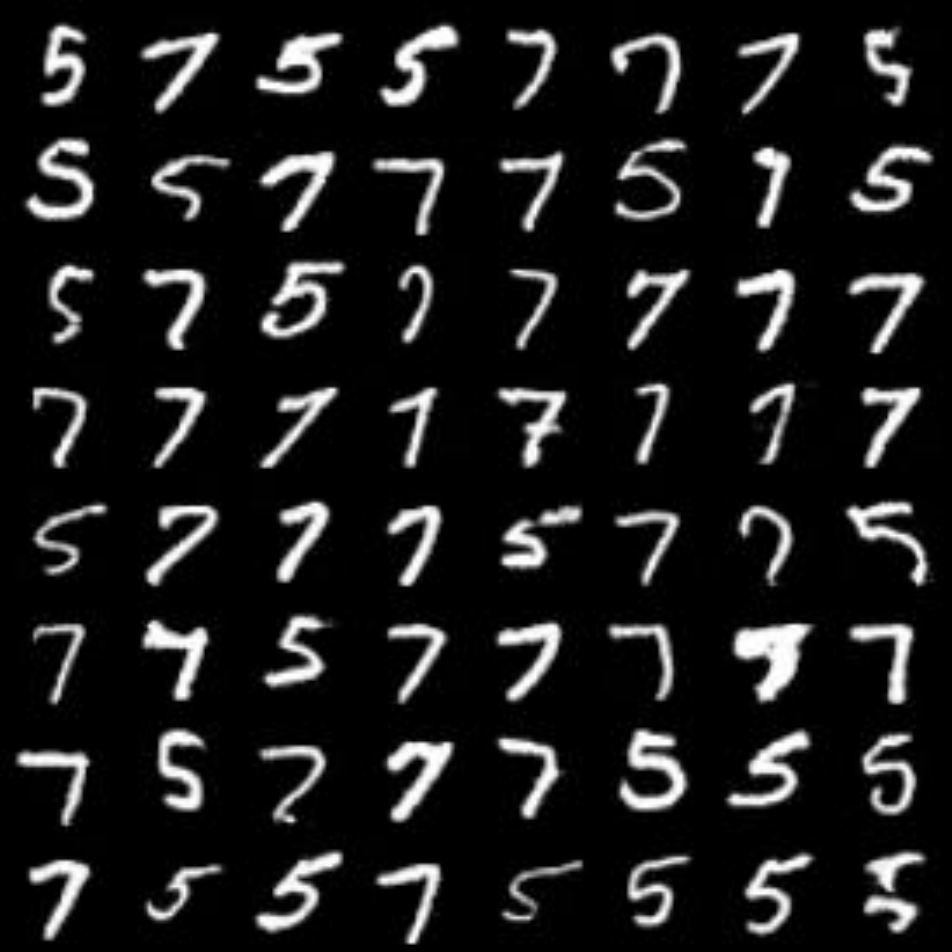} & \includegraphics[width=2.3cm,  height = 2.3cm]{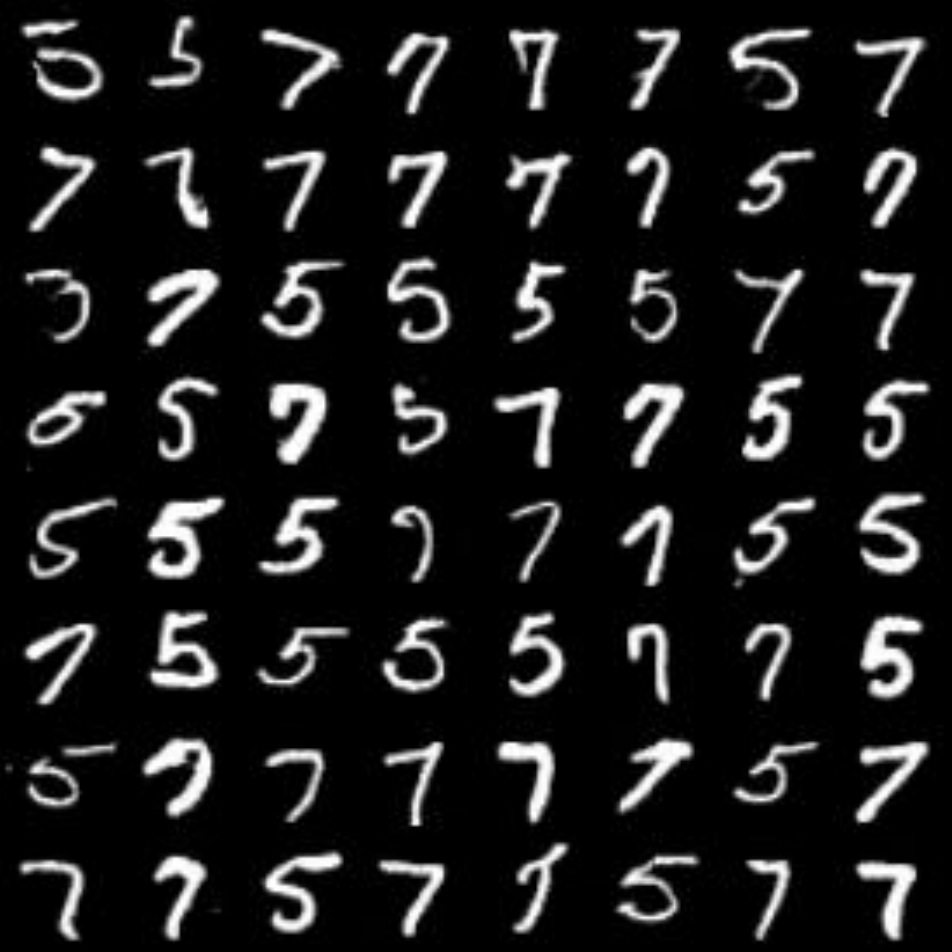} &
 \includegraphics[width=2.3cm, height = 2.3cm]{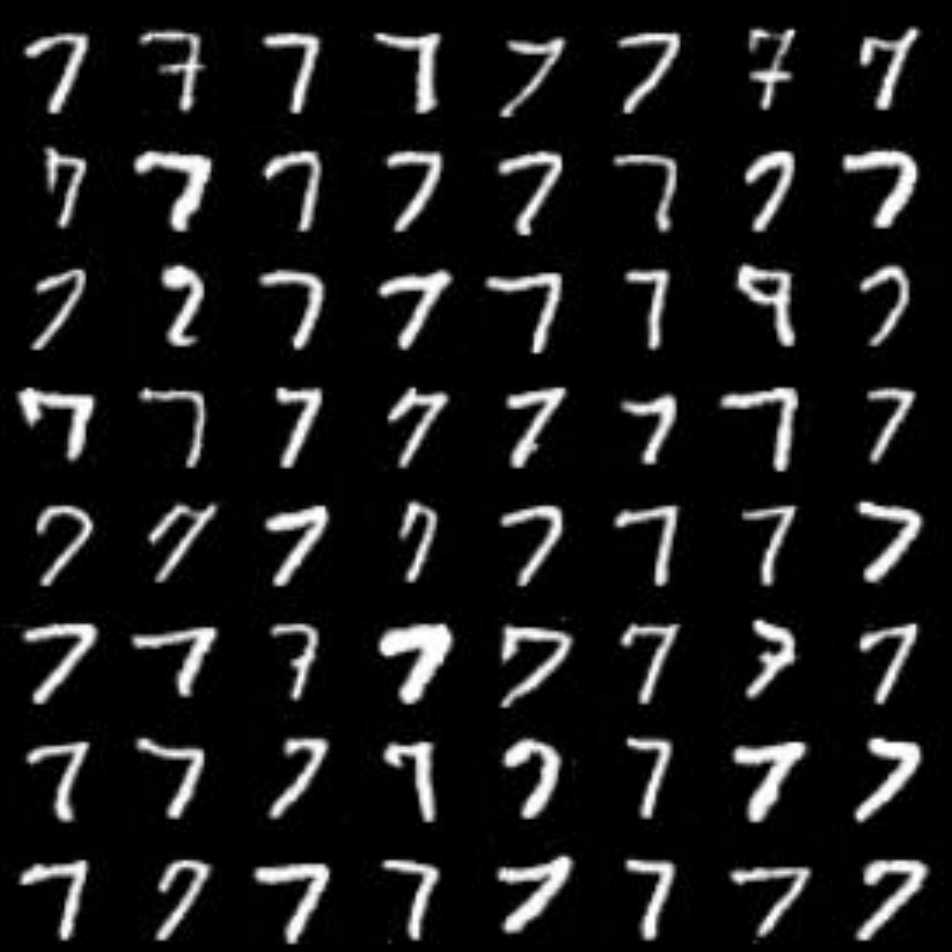} & \includegraphics[width=2.3cm, height = 2.3cm]{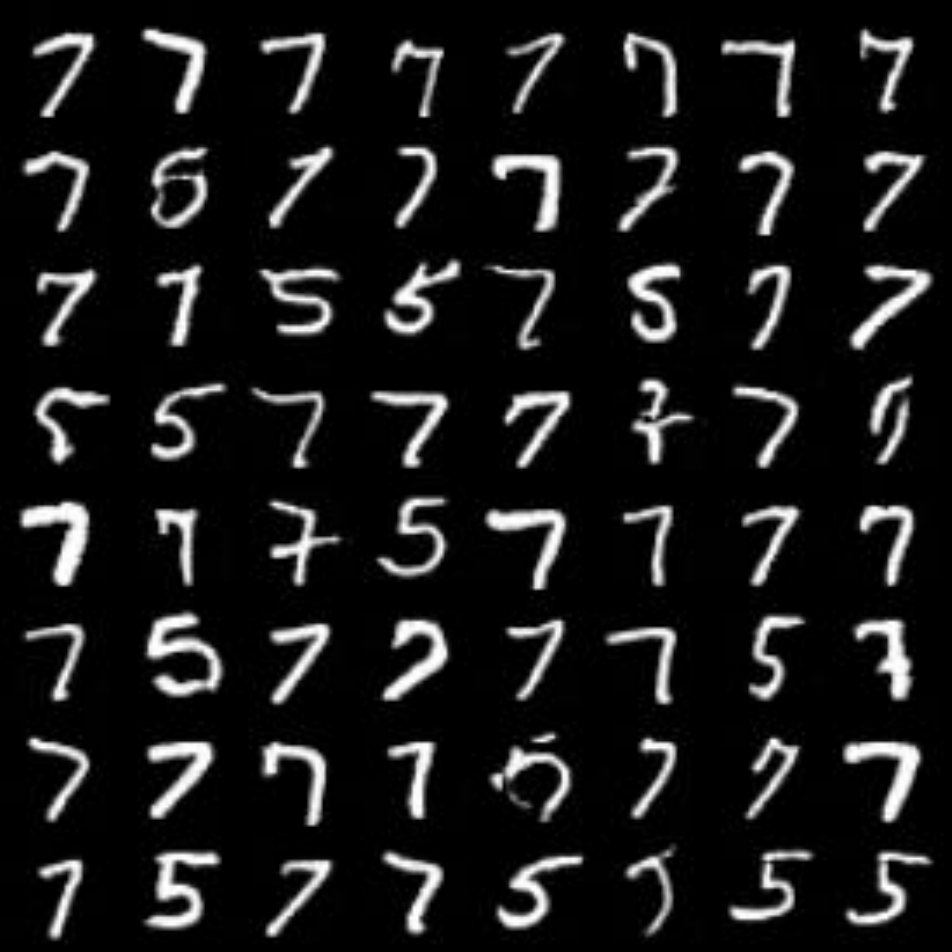} \\
 {\scriptsize  (a) balanced MNIST: JS-GAN } & 
 {\scriptsize  (b) balanced MNIST: RS-GAN } &     
 {\scriptsize  (c) imbalanced MNIST: JS-GAN  } 
 & {\scriptsize   (d) imbalanced MNIST: RS-GAN }
    \end{tabular}
    \vspace{-0.1cm}
    \caption{Balanced and Imbalanced MNIST setting: Comparison of JS-GAN and RS-GAN.
    }
    \label{mnist imbalance}
\vspace{-0.3cm}
\end{figure}

\section{Experiments of Bad Initialization }\label{app: bad initial experiment}

\iffalse 
In this part, we will perform ``bad initialization experiment'' to further 
support the improved landscape of RpGAN over SepGAN. 
 For simplicity, we only compare RS-GAN and JS-GAN. 
 \fi 

A bad optimization landscape does not mean the algorithm always converges to bad local minima\footnote{Technically since we are not dealing with a pure minimization problem,  we should say ``the algorithm converges to a bad attractor''. But for simplicity of illustration, we still call it ``local minimum.''}.
A `bad' landscape means is that there exists a ``bad'' initial point  (the blue point in Fig.~\ref{fig:landscapeillustration}(a))  that it will lead to a  `bad' final solution  upon training. In contrast, a good landscape is more robust to the initial point: starting from any initial point (e.g., two points shown in Fig.~\ref{fig:landscapeillustration}(b)), the algorithm can still find a good solution.
Therefore, bad optimization landscape of  JS-GAN does not mean the performance of JS-GAN is bad for \textit{any} initial point, but it should imply that JS-GAN is bad for \textit{certain} initial points. 

\begin{figure}[!b]
\vspace{-0.5cm}
\centering
\begin{tabular}{cc}
 \includegraphics[width=50mm,height=1.5cm]{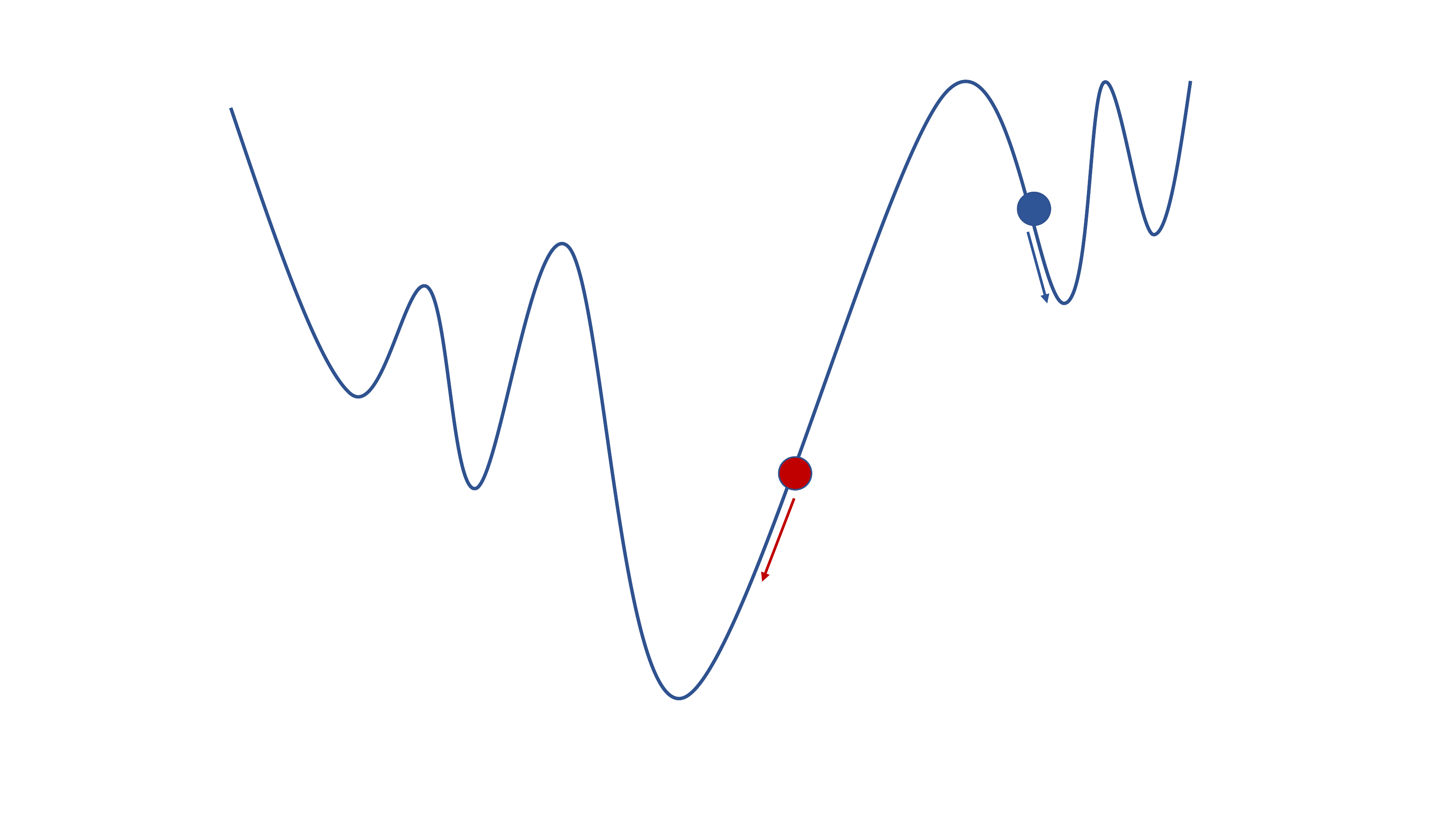}
 &\includegraphics[width=50mm, height = 1.5cm]{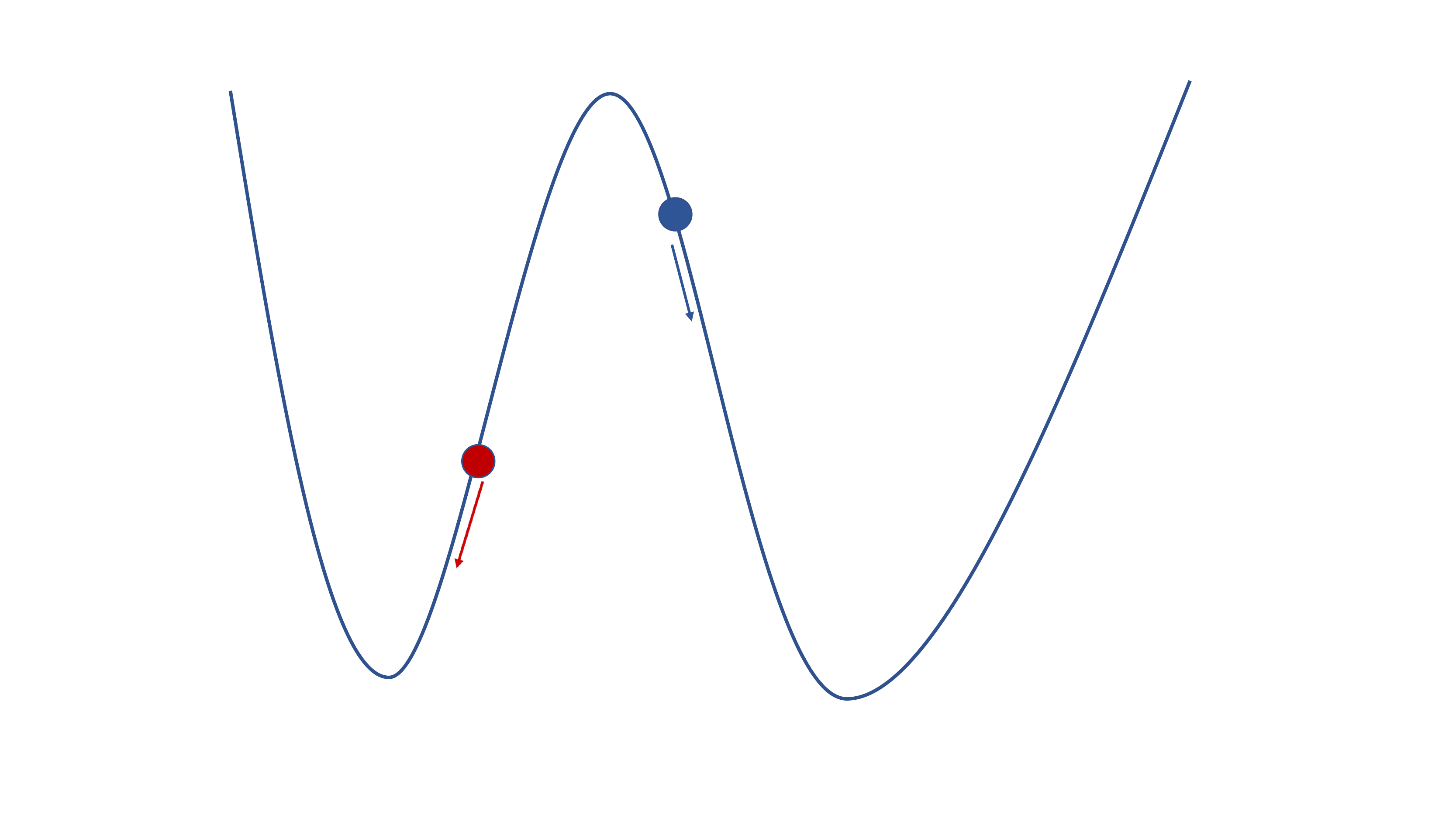} \\
 {\scriptsize  (a) bad landscape with bad local minima } & 
 {\scriptsize  (b) good landscape with multiple global minima }
\end{tabular}
\vspace{-0.1cm}
\caption{Left: for a bad landscape, a good initial point (red) leads to convergence to  a global optima while a bad one (blue) does not.
Right: for a good landscape, two initial points both converge to global minima. 
}
\label{fig:landscapeillustration}
\end{figure}

Next, we will show experiments that support this prediction.

\iffalse 
\vspace{-0.2cm}
\subsection{5-Gaussian Experiments}
\label{sec:badinittoy}
\vspace{-0.2cm}
\fi

\iffalse 
, i.e., we no longer optimize directly over $Y$.  
When using a single cluster as the initial point for the 5-Gaussian experiment we found that JS-GAN can split them (unlike in the previous experiment).  Likely this is due to the fact that it is hard to find a non-trivial generator neural-net which  yields generated points that are close.  Therefore, we design another method to reveal the difference between JS-GAN and RSGAN. 
\fi 

\iffalse 
According to our theory, the bad basin of JS-GAN occurs near a certain discriminator (the optimal discriminator for given fake data). In practice, since we are using one step of gradient descent for the discriminator, we do not necessarily get close to the optimal discriminator and thus the procedure may not get stuck at a bad basin. An implication is that a bad basin does not necessarily exist near a mode collapse configuration, but may require a special discriminator parameter. Starting from a random discriminator in this experiment makes it not easy to distinguish JS-GAN and RS-GAN. 
\fi 

\textbf{5-Gaussian Experiments}.
We  consider a 2-dimensional 5-Gaussian distribution as illustrated in Fig.~\ref{fig:5gaussianillustration}(a). 
We design a procedure to find an  initial discriminator and generator. 
For JS-GAN or RS-GAN, in some runs we obtain mode collapse and in some runs
we obtain perfect recovery.
Firstly, for the runs achieving perfect recovery (Fig.~\ref{fig:5gaussianillustration}(b)) in JS-GAN and RS-GAN respectively,
we pick the generators at the converged solution, which we denote as $ G_\text{JS0} $ and $G_\text{RS0}$ respectively.
Secondly,  for the runs attaining mode collapse (Fig.~\ref{fig:5gaussianillustration}(c)) in JS-GAN and RS-GAN respectively,
we pick the discriminators at the converged solution, referred to as  $D_\text{JS0}$ and $D_\text{RS0}$,  Then we re-train both JS-GAN
and RS-GAN from  $ ( D_\text{JS0} , G_\text{JS0} )$ 
and  $ ( D_\text{RS0} , G_\text{RS0} )$ 
respectively.

\begin{figure}[t]
\centering
\scalebox{0.95}{
 \begin{tabular}{cccc}
 \includegraphics[width=30mm]{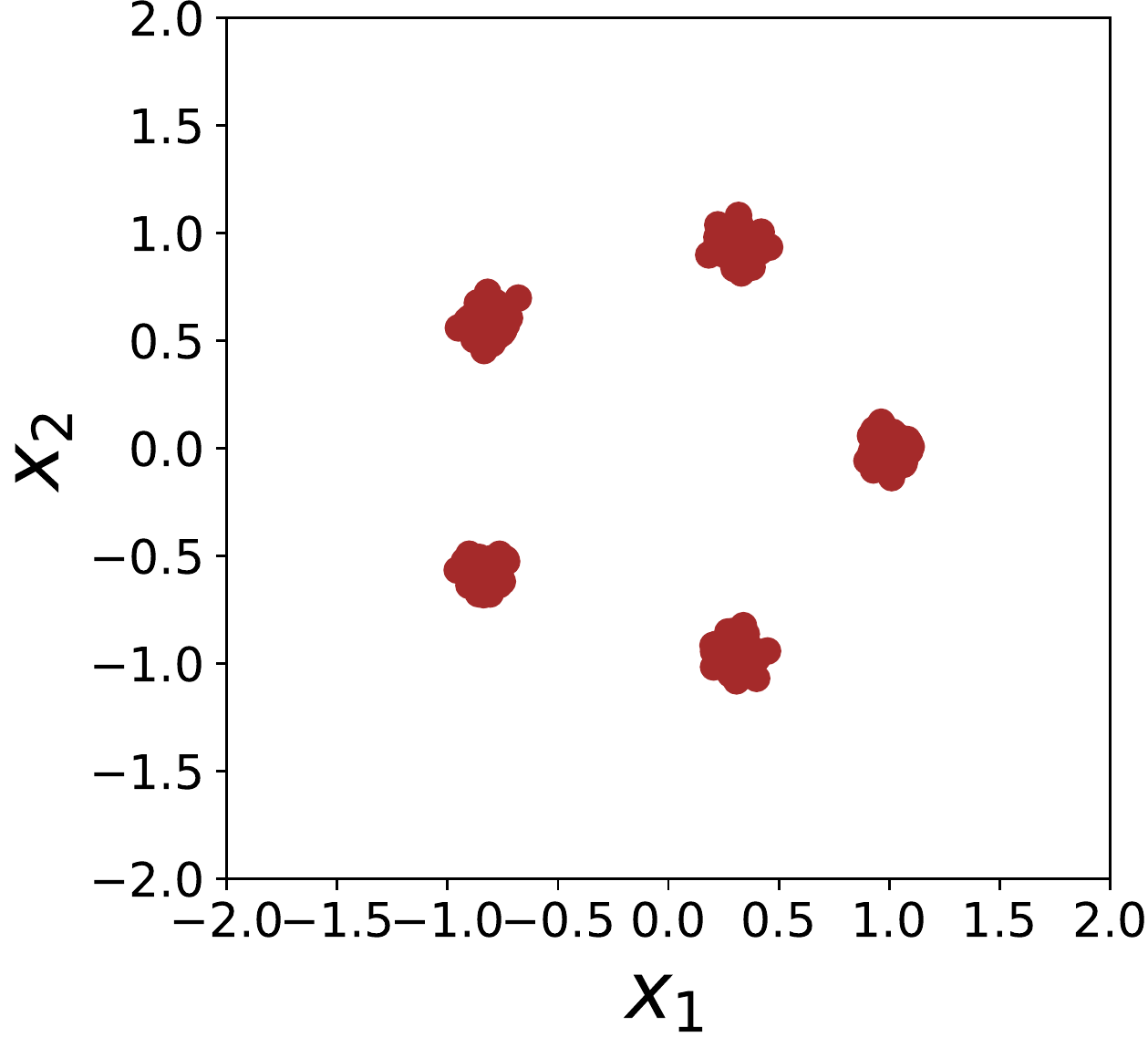}&\includegraphics[width=30mm]{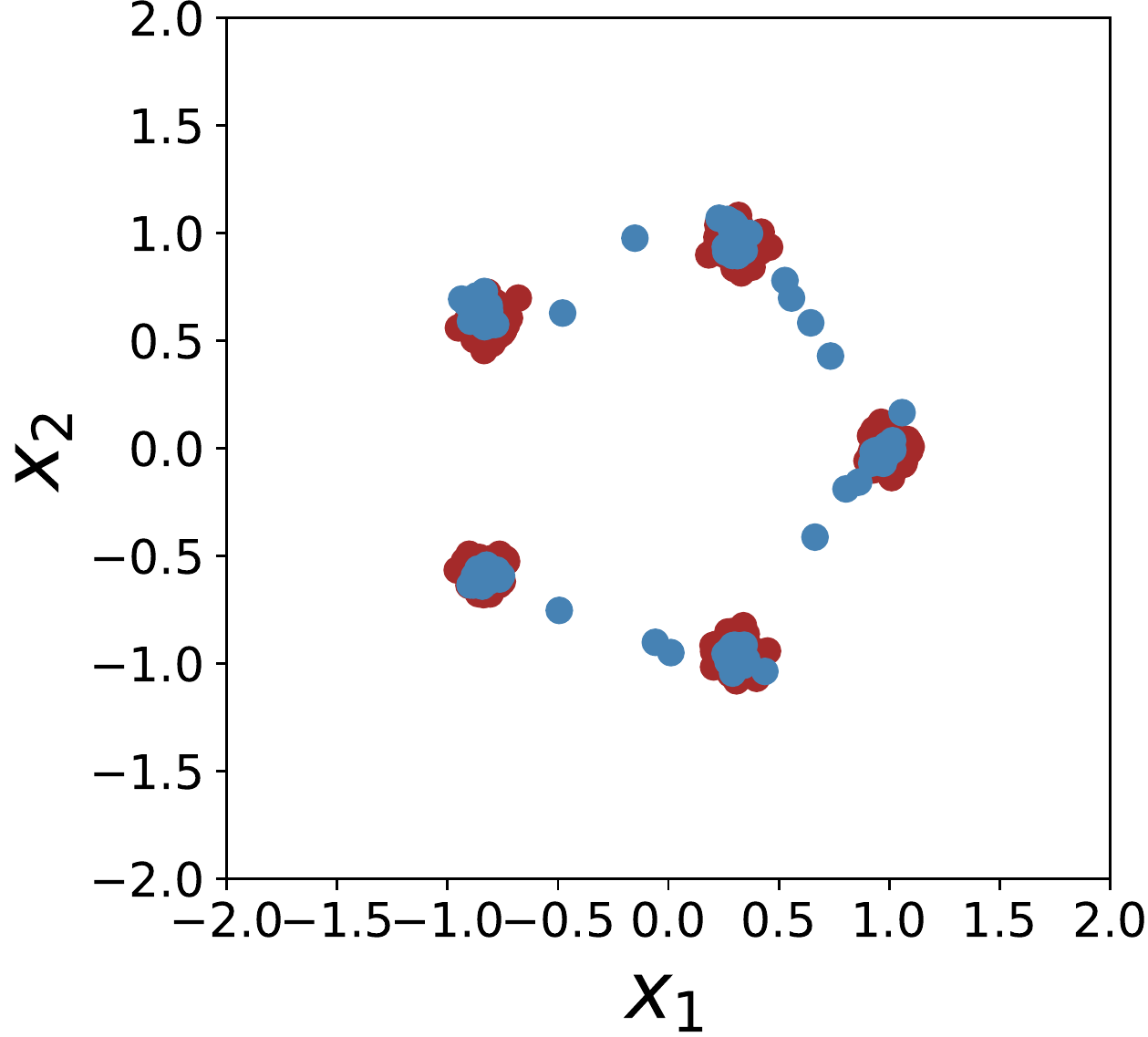}&\includegraphics[width=30mm]{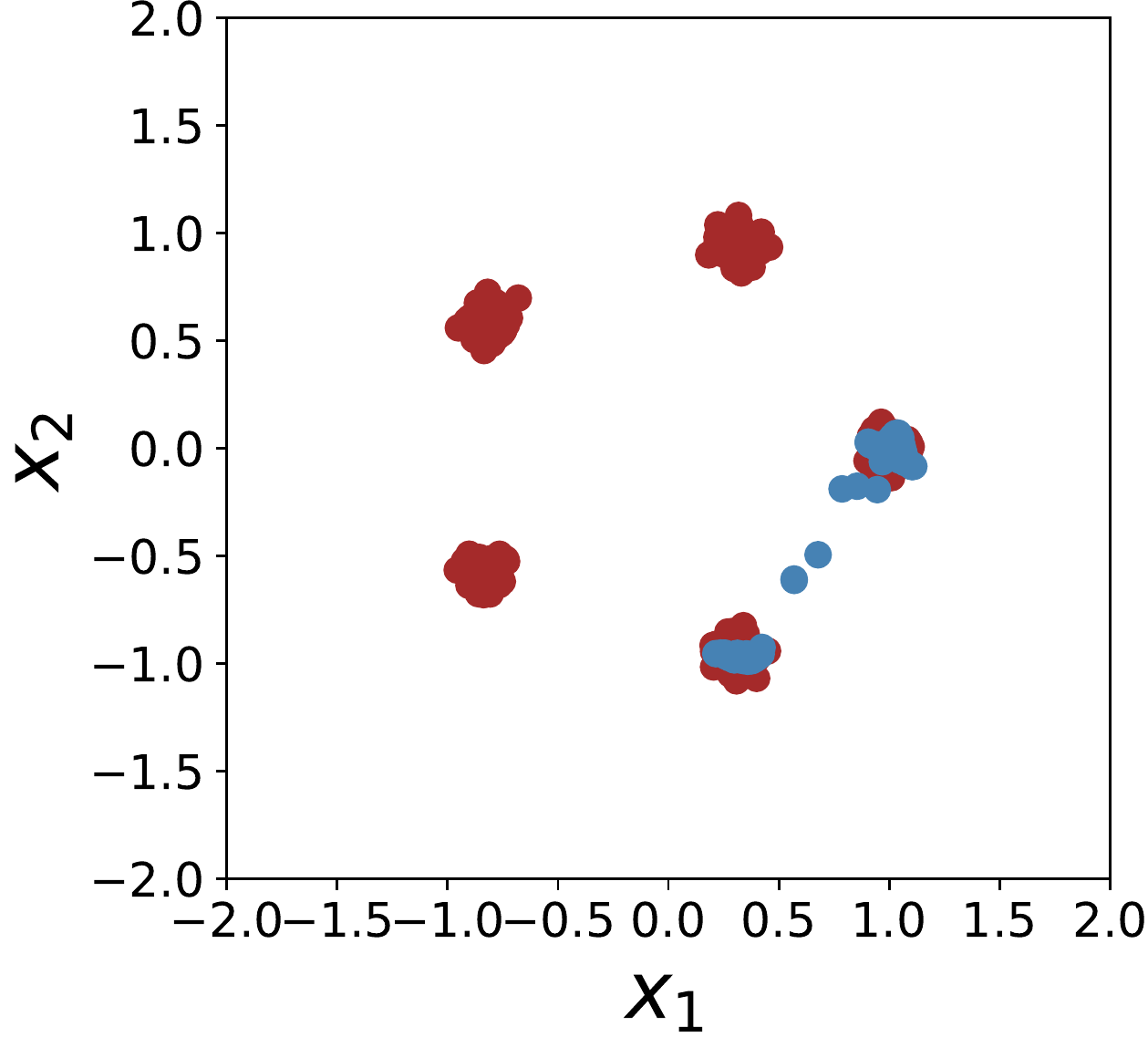}
& \includegraphics[width= 35mm,
 height=3cm]{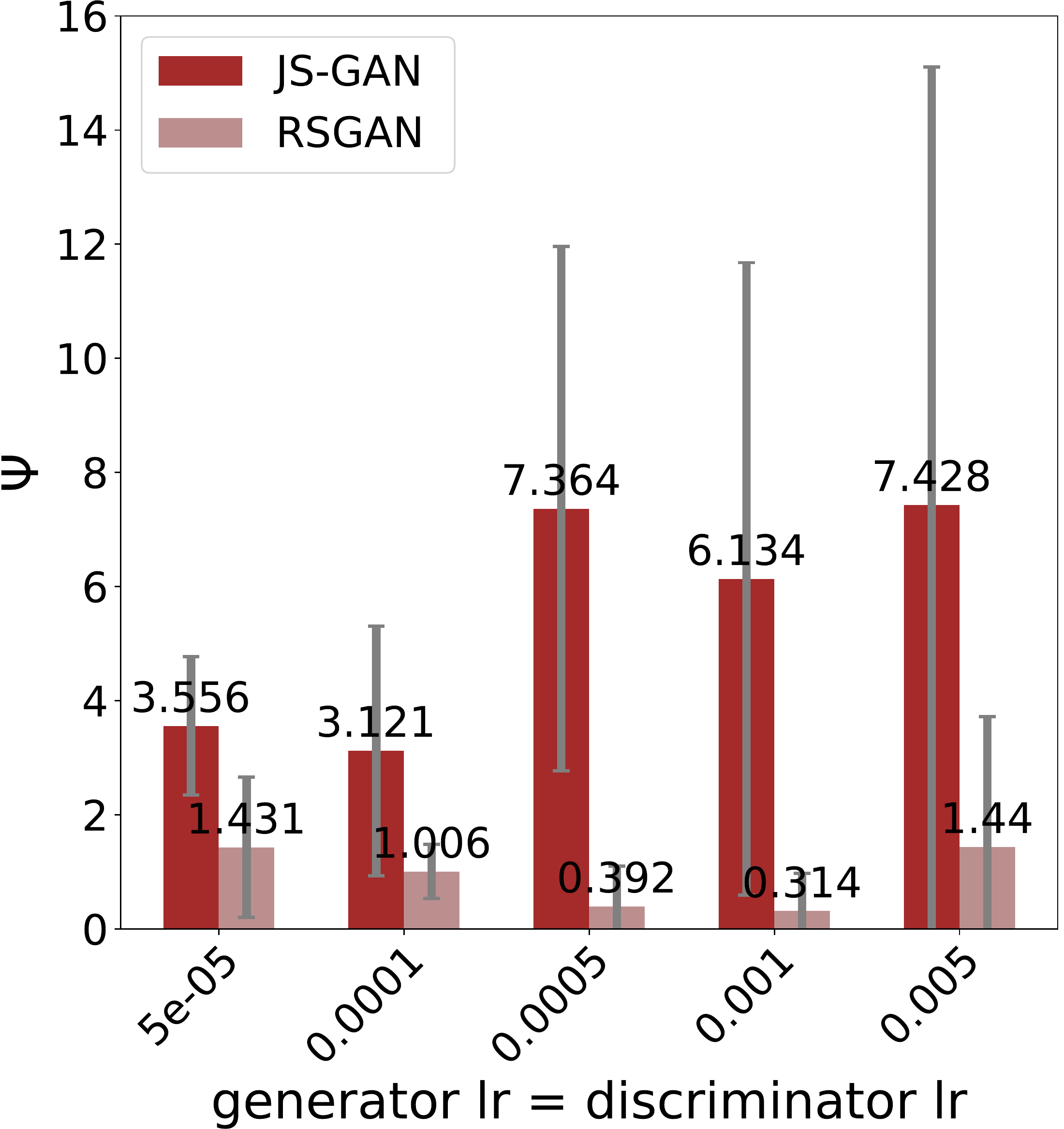}
 \\
(a) & (b) & (c) & (d)
\end{tabular}
}
\vspace{-0.3cm}
\caption{ Five Gaussian experiment. (a): ground truth. (b): generated data covers all five clusters. (c): mode collapse happens and only two clusters get covered.
(d) JS-GAN and RSGAN's loss $\Psi$ under different lr (generator lr = discriminator lr). }
\label{fig:5gaussianillustration}
\vspace{-0.2cm}
\end{figure}

\iffalse 
\begin{figure}[t]
\centering
    \begin{tabular}{cc}
 \raisebox{-0.5\height}{ \includegraphics[width=0.3\linewidth,
 height=3cm]{figure/5Guassian_goodG_crapyD_11layerD.pdf} } & \raisebox{-0.5\height}{\includegraphics[width=0.45\linewidth, height = 3cm]{figure/MNSIT_FID_goodGcrapyD.pdf} }\\
 (a) & (b) 
 \end{tabular}
 \vspace{-0.3cm}
    \caption{ (a) Bad initial point experiment on 5 Gaussian data:  (b) Bad initial point experiment on MNIST data: JS-GAN and RSGAN's FID after the 30,000-th iteration under different generator learning rate = discriminator learning rate settings. 
    }
    \label{fig:losscomp}
\vspace{-0.3cm}
\end{figure}
\fi

\iffalse 
\vspace{-0.2cm} 
\subsection{MNIST Experiments}
\label{sec:badinitmnist}
\vspace{-0.2cm}
\fi

\begin{wrapfigure}{r}{0.28\textwidth}
\vspace{-0.3cm}
 \includegraphics[width=1\linewidth, height=1.6cm]{figure/MNSIT_FID_goodGcrapyD.pdf}
 \captionsetup{font={scriptsize}} 
      \caption{MNIST experiment}
 \label{fig: MNIST bad ini}
 \vspace{-0.5cm}
\end{wrapfigure}
We define an evaluation metric $\Psi =  \sum_{k=1}^K \min_{1\leq i \leq 10^4}(\alpha \|x_i-C_k\|)$,
where $C_k$'s are the cluster centers, $\alpha$ is a scalar
and $x_i$'s are $10^4$ true data samples. 
We repeat the experiment $S=50$ times and compute the average $\Psi$. The larger the
metric, the worse the generated points. 
As shown in Fig.~\ref{fig:5gaussianillustration}(a),
 the metric $\Phi$ is much higher for JS-GAN than for RS-GAN, for various learning rates lr. 
 
\textbf{MNIST Experiments}.
We use a similar strategy to find initial parameters for MNIST data.
Fig. \ref{fig: MNIST bad ini} (also in 
Sec. \ref{sec:experiments}) shows that RS-GAN generates much lower FID scores (30+ gap) than JS-GAN.

\iflonger 
See Fig.~\ref{fig:mnistillustration} for the generated digits.  
\fi

\iflonger 
\begin{figure}[t]
\centering
 \begin{tabular} {ccccc}
 \rotatebox{90}{\hspace{0.8cm}{\small JS-GAN}} &  \includegraphics[width=20mm,height=2cm]{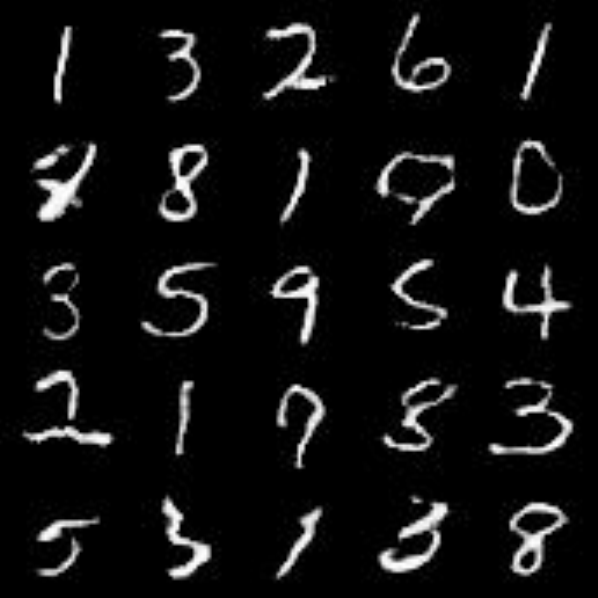}
 &\includegraphics[width=20mm,height=2cm]{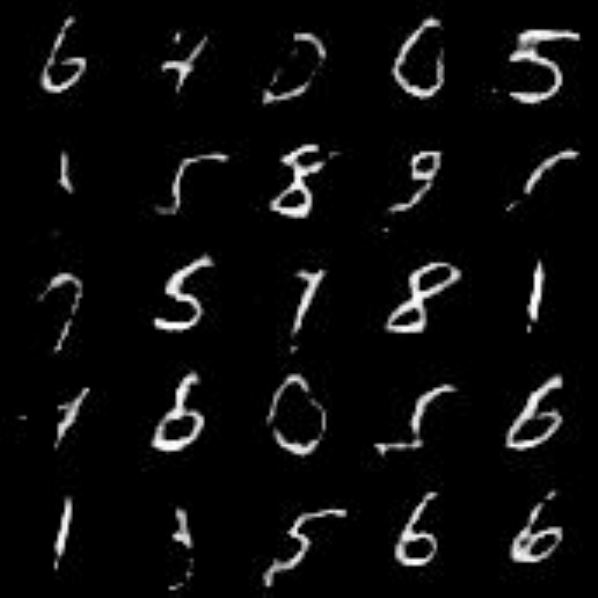}
 &\includegraphics[width=20mm,height=2cm]{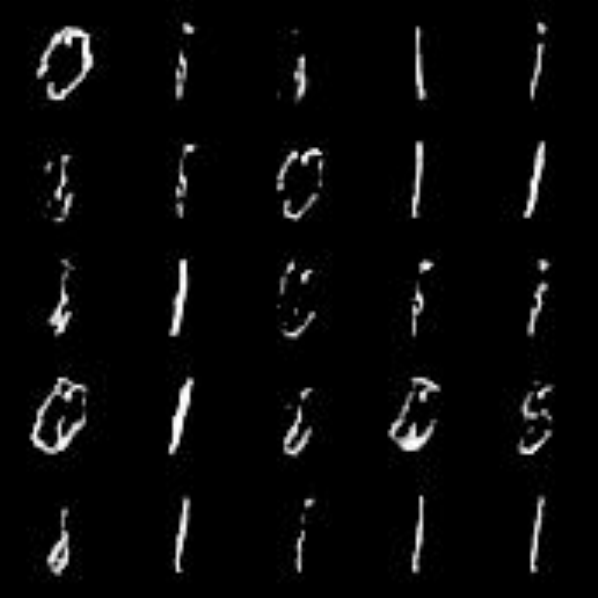}
 &\includegraphics[width=20mm,height=2cm]{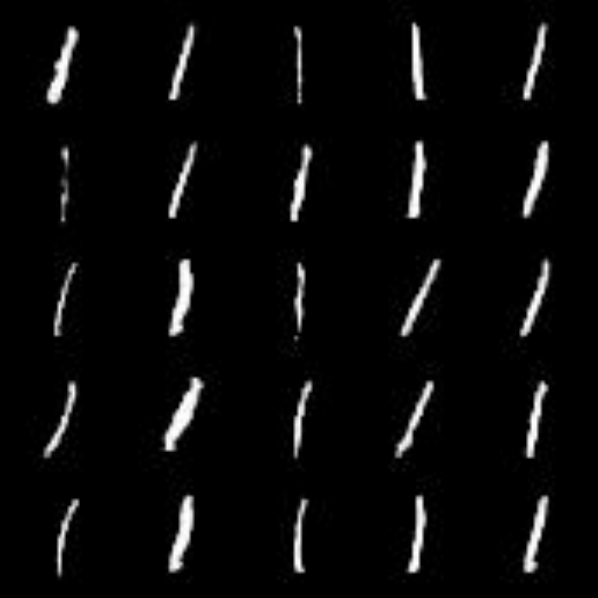}\\
  \rotatebox{90}{\hspace{0.8cm}{\small RSGAN}} &
  \includegraphics[width=20mm]{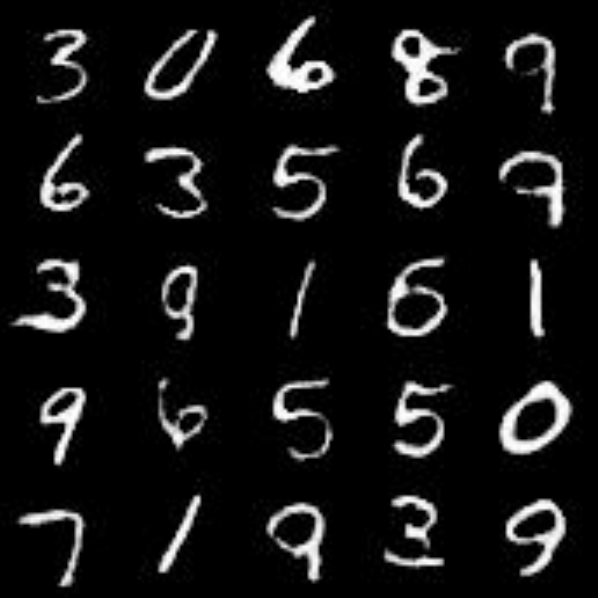}&\includegraphics[width=20mm]{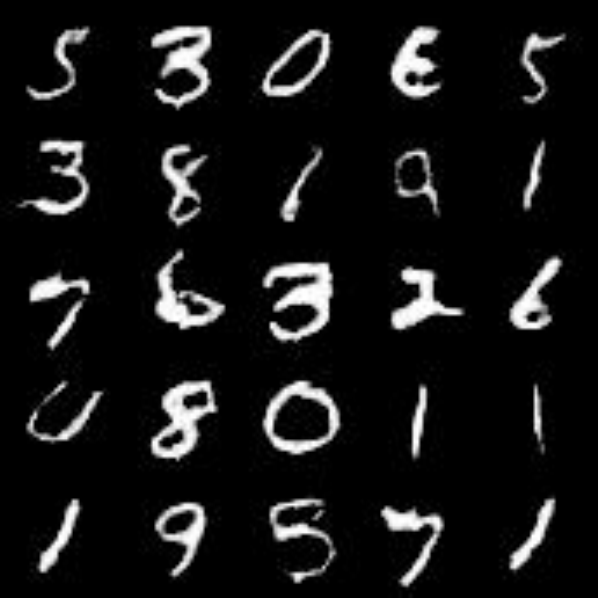}&\includegraphics[width=20mm]{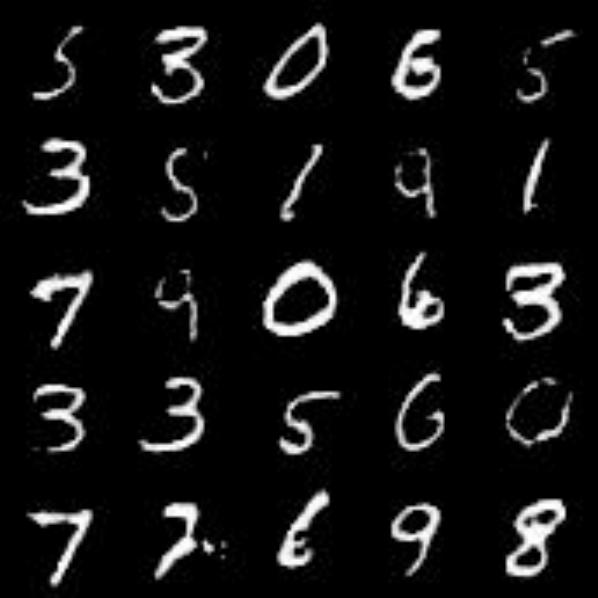}&\includegraphics[width=20mm]{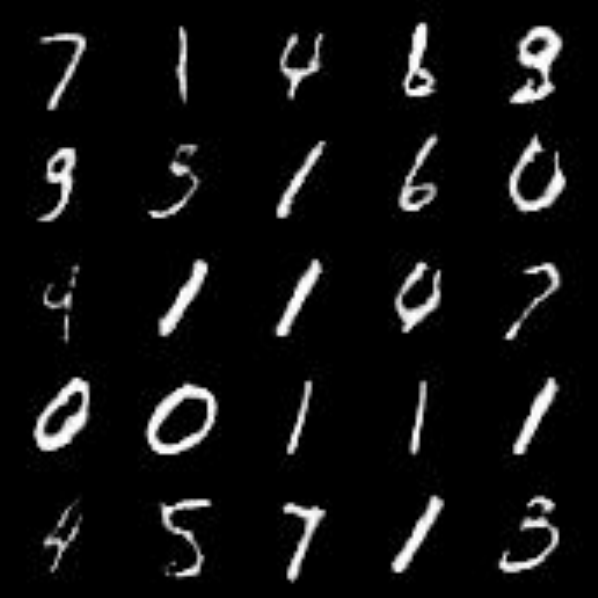}\\
  & $G_0(z)$ &lr=5e-07 & lr=1e-05 & lr=1e-04
\end{tabular}
\caption{Bad initial point experiment: generated MNIST data under different learning rate settings. See Fig.~\ref{fig:losscomp}(b) for corresponding FID scores. 
}
\label{fig:mnistillustration}
\vspace{-0.3cm}
\end{figure}
\fi

\iffalse 
In order to illustrate the effectiveness of RS-GAN, we conduct a set of experiments on both Mixure of Gaussians and real datasets. This session is structured as follows. First, we will compare the simulation performance of JS-GAN, WGAN-GP and RSGAN on the two commonly used datasets CIFAR-10 and STL-10. Second, 
\fi

\iffalse 
In Section~\ref{sec:experiments} (a subsection on the experiments) in the main paper, we have discussed the advantage of RS-GAN over JS-GAN when starting from a certain initial point, by running the methods on 
the MNIST dataset. 
We use $10k$ images to extract embedding features through the Inception net for both true data and generated samples. And we use the DCGAN structure (details in Table~\ref{table: mnist structure}).
\fi 

The two experiments verify our prediction that RS-GAN is more robust to initialization, which supports our theory that RS-GAN enjoys a better landscape than JS-GAN.

\iflonger 
\begin{table}[t]
\tablesize 
\begin{minipage}{.5\textwidth}
\centering
\begin{tabular}{c}
\toprule
\textbf{(a) Generator} \\
\midrule
$z \in \mathbb{R}^{128} \sim {\mathcal N}(0, I)$ \\\midrule
128 $\rightarrow $ 512, dense, linear \\\midrule
$4 \times 4$, stride 2 deconv, 256, BN, ReLU  \\\midrule
$4 \times 4$, stride 2 deconv, 128, BN, ReLU  \\\midrule
$4 \times 4$, stride 2 deconv, 64, BN, ReLU  \\\midrule
$4 \times 4$, stride 2 deconv, 1, Tanh  \\
\bottomrule
\end{tabular}
\end{minipage}%
\begin{minipage}[c]{.5\textwidth}
\centering
\begin{tabular}{c}
\toprule
\textbf{(b) Discriminator} \\
\midrule
RGB image $x \in [-1, 1]^{H \times W \times 1}$ \\ \midrule
$4 \times 4$, stride 2 conv, 64, LReLU 0.1  \\\midrule
$4 \times 4$, stride 2 conv, 128, LReLU 0.1 \\\midrule
$4 \times 4$, stride 2 conv, 256, LReLU 0.1 \\\midrule
$256\times4\times4\rightarrow s$, dense, linear \\
\bottomrule
\end{tabular}
\end{minipage}
\caption{CNN models for MNIST used in our experiments on image Generation. }
\label{table: mnist structure}
\end{table}
\fi

\section{Experiments of Regular Training: More Details and More Results}
\label{appen: experiments}

In this section, we present details of the regular experiments
in Sec.~\ref{sec:experiments} and a few more experiments.

\subsection{Experiment Details and More Experiments with Logistic Loss}
\label{expsetting}

\textbf{Non-saturating version.} Following the standard practice \cite{goodfellow2014generative},
if $ \lim_{t \rightarrow \infty } h( t ) = 0 $, we use the non-saturating version 
of RpGAN in practical training: 
{\equationsizeReg 
\begin{equation}\label{RpGAN non saturating version}
\begin{split}
 \min_{\theta}  L_{D}(\theta; w ) & \triangleq 
 \frac{1}{ n } \sum_{i}  h (  f_{\theta}(x_i)) -  f_{\theta}(G_w(z_i)) ),   \\
\min_{w}  L_{G}( w ; \theta  ) &  \triangleq \frac{1}{ n }
\sum_{i} h (  f_{\theta}(G_w(z_i)) - f_{\theta}(x_i))  ).
\end{split}
\end{equation}
}
For logistic and hinge loss, we use Eq.~\eqref{RpGAN non saturating version}. 
For least-square loss, we use the original min-max version (check Appendix~\ref{sec: lsreal} for more). We use alternating stochastic GDA to solve this problem. 

\textbf{Neural-net structures:} We conduct experiments on two datasets: CIFAR-10 ($32\times 32$ size) and STL-10 ($48\times 48$ size) on both standard CNN and ResNet. As mentioned in Sec.~\ref{sec:experiments}, we also conduct experiments on the narrower nets: we reduce the number of channels for all convolutional layers in the generator and discriminator to (1) half, (2) quarter and (3) bottleneck (for ResNet structure),
The architectures are shown in Tab.~\ref{table: CNN structure} (CNN), Tab.~\ref{table: cifar_regualar_resnet } (ResNet for CIFAR) and Tab.~\ref{table: stl_regular_resnet } (ResNet for STL) and Tab.~\ref{table: cifar_bottleneck_resnet } (Bottleneck for CIFAR) and Tab.~\ref{table: stl_bottleneck_resnet } (Bottleneck for STL).
\iflonger 
We also illustrate the difference between the regular ResBlock and BottleNeck ResBlock in Fig.~\ref{fig:resblockstructure}.
\fi 

\textbf{Hyper-parameters:} We use a batchsize of 64. For CIFAR-10 on ResNet we set $\beta_1=0$ and $\beta_2=0.9$ in Adam. For others, $\beta_1=0.5$ and $\beta_2=0.999$. We use $\text{GIter} =1$ 
for both CNN and ResNet. We also use $\text{DIter} =1$ for CNN and $\text{DIter} =5$ for ResNet.
We fix the learning rate for the discriminator (dlr) to be 2e-4.
For RpGANs, we find that the
learning rate for the generator (glr) needs to be larger than dlr to keep the training balanced. Thus we tune glr using parameters in the set {2e-4, 5e-4, 1e-3, 1.5e-3}. For SepGAN, we set glr = 0.0002 for SepGANs (JS-GAN,hinge-GAN) as suggested by \cite{miyato2018spectral, radford2015unsupervised} \footnote{We tuned glr in the set {2e-4, 5e-4, 1e-3, 1.5e-3} and find that glr = 2e-4 performs the best in most cases
for SepGAN, so we follow the suggestion of \cite{miyato2018spectral, radford2015unsupervised}.}.
See Tab.~\ref{Tab:lrsetting} for the learning rate of RS-GAN and hyper-parameters of WGAN-GP.

\iffalse 
We set the learning rate for the generator ($\text{glr}$) = 0.0002 for 
SepGANs (JS-GAN, hinge-GAN) as suggested by \citet{miyato2018spectral, radford2015unsupervised}. For RpGANs, we find that $\text{glr}$ needs to be larger than $\text{dlr}$ to keep the training balanced. Thus we tune $\text{glr}$ using parameters in the set \{2e-4, 5e-4, 1e-3, 1.5e-3, 2e-3, 2.5e-3\}. 
\fi

\begin{table}[t]
\tablesize 
\centering
\setlength{\tabcolsep}{2pt}
\begin{tabular}{l|cc|cc|cc}
\toprule
 & \multicolumn{2}{c}{ \bf{CIFAR-10}} & \multicolumn{2}{c}{ \bf{CIFAR-10+EMA}}& \multicolumn{2}{c}{ \bf{STL-10+EMA}}  \\ \midrule
& IS $\uparrow$    &  FID $\downarrow$    &    IS $\uparrow$       & FID $\downarrow$& IS $\uparrow$    &  FID $\downarrow$      \\ \midrule
\multicolumn{5}{l}{\bf{ResNet}}    \\
JS-GAN+SN  & 8.03$\pm$0.10  & 20.06$\pm$0.18   & 8.41$\pm$0.09  & 17.79$\pm$0.43  & 9.14$\pm$0.12  & 33.06   \\
RS-GAN+SN  &  7.94$\pm$0.09 & 19.79$\pm$0.57  & 8.37$\pm$0.10  &  17.75$\pm$0.56 &   9.23$\pm$0.08  &  31.87  \\
JS-GAN+SN+GD channel/2  & 7.77$\pm$0.08  & 23.36$\pm$0.46  & 8.24$\pm$0.08  & 20.55$\pm$0.59  & 8.69$\pm$0.08  & 42.05     \\
RS-GAN+SN+GD channel/2  & 7.76$\pm$0.07  & 21.63$\pm$0.51  & 8.21$\pm$0.09  & 18.91$\pm$0.45  & 8.77$\pm$0.13  & 39.31   \\
JS-GAN+SN+GD channel/4  & 6.75$\pm$0.06  & 44.39$\pm$4.38  & 7.18$\pm$0.06  & 38.75$\pm$6.28  & 8.42$\pm$0.06  &  52.38   \\
RS-GAN+SN+GD feature/4  & 7.20$\pm$0.07  & 31.40$\pm$0.78  & 7.60$\pm$0.06  & 26.85$\pm$0.56  & 8.43$\pm$0.10  & 48.92  \\ 
JS-GAN+SN+BottleNeck   & 7.51$\pm$0.07  & 27.33$\pm$1.05  & 7.99$\pm$0.10  &  23.71$\pm$0.86 & 8.37$\pm$0.08  &   47.97    \\
RS-GAN+SN+BottleNeck  & 7.52$\pm$0.10  & 25.05$\pm$0.35  & 8.06$\pm$0.11  & 21.29$\pm$0.22  & 8.48$\pm$0.06  & 44.60 \\\bottomrule
\end{tabular}
\caption{Repeat the experiments (logistic loss) in Tab.~\ref{Tab:metric} with at least three seeds. 
}
\label{Tab:metric+moreseeds}
\vspace{-0.5cm}
\end{table}

\textbf{More details of EMA:}
In Sec.~\ref{sec:experiments}, we conjectured that the effect of EMA (exponential moving average)~\cite{yazici2018unusual} and RpGAN are additive.
Suppose $w^{(t)}$ is the generator parameter
in $t$-th iteration of one run,
the EMA generator at the $t^\text{th}$ iteration is computed as follows 
$    w^{(t)}_\text{EMA} = \beta w^{(t-1)}_\text{EMA} + (1-\beta)w^{(t)},$
where $w^{(0)}_\text{EMA} = w^{(0)}$.
Note that EMA is a post-hoc processing step, and does not affect the training process. Intuitively, the EMA generator is closer to the bottom of a basin while the real training is circling around a basin due to the minmax structure.
We set $\beta=0.9999$. 
As Tab.~\ref{Tab:metric+moreseeds} shows, while EMA improves both JS-GAN and RS-GAN, 
RS-GAN is still better than JS-GAN.
\iffalse 
Remarkably, RS-GAN with EMA can achieve a similar result 
to the baseline (FID 20.13) using 16.8\% parameters (Resnet with bottleneck). 
\fi 

\textbf{Results on Logistic Loss with More Seeds:} 
Besides the result in Tab.~\ref{Tab:metric}, we run at least  3 extra seeds for all experiments with ResNet structure on CIFAR-10 to show that the results are consistent across different runs. We report the results in  Tab.~\ref{Tab:metric+moreseeds},
and find RS-GAN is still better than JS-GAN and the gap 
increases as the networks become narrower. 

\textbf{Samples of image generation:} 
Generated samples obtained upon training on CIFAR-10  are given in Fig.~\ref{fig:cifarcnnsample} for CNN, Fig.~\ref{fig:cifarresnetsample} for ResNet. Generated samples obtained upon training on STL-10 dataset are given in Fig.~\ref{fig:stlcnnsample} for CNN, Fig.~\ref{fig:stlesnetsample} for ResNet. Instead of cherry-picking, all sample images are generated from random sampled Gaussian noise.

\subsection{Experiments with Hinge Loss}\label{sec: hingereal}
Hinge loss has become popular in GANs~\cite{Tran2017DeepAH, miyato2018spectral,brock2018large}. 
The empirical loss of hinge-GAN is %
{\equationsize 
\begin{align*}
 \min_{\theta} L^\text{Hinge}_{D}(\theta; w ) &  \triangleq   \frac{1}{2 n } \left[ \sum_{i} \max(0, 1-D_{\theta}(x_i)) + \sum_{i} \max(0, 1+D_{\theta}(G_w (z_i))\right],  \\
\min_{w}  L^\text{Hinge}_{G}( w; \theta )  &
 \triangleq   -\frac{1}{ n } \sum_{i} D_{\theta}(G_w(z_i)). 
\end{align*}
}
Note that Hinge-GAN applies the hinge loss for the discriminator, and linear loss
for the generator. This is a variant of SepGAN with $h_1(t) = h_2(t) = -\max (0, 1 - t)  $.

\iffalse 
We consider two versions of Rp-hinge-GAN.  The first one only uses the hinge loss
in the D objective, thus is also a mixture of hinge loss and logistic loss (similar to hinge-GAN). The second one uses hinge loss in both D objective and G objective. 
Specifically, version 1 of Rp-hinge-GAN is:
{\scriptsize 
\begin{align*}
 \min_{\theta}  L^{\text{R-Hinge}_\text{v1}}_{D}(\theta; w ) & \triangleq  \frac{1}{ n }  \sum_{i} \max(0, 1+( f_{\theta}(G_w(z_i)) - f_{\theta}(x_i))),  \\
\min_{w}  L^{\text{R-Hinge}_\text{v1}}_{G}( w; \theta ) & \triangleq  \frac{1}{n } \sum_{i} \log(1+\exp( f_{\theta}(x_i) - f_{\theta}(G_w(z_i)))).
\end{align*}
}
In version 2, the D loss is $L^{\text{R-Hinge}_\text{v2}}_{D}(\theta; w)
 = L^{\text{R-Hinge}_\text{v1}}_{D}(\theta; w ) $, and the generator solves
 \fi 
 The Rp-hinge-GAN is RpGAN given in Eq.~\eqref{RpGAN non saturating version} 
  with $ h(t) = -\max (0, 1 - t)  $:
{\scriptsize 
\begin{align*}
 \min_{\theta}  L^{\text{R-Hinge}}_{D}(\theta; w ) & \triangleq  \frac{1}{ n }  \sum_{i} \max(0, 1+( f_{\theta}(G_w(z_i)) - f_{\theta}(x_i))),  \\
\min_{w}  L^{\text{R-Hinge}}_{G}( w ; \theta  ) &  \triangleq \frac{1}{ n }
\sum_{i} \max(0, 1 + ( f_{\theta }(x_i) - f_{\theta}(G_w(z_i)))).
\end{align*}
}
\iffalse 
We refer to the above three models as Hinge-GAN,
Rp-Hinge-GAN-HL and Rp-Hinge-GAN-HH in  Tab.~\ref{Tab:hingemetric}.
\fi 
We compare them on ResNet with the hyper-parameter settings in Appendix~\ref{expsetting}. As  Tab.~\ref{Tab:hingemetric} shows, 
Rp-Hinge-GAN (both versions) performs better than Hinge-GAN.
For narrower networks, the gap is $ 4$ to $ 9$ FID scores, larger than the gap for the logistic loss.

\begin{table}[t]
\tablesize 
\centering
\setlength{\tabcolsep}{2pt}
\begin{tabular}{l|ccc|ccc}
\toprule
& \multicolumn{3}{c}{ \bf{CIFAR-10}} & \multicolumn{3}{c}{ \bf{CIFAR-10 + EMA}} \\ \midrule
         & IS $\uparrow$    &  FID $\downarrow$   &  FID Gap &  IS $\uparrow$      & FID $\downarrow$ &   FID Gap       \\ \midrule
\multicolumn{ 6 }{l}{\bf{ResNet + Hinge Loss}}  \\
Hinge-GAN  &  7.92$\pm$0.08   &  21.30  &  & 8.44$\pm$0.10 &  17.43  &      \\
Hinge-GAN +GD channel/2  &   7.63$\pm$0.05   &  27.21  & & 7.90$\pm$0.08   &  24.35  &  \\
Hinge-GAN +GD channel/4  &  6.79$\pm$0.09    &  37.51  & & 7.39$\pm$0.07   & 34.45    & \\
Hinge-GAN +BottleNeck   &  7.16$\pm$0.10    &   33.24 & & 7.91$\pm$0.09 & 26.56&     \\
\bottomrule
Rp-Hinge-GAN  &   7.84$\pm$0.09   &  19.10     & 2.20 & 8.21$\pm$0.09  & 17.19  &  0.24  \\
Rp-Hinge-GAN +GD channel/2  &  7.77$\pm$0.08  &  21.10 & 6.11 & 8.34$\pm$0.11  & 19.19  & 5.17 \\
Rp-Hinge-GAN +GD channel/4  & 7.21$\pm$0.11   & 29.41  &  8.10 & 7.77$\pm$0.08   &  25.57  &  8.88 \\ 
Rp-Hinge-GAN +BottleNeck   &  7.52$\pm$0.07   &   23.28 & 9.96 & 8.05$\pm$0.07  &  22.03 & 4.53  \\
\bottomrule
\end{tabular}
\caption{Comparison of Hinge-GAN and Rp-Hinge-GAN. We also show the FID gap between Rp-Hinge-GAN with Hinge-GAN (e.g. $ 2.20 = 21.30-19.10 $ and $ 9.96 = 33.24 - 23.28)$.}
\label{Tab:hingemetric}
\vspace{-0.5cm}
\end{table}

\subsection{Experiments with Least Square Loss}\label{sec: lsreal}
We consider the least square loss.
The LS-GAN \cite{lsgan2016mao}  is defined as follows: 
{\equationsize
\begin{align*}
  \min_{\theta}   L^\text{LS}_{D}(\theta;  w ) & \triangleq  \frac{1}{2 n } \left[ \sum_{i} ( f_{\theta}(x_{i}) - 1)^2   + \sum_{i}  f_{\theta}(G_w(z_i))^2\right],  \\
 \min_{w}  L^\text{LS}_{G}( w;  \theta )  &
\triangleq \frac{1}{ n } \sum_{i} ( f_{\theta}(G_w(z_i)) - 1)^2. 
\end{align*}
}
This is a non-zero-sum variant of SepGAN with $h_1(t) = -(1 - t)^2, h_2(t) = -t^2  $. 

 Rp-LS-GAN addresses the following objectives: 
{\equationsize
\begin{equation}\label{RpLSGAN}
\begin{split}
 \min_{\theta}  L^\text{Rp-LS}_{D}(\theta; w) & \triangleq   \frac{1}{n } \sum_{i} ( f_{\theta}(x_{i}) - f_{\theta}(G(z_i)) - 1)^2  ,  \\
  \min_{ w }  L^\text{Rp-LS}_{G}( w; \theta )  &
\triangleq - L^\text{Rp-LS}_{D}(\theta; w) = - \frac{1}{n } \sum_{i} ( f_{\theta}(x_{i}) - f_{\theta}(G_w(z_i)) - 1)^2 .
\end{split}
\end{equation}
}
\iflonger 
For Rp-LS-GAN, we are not using the non-saturating version, because the gradient vanishing
issue does not appear for quadratic loss (recall that the gradient vanishing
issue pointed out in \cite{goodfellow2014generative} is due to the fact
that the binary Logistic loss $h(t) = - \log (1 + \exp(-t))$ has almost flat curve as $t$ goes to infinity). 
\fi 
For least square loss $h(t) = -(t-1)^2$, the gradient vanishing issue due to $ h $
does not exist, thus we can use the min-max version given in Eq.~\eqref{RpLSGAN} in practice. 
Our version of Rp-LS-GAN is actually different from the version of Rp-LS-GAN in \cite{jolicoeur2018relativistic} which is similar to Eq.~\eqref{RpGAN non saturating version}
with least square $h$.

In Tab.~\ref{Tab:lsmetric} we compare LS-GAN and Rp-LS-GAN on CIFAR-10 with CNN architectures detailed in Tab.~\ref{table: CNN structure}. As Tab.~\ref{Tab:lsmetric} shows, Rp-LS-GAN is slightly worse than LS-GAN
in regular width, but is better than LS-GAN (with 5.7 FID gap) when using 1/4 width.

\begin{table}[t]
\centering
\setlength{\tabcolsep}{2pt}
{\tablesize 
\begin{tabular}{l|ccc|ccc|ccc}
\toprule
& \multicolumn{3}{c}{Regular width} & \multicolumn{3}{c}{channel/2 }& \multicolumn{3}{c}{channel/4} \\
\midrule
& IS & FID & FID Gap &  IS & FID & FID Gap & IS & FID & FID Gap  \\
\midrule 
LS-GAN & 6.91$\pm$0.10 & \textbf{32.93} &   & 6.63$\pm$0.08 & 37.83 &  & 5.69$\pm$0.10 
& 48.63 &   \\
\midrule
Rp-LS-GAN & \textbf{7.09}$\pm$0.07 & 34.78 & -1.85 & \textbf{6.94}$\pm$0.04 & \textbf{34.34}  & 3.49 & \textbf{6.22}$\pm$0.10& \textbf{42.86} & 5.77 \\
\bottomrule
\end{tabular}
\caption{Comparison of LS-GAN and Rp-LS-GAN on CIFAR-10 with the CNN structure. }
\label{Tab:lsmetric}
}
\vspace{-0.5cm}
\end{table}

\begin{figure}[t]
	\centering
	\scalebox{1}{
		\begin{tabularx}{\linewidth} {cc}
			\includegraphics[width=0.5\linewidth]{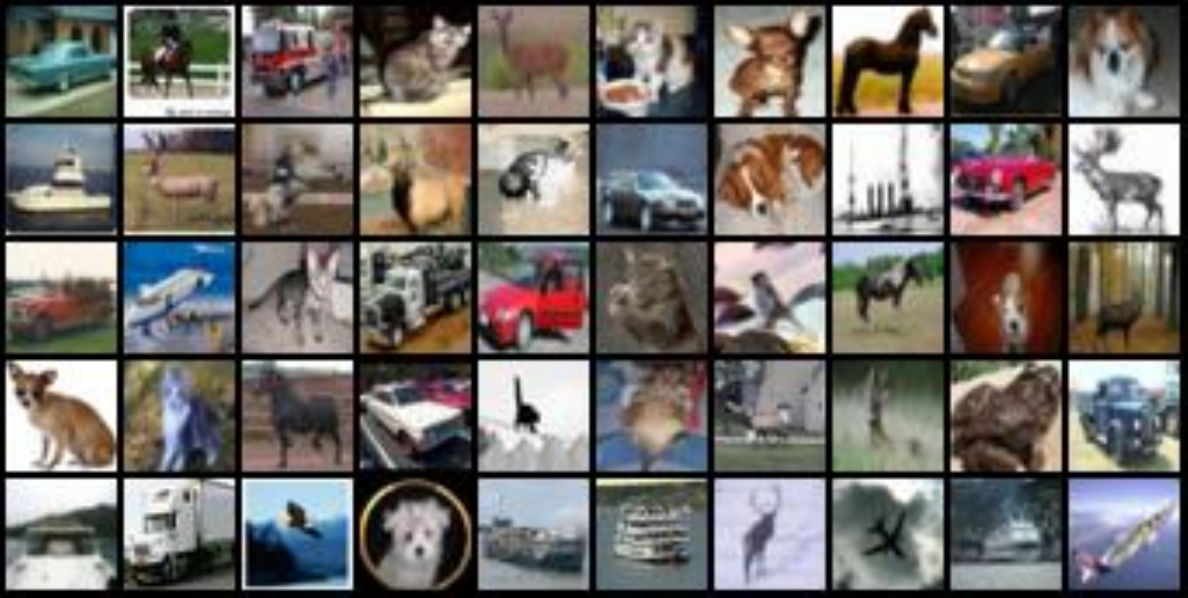} & \includegraphics[width=0.5\linewidth]{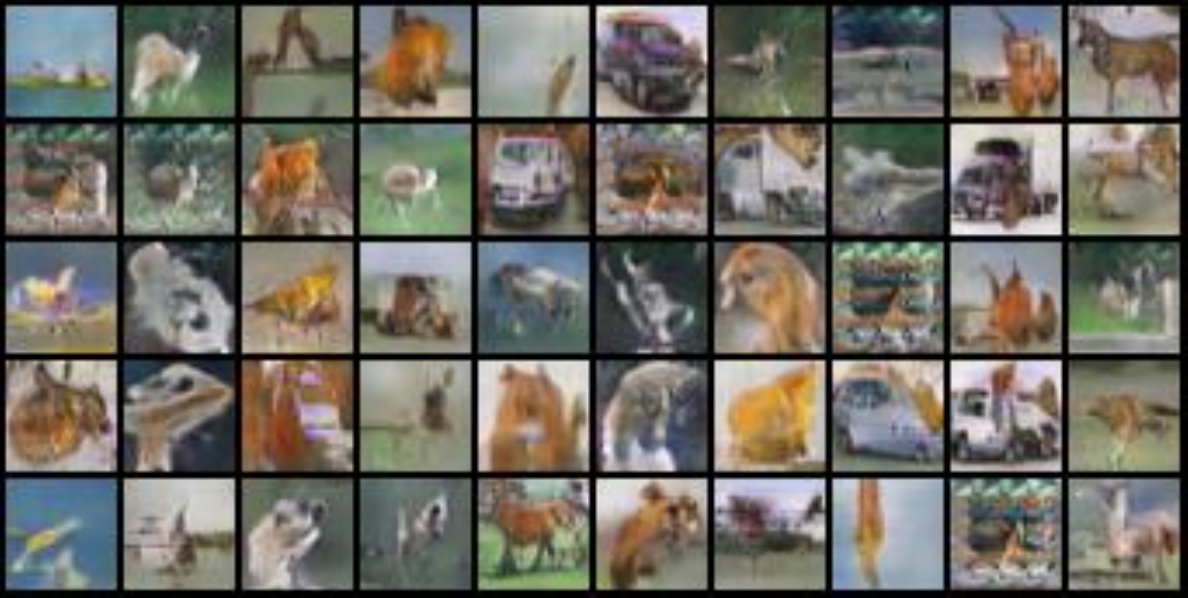}\\
			(a) real data & (b) JS-GAN + BatchNorm \\
			\includegraphics[width=0.5\linewidth]{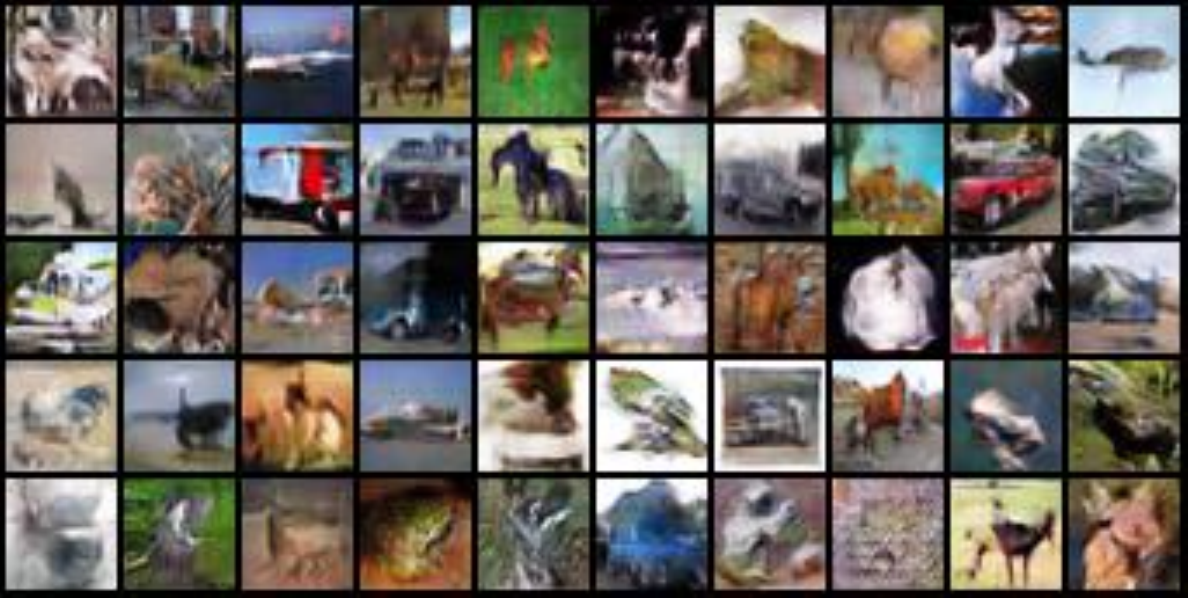} & \includegraphics[width=0.5\linewidth]{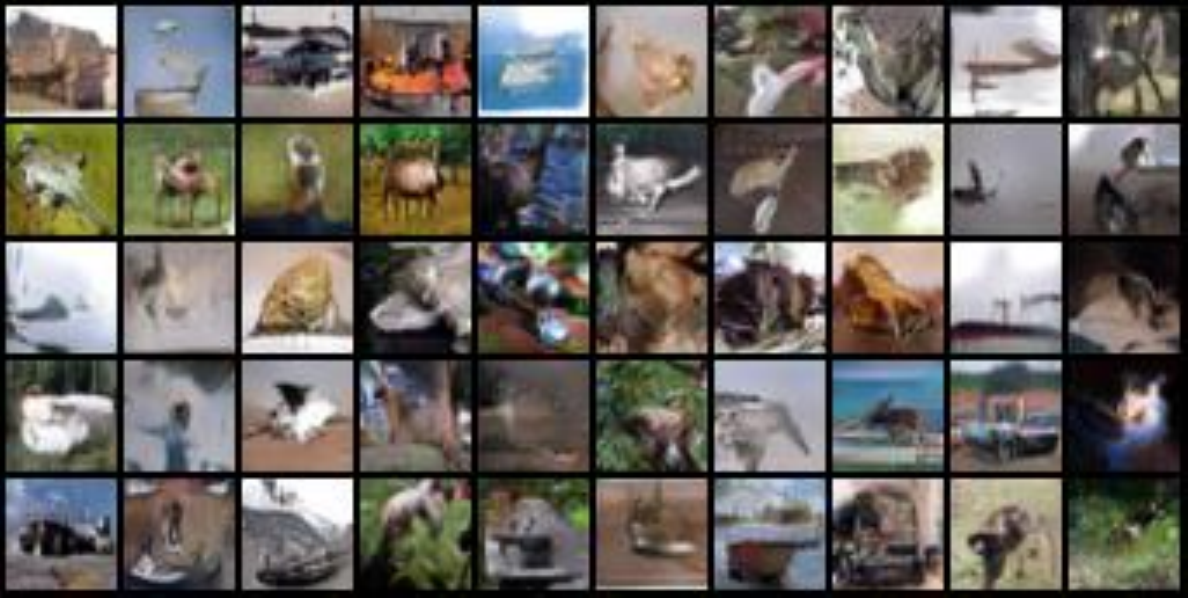}\\
			(c) WGAN-GP & (d) RS-GAN \\
			\includegraphics[width=0.5\linewidth]{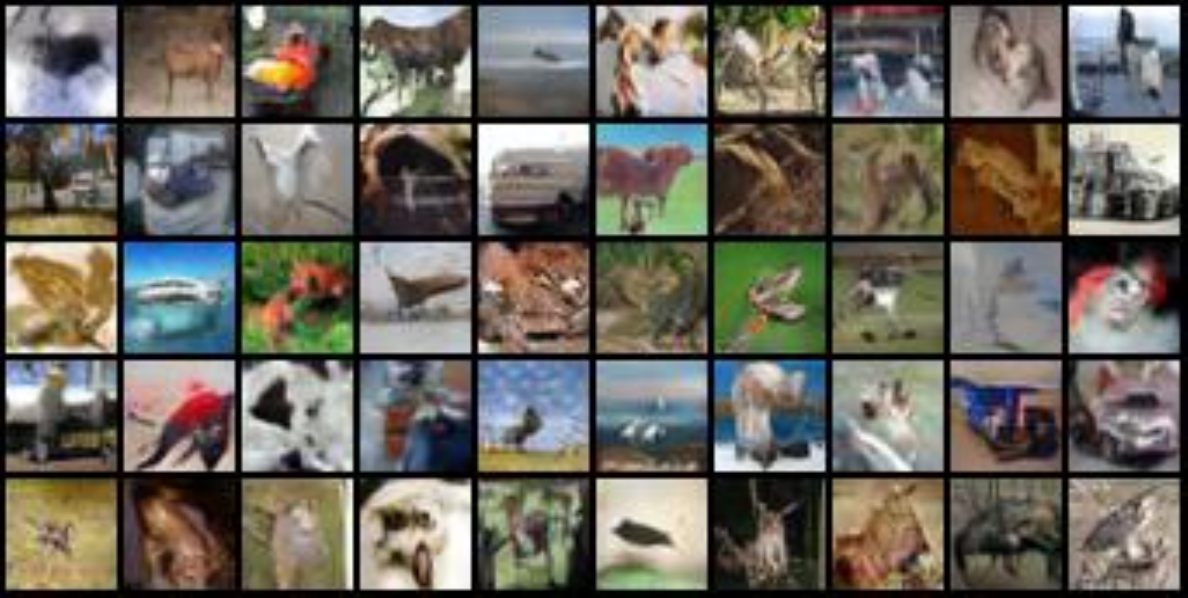} & \includegraphics[width=0.5\linewidth]{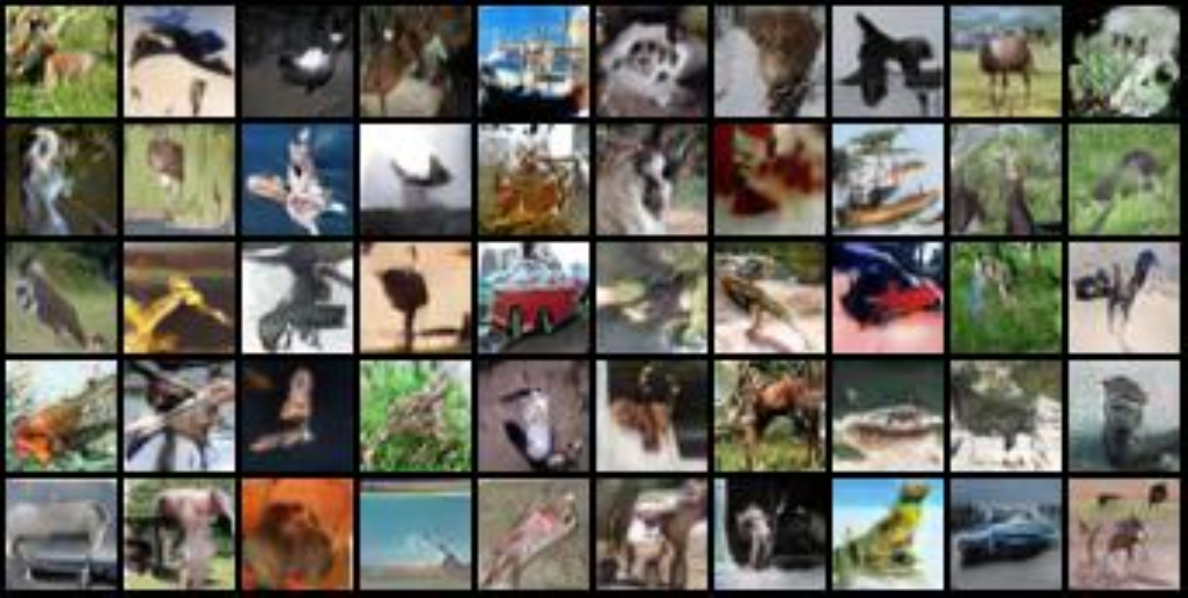}\\
			(e) JS-GAN + Spectral Norm + Regular CNN & (f) RS-GAN + Spectral Norm + Regular CNN \\
			\includegraphics[width=0.5\linewidth]{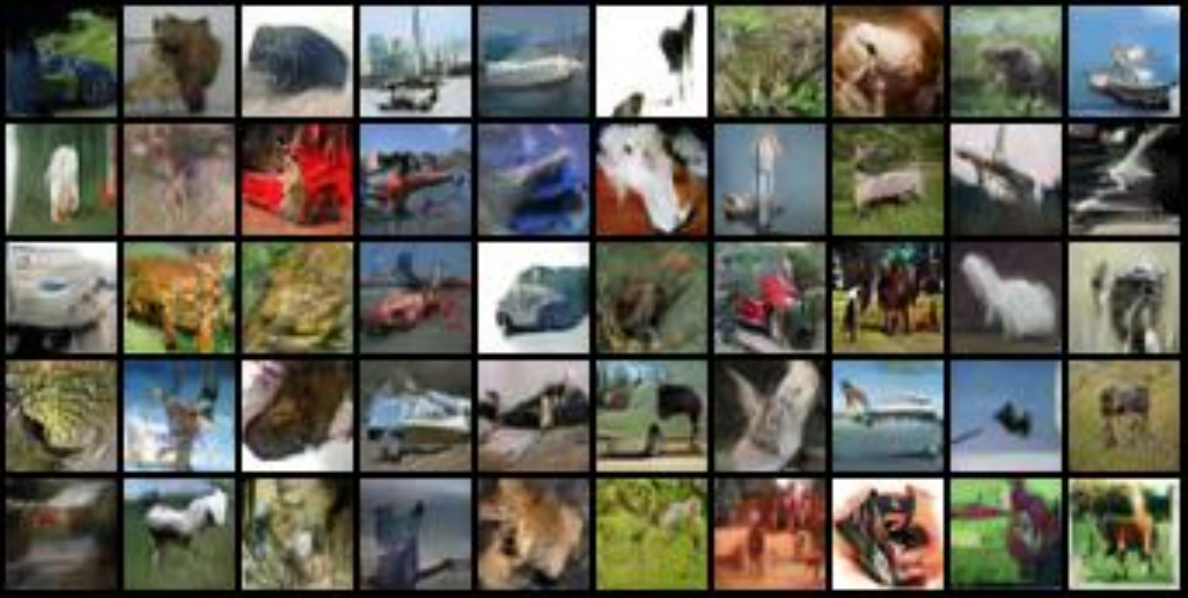} & \includegraphics[width=0.5\linewidth]{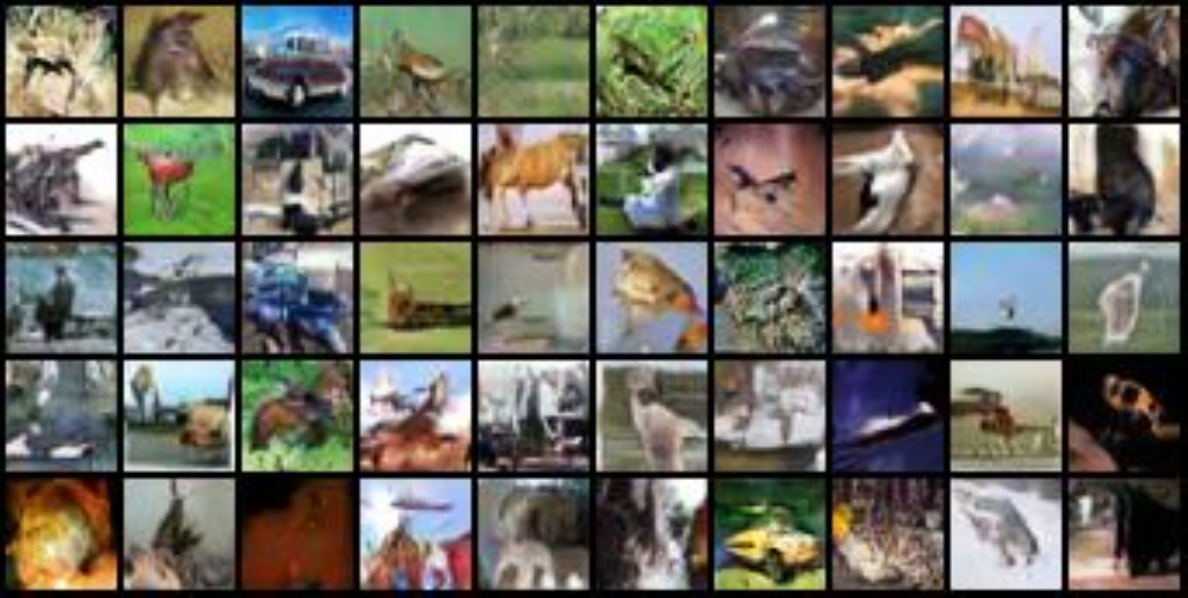}\\
			(g) JS-GAN + Spectral Norm + Channel/2 & (h) RS-GAN + Spectral Norm + Channel/2 \\
			\includegraphics[width=0.5\linewidth]{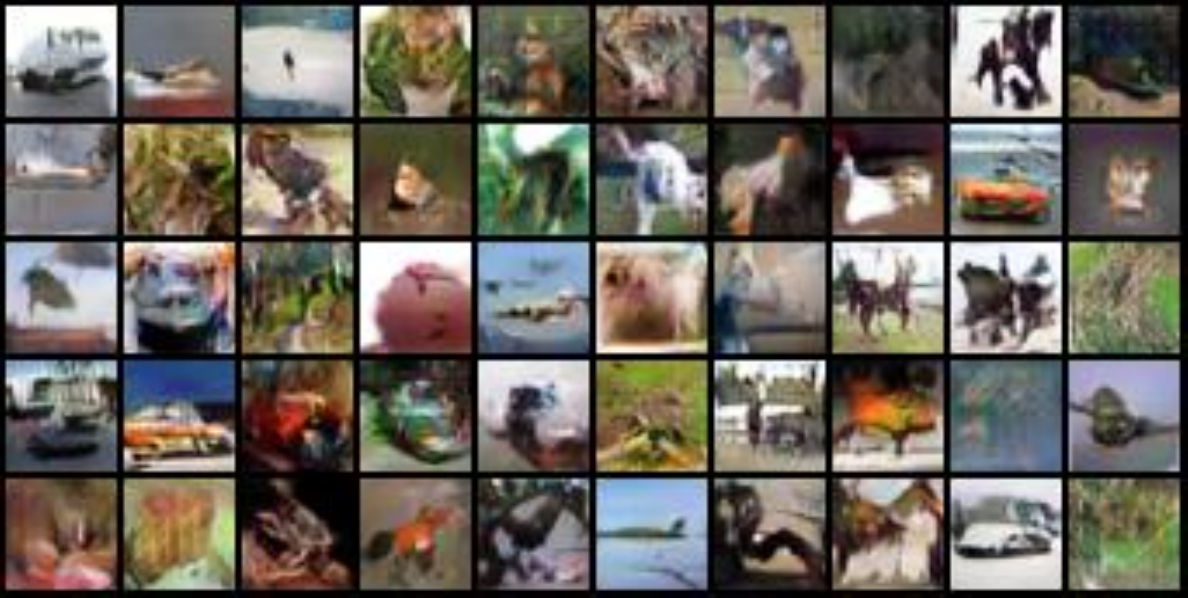} & \includegraphics[width=0.5\linewidth]{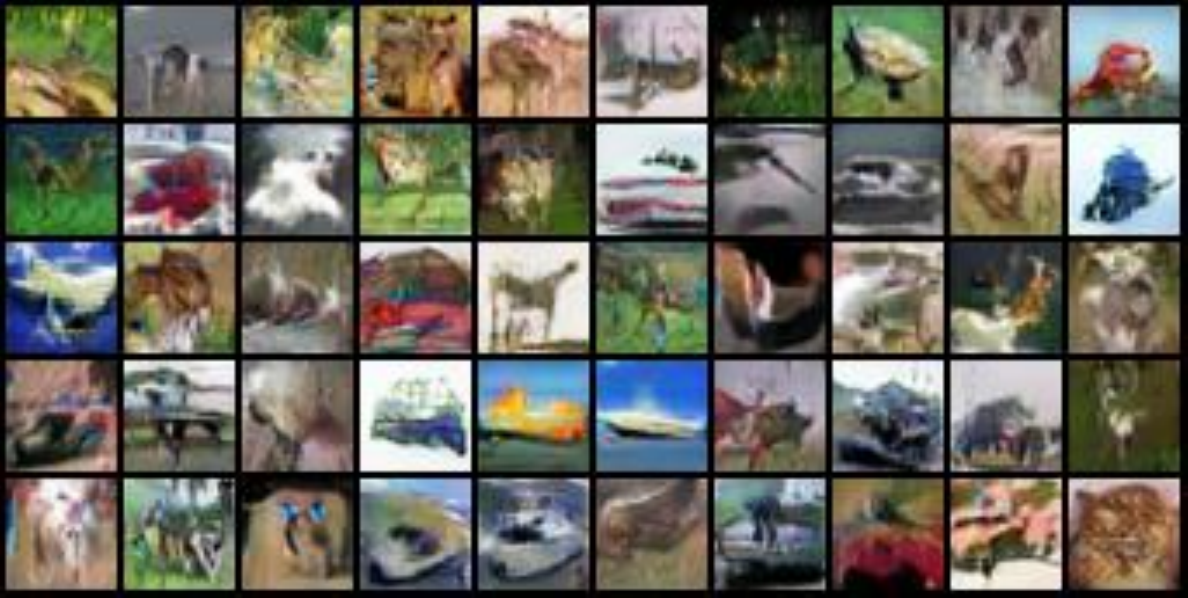}\\
			(i) JS-GAN + Spectral Norm + Channel/4 & (j) RS-GAN + Spectral Norm + Channel/4 \\

		\end{tabularx}
	}
	\caption{Generated CIFAR-10 samples with CNN.}
	\label{fig:cifarcnnsample}
	
\end{figure}

\begin{figure}[t]
	\centering
	\scalebox{1}{
		\begin{tabularx}{\linewidth} {cc}
			\includegraphics[width=0.5\linewidth]{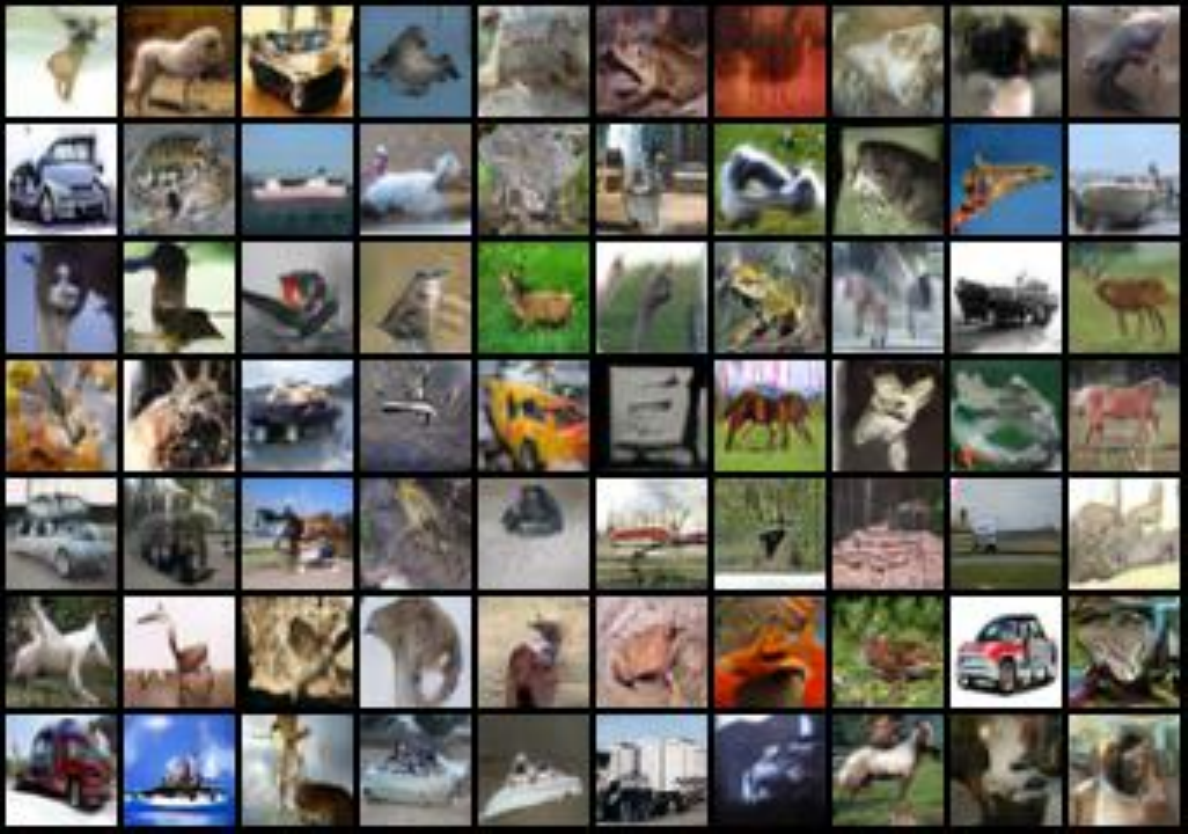} & \includegraphics[width=0.5\linewidth]{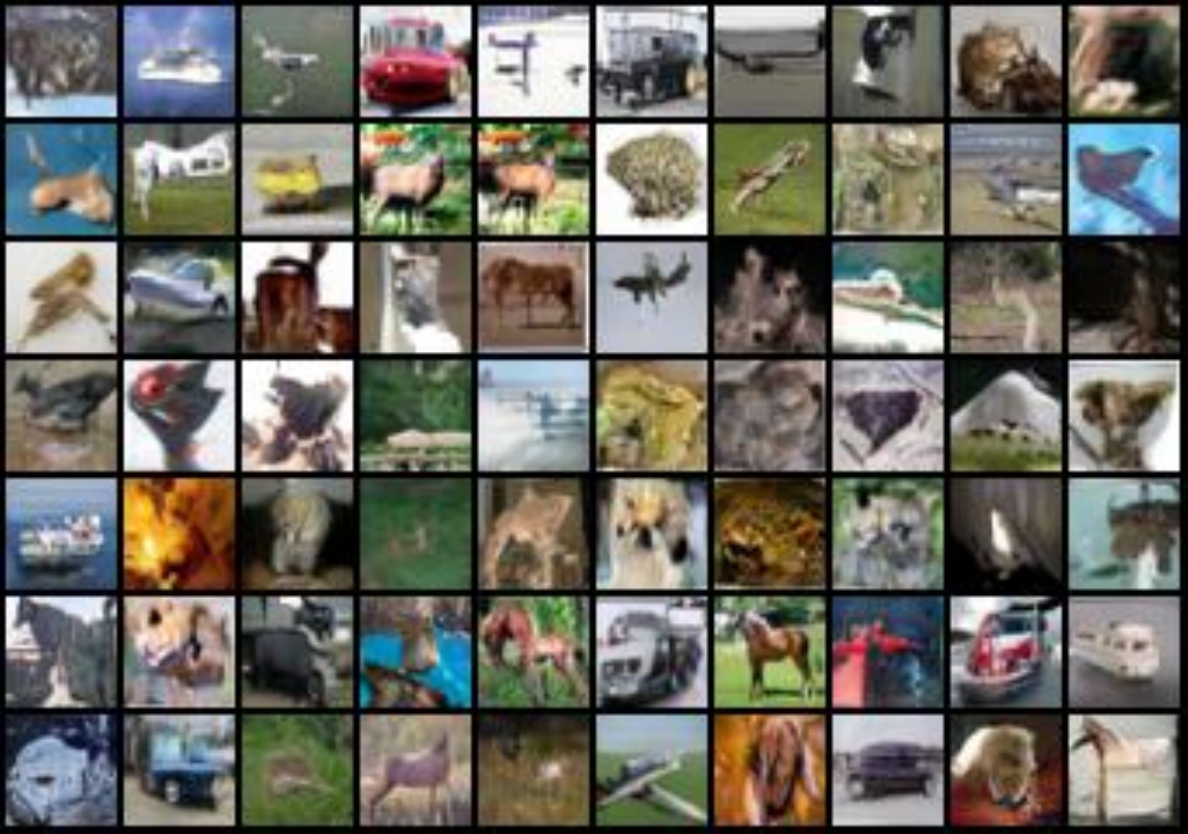}\\
			(a) JS-GAN + Spectral Norm + Regular ResNet & (b) RS-GAN + Spectral Norm + Regular ResNet \\
			\includegraphics[width=0.5\linewidth]{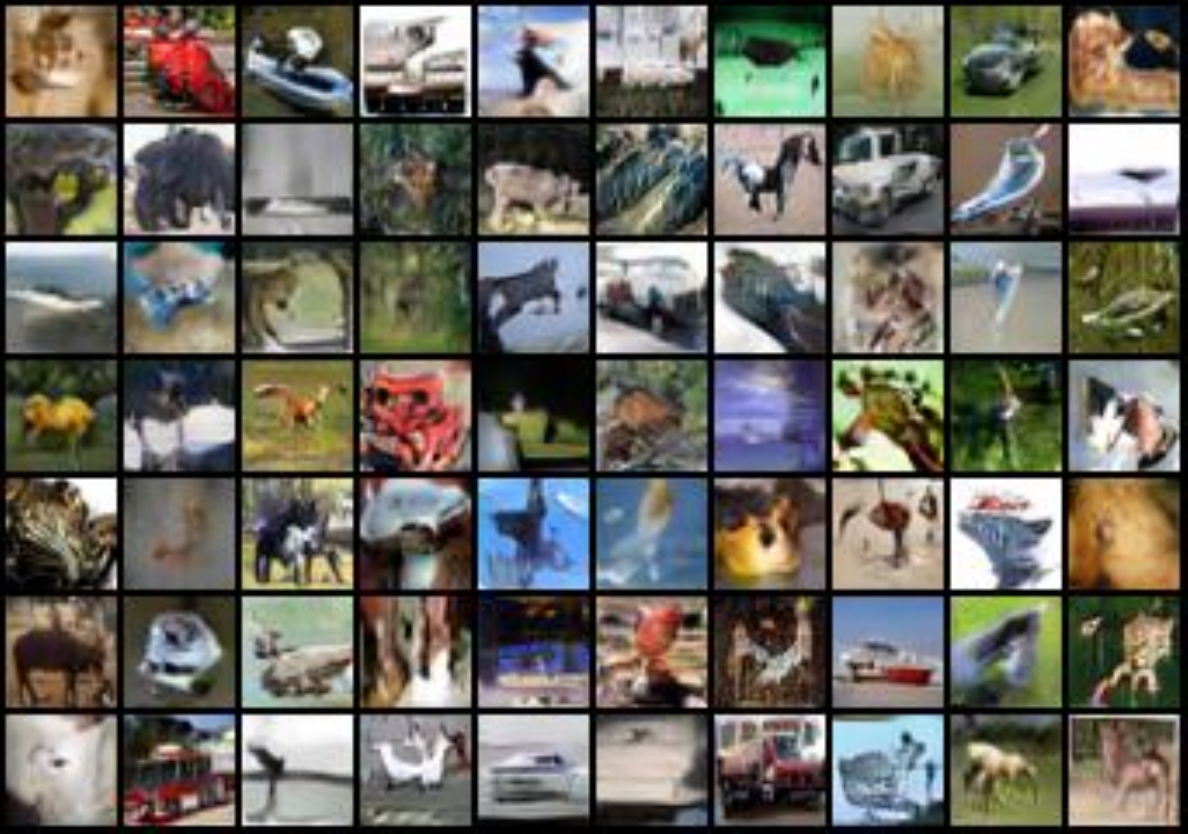} & \includegraphics[width=0.5\linewidth]{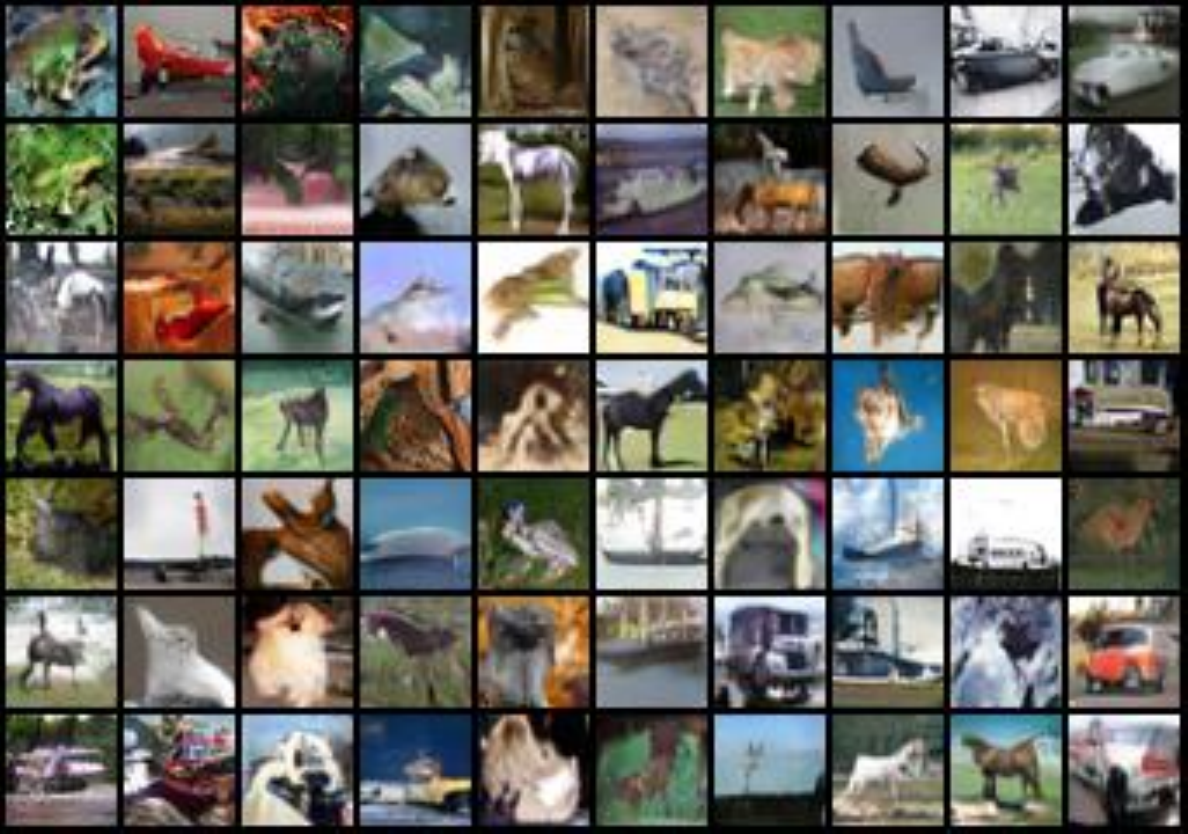}\\
			(c) JS-GAN + Spectral Norm + Channel/2 & (d) RS-GAN + Spectral Norm + Channel/2 \\
			\includegraphics[width=0.5\linewidth]{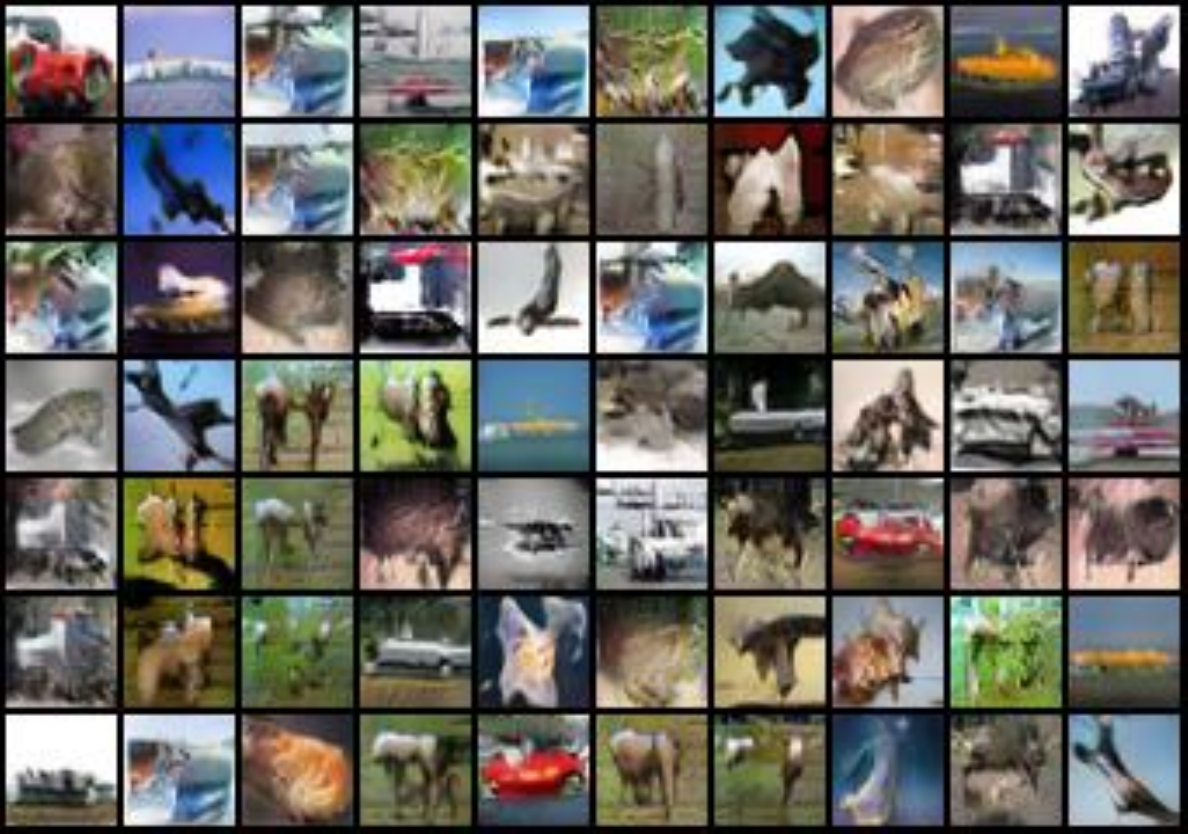} & \includegraphics[width=0.5\linewidth]{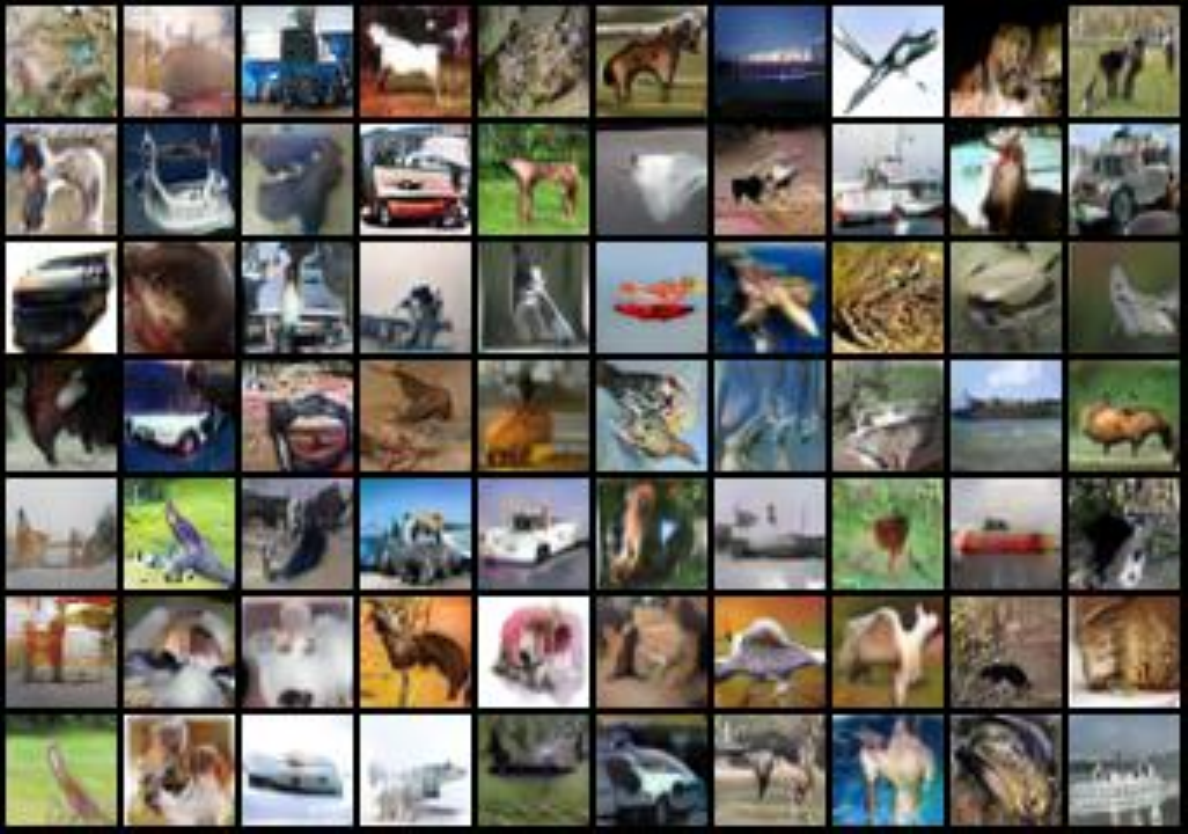}\\
			(e) JS-GAN + Spectral Norm + Channel/4 & (f) RS-GAN + Spectral Norm + Channel/4 \\
			\includegraphics[width=0.5\linewidth]{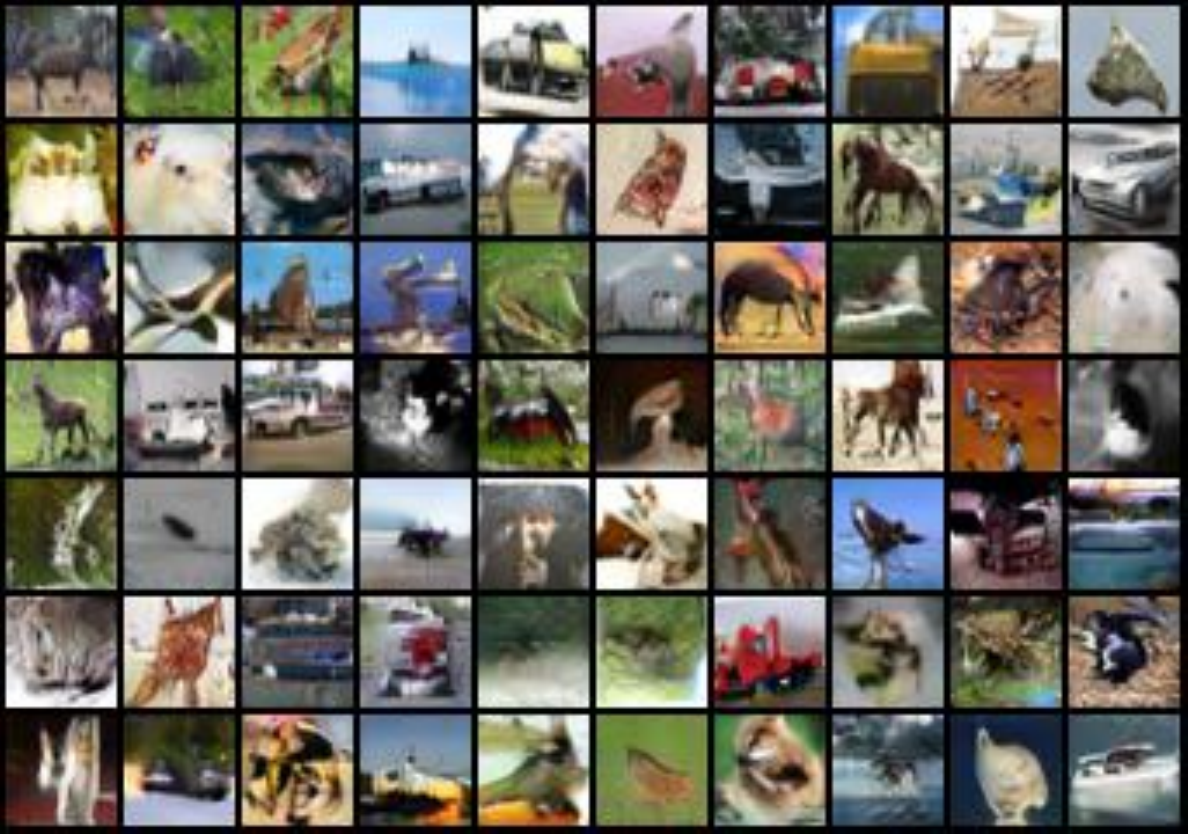} & \includegraphics[width=0.5\linewidth]{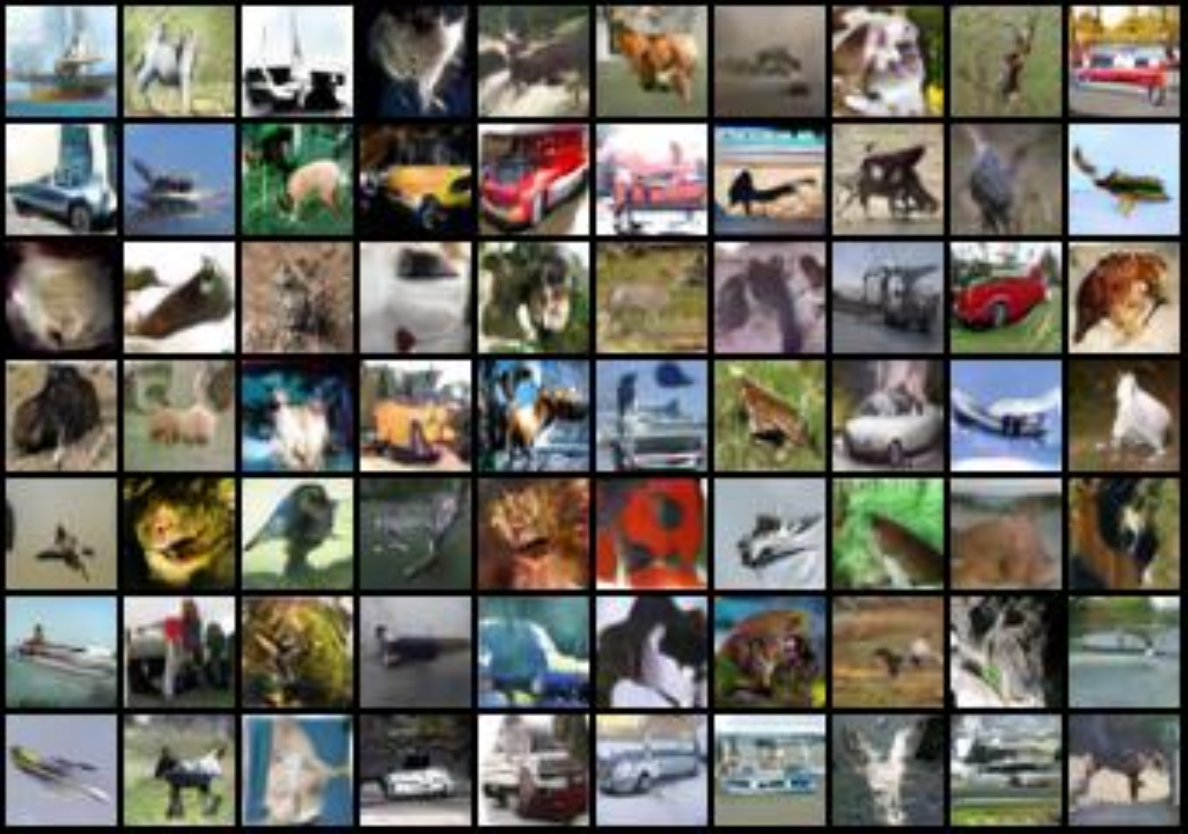}\\
			(g) JS-GAN + Spectral Norm + BottleNeck & (h) RS-GAN + Spectral Norm + BottleNeck \\
		\end{tabularx}
	}
	\caption{Generated CIFAR-10 samples on ResNet.}
	\label{fig:cifarresnetsample}
\end{figure}

\begin{figure}[t]
	\centering
	\scalebox{1}{
		\begin{tabularx}{\linewidth} {cc}
			\includegraphics[width=0.5\linewidth]{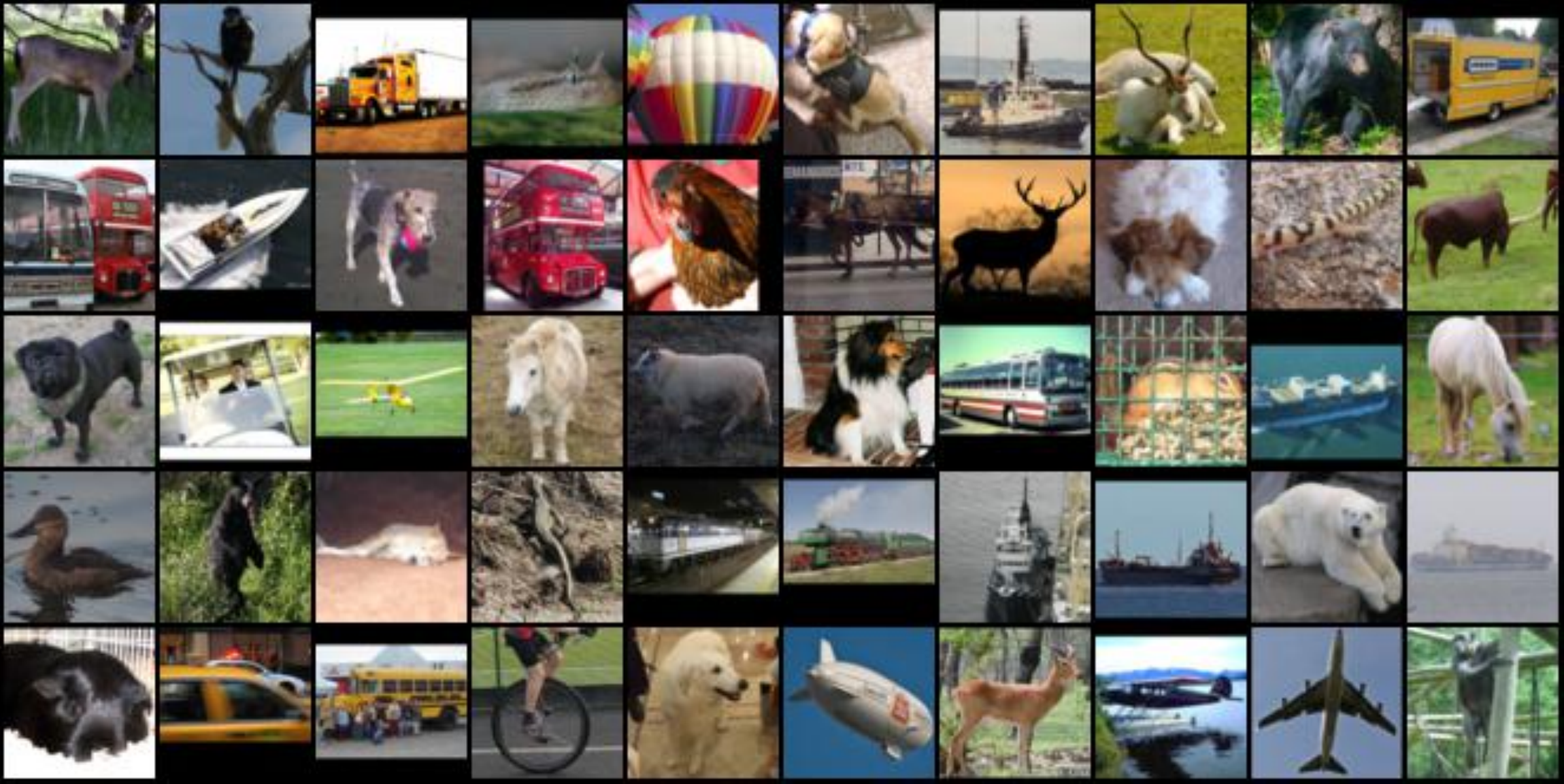} & \includegraphics[width=0.5\linewidth]{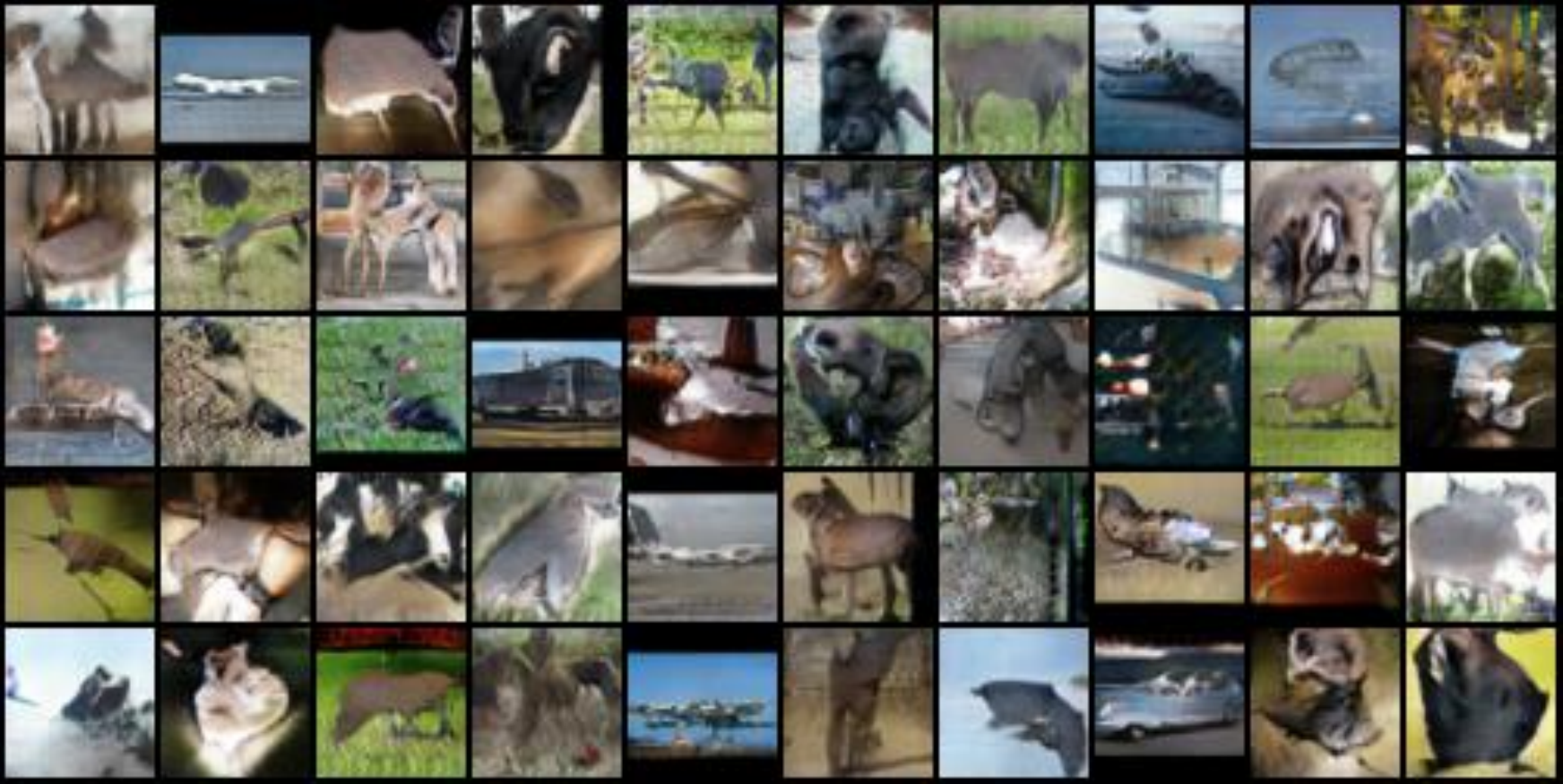}\\
			(a) real data & (b) JS-GAN + BatchNorm \\
			\includegraphics[width=0.5\linewidth]{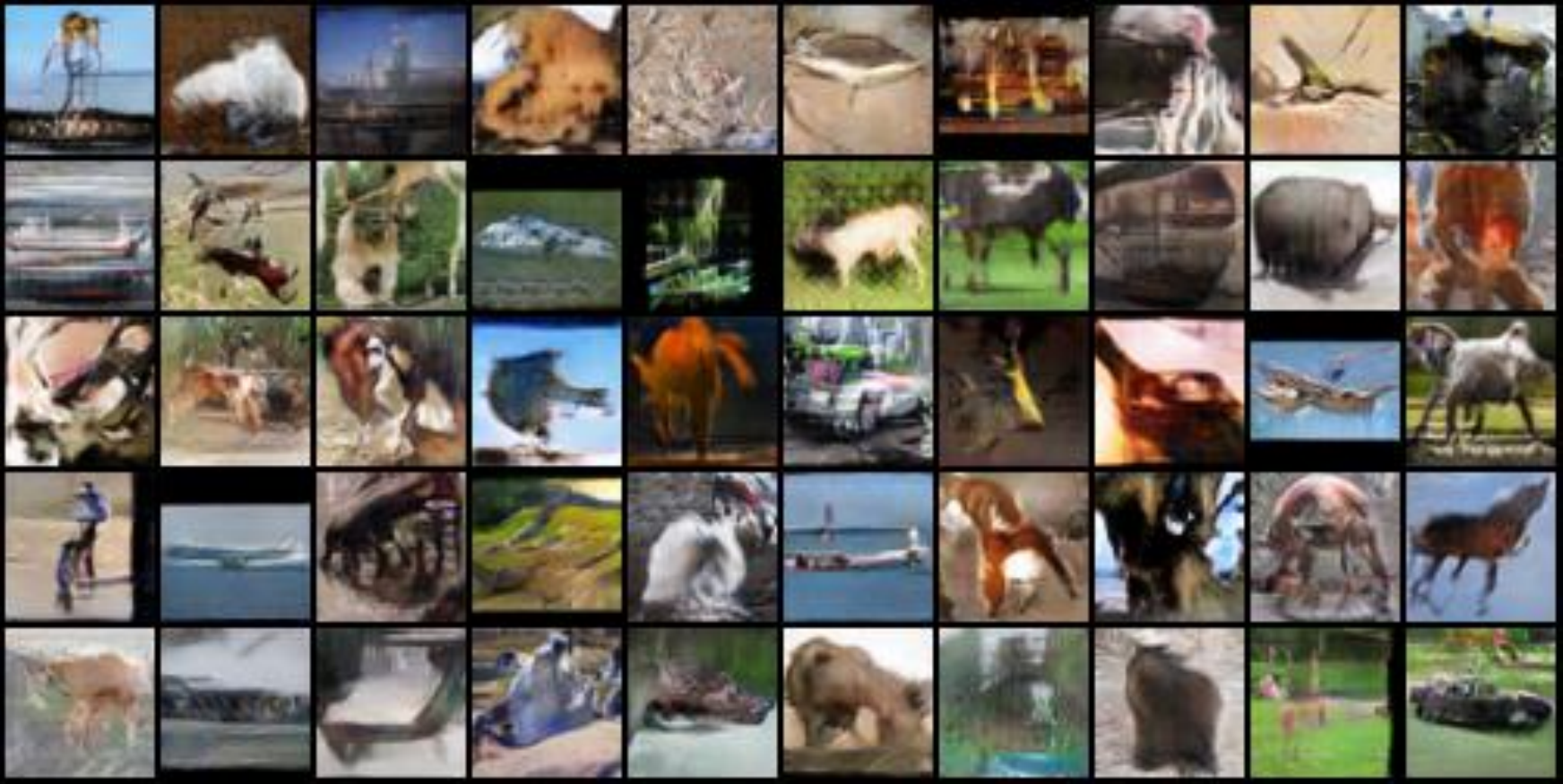} & \includegraphics[width=0.5\linewidth]{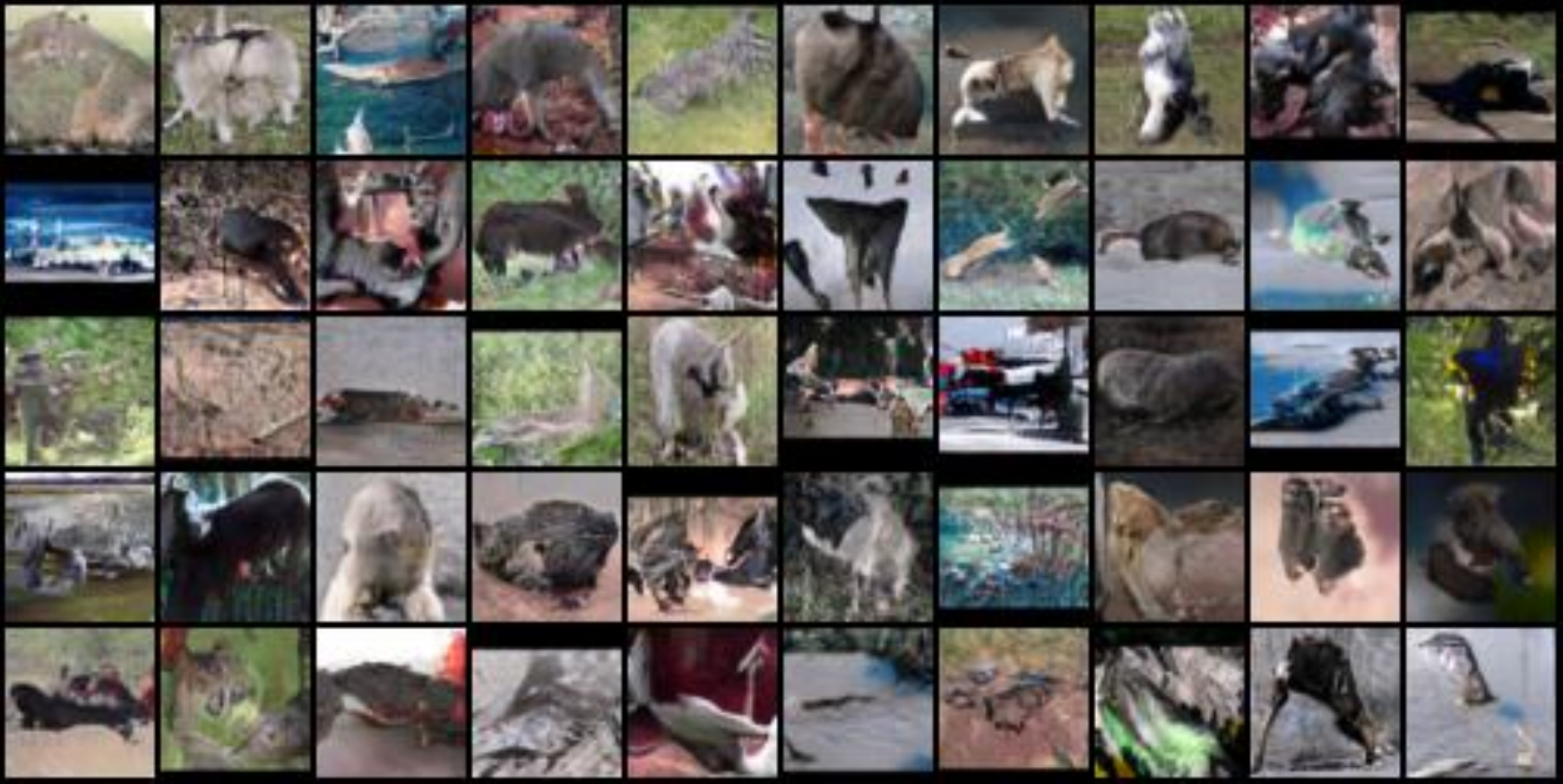}\\
			(c) WGAN-GP & (d) RS-GAN \\
			\includegraphics[width=0.5\linewidth]{FigApp/JS_regular_CNN.pdf} & \includegraphics[width=0.5\linewidth]{FigApp/RS_regular_CNN.pdf}\\
			(e) JS-GAN + Spectral Norm + Regular CNN & (f) RS-GAN + Spectral Norm + Regular CNN \\
			\includegraphics[width=0.5\linewidth]{FigApp/JS_32_CNN.pdf} & \includegraphics[width=0.5\linewidth]{FigApp/RS_32_CNN.pdf}\\
			(g) JS-GAN + Spectral Norm + Channel/2 & (h) RS-GAN + Spectral Norm + Channel/2 \\
			\includegraphics[width=0.5\linewidth]{FigApp/JS_16_CNN.pdf} & \includegraphics[width=0.5\linewidth]{FigApp/RS_16_CNN.pdf}\\
			(i) JS-GAN + Spectral Norm + Channel/4 & (j) RS-GAN + Spectral Norm + Channel/4 \\

		\end{tabularx}
	}
	\caption{Generated STL-10 samples with CNN.}
	\label{fig:stlcnnsample}
	
\end{figure}

\begin{figure}[t]
	\centering
	\scalebox{1}{
		\begin{tabularx}{\linewidth} {cc}
			\includegraphics[width=0.5\linewidth]{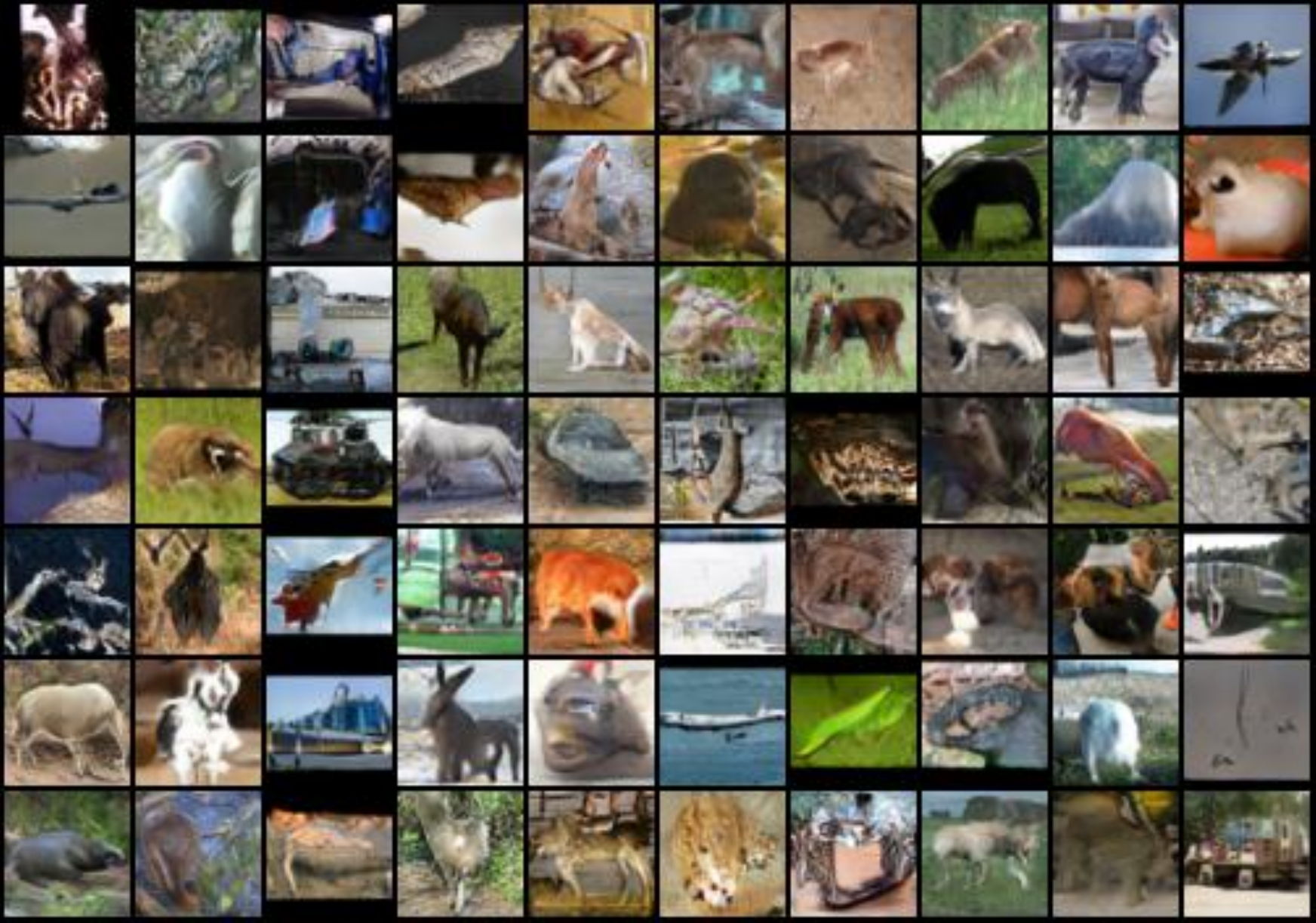} & \includegraphics[width=0.5\linewidth]{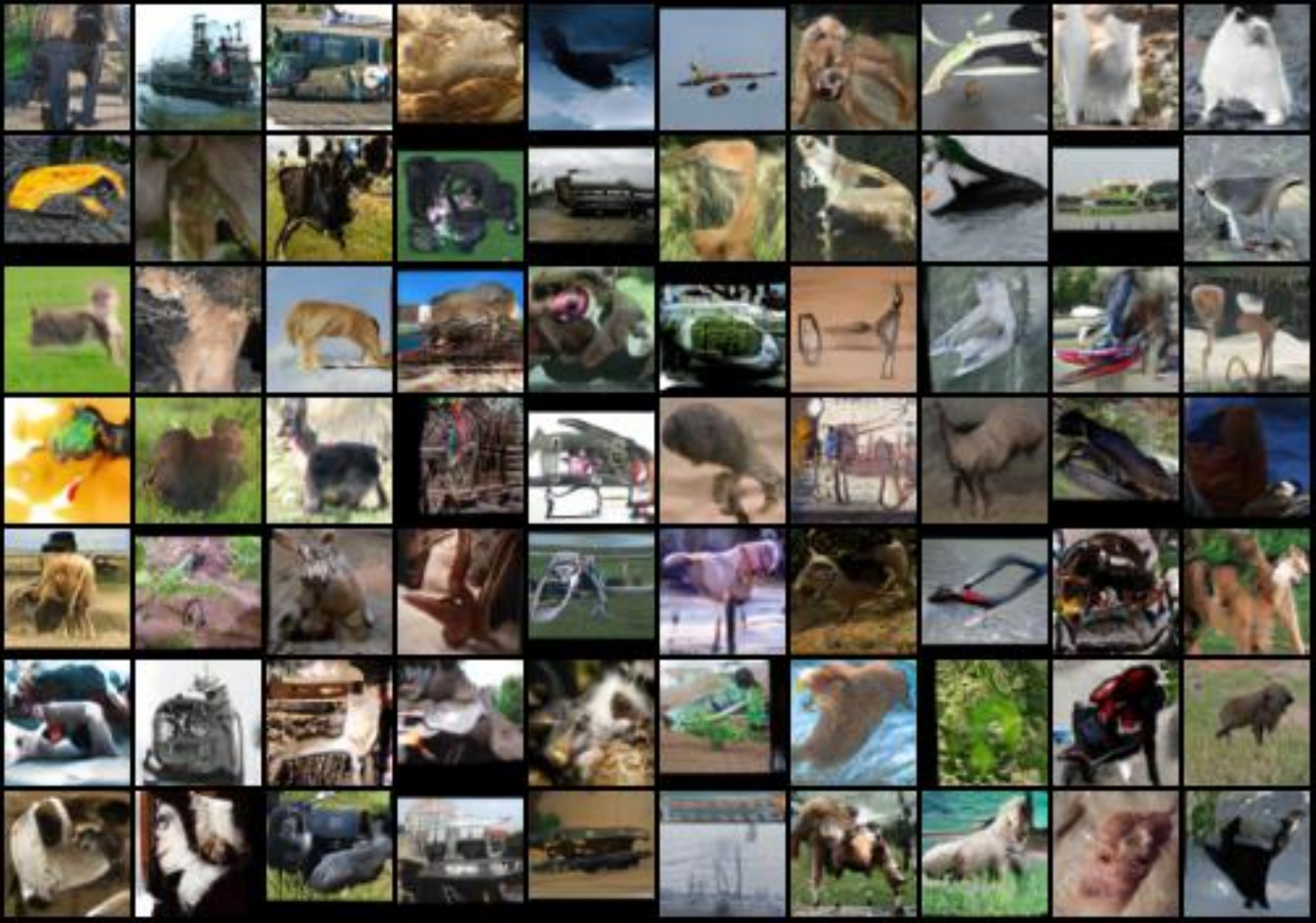}\\
			(a) JS-GAN + Spectral Norm + Regular ResNet & (b) RS-GAN + Spectral Norm + Regular ResNet \\
			\includegraphics[width=0.5\linewidth]{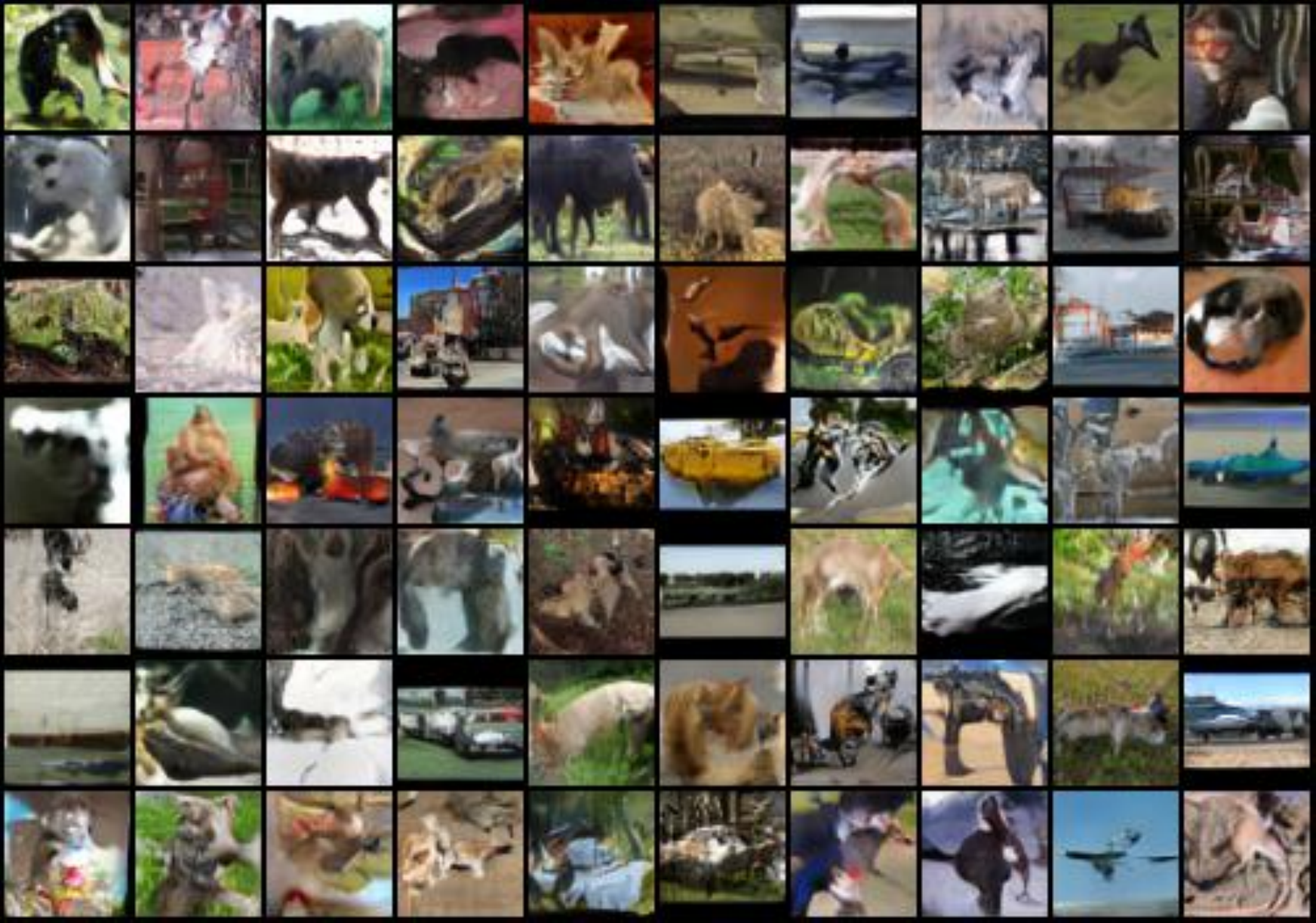} & \includegraphics[width=0.5\linewidth]{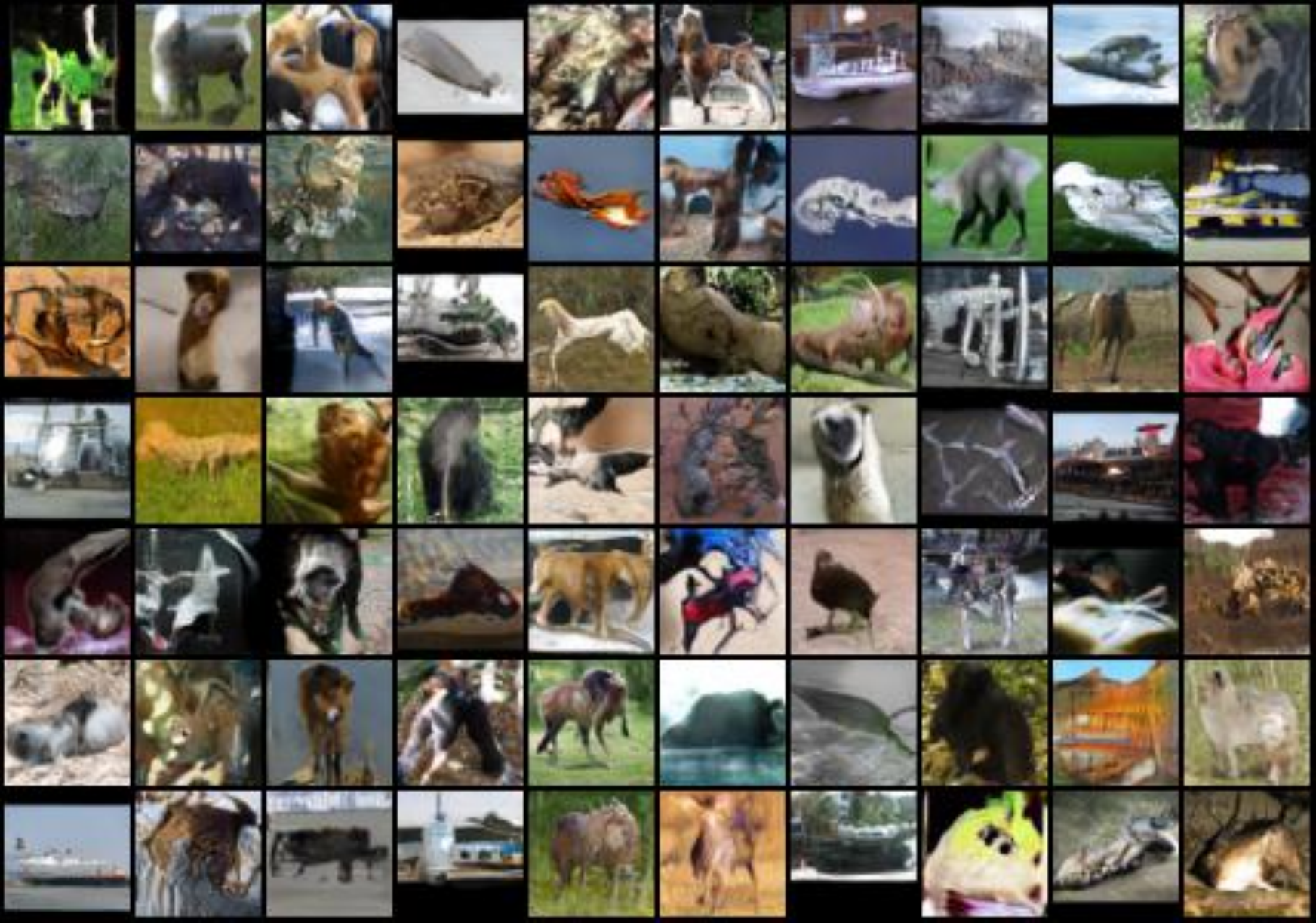}\\
			(c) JS-GAN + Spectral Norm + Channel/2 & (d) RS-GAN + Spectral Norm + Channel/2 \\
			\includegraphics[width=0.5\linewidth]{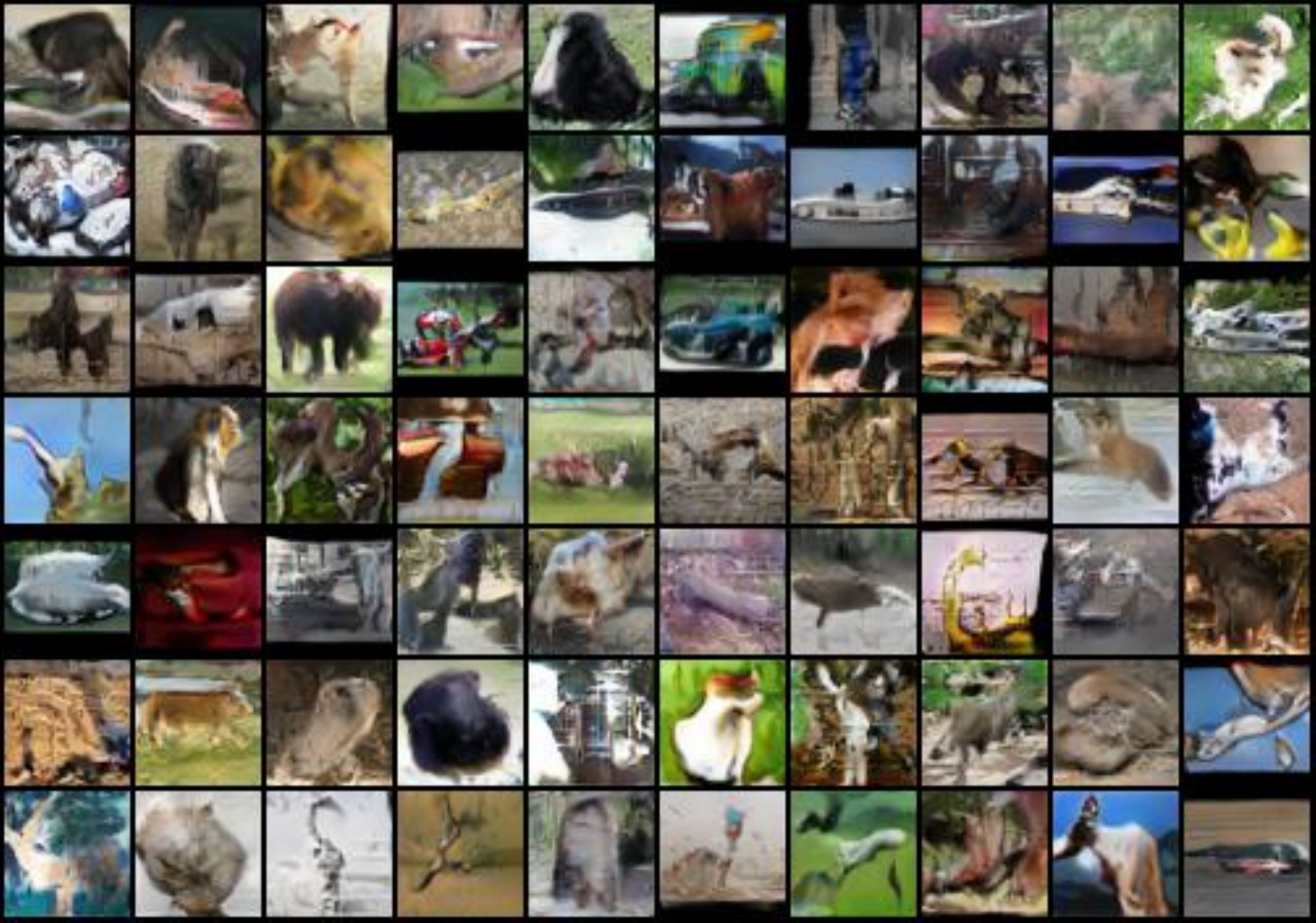} & \includegraphics[width=0.5\linewidth]{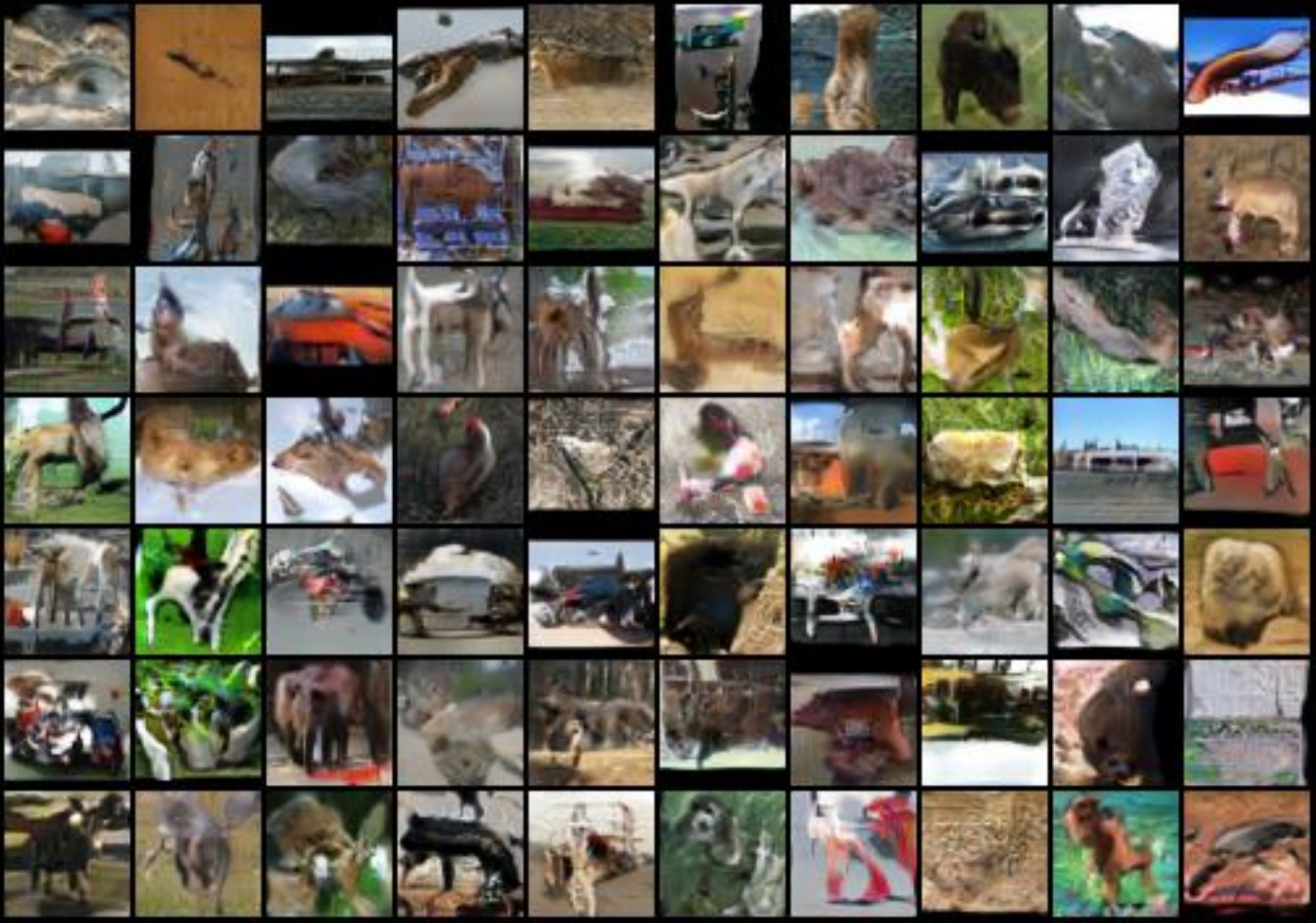}\\
			(e) JS-GAN + Spectral Norm + Channel/4 & (f) RS-GAN + Spectral Norm + Channel/4 \\
			\includegraphics[width=0.5\linewidth]{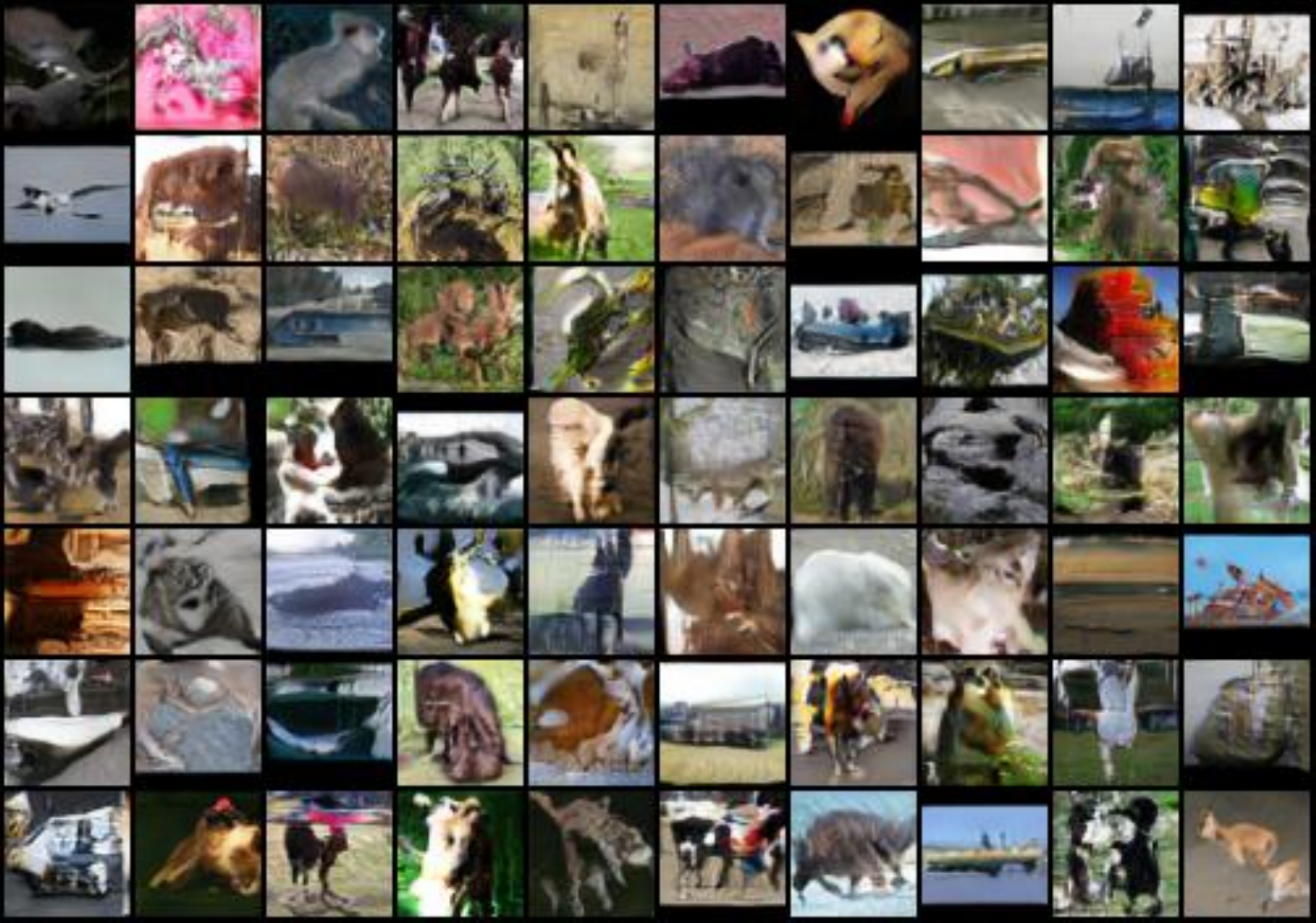} & \includegraphics[width=0.5\linewidth]{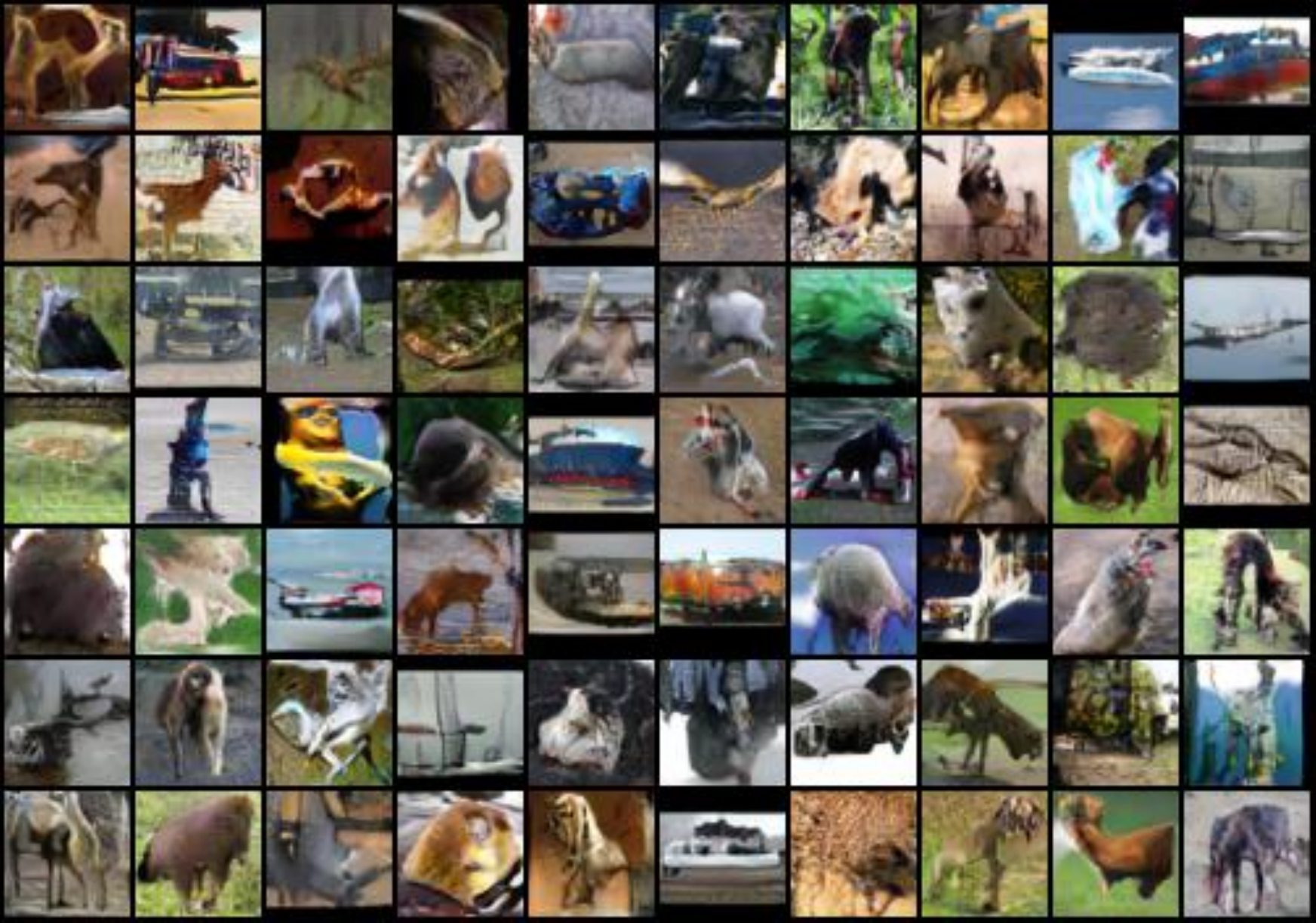}\\
			(g) JS-GAN + Spectral Norm + BottleNeck & (h) RS-GAN + Spectral Norm + BottleNeck \\
		\end{tabularx}
	}
	\caption{Generated STL-10 samples with ResNet.}
	\label{fig:stlesnetsample}
	
\end{figure}

\section{Experiments on High Resolution Data}\label{sec: high resolution}
There are two approaches to achieve a good landscape: one uses a wide enough neural net~\citep{nguyen2018loss,li2018over}, and the other uses a large enough number of samples (approaching convexity of pdf space). 
As we discuss in  Sec.~\ref{sec: empirical and population}  (see also Appendix~\ref{app: particle v.s. prob.}), 
when the number of samples is far from enough for filling the data space, the convexity (of pdf space) may vanish.
A higher dimension of data implies a larger gap between empirical loss and population loss,  thus the non-convexity issue will become more severe.
Thus we conjecture that JS-GAN suffers more for higher resolution data generation.

We consider $256\times256$ LSUN Church and Tower datasets with CNN architecture in Tab.~\ref{table: lsun_cnn_model}. For RS-GAN, we set glr = 1e-3 and dlr = 2e-4
We train $100,000$ iterations with batchsize $64$. 
The generated images are presented in Fig.~\ref{fig:lsunsample}. For both datasets, RS-GAN outperforms JS-GAN visually.

\begin{figure}[t]
	\centering
	\scalebox{1}{
		\begin{tabularx}{\linewidth} {ccc}
		\includegraphics[width=0.5\linewidth]{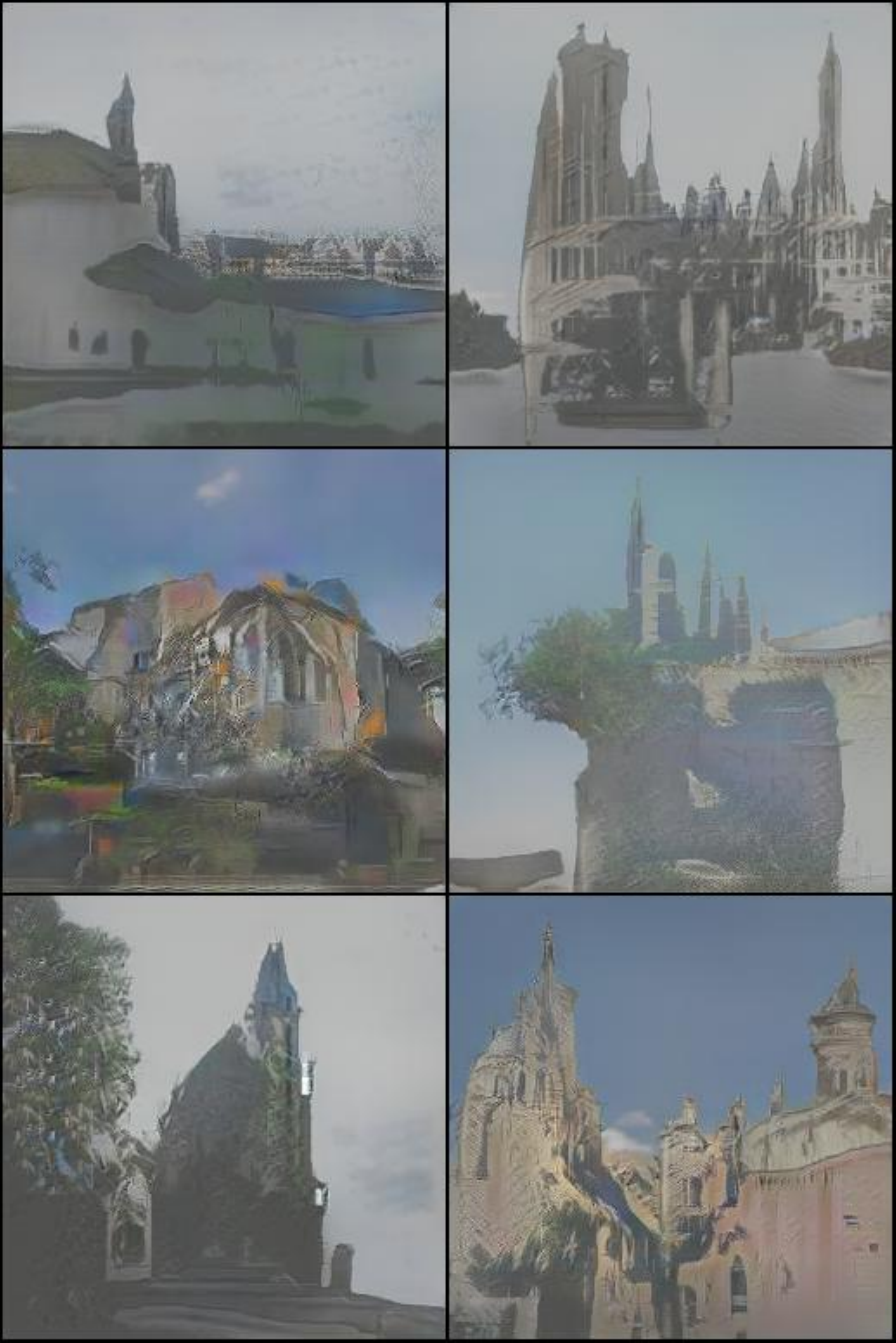} & &\includegraphics[width=0.5\linewidth]{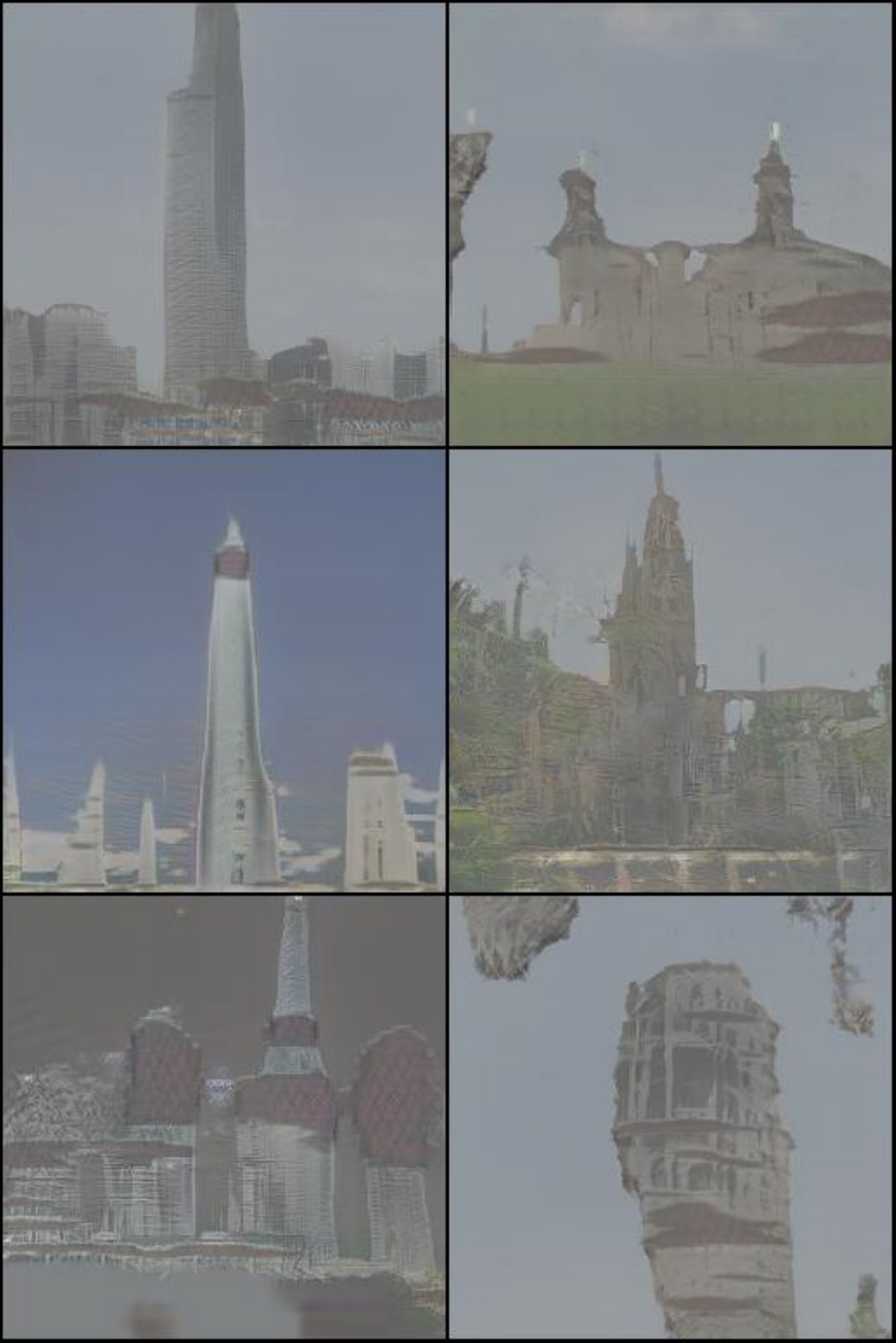}\\
(a) LSUN Church by JS-GAN & & (b) LSUN Tower by JS-GAN \\
	\includegraphics[width=0.5\linewidth]{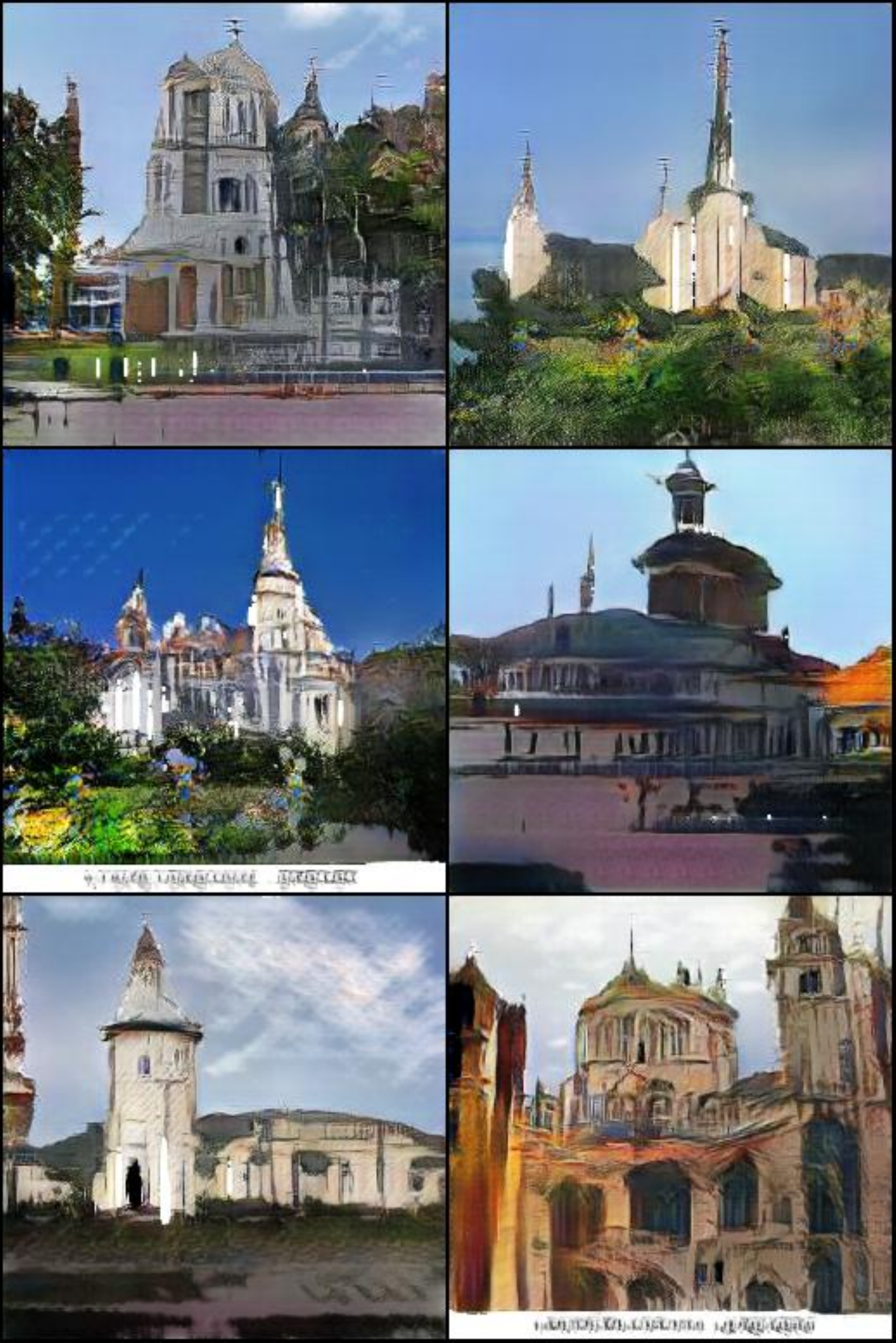} & &\includegraphics[width=0.5\linewidth]{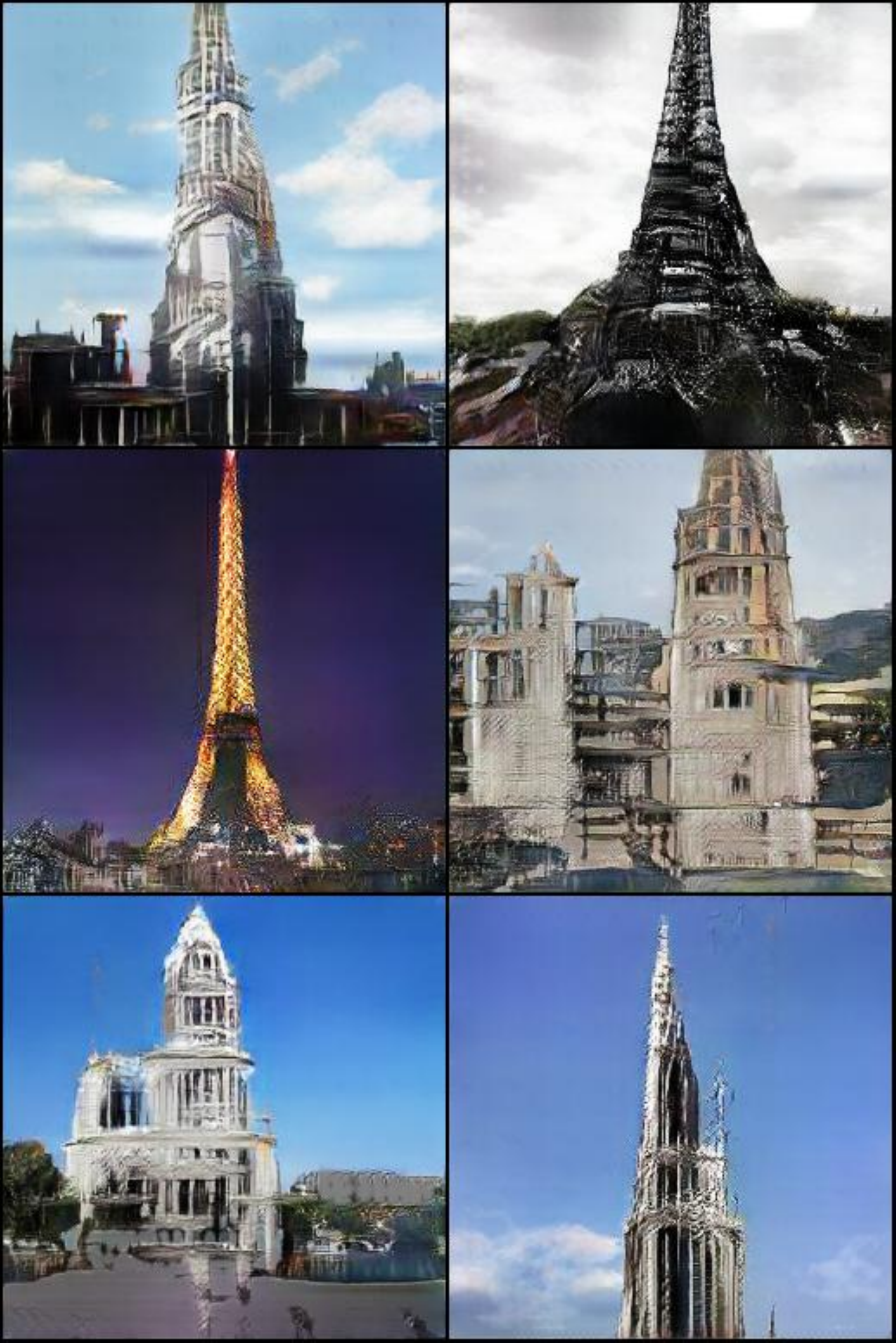}\\
(c) LSUN Church by RS-GAN & & (d) LSUN Tower by RS-GAN
		\end{tabularx}
	}
	\caption{Generated $256\times256$ Church and Tower Image by JS-GAN and RS-GAN.}
	\label{fig:lsunsample}
	
\end{figure}

\iffalse 
\input{append2p0_imbalancedata}
\input{append2_real_experiments}
\input{append2p2_HR_image}
\fi

\vspace{-0.3cm}

\section{Discussions on Empirical Loss and Population Loss (complements Sec.~\ref{sec: empirical and population}) }\label{Sec: Parameter Space}

As mentioned in Sec. \ref{sec: empirical and population}, the pdf space view (the population loss) was first used in \cite{goodfellow2014generative}, and became quite popular for GAN analysis. See, e.g., 
\cite{nagarajan2017gradient,johnson2019framework,chu2019probability}. 
In this part, we provide more discussions on the relation of
empirical loss and population loss in GANs.

\iffalse 
The motivation for analyzing the empirical version is two-fold. 
First, as argued in the main paper,
the convexity due to optimization over the probability density
is a universal property and not strongly related to the loss function.
Second, it is a common practice in machine learning to separately analyze the population version and the empirical version,
as elaborated in Appendix \ref{appen: generalization} below.
Third, as we mentioned earlier, ``the $n$-point distribution is an estimation of the $n$-mode distribution,'' as elaborated in Appendix \ref{app: macro learning}.
\fi

\subsection{Particle space or probability space?}\label{app: particle v.s. prob.}

 Suppose $p_z = \mathcal{N}(0, I_{d_z})$ (or other distributions) is the  distribution of the latent variable $z$, and $Z = (z_1, \dots, z_n)$
 are the samples of latent variables. 
 During training, the parameter $w$ of the generator net $ G_w $ is
 moving, and, as a result, both the pdf $ p_g = G_w( p_z ) $ 
 and the particles $ y_j = G_w( z_j ) $ move accordingly. 
Therefore, GAN training can be viewed as either probability space optimization
or particle space optimization. 
 The two views (pdf space and particle space) are illustrated in Figure~\ref{fig1:fig_two_views}. 
 
 \iflonger 
 \begin{figure}[H]
	\vspace{-0.6cm}
	\begin{tabular}{cc}
		\includegraphics[width=0.3\linewidth, height = 1.3cm]{figure/fig10a_pdf}
		&
		\includegraphics[width=0.4\linewidth,height = 1.3cm]{figure/fig10b_sample}
		\\
		(a) & (b)
	\end{tabular}
	\caption{(a) Population version: probability density changes;  (b) Empirical version: samples are moving. }
	\label{fig:fig_two_views}
\end{figure}
\fi 

\iflonger 
To be more precise, there are three models: pdf $p_g$ moves freely, particles $Y$ move freely, and pdf moves according to the movement of $Y$.
\citet{goodfellow2014generative} adopt the first model, we adopt
 the second model and \cite{unterthiner2018coulomb} adopts the third model. 
\fi 
In the probability space view, an implicit assumption
is that the pdf $p_g$ moves \textit{freely};
in the particle space view, we assume the particles move \textit{freely}. 
Free-particle-movement implies free-pdf-movement if the particles almost occupy the whole space (a one-mode distribution), as shown in Fig.~\ref{fig:fig7_1mode}.
However, for multi-mode distributions in high-dimensional space,
 the particles are sparse in the space, and free-particle-movement does NOT
 imply free-pdf-movement. This gap was also pointed out in \cite{unterthiner2018coulomb};
 here, we stress that the gap becomes larger for sparser samples
 (eiher due to few samples ore high dimension). 
 This forms the foundation for experiments in App. \ref{sec: high resolution}.

To illustrate the gap between free-pdf-movement and free-particle-movement, 
we use an example of learning a two-mode distribution $\Pd$.
Suppose we start from an initial two-mode distribution $\Pg$, as shown
  Figure~\ref{fig:fig8_2mode}. 
 To learn $\Pd$, we need to do two things:
first, move the two modes of $\Pg $ to roughly overlap with the two modes of $\Pd$ which we call ``macro-learning''; second, adjust the distributions of each mode to match those of $\Pd$, which we call ``micro-learning.''
This decomposition is illustrated in  Fig. \ref{fig:fig8_2mode} and \ref{fig:fig9_decompose}.  In micro-learning, the pdf can move freely, but in macro-learning, the whole mode has to move together and cannot
 move freely in the pdf space.

\iffalse 
Therefore, we have explained the two reasons for analyzing the empirical version (the $n$-point distribution).
 First, it is the practically used version. Second, it captures the ``macro-learning'' behavior of learning a multi-mode distribution.
 \fi 

\iffalse 
Finally, the two types of learning are related to the two types of mode collapse mentioned by \citet{lin2018pacgan}: 
``entire modes from the input data are never generated, or the generator only  creates images within a subset of a particular mode.''
Failure of macro-learning can cause missing modes  in the generated distributions, which corresponds to the first type of mode collapse. 
 Failure of micro-learning can  cause a sub-mode of a mode to be missed, which also corresponds to the second type of mode collapse. See more discussions of related works on mode collapse in Appendix~\ref{sec: related works}. 
\fi 

	\begin{minipage}[t]{0.48\textwidth}
	\begin{figure}[H]
	\vspace{-0.8cm}
	\begin{center}
		\includegraphics[width=4cm, height = 1.6cm]{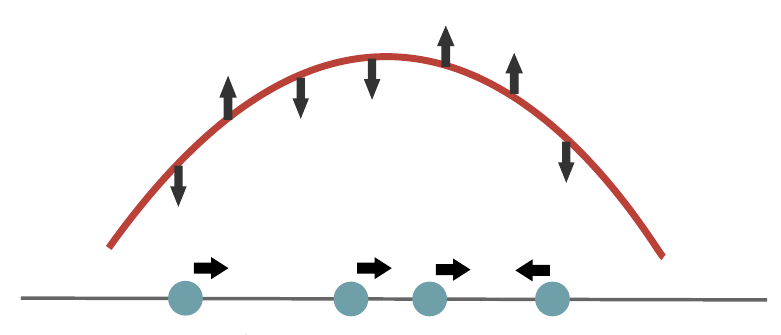}
		\captionsetup{font={scriptsize}}
		\caption{ {\scriptsize Illustration of  the learning process of the single mode. 
		The generated samples are moving, which  corresponds to  adjustment of the probability	densities. }
		}
		\label{fig:fig7_1mode}
	\end{center}
	\vspace{-0.0cm}
\end{figure}
	\end{minipage}
\hfill 
\begin{minipage}[t]{0.48\textwidth}
\begin{figure}[H]	
	\begin{center}
	\vspace{-0.8cm}
		\includegraphics[width= 4 cm, height = 1.6cm]{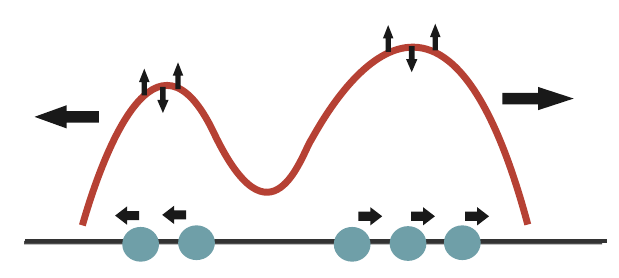}
	\captionsetup{font={scriptsize}}
		\caption{ {\scriptsize
 Illustration of  the process of learning a multi-mode distribution.  We decompose this process into two parts in the next figure. } }
		\label{fig:fig8_2mode}
	\end{center}
\end{figure}
	\end{minipage}

\begin{figure}[H]
	\vspace{-0.5cm}
	\begin{tabular}{ccc}
		\includegraphics[width=0.5\linewidth, height = 1.5cm]{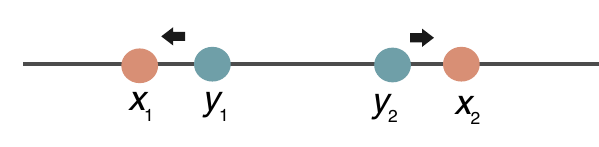}
		&
		\includegraphics[width=0.4\linewidth, height = 1.5 cm]{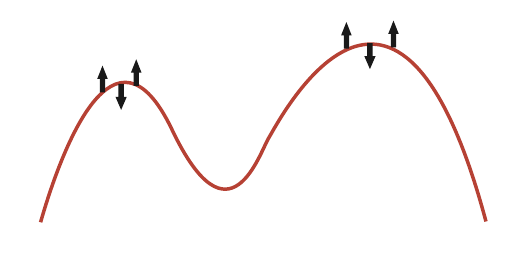}
		\\
	{\scriptsize	(a) Macro-learning } &
	{\scriptsize (b) Micro-learning }
	\end{tabular}
	\captionsetup{font={scriptsize}}
	\caption{ {\scriptsize Decomposing learning a multi-mode distribution 
	into macro-learning and  micro-learning.
	Macro-learning refers to the movement of the whole mode towards the underlying data mode. Micro-learning refers to the adjustment of the distribution within each mode. 
	If macro-learning fails, then an entire mode is missed in the generated distributions, which corresponds to mode collapse. 
	}
	}
	\label{fig:fig9_decompose}
\end{figure}

\iffalse 
GANs are implicit generative models, as opposed to  explicit models like 
energy-based models  \citep{lecun2006tutorial}  and restricted Boltzmann machines \citep{salakhutdinov2009deep}.
For implicit models, there is no explicit parameterization of the probability density.
The benefit of implicit models is that they are easier to train and sample from. During the training of GANs,  when the parameters of the mapping (i.e., generator) change,  it is the generated samples  that are moving, not that an explicit probability density is moving. 
\fi

  \iffalse 
  We emphasize again the discrepancy here:
   the probability space view says
   that there exists a certain energy function 
   that provides a energy-decreasing path from any initial $w_0$
   to an optimal $w^*$; however, 
   due to the limited feedback from real data,
   this path is hard to be recognized by the parameter-space 
   optimization. Instead, the particle space view can better
   match the practice faced by the parameter-space optimizers.
   \fi

\iflonger  
 In many GANs, a Gaussian distribution
 is used as an example.  An implicit assumption of those examples is that
 there are enough data to cover the whole space of interest,
 thus the movement of the parameter
  can be well approximated by the movement of the probability density.
  See Figure~\ref{fig:fig7_1mode} for an  illustration. 
 In the two-cluster experiments, in the second stage of training
 the fake data points are spread out, thus the probability
 space view becomes useful again. 
 Our goal is not to discard the probability space view;
 rather, we aim to provide an alternative view that
 is less studied in GAN reseach. 
 In the particle space view, the landscape of JS-GAN contains a bad attractor which is
  the mode collapse, which can better explain the phenomenon that
  the training loss gets stuck at around $0.48$, as shown
  in Section~\ref{sec: case study}.
 In addition, this view can  lead to the design of RpGAN, and
can explain and predict some benefits of RpGAN.
 \fi

 \subsection{Empirical loss and population loss}

 \iffalse 
 We remark that the empirical loss we use is just
an approximation of the practical training procedure. 
In practical GAN training, we are not performing batch gradient methods
on the objective we discuss; rather,
 we use ``stochastic'' gradient method.
 \fi 
 
The population version of RpGAN \cite{jolicoeur2018relativistic} is   $  
    \min_{\Pd  }  \phi_{\rm R, E} ( \Pg, \Pd ) , $ where
 \begin{equation}\label{phi R def, population}
  \phi_{\rm R, E} ( \Pg , \Pd )  = 
   \sup_{  f \in C( \mathbb{R}^d ) } 
   \Exp_{ (x, y) \sim  (\Pg, \Pd ) }    [  h (  f( x ) - f(y ) )  ]. 
 \end{equation}
Suppose we sample $ x_1, \dots, x_n \sim \Pd  $
and  $ y_1, \dots, y_n \sim \Pg $, 
then  $\frac{1}{ n }  \sum_{i = 1}^n    [  h (  f( x_i ) - f(y_i ) )  ] $
is an approximation of $ \Exp_{ (x, y) \sim  (\Pg, \Pd ) }    [  h (  f( x ) - f(y ) )  ]. $ 
The empirical version of RpGAN addresses   $  
    \min_{ Y \in \mathbb{R}^{d \times n}  }  \phi_{\rm R} ( Y, X ) ,
    $ 
    where
\begin{equation}\label{phi R def, 1st}
      \phi_{\rm R} ( Y, X )  = 
\sup_{  f \in C( \mathbb{R}^d ) } \frac{1}{ n }  \sum_{i = 1}^n    [  h (  f( x_i ) - f(y_i ) )  ]. 
\end{equation}

Our analysis is about the geometry of $  \phi_{\rm R} ( Y, X ) $ in Eq.~\eqref{phi R def, 1st}. In practical SGDA (stochastic GDA), 
at each iteration we draw a mini-batch 
of samples and update the parameters based on the mini-batch. 
The samples of true data $x_i$ are re-used
multiple times (similar to SGD for a finite-sum optimization), but the samples of latent variables $z_i$ are fresh (similar to on-line optimization). 
Due to the re-use of true data, stochastic GDA
 shall be viewed as an online optimization algorithm for solving Eq.~\eqref{phi R def, 1st} where $x_i$'s can be the same.
 Recall that in the main results, we have assumed that $x_i$'s are distinct,
 thus there is a gap between our results and practice.
 Extending our results to the case of non-distinct $x_i$'s requires extra work.
 This was done in Claim \ref{claim of JS GAN imbalanced} for the 2-cluster setting.  
 But for readability we do not further study this setting in the more general cases. 
 We leave this to future work.

\subsection{Generalization and overfitting of GAN}\label{appen: generalization}

One may wonder whether fitting the empirical distribution can cause memorization 
and failure to generate new data.
\citet{arora2017generalization} proved  that for many GANs (including JS-GAN)
with neural nets, only a polynomial number of samples are needed to achieve a small generalization error. We suspect that a similar generalization bound can be derived for RpGAN.

\iflonger 
Nevertheless, fitting training data is indeed what the practical GANs doing and not the invention of our paper. In practice, the direct objective of GANs is to fit the data (similar to neural nets in supervised learning), and as an indirect outcome GANs can generate new data.
Explaining why and when GANs can generalize is beyond the scope of this paper.
Nevertheless, for completeness, we briefly discuss the issue of generalization from the following perspectives:
first, the existing theory on the generalization  of GANs; second, some intuition on why GANs do not overfit.

First, we discuss the existing generalization bounds for GANs. 
Suppose  the true distribution  is $ \Pd  $  and the generated distribution is $ \Pg  $. 
Further, suppose we sample $n$ points $  x_i$ from $\Pg$ and sample $n $ points from $\Pd $, and use $ \mu $ and $\nu  $ to represent the two empirical distributions (discrete distributions with equal probability). \citet{arora2017generalization} proved  that for a large class of GAN problems (including JS-GAN using neural-net discriminator and generators), only a polynomial number of samples are needed to achieve a small generalization error. As a result of the generalization bound,   \citet{arora2017generalization} stated that if the GAN \textit{successfully minimized the   empirical distance} (i.e., the distance between $ \mu $ and $\nu  $), then the population distance (the distance between the two distributions $ \Pd  $  and $\Pg $) is also small. Therefore, there is a generalization guarantee under suitable conditions. We suspect that a similar generalization bound can be derived for RpGAN.
\fi

\iffalse 
Of course, there is much space in improving the generalization bound of \citet{arora2017generalization}, and that is an orthogonal line of research. 
Our goal of this paper is mainly to study how to ``\textit{successfully minimize the empirical distance}'' (i.e., fit the empirical distribution $\mu$). 
\fi 
 \iffalse 
     Technically speaking, the form of \citet{arora2017generalization} does not cover RSGAN, but
      the proof can be easily extended to RSGAN. We briefly explain the proof idea below. 
   The proof of  \citet{arora2017generalization} is based
     on a standard covering argument which uses a $\epsilon$-net to cover
     the parameter space, and then bound the generalization error based on the covering error.
     This proof does not rely too much  on the specific form of the loss function. Therefore, it applies to a large set of loss functions including RSGAN. 
     \fi

\begin{wrapfigure}{r}{0.28\textwidth}
\vspace{-0.5cm}
 \includegraphics[width=1\linewidth, height=2cm]{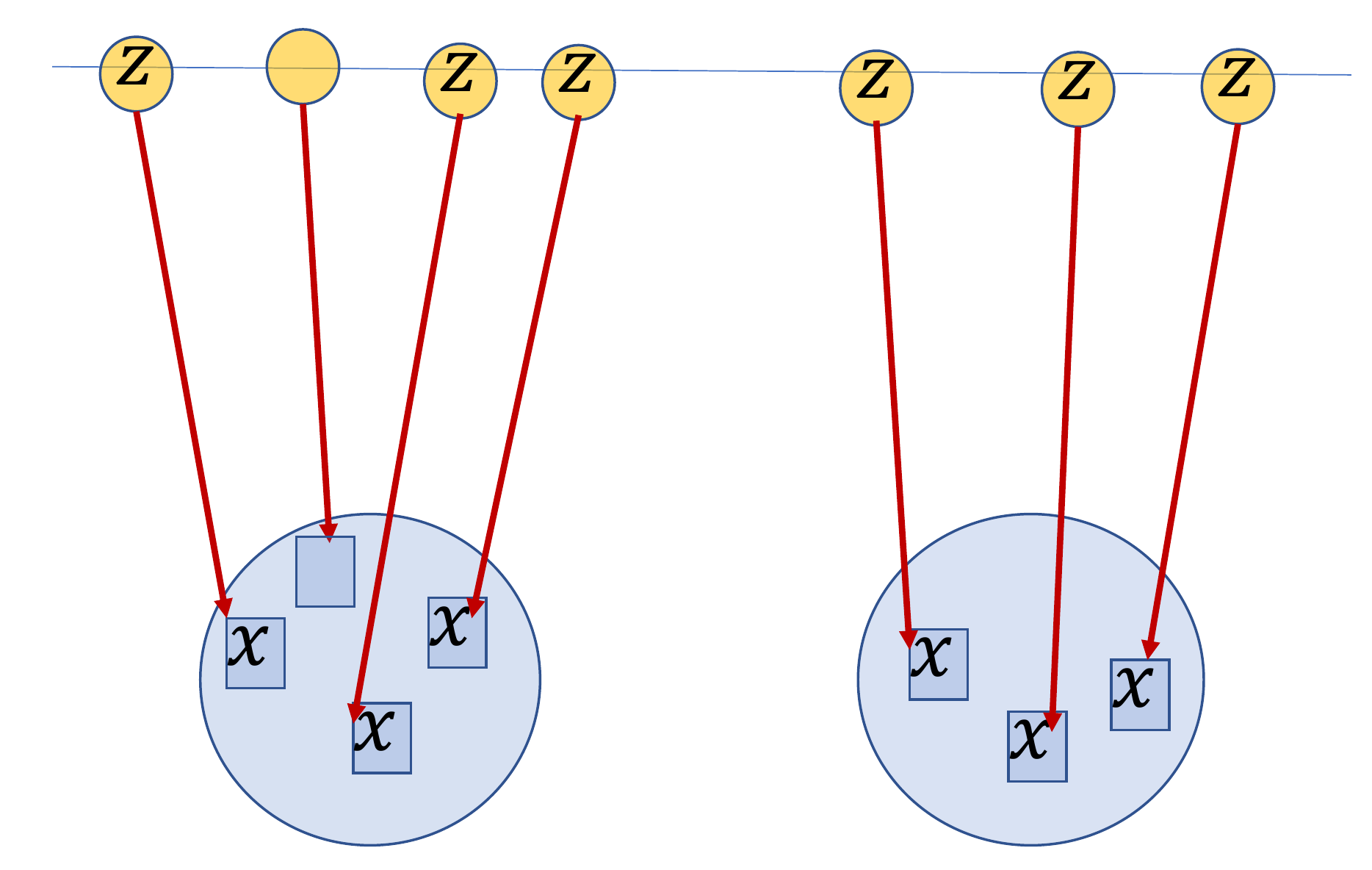}
 \captionsetup{font={scriptsize}} 
      \caption{How to generate new point.}
\label{fig:fig9_generalization}
 \vspace{-0.5cm}
\end{wrapfigure}
We provide some intuition why fitting the empirical data distribution via a GAN
may avoid overfitting. Consider  learning a two-cluster distribution as shown in Fig.~\ref{fig:fig9_generalization}.
During training, we learn a generator that maps the latent
samples $z_i$ to $x_i$, thus fitting the empirical distribution.
If we sample a new latent sample  $z_i$, then the generator will map $z_j$ to a new point $x_j$ in the underlying data distribution (due to the 
continuity of the generator function).
Thus the continuity of the generator (or the restricted power
of the generator) provides regularization for achieving generalization.

   \iffalse 
Third, instead of studying directly how to learn the true distribution, we
can decompose the problem into two parts: learning the empirical distribution (optimization part) and generalization to the true distribution (generalization part).
Generalization theory for GANs has been studied earlier (e.g.,~\citet{arora2017generalization}), while we focus on learning of the empirical distribution in this work. 
This decomposition resembles the decomposition in classical machine learning theory.
Instead of studying directly how to learn a classifier on certain data distribution, the problem is decomposed into two parts: learning the classifier on the empirical distribution (optimization part) and generalization to the true distribution (generalization part).
 It is very common to focus on one aspect in a paper; for instance,
 generalization theory papers (e.g.  \cite {bartlett2017spectrally}) often assume that the global minima  of the empirical optimization problem has been found, and
 optimization theory papers only study how to find the global minima
 (see Appendix \ref{sec: related works} which discussed many recent works
 on the pure optimization part of training deep neural nets).
One could criticize an optimization paper that it ignores the generalization theory
 and similarly could criticize a generalization paper that it ignores the convergence 
 analysis, but due to the complication of the problem it is often
 difficult to cover both sides (if so, it often comes with a price such
 as a much simpler model).
   \fi

\section{Proofs for Section \ref{sec: intuition and toy results} (2-Point Case)
and Appendix \ref{sec: imbalance} (2-Cluster Case)  }
\label{appen: toy proof}
We now provide the proofs for the toy results
(i.e., the case $n = 2$).

\subsection{Proof of  Claim \ref{2-dim GAN all values} and Corollary \ref{coro of bad basin in GAN} (for JS-GAN)}\label{app sub: proof of Claim 1 JS GAN 2 point}

 \textbf{Proof of  Claim \ref{2-dim GAN all values}}: 
We will  compute values of $ \phi_{\rm JS}(Y, X) $ for all $Y$.
Recall $ D $ can be any continuous  function with range $ ( 0 ,   1 ) .  $ 
Recall that
$ 
\phi_{\rm JS}(  Y , X )  
   =   \sup_{  D  }   \frac{1}{ 2 n } \left[    \sum_{i = 1}^n \log ( D( x_i ) )  +   \sum_{i = 1}^n \log ( 1 -  D( y_i  ) ) \right]. 
   $ 
Consider four cases. 
Denote a multiset $ \mathcal{Y} = \{ y_1, y_2 \} $,
and let $ m_i = |\mathcal{Y} \cap \{ x_i \}  | , i \in\{ 1, 2\}. $

\textbf{Case 1} (state 1): %
$ m_1 = m_2 = 1 $. 
Then the objective is 
{\equationsizeReg
\begin{align*}
&\sup_{  D }  \frac{1}{2} \left[  \frac{1}{2} \log (D(x_1) ) +  \frac{1}{2} \log( 1 - D( x_1 ) ) + \frac{1}{2} \log (D(x_2 ) ) +  \frac{1}{2} \log( 1 - D( x_2 ) )  \right].
\end{align*}
}
The optimal value is  $ -  \log 2 $,  which is achieved when $ D( x_1 ) = D(x_2 ) = \frac{1}{2 } $.

\textbf{Case 2} (state 1a): $ \{ m_1, m_2 \} = \{ 0, 1\}. $
WLOG, assume $m_1 = 1, m_2 = 0$, and $y_1 = x_1 ,  y_2 \notin \{ x_1, x_2  \} . $ 
The objective becomes
{\equationsizeReg
\[ 
\sup_{  D } \frac{1}{2} \left[ \frac{1}{2} \log (D(x_1) )  + \frac{1}{2} \log (D(x_2 ) ) +  \frac{1}{2} \log( 1 - D( x_1 ) )+  \frac{1}{2} \log( 1 - D( y_2 ) )
\right] .
\] 
}
The optimal value $ - \log 2/2  $ is achieved when 
$ D(x_1 ) =  1/2 $, $ D(x_2)  \rightarrow 1  $ and $  D(y_2) \rightarrow 0 $.

\textbf{Case 3} (state 1b): $ \{ m_1, m_2 \} = \{ 0, 2 \}. $
WLOG, assume $ y_1 = y_2 = x_1.  $ The objective becomes
{\equationsizeReg
\[ 
\sup_{  D } \frac{1}{2} \left[ \frac{1}{2} \log (D(x_1) ) +   \log( 1 - D( x_1 ) ) + \frac{1}{2} \log (D(x_2 ) )  \right]  .
\] 
}
The optimal value  $  \frac{1}{4} \log \frac{1}{3} + \frac{1}{2} \log \frac{2}{ 3 }  \approx -0.4774  $   is achieved
when $ D(x_1) = 1/3 $ and $ D(x_2 ) \rightarrow  1 $.

\textbf{Case 4} (state 2):  $ m_1 = m_2 = 0 ,$ i.e., 
$ y_1, y_2 \notin \{ x_1, x_2 \}  $.
The objective is:
{\equationsizeReg
\[ 
\sup_{  D } \frac{1}{2} \left[  \frac{1}{2} \log (D(x_1) )  + \frac{1}{2} \log (D(x_2 ) ) +  \frac{1}{2} \log( 1 - D( y_1 ) )+  \frac{1}{2} \log( 1 - D( y_2 ) )  \right] .
\] 
}
These terms are independent, thus each term can achieve its supreme $ \log 1 = 0 $. Then the optimal value $ 0 $ is achieved when 
$ D(x_1 ) = D(x_2) \rightarrow 1 $ and $ D(y_1) = D(y_2) \rightarrow 0 $.

\iffalse 
We re-state the corollary below.
\begin{coro}
	Suppose   $ \bar{Y} = (   \bar{y}_1,   \bar{y}_2   ) $  satisfies  
	$  \bar{y}_1=   \bar{y}_2  = x_1 $, then it is a sub-optimal strict  local minimum of the problem. 
\end{coro}
\fi 

\textbf{Proof of  Corollary \ref{coro of bad basin in GAN}:} 
Suppose $ \epsilon $ is the minimal non-zero distance between two points of $x_1, x_2,  y_1, y_2 . $
Consider a small perturbation of $ \bar{Y} $ as $Y =  ( \bar{y_1} + \epsilon_1 ,  \bar{y_2} + \epsilon_2       ) $,
where $  | \epsilon_i  | <  \epsilon    $.
We want to verify that
\begin{equation}\label{perturbed smaller, 1st}
\phi( \bar{Y} , X )   >    \phi( Y  , X )   \approx  -0.48. 
\end{equation}

There are two possibilities. 
\textbf{Possibility 1}:  $\epsilon_1 = 0$ or $\epsilon_2  = 0 $. WLOG, assume $\epsilon_1 = 0$, then we must have
$ \epsilon_2 > 0$.  Then we still have $ y_1 = \bar{y}_1 = x_1$.
Since the perturbation amount is small enough, we have $ y_2 \notin \{ x_1 , x_2 \} $.
According to Case 2 above, we have 
$
\phi( \bar{Y} , X )   = - \log 2 \approx -0.35 >  -0.48. 
 $ 
\textbf{Possibility 2}:   $\epsilon_1 >  0, \epsilon_2 > 0$. 
Since the perturbation amount $\epsilon_1$ and $\epsilon_2$  are small enough, 
we have   $   y_1 \notin \{ x_1 , x_2 \},    y_2 \notin \{ x_1 , x_2 \} $. 
According to Case 4 above, we have 
$
\phi( \bar{Y} , X )   = 0  >  -0.48.  
$ 
Combining both cases, we have  proved Eq.~\eqref{perturbed smaller, 1st}.  \hfill$\Box$

\subsection{Proof of Claim \ref{claim: 2-dim RS-GAN all values} (for RS-GAN) }
This is the result of RS-GAN for $n = 2 $.
WLOG, assume $ x_1 = 0, x_2 = 1 $. 
Denote $ g_{\rm RS}(Y) \triangleq  \phi_{\rm RS}(Y, X) =  \sup_{  f  \in C(\dR^d )  } \frac{1}{2} \log  \frac{1}{   1 +   \exp( f (  0 )  -  f  ( y_1 )   )   }   + \frac{1}{2} \log  \frac{1}{   1 +  
	\exp(  f (  1 )  -  f  ( y_2 )    )    }  .
$ 
Denote $ m_i = | \{ y_i \} \cap \{ x_i \}  | , i = 1, 2 $; note this definition
is different from JS-GAN in App. \ref{app sub: proof of Claim 1 JS GAN 2 point}. 
Consider three cases. 

\textbf{Case 1}: $ m_1 = m_2 = 1 $. 
If $ y_1 = 0 ,  y_2 = 1$, 
then
$ 
g_{\rm RS}(Y)  =  \frac{1}{2}  [ \log 0.5 + \log 0.5 ] = - \log 2 \approx -0.6937. 
$ 
If $ y_1 = 1 ,  y_2 =  0 $,  then 
{\equationsizeReg
\begin{align*}
g_{\rm RS}(Y)   &= \sup_{  f  \in \mathcal{F}  }    \frac{1}{2} \log  \frac{1}{   1 +   \exp( f (  0 )  -  f  ( 1  )   )   }   + 
\frac{1}{2} \log  \frac{1}{   1 +  
	\exp(  f (  1 )  -  f  ( 0  )    )    }           \\
& = \sup_{  t \in \mathbb{R} }  
\left[  \frac{1}{2} \log  \frac{1}{   1 +   \exp(  t  )   }  +    \frac{1}{2} \log  \frac{1}{   1 +   \exp(  - t  )   } \right]  =   -\log 2  .  
\end{align*}
} 

\textbf{Case 2}:  
$ \{ m_1, m_2 \} = \{ 0, 1\}. $
WLOG, assume $y_1 = 0 ,  y_2  \neq 1 $  (note that $y_2 $ can be $0$). 
Then 
{\equationsizeReg
\begin{align*}
g_{\rm RS}(Y) 
&\geq    \sup_{f \in \mathcal{F}     }
\frac{1}{2}  \log   \frac{1}{   1 +   \exp( f( 0 ) - f( 0 )   )   }   + \frac{1}{2} \log  \frac{1}{   1 +   \exp(   f( 1 ) - f( y_2 )   )    }      \\ 
	&=     -	\frac{1}{2} \log 2   +     \sup_{ t \in \mathbb{R}   }       \frac{1}{2} \log  \frac{1}{   1 +   \exp(   t   )    }  =   -	\frac{1}{2} \log 2   \approx  -0.3466. 
\end{align*}
The  value is achieved when $ f( 1 ) - f( y_2 )  \rightarrow   -  \infty   $.
}

\textbf{Case 3}:
$ m_1 = m_2 = 0 . $
Then
{\equationsizeReg
\begin{align*}
g_{\rm RS}(Y) 
  \geq &   \sup_{f \in \mathcal{F}      }
\frac{1}{2} \log   \frac{1}{   1 +   \exp( f( 0 ) - f( y_1 )   )   }   + \frac{1}{2} \log  \frac{1}{   1 +   \exp(   f( 1 ) - f( y_2 )   )    }      \\ 
	=  &  \sup_{ t_1 \in \mathbb{R},  t_2 \in \mathbb{R}  }   
\frac{1}{2} \log  \frac{1}{   1 +   \exp(  t_1  )    }  +  \frac{1}{2} \log  \frac{1}{   1 +   \exp(    t_2  )    }  = 0  . 
\end{align*}
}
The  value is achieved when $    f( 1 ) -   f(  y_2  )  \rightarrow  -  \infty  $ and $   f(0 )  - f(y_2)  \rightarrow  -  \infty .  $

The global minimal value is $- \log 2$, and the only global minima are $ \{ y_1, y_2 \} = \{ x_1, x_2  \}$.
In addition, from any $Y$, it is easy to verify that there is a non-decreasing path from $Y$ to a global minimum.

\subsection{Proofs for 2-Cluster Data (Possibly Imbalanced) }\label{app-sub: proof of imbalanced}

\textbf{Proof of Claim \ref{claim of JS GAN imbalanced}. }
The proof is built on the proof of Claim \ref{2-dim GAN all values} in Appendix
 \ref{app sub: proof of Claim 1 JS GAN 2 point}. 

We first consider a special case
$ |  \mathcal{X}_1 \!\cap\! Y |\!\!=\!\! m , | \mathcal{X}_2 \!\cap\!  \mathcal{Y} |\!\!=\!\! 0  $. 
This means that $m$ generated points are in mode 1, and the rest are in neither modes.
The loss value can be computed as follows: 
{\equationsizeReg
\begin{align*}
 \phi_{\rm JS}(Y, X) = & \frac{1}{2n} \left[ \alpha n \log (\frac{\alpha n}{\alpha n+m}) +  m \log( 1 - \frac{\alpha n}{\alpha n+m} ) \right]\\
=&  \frac{\alpha}{2}\log (\alpha n) + \frac{m}{2n}\log m - \frac{\alpha n+m}{2n}\log (\alpha n+m) )  = q_{\alpha}(m).
\end{align*}
}
In general, if $ | \mathcal{X}_1 \!\cap\!  \mathcal{Y}  |\!\!=\!\! m_1  , | 
\mathcal{X}_2 \!\cap\! \mathcal{Y} |\!\!=\!\! m_2  $,
then 
$ \phi_{\rm JS}(Y, X)$  can be divided into three parts: 
 the first part is the sum of the terms that contain $x_1$ (including $x_i$'s and $y_j$'s
 that are equal to $x_1$),
 the second part is the sum of the terms that contain $x_n$ (including
 $x_i$'s and $y_j$'s that are equal to $x_n$), and the third
 part is the sum of the terms that contain $y_j$'s that are not in $\{ x_1, x_n\}$.
 Similar to Case 3 above, the value of the first part is
 $ q_{\alpha}(m_1) $, and the value of the second part is $ q_{1 - \alpha}(m_2) $.
 Similar to the above special case, the value of the third part is $ 0 $.
 Therefore, the loss value is
$  \phi_{\rm JS}(Y, X)  = q_{\alpha}(m_1) + q_{1 - \alpha}(m_2) .  $

It is easy to show that $ q_{\alpha}(m_1) + q_{1 - \alpha}(m_2) \geq - \log 2 $,
and the equality is achieved iff $ m_1 = n \alpha, m_2 = n (1 - \alpha )$,
i.e.,  $\mathcal{Y} = \mathcal{X}_1 \cup  \mathcal{X}_2 $.
 $ \Box  $

\textbf{Proof sketch of Corollary \ref{coro of strict local min imbalanced}.} 
After a small enough perturbation, we must have 
$   m_1 \triangleq  | \mathcal{X}_2 \!\cap\! \mathcal{Y} | \leq  n_1 
,  m_2 \triangleq  | \mathcal{X}_1 \!\cap\!  \mathcal{Y}  | \leq  n_2   . $ 
Since $q_{\alpha}(m)$ and  $q_{1 - \alpha}(m)$
are strictly decreasing functions of $m$, we have 
\[ \phi(Y, X) = q_{\alpha}(m_1 ) +  q_{1 - \alpha}(m_2 )
\leq q_{\alpha}( n_1 ) +  q_{1 - \alpha}( n_2 )=  \phi( \hat{Y} , X) .  \]
The equality holds iff $ (m_1, m_2) = (n_1, n_2) $, i.e., $ Y =  \hat{Y}. $
This means that if $ ( n_1, n_2) \neq ( n \alpha, n (1 - \alpha) ), $
then $ \hat{Y} $ is a sub-optimal strict local minimum.  $\Box $

We skip the detailed proof, since other parts are  similar to
the proof of Corollary~\ref{coro of bad basin in GAN}. 

\iflonger 
We need to prove the following: there exists $\epsilon > 0 $
such that for any $ \| Y - \hat{Y} \|_F \leq \epsilon $, we have
$ \phi(Y, X) >  \phi( \hat{Y}, X) .  $ 
In other words, the loss value at $\hat{Y} $ is strictly smaller
than the loss values of $Y$ in a small neighborhood of $\hat{Y}$.

The assumption $n_1 + n_2 = n$ means that each $\hat{y}_j $ is either identical to $x_1$ or $x_2 $. As a result, a small enough perturbation of $ \hat{Y} $ will not move
a point in one mode to another.
More formally, if we pick any $ \epsilon < \| x_1 - x_n \| ,$ then 
for any $ \| Y - \hat{Y} \|_F \leq \epsilon $, 
we have $  y_j = \hat{y}_j$ or $ y_j \notin \{ x_1, x_n \} ,$ $  ~ \forall ~ j$ . 
For such $Y$, we must have 
$   m_1 \triangleq  | \mathcal{X}_2 \!\cap\! \mathcal{Y} | \leq  n_1 
,  m_2 \triangleq  | \mathcal{X}_1 \!\cap\!  \mathcal{Y}  | \leq  n_2   . $ 
Define
$ \eta(t) =  t \log t -  ( \alpha n + t  ) \log( \alpha n + t) $,
then  $ \eta'(t) = \log t + 1 - \log( \alpha n + t )  - 1 = 
 \log t  - \log( \alpha n + t ) < 0 $.
 This further implies $q_{\alpha}(m)$ is strictly decreasing function of $m$.
 Similarly,  $q_{1 - \alpha}(m)$ is strictly decreasing function of $m$.

According to Claim \ref{claim of JS GAN imbalanced}, 
we have 
\[ \phi(Y, X) = q_{\alpha}(m_1 ) +  q_{1 - \alpha}(m_2 )
\leq q_{\alpha}( n_1 ) +  q_{1 - \alpha}( n_2 )=  \phi( \hat{Y} , X) .  \]
The equality holds iff $ (m_1, m_2) = (n_1, n_2) $, i.e., $ Y =  \hat{Y}. $
 Therefore,  $ \hat{Y} $ is a strict local minimum. 
 
 It is easy to show $ q_{\alpha}( n_1 ) +  q_{1 - \alpha}( n - n_1 ) 
  >  q_{\alpha}( n \alpha  ) +  q_{1 - \alpha}( n - n \alpha )   $
 if $  n_1 \neq  n \alpha $. 
This means that if $ ( n_1, n_2) \neq ( n \alpha, n (1 - \alpha) ), $
then $ \hat{Y} $ is a sub-optimal strict local minimum.
$ \Box $
\fi

\iffalse 
\textbf{Case 3 Proof}: if $|  X_1 ~ \cap ~ Y |\!\!=\!\! 0  , |  X_2 \!\cap\! Y |\!\!=\!\! m,$ then $m$ of the generated points are in mode $X_2$, the rest are in neither modes.
\begin{align*}
& \frac{1}{2n} \left[ (1-\alpha) n \log (\frac{(1-\alpha)n}{(1-\alpha)n+m}) +  m \log( 1 - \frac{(1-\alpha)n}{(1-\alpha)n+m} ) \right]\\
=& \frac{1}{2n} \left[ (1-\alpha)n\log((1-\alpha)n) - (1-\alpha)n \log((1-\alpha)n + m) + m\log m - m \log((1-\alpha)n + m) \right]\\
=&  \frac{1-\alpha}{2}\log((1-\alpha)n) + \frac{m}{2n}\log m - \frac{(1-\alpha)n+m}{2n}\log((1-\alpha)n+m) .
\end{align*}
\fi 

\iffalse 
Remark: In the above corollary, we do not consider all possible points in the landscape.
  In particular, we do not consider the case $|  X_1 \!\cap\! Y |\!\! >0   , |  X_2 \!\cap\! Y |\!\!  > 0 $ but $  \mathcal{Y} \neq  \mathcal{X}_1 \cup  \mathcal{X}_2  $.
\fi

\section{Proof of Theorem \ref{prop: GAN all values, extension} (Landscape of Separable-GAN)}
\label{appen: proof of Thm 1}

Denote  $ F( D; Y )  =  \frac{1}{2n } 
       \sum_{i = 1}^n    [  h_1 (  f( x_i ) ) + h_2 ( - f(y_i ) )  ]  \leq 0 $
(since $ h_i(t ) \leq 0 , i=1, 2 $ for any $ t $).

\textbf{Step 1:} \textbf{Compute the value of $  \phi(\cdot,  X  )   $   for each $Y$.}
For any $i $, 
denote 
$    M_i =  \{  j:   y_j = x_i   \}   ,    m_i = | M_i | \geq 0,   i = 1, 2, \dots, n.   $
Then $ m_1 + \dots + m_n = n $.
Denote $ \Omega  =  M_1 \cup M_2 \dots \cup M_n  $.
 Then 
 {\equationsize 
 \begin{subequations}
\begin{align}
\phi (  Y , X )  = 
&    \frac{1}{2n }  \sup_{ f }  \sum_{i = 1}^n    [  h_1 (  f( x_i ) ) + h_2 ( - f(y_i ) )  ]    \notag     
=    \frac{1}{2 n }   \sup_{  f }   \left(  \sum_{ i = 1}^n   [ h_1 (  f_1 ( x_i ) )  +   m_i  h_2 ( - f(x_i ) ) ] 
  + \sum_{  j  \notin  \Omega  }   h_2 ( - f(y_i ) )      \right)      \notag       \\
\overset{  (i)   }{  =  }   &     \frac{1}{ 2 n }
   \left(   \sum_{ i = 1}^n    \sup_{  t_i \in \mathbb{R}  }  [  h_1 (  t_i  )  +   m_i  h_2 ( - t_i ) ]  + 
      |\Omega^c|   \sup_{  t \in \mathbb{R}  }   h_2(t) 
     \right) 
\overset{  (ii)   }{  =    }         \frac{1}{ 2 n }  \sum_{i = 1}^n  \xi(m_i )    \label{eq: App-L: expression of phi}    \\
\overset{  (iii)   }{  \geq   }  &    \frac{1}{ 2 n }  \sum_{i = 1}^n  m_i \xi( 1 ) = \frac{1}{2} \xi(1). \notag 
\end{align}
\end{subequations}
} 
Here (i) is because  $ f(y_j) ,   j \in \Omega   $  are independent of 
$ h(x_i)$'s and thus can be any values; (ii) is by the definition $ \xi(m)
  = \sup_t [  h_1(t) + m h_2( -t )  ] $
 and Assumption \ref{assumption 1}
  that $ \sup_t h_2(t) = 0$;
(iii) is due to the convexity of $ \xi  $ (note that $\xi$ is the supreme of linear functions). 
Furthermore, if there is a certain $ m_i > 1 $, then $ \xi(m_i )  + (m_i - 1) \xi(0) =\xi(m_i ) > m_i \xi(1) $ (according
to Assumption \ref{assumption 2}), causing
 (iii) to become a strict inequality.
Thus the equality in (iii) holds iff
 $m_i = 1,  ~\forall i $, i.e., $ \{ y_1, \dots, y_n\} = 
 \{ x_1, \dots, x_n\} . $
 Therefore, we have proved that  $  \phi(  Y , X ) $ achieves the minimal value $  \frac{1}{ 2  }   \xi( 1 ) $ iff $ \{ y_1, \dots, y_n\} = 
 \{ x_1, \dots, x_n\} . $

\textbf{Step 2:} \textbf{Sufficient condition for strict local-min. }
Next, we show that if $ Y $ satisfies $  m_1 + m_2 + \dots + m_n = n $    
then $  Y  $ is a strict local-min. 
Denote 
$ 
\delta   =  \min_{ k \neq l  }   \|  x_k - x_l  \|. 
$
Consider a small perturbation of $ Y  $ as $ \bar{Y} =
( \bar{y_1}  ,  \bar{y_2}  , \dots,   \bar{y_n }      )   =  (  y_1  + \epsilon_1 ,  
y_2  + \epsilon_2 ,   \dots,   y_n  + \epsilon_n     ) $, 
where $  \| \epsilon_j  \| <  \delta   , \forall j  $ and 
$ \sum_j  \| \epsilon_j \|^2 > 0 . $
We want to prove 
$ \phi_{\rm }( \bar{Y} , X )   >    \phi_{\rm }( Y  , X )  .  $ 

Denote 
$
\bar{m}_i    =   | \{  j:   \bar{y}_j = x_i   \} |,     i  = 1, 2, \dots, n.
$
Consider an arbitrary $j $. Since $ y_j \in \{ x_1, \dots, x_n\} $, there must
be some $i$ such that $y_j = x_i $. Together with 
$   \|  \bar{y}_j  - y_j  \|  =  \|  \epsilon_j  \|    < \delta   =  \min_{ k \neq l  }   \|  x_k - x_l  \|  $,
we have $ \bar{y}_j  \notin  (   \{ x_1, x_2, \dots, x_n  \}  \backslash \{ x_i  \} ). $
In other words, the only possible point in $ \{ x_1, \dots, x_n \} $ that can coincide with 
$  \bar{y}_j $ is $ x_i  $, and this happens only when $ \epsilon_j  = 0 $.
This implies $\bar{m}_i   \leq m_i , \forall i $. %
Since we have assumed $ \sum_j  \| \epsilon_j \|^2 > 0  $, for at least
one $i$ we have $ \bar{m}_i < m_i. $
Together with Assumption \ref{assumption 3} that $ \xi( m ) $ is 
a strictly decreasing function in $ m \in [ 0, n ] $,  we have
$ \phi_{\rm }( \bar{Y} , X ) =   \frac{1}{ n }  \sum_{i = 1}^n   
\xi( \bar{m}_i )   
>  \frac{1}{ n }  \sum_{i = 1}^n  \xi( m_i ) 
=     \phi_{\rm }(  Y  , X  ).  
$

\textbf{Step 3}: \textbf{Sub-optimal strict local-min.}
Finally, if $ Y $ satisfies that  $  m_1 + m_2 + \dots + m_n = n $   and  $ m_k  \geq 2  $ for some $ k $,  then  $\phi_{\rm }(  Y , X )  >   \frac{1}{2}\xi(0) . $
Thus $Y $ is a sub-optimal strict local minimum.
\textbf{Q.E.D.}

\textbf{Remark 1}:  $\xi( m ) $ is convex (it is the supreme
of linear functions),  thus we always have 
$ \xi(m) =  \xi( m ) + (m - 1) \xi(0 ) \geq m \xi(1) $. 
Assump. \ref{assumption 2} states that the inequality is strict, thus
it is slightly stronger than the convexity of $\xi$.  
By Assump. \ref{assumption 1}, 
we also have $ h_1(t) + (m+1 ) h_2( - t)  \leq  h_1(t) + m h_2( - t)   $,
thus $ \xi(n ) \leq  \xi( n -1 ) \leq \dots \leq \xi(0) $.
Assumption \ref{assumption 3} states that the inequalities are strict. 
This holds if the maximizer of $ h_1(t) + m  h_2( - t)  $
does not coincide with the maximizer of $h_2(t)$.
Intuitively, if $h(t)$ is ``substantially different'' from a constant function, 
then Assump. \ref{assumption 2}  and Assump. \ref{assumption 3} hold. 

\textbf{Remark 2}: The upper bound $0$ in Assumption~\ref{assumption 1} is not essential, and can be relaxed to any
finite numbers (change other two assumptions accordingly). We skip the details.

\iflonger 
More specifically, 
 Theorem \ref{prop: GAN all values, extension}
holds under the followings assumptions (we skip the proof): 
\begin{assumption}\label{assump gen 0}
 $ \tau_1 \triangleq   \sup_t h_1(t) < \infty  ,  \tau_2 \triangleq   \sup_t h_2(t) < \infty. $
\end{assumption} 
\begin{assumption}\label{assump gen 1}
$ \xi(m) + ( m - 1 ) \tau_1  >  m \xi(1) ,   \forall ~  m  > 1,$
where $ \xi(m) =\sup [ h_1(t) +  m h_2(t)   ].  $ 
  \end{assumption} 
\begin{assumption}\label{assump gen 2}
$ \xi(m)    <    \xi(m - 1 ) +   \tau_2 ,   \forall ~  m  \geq 1. $ 
\end{assumption} 
\fi

\section{ Proof of Theorem \ref{prop: RS-GAN all values, extension} (Landscape of RpGAN)  }
\label{appen: proof of Thm 2}
This proof is the longest one in this paper. We will focus on a proof for the special case of RS-GAN. The proof for general RpGAN is quite similar, and presented in Appendix \ref{appen:
proof of Thm 4}. %
Recall
$ 
 \phi_{\rm RS}( Y, X )  =  \sup_{  f }   \frac{1}{ n }  \sum_{i = 1}^n \log \frac{1}{  1 +   \exp( f  ( y_i )   -  f (  x_i )    )       } .
$

\begin{thm2}\label{prop: special case of Thm 2}
(special case of Theorem \ref{prop: RS-GAN all values, extension} for RS-GAN)
	Suppose   $ x_1 , x_2, \dots, x_n  \in \mathbb{R}^d $ are distinct. 
	The global minimal value of $ \phi_{\rm RS}( Y, X ) $ 
	is $ -  \log 2 $,  which is achieved iff 
	$  \{ x_1, \dots, x_n  \} = \{  y_1, \dots, y_n  \} $.
	Furthermore, any point is global-min-reachable for the function. 
\end{thm2}
\iffalse 
\fi 

\textbf{Proof sketch.}
We compute the value of $g(Y)  = \phi_{\rm RS}( Y, X ) $ for any $Y$,
using the following steps: 

(i) We build a graph with vertices representing distinct values of $x_i, y_i$ 
and draw directed edges from $ x_i$ to $ y_i$. 
This graph can be decomposed into cycles and trees.

(ii) Each vertex in a cycle contributes $ - \frac{1}{n} \log 2 $ to the value $g(Y)$.

(iii) Each vertex in a tree contributes $ 0 $ to the value $g(Y)$.

(iv) The value $g(Y)$ equals $ - \frac{1}{n} \log 2 $ times the number of vertices in the cycles.

The outline of this section is as follows.
In the first subsection, we analyze an example as warm-up. 
Next,  we prove 
Theorem \ref{prop: special case of Thm 2}.
The proofs of some technical lemmas will be provided in the 
following subsections.  Finally, in Appendix \ref{appen:
proof of Thm 4} we present the proof for Theorem \ref{prop: RS-GAN all values, extension}.

\subsection{Warm-up Example}

We prove that if $ \{ y_1, y_2, \dots, y_n   \}  = \{  x_1, \dots, x_n  \}$, then $Y$ is a global minimum of   $  g(Y) $.

Suppose $ y_i  = x_{ \sigma(i) } $, where $ ( \sigma(1), \sigma(2),  \dots, \sigma(n) ) $ is a permutation of $ (1,2, \dots, n) $.
\iflonger 
We view $ \sigma$ as  a mapping from $ \{  1, 2, \dots, n   \} $ to $   \{  1, 2, \dots, n   \}  $.
Pick an arbitrary $ i  $, then in  the infinite sequence $ i ,   \sigma(i),  \sigma (\sigma(i)),  \sigma^{(3)}(i), \dots   $ there exists at least two  numbers that are the same.
Suppose $ \sigma^{(k_0 )}(i) = \sigma^{( k_0 + T )} (i) $ for some $k_0 , T$,
then since $\sigma$ is a one-to-one mapping we have $ i  = \sigma^{(  T )} (i) . $   
Then we obtain a cycle $  C =  (   i,  \sigma(i),   \sigma^{(2)}(i),  \dots,  \sigma^{(T-1)}( i )   ) . $
The permutation defines a bipartite graph (add a directed edge from $i$ to $n + \sigma(i)$,
and and edge from any $n + i$ to $i$), and the connected components
 of this graph are cycles. 
\fi 
We can divide $ \{ 1, 2, \dots, n   \} $ into finitely many cycles 
$  C_1,  C_2, \dots, C_K  $, where each cycle  $ C_k  =  (    c_k(1) ,  c_k(2),  \dots, c_k( m_k )   )  $ satisfies $  c_k( j + 1 ) = \sigma(  c_k (j) ) ,  j \in\{ 1, 2, \dots,  m_k \}    $. Here $ c_k( m_k +1 ) $ is defined as $ c_k (1) $. 
Now we calculate the value of $ g(Y) $.
{\equationsize 
\begin{align*}
g(Y)  &  =  \sup_{  f }   \frac{1}{ n }  \sum_{i = 1}^n \log \frac{1}{  1 +   \exp( f  ( y_i )   -  f (  x_i )    )     )  } 
\overset{ \text{(i)} }{= }  -   \inf_{  f }   \frac{1}{ n }  \sum_{ k = 1}^K      
\sum_{ i \in C_k  }    \log   \left(  1 +   \exp( f  ( y_i )   -  f (  x_i )    )       \right)    \\
&   =  - \inf_{  f }   \frac{1}{ n }  \sum_{ k = 1}^K      
\sum_{ j = 1  }^{ m_k   }    \log  \left(  1 + 
e^{ f  ( x_{  c_k( j + 1 ) } )   -  f (  x_{  c_k( j ) } ) ) }  \right)  
\overset{ \text{(ii)} }{= }    -   \frac{1}{ n }  \sum_{ k = 1}^K      
\inf_{  f }   \sum_{ j = 1  }^{ m_k  }    \log  \left(  1 + 
e^{ f  ( x_{  c_k( j + 1 ) } )   -  f (  x_{  c_k( j ) } )  }   \right)    \\
&   =    -   \frac{1}{ n }  \sum_{ k = 1}^K
\inf_{   t_1, t_2, \dots, t_{m_k} \in \mathbb{R} }  
\left[   \sum_{ j = 1  }^{ m_k - 1  }   \log    \left(  1 +   \exp( t_{j+1}  -    t_j     )  \right)  +  \log   \left(     1 +   \exp( t_{ 1 }  -    t_{m_k}    )  \right)  \right]     \\
&   \overset{ \text{(iii)} }{= }    -   \frac{1}{ n }  \sum_{ k = 1}^K      
m_k    \log (  1 +   \exp( 0    )  )   =     - \log 2. 
\end{align*}
}
Here (i) is because $ \{ 1,2, \dots, n  \} $ is the combination of $C_1, \dots, C_K $ and 
$  i \in   C_k $ means that $i  =  c_k(j) $ for some $j$. 
(ii)  is because  $C_k $'s are disjoint and $f$ can be any continuous function; more specifically, the choice of $ \{ f(x_i  ) :  i \in  C_{k }   \} $   is independent of the choice of   $ \{ f(x_i  ) :  i \in  C_{ l  }   \} $ for any $k \neq l $, thus we can take the infimum over each cycle (i.e., put ``inf'' inside the sum over $k$). 
(iii) is because 
$  \sum_{ j = 1  }^{ m - 1  }   \log  (  1 +   \exp( t_{j+1}  -    t_j     )  )  +  
\log     \left(     1 +   \exp( t_{ 1 }  -    t_{m }    )  \right)  $
is a convex function of $t_1, t_2, \dots, t_{m } $ and the minimum  is achieved 
at $ t_1 = t_2 = \dots = t_{m } = 0$.

\iflonger 
\textbf{Warm-up example 2: mode dropped.}
Suppose $   y_j     \in \{  x_1, \dots, x_n  \} , \forall  j   $, and there exist
some   $ x_{i_0 }  $ that is not equal to any $y _j $.  
We show that  $Y$ is not a global minimum. 
The computation will illustrate how a ``free''  variable
reduces the objective value $g(Y)$ by \textit{at least} $ - \frac{1}{ n } \log 2 $.

Consider the term $  \log (   1  +  \exp (  f( y_{i_0}  )  -    f( x_{i_0}  )     )     )  $. Since $ x_{i_0} $ does not appear in any other term in
$ \sum_{i}  \log  (   1  +  \exp (  f( y_{i }  )  -    f( x_{i}  )     )     ) $,
the choice of $ f( x_{i_0} ) $ is free.
Therefore, no matter what  values of $ f(x_1), \dots, f(x_{i_0 - 1}), f(x_{i_0 + 1}) , \dots, f(x_n) $ and $ f(y_1), \dots, f(y_n) $ are, 
we can always pick $ f(x_{i_0}) $ so that  $  f( y_{i_0}  )  -    f( x_{i_0} )  \rightarrow  - \infty $,
making the term $  \log (   1  +  \exp (  f( y_{i_0}  )  -    f( x_{i_0}  )     )     )  \rightarrow  0 .$ Thus
{\footnotesize 
\begin{align*}
g(Y)  &  
=  -   \inf_{  f }   \frac{1}{ n }  \sum_{ i = 1}^n 
\log   \left(  1 +   e^ {  f  ( y_i )   -  f (  x_i )    }     )  \right)   =  -   \inf_{  f }   \frac{1}{ n }  \sum_{ i \neq i_0 } 
\log   \left(  1 +    e^ {  f  ( y_i )   -  f (  x_i )    }    )  \right)   + 0    
\geq - \frac{n-1}{ n } \log 2. 
\end{align*}
} 
\fi

\subsection{Proof of 
Theorem \ref{prop: special case of Thm 2}  }
This proof is divided into three steps. 
In Step 1, we compute the value of $g(Y)$ if all $y_i \in \{ x_1, \dots, x_n \}$. This is the major step of the whole proof. 
In Step 2, we compute the value of $g(Y)$ for any $Y $. 
In Step 3, we show that  there is a non-decreasing continuous path from $Y$ to a global minimum. 

\textbf{Step 1: Compute $g(Y)$ that all $y_i \in \{ x_1, \dots, x_n \}$}.
Define 
\begin{equation}\label{assumption that y lies in X}
  R(X) = \{ Y :   y_i  \in  \{  x_1, \dots, x_n  \}, \forall i \}. 
\end{equation}
\textbf{Step 1.1: Build a graph and decompose it.}
We fix $Y \in R(X)$. 
We build a directed graph $G = (V, A )$ as follows.
The set of vertices $V = \{  1, 2, \dots, n   \} $ represent $x_1, x_2, \dots, x_n$.
A directed edge  $  ( i  , j ) \in A    $  if $ y_i = x_j  $. In this case, there is a term   $ \log (1  +  \exp (   f(x_j) - f(x_i)  )  ) $    in $g(Y)$. 
It  is possible to have a self-loop $ (i, i) $, which corresponds
to the case $ y_i = x_i $. 
By Eq.~\eqref{assumption that y lies in X}, we have
{\equationsizeReg
\begin{equation}\label{transform to graph sum}
\begin{split}
g(Y)   &=  -   \inf_{  f }   \frac{1}{ n }  \sum_{ i = 1}^n 
\log   \left(  1 +   e^{ f  ( y_i )   -  f (  x_i )    }      \right) 
=   -   \inf_{  f }   \frac{1}{ n }  \sum_{  (i, j) \in A  } 
\log   \left(  1 +   e^{  f  ( x_j  )   -  f (  x_i )    }    \right) . 
\end{split}
\end{equation}
}

Each $y_i $ corresponds to a unique $x_j$, thus the out-degree of $ i $, denoted as 
$ \text{outdegree}(i) $, must be exactly $1$.
The in-degree of each $i$, denoted as $\text{indegree}(i)$,  can be any number in $ \{ 0, 1, \dots, n  \}$.  

We will show that the graph $G$ can be decomposed into the union of
cycles and trees (see App.~\ref{appen: proof of lemma 1} for its proof, and definitions of cycles and trees). 
A graphical  illustration is given in Figure \ref{fig of graph decomposition2}. 

\begin{lemma}\label{lemma: decomposition into cycles and trees}
	Suppose $ G = (V, A)$ is a directed graph and $\text{outdegree}(v) = 1, \forall v \in V$.
	Then: 
	
	(a) There exist cycles $ C_1, C_2, \dots, C_K $ and subtrees $ T_1, T_2, \dots, T_M $
	such that each edge  $v \in A $ appears either in exactly one of the cycles or in exactly one of the subtrees. 
	
	(b) The root of each subtree $u_m$ is a vertex of a certain cycle $C_k$.
	In addition, each vertex of the graph appears in exactly one of the following sets:
	$ V(C_1), \dots, V(C_K), V( T_1)\backslash \{ u_1 \}, \dots, V( T_M )\backslash \{ u_M \} $.

	(c) There is at least one cycle in the graph.
\end{lemma}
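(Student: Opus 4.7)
The plan is to treat $G$ as a functional graph: since every vertex has out-degree exactly one, the arc set is the graph of a function $\sigma : V \to V$ with $(v, \sigma(v)) \in A$. Part (c) then follows from the pigeonhole principle: pick any $v_0$ and consider the orbit $v_0, \sigma(v_0), \sigma^2(v_0), \ldots$; since $V$ is finite, some value must repeat, and if $\sigma^{k+T}(v_0) = \sigma^{k}(v_0)$ with $T \geq 1$ minimal, then $\sigma^k(v_0), \sigma^{k+1}(v_0), \ldots, \sigma^{k+T-1}(v_0)$ traces a cycle. So at least one cycle exists.

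Next I would let $V_C \subseteq V$ be the union of vertex sets of all cycles in $G$, and argue that distinct cycles are vertex-disjoint: if $v \in V(C) \cap V(C')$, then both $C$ and $C'$ must use the unique out-edge $(v, \sigma(v))$, and iterating $\sigma$ shows the two cycles coincide step by step. Call vertices in $V_T := V \setminus V_C$ \emph{transient}. For each transient $v$, the orbit $v, \sigma(v), \sigma^2(v), \ldots$ must eventually enter $V_C$: otherwise the orbit would stay in the finite set $V_T$ and, by pigeonhole, produce a cycle entirely inside $V_T$, contradicting the definition of $V_C$. Write $\tau(v) \in V_C$ for the first cycle vertex the orbit hits.

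Now I would define, for each $u \in V_C$, a candidate subtree
\[
T_u := \{u\} \cup \{ v \in V_T : \tau(v) = u \}, \qquad A(T_u) := \{ (v, \sigma(v)) : v \in T_u \setminus \{u\} \},
\]
and keep only those $T_u$ with $T_u \setminus \{u\} \neq \emptyset$ in the list $T_1, \ldots, T_M$. To confirm that $T_u$ is a tree rooted at $u$, I would check that every $v \in T_u$ has a unique directed path to $u$ obtained by iterating $\sigma$ until hitting $V_C$; this shows $T_u$ is (weakly) connected with $|T_u| - 1$ edges and no undirected cycle. Every edge $(v, \sigma(v)) \in A$ is classified uniquely: if $v \in V_C$ it belongs to the unique cycle through $v$, and if $v \in V_T$ it belongs to $T_{\tau(v)}$. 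This gives the edge partition in (a). For (b), the root of each $T_m$ lies in some $V(C_k)$ by construction, and every $v \in V$ belongs to exactly one of the sets $V(C_1), \ldots, V(C_K), V(T_1) \setminus \{u_1\}, \ldots, V(T_M) \setminus \{u_M\}$ — it goes into the unique cycle containing it if $v \in V_C$, and into $V(T_{\tau(v)}) \setminus \{u_{\tau(v)}\}$ otherwise.

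The main obstacle is not conceptual but bookkeeping: one must be careful that (i) the roots $u_m$ are correctly excluded from the tree-vertex sets in (b) so that cycle vertices are not double-counted, (ii) empty $T_u$'s (cycle vertices with no transient preimages) are dropped from the tree list, and (iii) the tree structure is verified with the correct edge orientation (edges of $T_u$ point \emph{toward} the root $u$, which is the orientation needed later to make the sum $\sum_{(i,j) \in A(T_u)} \log(1 + e^{f(x_j) - f(x_i)})$ drive to $0$ by choosing $f$-values at leaves sufficiently large). Once these points are handled, (a), (b), and (c) all fall out of the functional-graph decomposition.
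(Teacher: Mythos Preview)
Your proposal is correct and follows essentially the same approach as the paper: both treat the graph as a functional graph via the map $\sigma$, use finiteness/pigeonhole to locate cycles, show cycles are disjoint because out-degree is one, and then group the non-cycle (transient) vertices into trees according to the first cycle vertex their orbit hits. The paper phrases the tree roots as ``cycle vertices with in-degree at least $2$,'' which is equivalent to your ``keep only those $T_u$ with $T_u\setminus\{u\}\neq\emptyset$,'' and otherwise the arguments coincide.
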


\begin{figure}[H]
\vspace{-0.5cm}
	\centering
	\qquad
	\subfigure[Eg 1 for Lemma \ref{lemma: decomposition into cycles and trees}]{
		\label{fig3a2}
		\includegraphics[width=0.2\textwidth, height=2.5cm]{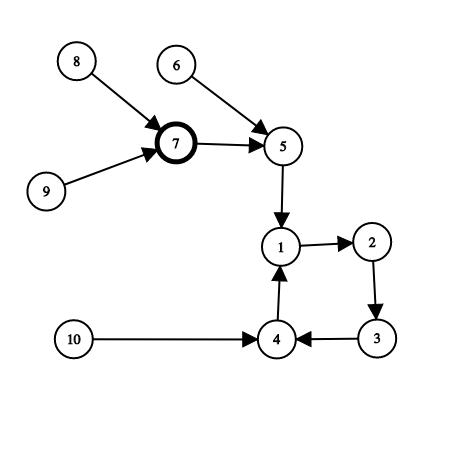}}
	\subfigure[Eg 2, with self-loop]{
		\label{fig3b2}
		\qquad\qquad
		\includegraphics[width=0.25 \textwidth, height= 2.5cm]{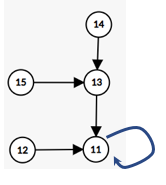}}\hfill 
	\subfigure[Example graph for general case]{ \label{another case}
		\includegraphics[width=0.3\textwidth, height= 2.5 cm]{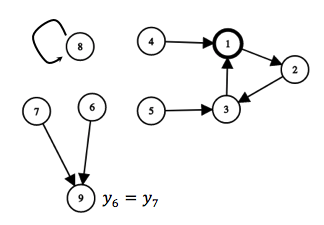}}\hfill
\captionsetup{font={scriptsize}}
	\caption{The first two figures are two connected component of a graph
	representing the case $y_i \in \{x_1,\dots, x_n \}, \forall i$.
	The first figure  contains $ 10$ vertices and $10$ directed edges.  It can be decomposed into a cycle $(1, 2, 3, 4)$ and two subtrees:
one subtree consists of edge $ (10, 4)$ and vertices $10 , 4$,
		and another consists of edges $ (8, 7), (9, 7), (7, 5), (6,5), (5, 1) $. 
	The second figure has one cycle being a self-loop, and two trees attached to it. 
		The third figure is an example graph of the  case
		that some $y_i \notin \{x_1,\dots, x_n \}$.
	In this example, $n = 8$ (so $8$ edges), and all $y_i$'s are in	$ \{x_1,\dots, x_n \} $ except $  y_6 , y_7 $.
	The two edges $ ( 6 , 9 ) $ and $(6, 9 )$
indicate the two terms $ h( f( y_6 ) - f(x_6) ) $ and $ h( f(y_7) - f(x_7) ) $ in $g(Y)$.  They have the same head $9$, thus $y_6 = y_7$. 
The vertice $ 9 $ has out-degree $0$, indicating that $ y_6 = y_7 \notin 
 \{x_1,\dots, x_n \}  $.   This figure can be decomposed into two cycles and three subtrees. Finally, adding a self-loop $(9, 9 )$ will generate a graph
 where each edge has outdegree $1$ (this is the reduction done in Step 2). 
 }
	\label{fig of graph decomposition2}
\end{figure}

Denote $ \xi(y_i , x_i ) = \log   \left(  1 +   e^{  f  ( y_i )   -  f (  x_i )   } \right)   $.
According to   Lemma \ref{lemma: decomposition into cycles and trees}, we have
{\equationsizeReg
\begin{equation}\label{g bound by g cyc}
\begin{split}
&  - n g(Y)    =   \inf_{  f }    \sum_{ i = 1}^n 
\xi(y_i , x_i )         
\geq      \inf_{  f }  \left[    \sum_{k=1}^K    \sum_{   i \in V(C_k ) } 
\xi(y_i , x_i ) 
\right]       
\triangleq     g_{\rm cyc}.     
\end{split}
\end{equation}
}

\iflonger 
{\equationsizeReg
\begin{equation}\label{g bound by g cyc}
\begin{split}
&  - n g(Y)    =   \inf_{  f }    \sum_{ i = 1}^n 
\log   \left(  1 +   e^{  f  ( y_i )   -  f (  x_i )   } \right)           \\
=   &  \inf_{  f }  \left[    \sum_{k=1}^K    \sum_{   i \in V(C_k ) } 
\log   \left(  1 +  e^{  f  ( y_i )   -  f (  x_i )   }  \right)    +  
\sum_{ m =1}^M    \sum_{   i \in V( T_m ) \backslash \{ u_m \}  } 
\log   \left(  1 +   e^{  f  ( y_i )   -  f (  x_i )   }  \right)      \right]       \\
\geq    &  \inf_{  f }  \left[    \sum_{k=1}^K    \sum_{   i \in V(C_k ) } 
\log   \left(  1 +   e^{  f  ( y_i )   -  f (  x_i )   }  \right)    
\right]       
\triangleq     g_{\rm cyc}.     
\end{split}
\end{equation}
}
\fi 

\textbf{Step 1.2: Compute} $ g_{\rm cyc}.  $
We then compute $ g_{\rm cyc}.  $
Since $C_k$ is a cycle, we have $ X_k  \triangleq   \{  x_i  :  i \in C_k    \}   = \{  y_i :  i \in C_k   \}$.
Since $C_k $'s are disjoint, we have $ X_k \cap X_l = \emptyset , \forall k \neq l .$
This implies that $ f(x_i), f(y_i) $ for $i$ in  one cycle $C_k$ are independent of the values corresponding to other cycles.  
Then $g_{\rm cyc} $ can be decomposed according to different cycles: 
{\equationsizeReg 
\begin{align*}
g_{\rm cyc}   & =      \inf_{  f }  \left[    \sum_{k=1}^K    \sum_{   i \in V(C_k ) } 
\log   \left(  1 +   \exp( f  ( y_i  )   -  f (  x_i )    )   \right)  \right]        =          \sum_{k=1}^K    
\inf_{  f } \sum_{   i \in V(C_k ) }  \log   \left(  1 +   \exp( f  ( y_i  )   -  f (  x_i )    )    \right) .
\end{align*}
} 
Similar to Warm-up example 1, 
the infimum for each cycle is achieved when $f(x_i) = f(x_j) , \forall i, j  \in V(C_k)$.
\iflonger 
More specifically, pick any $k$, and suppose the edges of $ C_k $ are 
$ (v_1, v_2)$, $(v_2, v_3), \dots,$ $(v_{r-1}, v_r  )$, $(v_r, v_1) $,
where $r = | V(C_k) |$ is the number of vertices in $C_k$. 
Denote $v_{r+1} = v_1$. 
Then 
{\equationsize 
\begin{equation}\label{g one cycle value}
\begin{split}
 \inf_{  f } \sum_{   i \in V(C_k ) }  \log   & 
 \left(  1 +   \exp( f  ( y_i  )   -  f (  x_i )    )     \right)     
=     \inf_{  f }     \sum_{   j = 1 }^r  \log   \left(  1 +   \exp( f  (x_{v_{j+1}}  )   -  f (  x_{v_j } )    )         \right)   \\
=   &   \inf_{   t_1, t_2, \dots, t_r \in \mathbb{R}  }      \left[
\sum_{   j = 1 }^{r - 1 }    \log   \left(  1 +   \exp(  t_{j+1 } - t_j     )           \right)     +  \log   \left(  1 +   \exp(  t_{1 } - t_r     )           \right)   \right]      
=    - r \log 2   = - | V(C_k) |   \log 2. 
\end{split}
\end{equation}
} 
The infimum is achieved when $ f( x_{v_1} )  =  \dots =  f( x_{v_r} )   $, or equivalently, $f(x_i) = f(x_j),  \forall i,j \in V(C_k). $
\fi 
In addition,
\begin{equation}\label{g cycle value}
g_{\rm cyc}  =  - \log 2  \sum_{k = 1}^K   | V(C_k) |. 
\end{equation}
\textbf{Step 1.3: Compute} $ g(Y).  $
According to Eq.~\eqref{g bound by g cyc} and Eq.~\eqref{g cycle value}, we have
{\equationsizeReg
\begin{equation}\label{g lower bound}
- n g(Y) \geq  \sum_{k = 1}^K   | V(C_k) |  \log 2 . 
 \end{equation}
 }
Denote 
$ F( Y ;  f )  = -\frac{1}{n} \sum_{i=1}^n \log   \left(  1 +   e^ { f  ( y_i )   -  f (  x_i )    } \right)   $, then $g(Y) =  \inf_f F( Y ;  f ) . $ 
We claim that for any $\epsilon > 0, $ there exists a continuous function $ f$ such that
\begin{equation}\label{g upper bound}
- n F(Y; f)  <     \sum_{k = 1}^K   | V(C_k) |  \log 2  + \epsilon. 
\end{equation}

Let $ N $ be a large positive number  such that 
\begin{equation}\label{N bound}
n   \log   \left(  1 +   \exp(  - N  )     )  \right)  < \epsilon. 
\end{equation}
Pick a continuous function $f$ as follows. 
\begin{equation}\label{f choice}
f( x_i ) = \begin{cases}
0,   &     i \in     \bigcup_{k = 1}^K  V(C_k)  ,  \\
N \cdot  \text{depth}( i ),  &  i \in  \bigcup_{m = 1}^M  V(T_m).   
\end{cases}
\end{equation}
Note that the root $u_m$ of a tree $T_m$ is also in a certain cycle  $C_k$, thus the value 
$ f(x_{u_m}) $ is defined twice in Eq.~\eqref{f choice}, but in both definitions its value is $0$, thus the definition of $f$ is valid. 
For any $i \in  V(C_k)$,  suppose $y_i = x_j $, then both $i, j \in V(C_k)$ which implies
$ f(y_i) - f(x_i) = f(x_j) - f(x_i) = 0.  $
For any  $ i   \in V(T_m) \backslash \{u_m \}$, suppose $ y_i = x_j $, then by the definition of the graph
$  ( i, j  )   $ is a directed edge of the tree $T_m$, which means that
$ \text{depth} ( i ) = \text{depth}(j) + 1 $.
Thus $ f(y_i) - f(x_i) = f(x_j) - f(x_i) = - N.  $
In summary, for the choice of $f$ in Eq.~\eqref{f choice}, we have
\begin{equation}\label{f dif values}
f(y_i) -   f( x_i ) = \begin{cases}
0,   &     i \in     \bigcup_{k = 1}^K  V(C_k)  ,  \\
-N ,  &  i \in  \bigcup_{m = 1}^M  V(T_m).   
\end{cases}
\end{equation}

Denote $ p =  \sum_{k = 1}^K   | V(C_k) | \log 2  $.
For the choice of $f$ in Eq.~\eqref{f choice}, we have
{\equationsizeReg
\begin{equation}\label{g path value}
\begin{split}
 - n F(Y; f )       
 = &   \sum_{ i = 1}^n \log   \left(  1 +   e^ { f  ( y_i )   -  f (  x_i )    } \right)           \\
=   &   \left[    \sum_{k=1}^K    \sum_{   i \in V(C_k ) } 
\log   \left(  1 +    e^ { f  ( y_i )   -  f (  x_i )    }     \right)    +  
\sum_{ m =1}^M    \sum_{   i \in V( T_m ) \backslash \{ u_m   \} } 
\log   \left(  1 +    e^ { f  ( y_i )   -  f (  x_i )    }  \right)      \right]       \\
\overset{ (\ref{f dif values}) }{ = }  &   \left[    \sum_{k=1}^K    \sum_{   i \in V(C_k ) } 
\log   \left(  1 +   e^{  0 }     \right)    +  
\sum_{ m =1}^M    \sum_{   i \in V( T_m ) \backslash \{ u_m   \} } 
\log   \left(  1 +   e^{  - N  }       \right)      \right]       \\
=  &      \sum_{k = 1}^K   | V(C_k) |  \log 2
+   \sum_{k = 1}^M   ( | V( T_m)  | - 1 )  \log   \left(  1 +  e^{  - N  }    \right)   
\leq    p  +    n   \log   \left(  1 +   e^{  - N  }   \right)  
\overset{ (\ref{N bound}) }{<}      p    + \epsilon. 
\end{split}
\end{equation}
} 
This proves Eq.~\eqref{g upper bound}.
Combining  the two relations given in Eq.~\eqref{g upper bound}
 and Eq.~\eqref{g lower bound}, we have 
{\equationsizeReg
\begin{equation}
g(Y) = \inf_f F(Y; f) = \frac{ 1}{n } \sum_{k = 1}^K   | V(C_k) |  \log 2 ,
 \; \forall \; Y \in R(X). 
\end{equation}
}

\textbf{Step 2: Compute $g(Y)$ for any $ Y $}\label{appJ proof of Thm 2, any Y value}.

In the general case, not all $y_i$'s lie in  $ \{  x_1, \dots, x_n  \}  . $
We will reduce to the previous case.
Denote 
$$
H = \{   i  :   y_i  \in  \{  x_1, \dots, x_n  \}    \},   \quad  H^c = \{   j  :   y_j  \notin  \{  x_1, \dots, x_n  \}    \}.
$$

Since $ y_j $'s in $H^c$ may be the same,  we define the set of such distinct values of $y_j$'s
as
$$
Y_{\text{out}} = \{    y \in \mathbb{R}^d  :   y  = y_j, \text{ for some }  j \in H^c         \}.
$$
Let  $ \bar{n}  =   | Y_{\text{out}} | $, then
there are total $n +  { \bar{n} }   $ distinct values in $ x_1, \dots, x_n, y_1, \dots, y_n  $. 
WLOG, assume $y_1, \dots, y_{ \bar{n} }  $ are distinct (this is because the value of $g(Y)$ does not change if we re-index $x_i$'s and $y_i$'s as long as  the subscripts of $x_i, y_i$ change together), then 
$$
Y_{\text{out}} =   \{   y_1, \dots, y_{ \bar{n} }    \}. 
$$

We create artificial ``true data'' and ``fake data''
$ x_{n+1} =  x_{n+1} = y_1, \dots,  x_{n+ { \bar{n} } } = y_{n+ { \bar{n} }  } = y_{ \bar{n} }  $. 
Define $ F_{\rm auc} (Y, f )
=  - \sum_{ i = 1}^{n+m}  \log   \left(  1 +   e^ { f  ( y_i )   -  f (  x_i )    } \right)  $
$ g_{\rm auc} = -  \inf_f  F_{\rm auc} (Y, f ) .  $
Clearly, $ F_{\rm auc} (Y, f ) =  n F(Y, f) - { \bar{n} }  \log 2  $ and
$  n g(Y) = g_{\rm auc}  -  { \bar{n} }  \log 2 $. 
 
Consider the new configurations  $ \hat{X} = (x_1, \dots, x_{n+ { \bar{n} }  }) $ 
 and $ \hat{ Y } = ( y_1, \dots, y_{n+ { \bar{n} }  }) $.
 For the new configurations, we can build a graph $ \hat{G} $
 with $n + { \bar{n} }  $ vertices and $ n + { \bar{n} }  $ edges.
 There are  $ K  $ self-loops $ C_{K+1}, \dots, C_{ K + \bar{n} }  $
  at the vertices corresponding to $ y_{1}, \dots, y_{ \bar{n}} $.
 Based on Lemma \ref{lemma: decomposition into cycles and trees}, we have:
 	(a) There exist cycles $ C_1, C_2, \dots, C_K, C_{K+1}, \dots, C_{ K + \bar{n} } $ and subtrees $ T_1, T_2, \dots, T_M $
 	(with roots $u_m$'s) s.t. each edge  $v \in A $ appears in exactly one of the cycle or subtrees.
	(b) $u_m$ is a vertex of a certain cycle $C_k$ where $1 \leq k \leq K + \bar{n}$.  
	(c) 
Each vertex of the graph appears in exactly one of the following sets:
	$ V(C_1), \dots, V(C_{K+ \bar{n} }), V( T_1)\backslash \{ u_1 \}, \dots, V( T_{M } )\backslash \{ u_{M } \}. $
According to the proof in Step 1, we have
$ g_{\rm auc} = \sum_{k=1}^{K + \bar{n}} |V(C_k)| \log 2 
=  \sum_{k=1}^{K } |V(C_k)| \log 2  +  \bar{n} \log 2  $.
Therefore, 
\[ n g(Y) = g_{\rm auc}  -  { \bar{n} }  \log 2 
 = \sum_{k=1}^{K } |V(C_k)| \log 2.
\]

We build a graph $G$ by removing the self-loops $C_{K+j} = ( y_j, y_j), j=1, \dots,
\bar{n} $ in $\hat{G}$. 
The new graph $G $ consists of $n + \bar{n}$ vertices
corresponding to $x_1, \dots, x_n$ and $y_1, \dots, y_{\bar{n}}$
and $ n $ edges.
The graph can be decomposed into 
cycles $ C_1, C_2, \dots, C_K $ (since $\bar{n}$ cycles
are removed from $\hat{G}$) and subtrees $ T_1, T_2, \dots, T_M $. 
The value  $ n g(Y) = \sum_{k=1}^{K } |V(C_k)| \log 2 $,
where $ C_k $'s are all the cycles of $G$.

\textbf{Step 3: Finding a non-decreasing path to a global minimum}.
Finally, we prove that for any $Y$,  there is a non-decreasing continuous path from  $Y$ to one global minimal $Y^*$.  
\iflonger 
In other words, there is a continuous function   $ \eta: [0, 1] \rightarrow  \mathbb{R}^{d \times n }  $
such that   $ \eta(0)  = \bar{Y},  \eta(1) = Y^* $ and  $ g( \eta(t) ) $ is a non-decreasing function with respect to $ t \in [0,1] $. 
In this proof, we will just describe the path in words, and skip the rigorous definition of the continuous function $\eta$, since it should be clear from the context of how to define $\eta$.
\fi 
The following claim shows that we can increase the value of $Y$ incrementally. See the proof in Appendix \ref{appen: proof of Claim of improvement}.

\begin{claim}\label{claim: incremental improvement}
	For an arbitrary $Y$ that is not a global minimum, there exists another $ \hat{Y}  $  and a non-decreasing continuous  path from  $Y$ to
	$ \hat{Y}  $   such that $ g(   \hat{Y}  ) - g( Y )  \geq   \frac{1}{ n } \log 2  $. 
\end{claim}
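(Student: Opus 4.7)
Steps~1--2 of the proof above yielded the explicit formula
$$g(Y) \;=\; -\frac{\log 2}{n}\sum_{k=1}^{K}|V(C_k)|,$$
where $C_1,\dots,C_K$ are the cycles of the directed graph $G(Y)$ constructed from $Y$. The global minimum $g=-\log 2$ is attained precisely when the cycle-vertex count $s := \sum_k|V(C_k)|$ equals $n$. So if $Y$ is not a global minimum, then $s<n$, and some index $i\in\{1,\dots,n\}$ must have its vertex lying off every cycle of $G(Y)$ (otherwise all $n$ indices would sit on cycles and force $s=n$). My plan is to construct $\hat Y$ by modifying only the single coordinate $y_i$, dragging it continuously to $x_i$.

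Pick any continuous curve $\alpha:[0,1]\to\mathbb{R}^d$ with $\alpha(0)=y_i$, $\alpha(1)=x_i$, and injective on $(0,1)$ so that it visits each $x_k$ at most once. Let $\gamma(t)$ denote $Y$ with its $i$-th entry replaced by $\alpha(t)$. The key graph-theoretic observation I will use is this: because vertex $i$ is not on any cycle of $G(Y)$, the (possible) outgoing edge at $i$ does not lie on any cycle, so erasing it when $\alpha(t)$ leaves $\{x_1,\dots,x_n\}$ destroys no cycle, and introducing a new outgoing edge $(i,k)$ when $\alpha(t)=x_k$ can only create additional cycles, never destroy existing ones. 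Consequently the cycle-vertex count of $G(\gamma(t))$ stays $\ge s$ throughout, i.e., $g(\gamma(t))\le g(Y)$ for all $t\in[0,1]$.

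Now define $T := \inf\{t\in[0,1]:\text{cycle-vertex count of }G(\gamma(t))>s\}$. Since $\alpha(1)=x_i$ creates the self-loop $(i,i)$ and lifts the count to at least $s+1$, we have $T\le 1$; the injectivity of $\alpha$ makes the count function piecewise constant with finitely many jumps, so $T$ is attained. On $[0,T)$ the count equals $s$ exactly (it is $\ge s$ by the invariant above and $\le s$ by the definition of $T$), so $g(\gamma(t))=g(Y)$; at $t=T$ the count is $\ge s+1$, giving $g(\gamma(T))\le g(Y)-\tfrac{\log 2}{n}$. Setting $\hat Y := \gamma(T)$ and taking the restriction of $\gamma$ to $[0,T]$ as the path yields the desired continuous monotone path along which $g$ drops by at least $\tfrac{\log 2}{n}$.

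\textbf{Main obstacle.} The principal subtlety is that $g(\cdot)$ is \emph{not} continuous in $Y$: by Steps~1--2 it is piecewise constant with jumps exactly when coordinates of $Y$ collide with elements of $X$ in a cycle-creating manner. So monotonicity ``along a continuous path'' must be interpreted in the piecewise-constant sense, and the entire argument rests on the combinatorial invariant that cycles cannot be destroyed by perturbing a non-cycle vertex --- which in turn requires unpacking the cycle/tree decomposition of $G$ from Lemma~\ref{lemma: decomposition into cycles and trees}. Once that invariant is in hand, the ``stop at the first improvement'' rule is dimension-free and handles both the easy case $d\ge 2$ (where $\alpha$ can be chosen to avoid all other $x_k$'s) and $d=1$ (where $\alpha$ is forced through intermediate $x_k$'s, each of which either produces an improvement at which we halt or leaves the cycle count at $s$ so we continue).
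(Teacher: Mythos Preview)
Your proposal is correct and follows the same essential idea as the paper: pick a vertex $i$ not lying on any cycle and drag the single coordinate $y_i$ toward $x_i$ so as to create a new cycle, using the fact that only the outgoing edge of $i$ changes and hence no existing cycle can be destroyed. Your write-up is actually tighter than the paper's own proof --- the paper picks the specific vertex $v$ that is a child of a tree root and handles $d\ge 2$ (avoid all $x_k$'s en route) and $d=1$ (hit some intermediate $x_k$) separately and somewhat informally, whereas your explicit ``cycles cannot be destroyed'' invariant together with the first-hitting time $T$ gives a uniform, dimension-free argument.
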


For any $Y$ that is not a global minimum, we apply Claim \ref{claim: incremental improvement} for finitely many times (no more than $n$ times), then we will arrive at one global minimum $Y^*$. We connect all non-decreasing continuous paths and get a non-decreasing continuous path
from $Y$ to  $Y^*$. This finishes the proof.

\subsubsection{Graph Preliminaries and Proof of Lemma \ref{lemma: decomposition into cycles and trees}}\label{appen: proof of lemma 1}

We present a few definitions from standard graph theory.
\begin{Def} (walk, path, cycle) 
	In a directed graph $G = (V, A)$,
	a walk $W = (v_0,  e_1, v_1, e_2, $   $ \dots,  v_{m-1}, e_m, v_m)$ is a  sequence of vertices and edges  
	such that $ v_i \in V , \forall ~ i \in \{ 0,1, \dots, m \} $  
	and $e_i = (v_{i-1}, v_i) \in A, \forall ~ i \in \{  1, \dots, m \} $.  
	If $v_0, v_1, \dots, v_m$ are distinct, we call it path (with length $m$). 
	If $v_0, v_1, \dots, v_{m-1}$ are distinct and $v_m = v_0$, we call it a cycle.
\end{Def}

Any $v$ has a path to itself (with length $0$), no matter whether there is an edge between $v$ to itself or not.  This is because the degenerate walk $W = (v)$ satisfies  the above definition. 
The set of vertices and edges in $W$ are denoted as $V( W ) $ and $A(W)$
	respectively. 

\begin{Def} (tree)
	A directed tree is a directed graph $T = (V, A)$ with a designated node $r \in V$, the root, such that  there is exactly one path from $ v$ to $r$ for each node $v \in V$ and there  is no edge from the root $r$ to itself. 
	The depth of a node is the length of the path from the node to the root (the depth of the root is $0$). 
	A subtree of a directed graph $G$  is a subgraph 
	$T  $ which is a directed tree.  
\end{Def}

\textbf{Proof of Lemma \ref{lemma: decomposition into cycles and trees}:}

We slightly extend the definition of  ``walk'' to allow infinite length. 
We present two observations. 

 \textbf{Observation 1}: 
Starting from any vertex $v_0 \in V(G)$, there is a unique walk with infinite length 
$$
W(v_0) \triangleq (v_0, e_1, v_1, e_2, v_2, \dots, v_i, e_i, v_{i+1}, e_{i+1}, \dots ),
$$ 
where $ e_i $ is an edge in $A(G)$ with tail $v_{i-1}$ and head $v_i$.

Proof of Observation 1: 
At each vertex $v_i$, there is a unique outgoing edge $e_i = (v_i, v_{i+1})$
which uniquely defines the next vertex $v_{i+1}$. Continue the process, we have proved Observation 1. 

\textbf{Observation 2}: 
The walk
$
W(v_0) \triangleq (v_0, e_1, v_1, e_2, v_2, \dots, v_i, e_i, v_{i+1}, e_{i+1}, \dots )
$ 
can be decomposed into two parts $ 
W_1 (v_0)     =   (  v_0, e_1, v_1, e_2, v_2, \dots, v_{i_0 - 1}, e_{i_0 }, v_{ i_0 }   ) ,     $ 
$ W_2 (v_0 ) = (   v_{i_0}, e_{i_0 + 1 }, v_{ i_0 + 1 },  e_{i_0 + 2 },   v_{ i_0 + 2 }, \dots  ),  $
where $ W_1 ( v_0 )  $ is a path from $v_0$ to $v_{i_0}$  (i.e.
$ v_0, v_1, \dots, v_{i_0} $ are distinct), and
$W_2(v_0)$ is the repetition of a certain cycle (i.e., there exists $T$
such that $v_{i + T } = v_i$, for any $i \geq i_0 $).  
This decomposition is unique, and we say the ``first-touch-vertex'' of $v_0$ is $v_{i_0}$.

\textbf{Proof of Observation 2}: 
Since the graph is finite,  then some vertices must appear at least twice in $W(v_0)$.
Among all such vertices, suppose $ u $ is the one that appears the earliest in the walk $W(v_0)$, and the first two appearances are $ v_{i_0 } =  u  $ and $v_{i_1 } = u $
and $i_0  < i_1 $.  Denote $T = i_1 - i_0 $.
Then it is easy to show $ W_2 (v_0 ) $  is the repetitions of the cycle consisting of vertices $ v_{i_0}, v_{i_0 + 1}, \dots, v_{i_1 - 1} $, and $W_1(v_0)$ is a directed path from $v_0$ to $v_{i_0}$.
\iflonger 
Since there is a unique edge going out from any vertex,
thus $ v_{i_0 + 1}  $ must be the same as $ v_{i_1 + 1} = v_{i_0 + 1 + T} $.
Continue the process, we have $ v_i = v_{i+T}  $ for any $i \geq i_0$.
Thus starting from $u = v_{i_0 }$, the walk  $ W_2 (v_0 ) $  will be repetitions of the cycle consisting of vertices $ v_{i_0}, v_{i_0 + 1}, \dots, v_{i_1 - 1} $, and we denote this cycle as $C_{k_0}$. 
If the vertices before $v_{i_1}$ are not distinct, then there are at least
two vertices $v_j =  v_l $ where $0 \leq j < l \leq i_0$.
This contradicts the definition of $i_0$. 
Therefore,  $W_1(v_0)$ is a directed path from $v_0$ to $v_{i_0}$.  
\fi

The first-touch-vertex $u = v_{i_0}$  has  the following properties:   (i)  $u \in C_{k}$ for some $k$;  
(ii) there exists a path from $v$ to $u $; 
(iii) any paths  from $v$ to any vertex in the cycle $C_{k}$ other than $u $ must pass $u $. 
Note that if $ u  $ is in some cycle, then its first-touch-vertex is  $u$  itself.

As a corollary of Observation 2,  there is at least one cycle. 
Suppose all cycles of $G$ are $C_1, C_2, \dots, C_K $.
Because the outdegree of each vertex is $1$,  these cycles must be disjoint, i.e.,  $V(C_i ) \cap V(C_j)  = \emptyset $ and $ A(C_i) \cap A(C_j)  = \emptyset $,    for any $ i \neq j$. 
Denote the set of vertices in the cycles as 
\begin{equation}\label{set of cycle vertices}
V_c = \bigcup_{k=1}^K  V( C_1  ) \cup  \dots \cup V( C_K ) .  
\end{equation}
Let $u_1, \dots, u_M$  be the vertices of $C_1, \dots, C_m$ with indegree at least 
$2$. 

Based on Observation 2, starting from any vertex outside $V_c $ there is a unique path that reaches $V_c$. Combining all vertices that reach the cycles at $u_m$ (denoted as $V_m$), and the paths from these vertices to $u_m$, we obtain a directed subgraph $T_m$, which is connected with $V_c$ only via the vertex $u_m$. The subgraphs $T_m$'s are disjoint from each other since they are connected with $V_c$  via different vertices. In addition, each vertex outside of $V_c$ lies in exactly one of the subgraph $T_m$. Thus, we can partition the whole graph into the union of the cycles $C_1, \dots, C_K$ and the subgraphs $T_1, \dots, T_M$. 

We then show $T_m$'s are trees. 
For any vertex $v_0 $ in the subgraph $T_m$, 
consider the walk $W(v_0). $   Any path starting from $v_0$ must be part of $W(v_0)$. 
Starting from $v_0 $ there is only one path from $v_0 $ to $u_m$ which is $W_1 (v_0)$, according to Observation 2. 
Therefore, by the definition of a directed tree, $T_m$ is a directed tree with the root $u_m$. 
Therefore, 
we can partition the whole graph into the union of the cycles $C_1, \dots, C_K$ and subtrees $T_1, \dots, T_M$ with disjoint edge sets;
in addition, the edge sets of the cycles are disjoint, and 
the root of $T_l $ must be in certain cycle $C_k $. 
It is easy to verify the properties stated in Lemma \ref{lemma: decomposition into cycles and trees}.
This finishes the proof.

\vspace{-0.3cm}
\subsubsection{Proof of Claim \ref{claim: incremental improvement}}\label{appen: proof of Claim of improvement}
We first prove the case for $d \geq 2$. 
Suppose the corresponding graph for $Y$ is $G$, and $G$ is decomposed into the union of  cycles $C_1, \dots, C_K$
and  trees $ T_1, \dots, T_m$.
We perform the following operation: pick an arbitrary tree $T_m$ with  the root $u_m$.
The tree is non-empty, thus there must be an edge $e$ with the head $u_m$.

Suppose $ v $ is the tail of the edge $e $. 
Now we remove the edge $e = (v, u_m)$ and create a new edge $e' = (v, v)$.
The new edge corresponds to $ y_v = x_v$.  The old edge $ (v, u_m) $ corresponds 
to $ y_{ v} = x_{u_m}  $ (and a term $ h( f( x_{u_m}) - f(x_v ) ) $) if $u_m \leq n$ or  
$ y_{ v} =  y_{u_m - n } \notin \{ x_1 , \dots, x_n \}  $   (and a term $h( f( y_{u_m - n}) - f(x_v ) ) $)    if $u_m > n $. 
This change corresponds to the change of $y_v$: we change
$ y_{ v } = x_{u_m}  $ (if $u_m \leq n $) or $y_v = y_{u_m -n}$ (if $u_m > n$)
to $ \hat{y}_{v } = x_v $. 
Let $ \hat{y}_i = y_i $ for any $i \neq v$,
and $\hat{Y} = (\hat{y}_1, \dots, \hat{y}_n)$ is the new point. 

Previously $v$ is in a tree $T_m$ (not its root), now $v$ is the root of a new tree, and also part of the new cycle (self-loop) $C_{K+1} = (v, e', v)$. 
In this new graph, the number of vertices in cycles increases by $1$, thus the value
of $g$ increases by $ -\frac{1}{n} \log 2$, i.e., $ g(   \hat{Y}  ) - g( Y ) =   \frac{1}{ n } \log 2  $. 

Since  $d   \geq 2 $, we can find a path in $\dR^d $ from a point to another point without passing any of the points in $ \{ x_1, \dots, x_n \} $.
In the continuous process of moving $ y_v  $ to $ \hat{y}_v  $, the function value will not change except at the end that $y_v = x_v$.
Thus there  is a non-increasing path from $Y  $ to $\hat{Y}  $, in the sense that along this path the function value of $g$ does not decrease. 

The illustration of this proof is given below. 
\begin{figure}[H]
\vspace{-0.5cm}
	\centering
	\subfigure[Original graph]{
		\label{fig3a}
		\includegraphics[width=0.35\textwidth, height = 2.5cm]{figure/fig4_graph_gene}}
	\subfigure[Modified graph, with improved function value]{
		\label{fig3b}
		\includegraphics[width=0.4 \textwidth, height = 2.5cm]{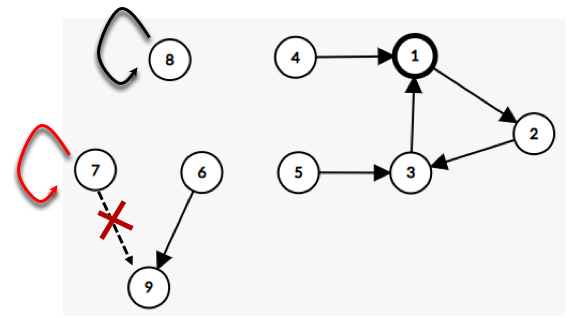}}\hfill 
	\caption{   Illustration of the proof of Claim  \ref{claim: incremental improvement}.  
		For the figure on the left,	we pick an arbitrary tree with the head being vertex $9$, which corresponds to $y_6 = y_7 $. 
		We change $y_7 $ to $ \hat{y}_7 = x_7 $ to obtain the figure on the right.  Since one more cycle is created, the function value
		increases by $ - \frac{1}{ n } \log 2 . $  }
	\label{fig of graph decomposition}
\end{figure}

For the case $d = 1$, the above proof does not work. The reason is that the path from $y_v$ to $\hat{y}_v $ may touch other points
in $\{ x_1, \dots, x_n \}$ and thus may change the value of $g$.
We only need to make a small modification:
we move $ y_v $ in $\dR $ until it touches a certain $x_i$ that corresponds to a vertex in the tree $T_m$, at which point
a cycle is created, and the function value increases by at least   $  \frac{1}{ n } \log 2   $. 
This path is a non-decreasing path, thus the claim is also proved.

\subsection{Proof of Theorem \ref{prop: RS-GAN all values, extension} }\label{appen:
proof of Thm 4}

\iflonger 
\textbf{Step 1: Optimal value. }
We first show that the optimal value is $ h(0) $.
 $g(Y) = \phi_{\rm R} ( Y, X )  = \frac{1}{ n } 
\sup_{  f \in C( \mathbb{R}^d ) } 
\sum_{i = 1}^n    [  h (  f( x_i ) - f(y_i ) ) ]
\geq  h ( 0) ,
$ 
thus $ \min_Y  \phi_{\rm R} ( Y, X )  \geq  h ( 0) $.
In addition, if  $ y_i = x_i, \forall i $,
then we have $  \phi_{\rm R} ( Y, X ) = h(0) $.
Thus the global minimal value is $ h (0) $, and is achieved 
if  $ y_i = x_i, \forall i $ (not the only choices of optimal solutions, as discussed below).
\fi

Obviously, $g(Y) \triangleq  \phi_{\rm R} ( Y, X )  = \frac{1}{ n } 
\sup_{  f \in C( \mathbb{R}^d ) } 
\sum_{i = 1}^n    [  h (  f( x_i ) - f(y_i ) ) ]
\geq  h ( 0) $ (by picking $f = 0$).  

\textbf{Step 1: achieving optimal $g(Y)$.} We prove if $ \{ y_1, \dots, y_n \} = \{ x_1, \dots, x_n \} $, then $ g(Y) = h(0) $.
\begin{claim}\label{small claim concave}
Assume $ h$ is concave. %
Then the function $ \xi_{\rm R}( m ) = \sup_{ (t_1, \dots, t_k) \in ZO( m ) } 
  \sum_{i=1}^m  h( t_i  )   $  satisfies   $ \xi_{\rm R }( m ) = m h(0)  $,
  where the set $ ZO( m ) = \{ t_1, t_2, \dots, t_m \in   \mathbb{R} : 
 \sum_{i=1}^m t_i = 0 \}  $.
\end{claim}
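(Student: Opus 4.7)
The claim is a direct consequence of Jensen's inequality applied to the concave function $h$. Specifically, the plan is to show the upper bound $\xi_R(m) \le m h(0)$ by averaging, and to exhibit the feasible point $t_1 = \cdots = t_m = 0$ that attains it.

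For any $(t_1, \dots, t_m) \in ZO(m)$, concavity of $h$ gives
\[
\frac{1}{m}\sum_{i=1}^m h(t_i) \le h\!\left(\frac{1}{m}\sum_{i=1}^m t_i\right) = h(0),
\]
where the equality uses the defining constraint $\sum_i t_i = 0$. Multiplying through by $m$ yields $\sum_{i=1}^m h(t_i) \le m h(0)$, which proves $\xi_R(m) \le m h(0)$. For the reverse inequality, observe that the choice $t_1 = t_2 = \cdots = t_m = 0$ satisfies $\sum_i t_i = 0$, so it lies in $ZO(m)$, and it achieves the value $m h(0)$. Combining the two bounds gives $\xi_R(m) = m h(0)$, as claimed.

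There is essentially no obstacle here: the statement is a one-line corollary of Jensen applied to the concave $h$ over the uniform probability measure on $\{t_1, \dots, t_m\}$. No assumption on boundedness or differentiability of $h$ is needed beyond concavity (which implies $h$ is finite on the real line under the paper's earlier conventions), and the supremum is in fact attained, so we may write $\max$ instead of $\sup$. This identity will be used in the broader proof of Theorem~\ref{prop: RS-GAN all values, extension} to compute the contribution of each cycle in the graph decomposition, paralleling the role that $\xi(m) = \sup_t[h_1(t)+m h_2(-t)]$ plays in the SepGAN analysis of Theorem~\ref{prop: GAN all values, extension}.
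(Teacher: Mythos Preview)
Your proof is correct and is exactly the standard Jensen argument; the paper itself declares the proof ``obvious'' and omits it, so there is nothing to compare against.
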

The proof of this claim is obvious and skipped here.
When $ \{ y_1, \dots, y_n \} = \{ x_1, \dots, x_n \} $,
we can divide $[n]$ into multiple cycles
$ C_1 \cup \dots \cup C_K $, each with length $m_k$, 
and obtain
$
\phi_{\rm R} ( Y, X )  = \frac{1}{ n }   \sup_{  f \in C( \mathbb{R}^d ) } 
\sum_{ k = 1}^K \sum_{i=1}^{m_k }  [  h (  f( x_i ) - f(y_i ) )  ]
=   \frac{1}{ n }  \sum_{ k = 1}^K \xi {\rm R }( m_k )
= \frac{1}{ n }  \sum_{ k = 1}^K  m_k h ( 0)
 = h(0). 
$

\textbf{Step 2: compute $g(Y)$ when $ y_i  \in  \{  x_1, \dots, x_n  \}, \forall i. $}
Assume $  y_i  \in  \{  x_1, \dots, x_n  \}, \forall i. $ 
We build a directed graph $G = (V, A )$ as follows (the same graph as in
Appendix \ref{appJ proof of Thm 2, any Y value}).
The set of vertices $V = \{  1, 2, \dots, n   \} $ represents $x_1, x_2, \dots, x_n$.
We draw a directed edge  $  ( i  , j ) \in A    $  if $ y_i = x_j  $.
Note that it is possible to have a self-loop $ (i, i) $, which corresponds
to the case $ y_i = x_i $. 

According to Lemma \ref{lemma: decomposition into cycles and trees}, this graph can be decomposed into  cycles $ C_1, C_2, \dots, C_K $ and subtrees $ T_1, T_2, \dots, T_M $. 
We claim that
\begin{equation}\label{major f val}
\phi_{\rm R} ( Y, X ) =  \frac{ 1 }{ n }  \sum_{k=1}^K |V(C_k)| h(0) \geq h(0) .
\end{equation}
The proof of the relation in  Eq.~\eqref{major f val}
is similar to the proof of Eq.~\eqref{g cycle value} used in the proof
of Theorem \ref{prop: RS-GAN all values, extension},
and briefly explained below. 
One major part of the proof is to show that the contribution of the nodes
in the cycles is $ \sum_{k=1}^K |V(C_k)| h(0)  $.
This is similar to Step 1, and is based on 
Claim \ref{small claim concave}. 
Another major part of the proof is to show that the contribution
of the nodes in the subtrees is zero, similar to the proof of
Eq.~\eqref{g path value}.
This is because  we can utilize Assumption \ref{assumption cp-1}
to construct a sequence of $ f $ values
(similar to Eq.~\eqref{f choice})
so that 
\begin{equation}\label{f dif values general}
f(y_i) -   f( x_i ) = \begin{cases}
0,   &     i \in     \bigcup_{k = 1}^K  V(C_k)  ,  \\
 \alpha_N  ,  &  i \in  \bigcup_{m = 1}^M  V(T_m).   
\end{cases}
\end{equation}
Here $ \{ \alpha_N \}_{N=1}^{\infty } $ is a sequence of real numbers so that $ \lim_{N \rightarrow \infty } h( \alpha_N )  = \sup_t h (t) = 0 $.
In the case that $ h (\infty ) = 0 $ like RS-GAN,
we pick $\alpha_N = N$. In the case that $ h ( a ) = 0 $ for a certain finite number $a$,
we can just pick $\alpha_N = a , \forall N$ (thus we do
not need a sequence but just one choice).

Since the expression of $\phi_{\rm R}(Y, X)$
in Eq.~\eqref{major f val} is a scaled version
of the expression of $\phi_{\rm RS}(Y, X)$ 
(scale by $- \frac{ \log 2 } {  h(0) } $), the rest of the proof is the same as the proof of Theorem \ref{prop: RS-GAN all values, extension}.

\textbf{Step 3: function value for general $Y$ and GMR.}
This step is the same as the proof of Theorem \ref{prop: special case of Thm 2}.
For the value of general $Y$, we build an ``augmented graph''
and apply the result in Step 2 to obtain $g(Y)$.
To prove GMR, the same construction as 
the proof of Theorem \ref{prop: special case of Thm 2} suffices.

\section{Results in Parameter Space}\label{appen: param space results}

We will first state the technical assumptions and then present
the formal results in parameter space. 
The results become somewhat technical due to the complication
of neural-nets. 
Suppose the discriminator neural net is $ f_{\theta}  $ where
$ \theta \in \mathbb{R}^{ J } $
and the generator net is $ G_w $ where $ w  \subset \mathbb{R}^{ K } .$

\begin{assumption}\label{D repres assumption}(representation power of discriminator net):
  For any distinct vectors  $  v_1, \dots, v_{2 n } \in \mathbb{R}^{d} $ ,
 any $  b_1, \dots, b_{ 2n }   \in \mathbb{R}  $,
  there exists $ \theta \in \mathbb{R}^{ J } $ such that 
  $ f_{\theta } ( v_i ) = b_i,   ~ i =1, \dots, 2n. $
  \end{assumption}
  
  \begin{assumption}\label{G repres assumption 1}(representation power of generator net in $\mathcal{W}$)
 For any distinct $z_1, \dots, z_n \in \mathbb{R}^{d_z} $ and any $ y_1, \dots, y_n  \in \mathbb{R}^{d} $, there exists
 $w \in \mathcal{W}$  such  that $ G_w ( z_i )  = y_i , i=1, \dots, n $.
\end{assumption}

For any given $ Z = ( z_1, \dots, z_n ) \in \mathbb{R}^{d_z \times n } $,
and any $ \in \mathcal{W} \subseteq \mathbb{R}^K $,  
we define a set $ G^{-1}(Y; Z) $ as follows:
  $ w \in  G^{-1}(Y; Z ) $ iff $ G_w( Z) = Y $ and $ w \in \mathcal{W}$.  
  \begin{assumption}\label{G repres assumption 2, dup}
  (path-keeping property of generator net; duplication of Assumption~\ref{G repres assumption 2}):
  For any distinct $z_1, \dots, z_n \in \mathbb{R}^{d_z} $, the following holds: 
 for any continuous path $Y(t), t \in [0, 1]$ in the space $ \mathbb{R}^{d \times n}  $ and any $w_0 \in G^{-1}(Y (0) ; Z )  $, 
 there is continuous path $ w(t), t \in [0,1]  $ such that
 $ w(0) = w_0 $ and $ Y(t) = G_{w(t)}(Z)  , t \in [0, 1] $. 
\end{assumption}

We will present sufficient conditions for these assumptions later. %
Next we present two main results on the landscape of GANs in the parameter space. 

\begin{prop2}\label{prop-formal: JSGAN param space}(formal version
of Proposition \ref{prop: GAN param space})
    Consider the separable-GAN problem  $  \min_{ w \in \mathbb{R}^{K}  } 
    \varphi_{\rm sep}( w ) , $ where
$ \varphi_{\rm sep}( w )  = \sup_{ \theta }
  \frac{1}{2 n} \sum_{i=1}^n   [ h_1 (   f_{ \theta}( x_i )  ) + h_2 (  - f_{\theta} ( G_w( z_i ) )  )   ] $
  Suppose $h_1, h_2$ satisfy the same assumptions of Theorem \ref{prop: GAN all values, extension}. 
Suppose $ G_{w}  $ satisfies Assumption \ref{G repres assumption 1} and
Assumption \ref{G repres assumption 2} (with certain $\mathcal{W}$). 
Suppose $  f_{\theta} $  satisfies Assumption \ref{D repres assumption}. Then there exist at least $ (  n^n - n !  )$ distinct $ w  \in \mathcal{W } $ that are not global-min-reachable. 
\end{prop2}

\begin{prop2}\label{prop-formal: CPGAN param space}(formal version
of Prop. \ref{prop: CPGAN param space})
    Consider the RpGAN problem  $  \min_{ w \in \mathbb{R}^{K}  }  \varphi_{\rm R}( w ) , $  where  
$  \varphi_{\rm R}( w )  = \sup_{ \theta }  \frac{1}{ n } \sum_{i=1}^n   [ h ( f_{ \theta}( x_i ) )  -  f_{ \theta}( G_w(z_i) )  ]. $
Suppose $h $ satisfies  the same assumptions of Theorem \ref{prop: RS-GAN all values, extension}. Suppose $ G_{w}  $ satisfies Assumption \ref{G repres assumption 1} and
Assumption \ref{G repres assumption 2} (with certain $\mathcal{W}$). 
Suppose $  f_{\theta} $  satisfies Assumption \ref{D repres assumption}. 
 Then any $ w  \in \mathcal{W} $ is global-min-reachable for $ \varphi_{\rm R}( w )  $.
\end{prop2}

We have presented two generic results that relies on a few 
properties of the neural-nets. These properties can be satisfied by certain neural-nets,
 as discussed next. Our results largely rely on recent advanced in neural-net
 optimization theory.

\subsection{Sufficient Conditions  for the Assumptions}
\label{app-sub: suffcient conditions for repres assumptions}

 In this part, we present a set of conditions on neural nets that
 ensure the assumptions to hold. 
 We will discuss more conditions in the next subsection. 

\begin{assumption}\label{assump 1 of overpara}(mildly wide)
 The last hidden layer has at least $ \bar{ n } $ neurons, where $\bar{n}$ is the number of input vectors.
\end{assumption}
The assumption of width is common in recent  theoretical works in neural net optimization (e.g. \cite{li2018over,nguyen2018loss,allen2018convergence}).
For the generator network, we set $\bar{n} = n$; for the discriminator network,
we set  $ \bar{ n } = 2n . $

\begin{assumption}\label{assump 2 of overpara}(smooth enough activation)
	The activation function $\sigma$ is an analytic function, and
 the $k$-th order derivatives $ \sigma^{(k)}(0) $ are non-zero, 
for $k = 0, 1, 2, \dots,  \bar{ n } , $ where $\bar{n}$ is the number of input vectors.
\end{assumption}
The assumption of the neuron activation is satisfied by sigmoid, tanh, SoftPlus, swish, etc.

 For the generator network, consider a fully neural network $ G_w( z ) = W_H \sigma( W_{H-1} \dots  W_2 \sigma( W_1  z ) )  $ that maps $ z \in \mathbb{R}^{d_z} $ to 
 $ G_w( z ) \in \mathbb{R}^{d} $.
Define $ T_k (z) = \sigma( W_{ k -1} \dots W_2 \sigma( W_1  z ) ) 
\in \mathbb{R}^{ d_k } $ where $d_k $ is the number of neurons in the $k$-th hidden layer.
Then we can write $ G_w(z) =  W_H T_H(z)  $, where $ W_H \in \mathbb{R}^{d \times d_H } $. 
Let $Z = (z_1, \dots, z_n )$ and let $T_k(Z) = (T_k(z_1), \dots, T_k(z_n)) \in \mathbb{R}^{ d_k  \times n },  $ 
$ k = 1, 2, \dots, H.  $ Define $\mathcal{W} = \{ w = (W_1, \dots, W_H): T_H(Z) \text{is full rank}  \} $.

We will prove that under these two assumptions on the neural nets,
 the landscape of RpGAN is better than that of SepGAN. 
\begin{prop2}\label{prop: first realization using dense W}
Suppose $h_1, h_2, h$ sastify assumptions in Theorem \ref{prop: GAN all values, extension}
and  Theorem \ref{prop: RS-GAN all values, extension}. 
Suppose $ G_{w} , f_{\theta} $ satisfies Assump. 
\ref{assump 2 of overpara} and \ref{assump 1 of overpara}
($\bar{n} = n $ for $G_w $, and $\bar{n} = 2 n $ for $f_{\theta}$).
 Then there exist at least $ (  n^n - n !  )$ distinct $ w  \in \mathcal{W } $ that are not GMR for $ \varphi_{\rm Sep}( w )  $.
In contrast, any $ w  \in \mathcal{W} $ is global-min-reachable for $ \varphi_{\rm R}( w )  $.
\end{prop2}

This proposition is the corollary of Prop. \ref{prop-formal: JSGAN param space}
and Prop. \ref{prop-formal: CPGAN param space}; we only need
 to verify the assumptions in the two propositions. 
 The following series of claims provide such verification.

\begin{claim}\label{representation claim of G}
Suppose Assumptions \ref{assump 1 of overpara} and \ref{assump 2 of overpara} hold
for the generator net  $ G_w $ with distinct input $z_1, \dots, z_n $. 
 Then $\mathcal{W} = \{ (W_1, \dots, W_H):
T_H(Z) \text{ is full rank}  \} $ is a dense set  in $ \mathbb{R}^K $. 
In addition, Assumption \ref{G repres assumption 1}  holds. 
 \end{claim}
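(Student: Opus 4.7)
The plan is to establish density of $\mathcal{W}$ via an analyticity argument, then deduce Assumption~\ref{G repres assumption 1} as a linear-algebraic consequence. Observe that the matrix-valued map $w = (W_1,\dots,W_H) \mapsto T_H(Z) \in \mathbb{R}^{d_H \times n}$ is a composition of linear maps with the entrywise analytic activation $\sigma$, and hence is itself real-analytic in the weights. The complement of $\mathcal{W}$ is the locus where $T_H(Z)$ has rank strictly less than $n$, which equals the common zero set of all $n\times n$ minors of $T_H(Z)$. Each such minor is a real-analytic function of $w$. By the identity theorem for real-analytic functions on a connected open set, this common zero set is either all of $\mathbb{R}^K$ or a proper, closed, nowhere-dense (in fact measure-zero) set. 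So density of $\mathcal{W}$ reduces to exhibiting one weight configuration $w^\star$ at which $T_H(Z)$ has rank $n$.

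The construction of $w^\star$ is the crux, and is where both Assumption~\ref{assump 1 of overpara} and Assumption~\ref{assump 2 of overpara} are actually used. I would proceed layer by layer: first show that one can choose $W_1$ so that $T_2(z_1),\dots,T_2(z_n)$ are pairwise distinct, then propagate this property through $W_2,\dots,W_{H-1}$, and finally use the width $d_H \ge n$ at the last hidden layer to upgrade ``distinct'' to ``linearly independent.'' For the propagation step, if $T_k(z_1),\dots,T_k(z_n)$ are distinct, choose a generic vector $v$ so that the scalars $\alpha_i := v^\top T_k(z_i)$ are distinct; then for any row of $W_k$ aligned with $v$ (scaled by some factor $c$), the corresponding entries of the next layer are $\sigma(c\,\alpha_i)$, which are distinct for generic $c$ by analyticity of $\sigma$. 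For the final step, take $W_{H-1}$ so that its rows give $n$ distinct scalar evaluations $\beta_{ji} = \sigma(\cdots)$, Taylor-expand $\sigma$ around $0$, and use $\sigma^{(k)}(0)\neq 0$ for $k=0,1,\dots,n$ to recognize the resulting $n\times n$ submatrix of $T_H(Z)$ as a generalized Vandermonde matrix whose determinant is a nonzero analytic function of a scaling parameter; choosing that parameter generically gives the full-rank $w^\star$.

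Once density of $\mathcal{W}$ is in hand, Assumption~\ref{G repres assumption 1} follows almost immediately. Given any target $Y = (y_1,\dots,y_n) \in \mathbb{R}^{d\times n}$ and any $w = (W_1,\dots,W_{H-1},W_H) \in \mathcal{W}$, the matrix $T_H(Z) \in \mathbb{R}^{d_H \times n}$ has rank $n$, hence admits a right inverse $T_H(Z)^{+}$; setting $W_H^\star = Y\, T_H(Z)^{+}$ gives $W_H^\star T_H(Z) = Y$, i.e.\ $G_{w'}(z_i) = y_i$ for $w' = (W_1,\dots,W_{H-1},W_H^\star)$. Since $T_H(Z)$ depends only on $W_1,\dots,W_{H-1}$, the modified weight $w'$ still lies in $\mathcal{W}$, completing the verification.

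The main obstacle is the explicit construction of $w^\star$, and more specifically justifying the Vandermonde-style nonsingularity at the final layer. Assumption~\ref{assump 2 of overpara} is precisely what is needed: it guarantees that the Taylor coefficients of $\sigma$ at $0$ of orders $0,1,\dots,n$ are all nonzero, which is exactly the number of ``independent modes'' needed to make the relevant generalized Vandermonde matrix invertible for $n$ distinct inputs. The assumption that $\sigma$ is analytic (not merely smooth) is essential in two places: in the identity-theorem argument that reduces density to a single example, and in allowing the Taylor-series manipulations in the construction to be carried out rigorously.
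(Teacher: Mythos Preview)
Your proposal is correct and follows essentially the same strategy that underlies the paper's proof, though the paper outsources most of the work. The paper invokes its Lemma~\ref{lemma zero measure} (a slight generalization of \cite[Prop.~1]{li2018over}) to conclude that $\{w:\operatorname{rank}T_H(Z)<n\}$ has measure zero, and then derives Assumption~\ref{G repres assumption 1} by the same right-inverse argument you give: pick $(W_1,\dots,W_{H-1})$ in $\mathcal{W}$ and solve $W_H T_H(Z)=Y$. Your argument unpacks what that cited proposition actually does---real-analyticity of $w\mapsto T_H(Z)$, reduction to exhibiting a single full-rank point, and a layerwise construction culminating in a generalized Vandermonde matrix built from the Taylor coefficients $\sigma^{(k)}(0)$, $k=0,\dots,n$---so you are effectively reproving the external reference from scratch rather than citing it. The one place the paper adds a small wrinkle is in Lemma~\ref{lemma zero measure}: the cited proposition assumes the inputs have one coordinate on which they are pairwise distinct, and the paper reduces ``distinct vectors'' to that hypothesis by pre-multiplying with a generic invertible matrix $A$; your propagation step handles this implicitly by projecting onto a generic direction $v$ at each layer, which is the same idea.

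Two minor points worth tightening if you write this out in full. First, for the density conclusion you only need that \emph{some} $n\times n$ minor is nonzero at your constructed $w^\star$; then $\{\text{rank}<n\}\subseteq\{M_0=0\}$ for that minor $M_0$, and the zero set of a single nonzero real-analytic function on a connected domain is already measure-zero (hence nowhere dense). Framing it as the ``common zero set of all minors'' is correct but slightly obscures this. Second, in the Vandermonde step the cleanest version is to take the first $n$ rows of $W_{H-1}$ as $c_1 v,\dots,c_n v$ for a scalar parameter and then argue that the determinant of the resulting $n\times n$ block $[\sigma(c_j\alpha_i)]$ is a nonzero analytic function of $(c_1,\dots,c_n)$ by examining its lowest-order homogeneous part, which factors as a product of two ordinary Vandermonde determinants times $\prod_{k=0}^{n-1}\sigma^{(k)}(0)/k!$; Assumption~\ref{assump 2 of overpara} makes this nonzero. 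Your sketch is pointing at exactly this, but stating the factorization explicitly removes any doubt.
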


This full-rank condition was used in a few works of neural-net landscape analysis (e.g. \cite{nguyen2017loss}).
In GAN area, \cite{bai2018approximability} studied  invertible generator nets $ G_w $ where the weights are restricted to a subset of $ \mathbb{R}^K $ to avoid singularities.  As the set $\mathcal{W} $ is dense, intuitively  the iterates will stay in this set for most of the time. However, rigorously proving that the iterates stay in this set is not easy, and is  one of the major challenges of current neural-network analysis.  For instance, \cite{jacot2018neural}) shows that for very wide neural networks with proper initialization along the training trajectory of gradient descent the neural-tangent kernel (a matrix related to $T_H(Z)$) is full rank. A similar analysis can prove that the matrix $T_H(Z)$ stays full rank during training under similar conditions. We do not attempt to develop the more complicated convergence analysis for general neural-nets here and leave it to future work.

\begin{claim}\label{claim of continuous path, full rank case}
Suppose Assumptions \ref{assump 1 of overpara} and \ref{assump 2 of overpara} hold
for the generator net  $ G_w $ with distinct input $z_1, \dots, z_n $. 
 Then it satisfies Assumption 
 \ref{G repres assumption 2} with $\mathcal{W}$
 defined in Claim \ref{representation claim of G}.
 \end{claim}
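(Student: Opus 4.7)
\textbf{Plan for the proof of Claim \ref{claim of continuous path, full rank case}.}
The strategy is to exploit the fact that for any $w = (W_1,\dots,W_H) \in \mathcal{W}$, only the last weight matrix $W_H$ actually needs to move in order to realize an arbitrary continuous change of the outputs $Y = G_w(Z) = W_H T_H(Z)$; the hidden-layer weights $W_1,\dots,W_{H-1}$ can be held fixed at their initial values $W_{1,0},\dots,W_{H-1,0}$. Because these matrices do not change along the path, the penultimate-layer output matrix $T_H(Z) \in \mathbb{R}^{d_H \times n}$ is constant and, by the definition of $\mathcal{W}$, has rank $n$ (which is possible since Assumption \ref{assump 1 of overpara} gives $d_H \geq n$). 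Hence $T_H(Z)$ admits a Moore--Penrose right inverse $T_H(Z)^{+} \in \mathbb{R}^{n \times d_H}$ satisfying $T_H(Z)\,T_H(Z)^{+}\dots$ — more precisely, $W_H T_H(Z) = Y$ is solvable in $W_H$ for every target $Y \in \mathbb{R}^{d \times n}$ via the pseudoinverse on the right.

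\textbf{Explicit lift.} Given the continuous path $Y:[0,1]\to\mathbb{R}^{d\times n}$ and $w_0 = (W_{1,0},\dots,W_{H,0})$ with $G_{w_0}(Z)=Y(0)$, define
\begin{equation}
W_k(t) \;\triangleq\; W_{k,0} \quad \text{for } k=1,\dots,H-1, \qquad W_H(t) \;\triangleq\; W_{H,0} + \bigl(Y(t) - Y(0)\bigr)\, T_H(Z)^{+},
\end{equation}
and set $w(t) = (W_1(t),\dots,W_H(t))$. This gives $w(0) = w_0$ by construction, and continuity of $w(t)$ in $t$ follows immediately from continuity of $Y(t)$ together with the fact that $T_H(Z)^{+}$ is a fixed matrix (it does not depend on $t$ because $W_1,\dots,W_{H-1}$ are frozen). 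The output along the path is
\begin{equation}
G_{w(t)}(Z) \;=\; W_H(t)\,T_H(Z) \;=\; W_{H,0}\,T_H(Z) + \bigl(Y(t)-Y(0)\bigr)\,T_H(Z)^{+}\,T_H(Z) \;=\; Y(0) + \bigl(Y(t)-Y(0)\bigr) \;=\; Y(t),
\end{equation}
where we used $W_{H,0}T_H(Z)=Y(0)$ (from $w_0 \in G^{-1}(Y(0);Z,\mathcal{W})$) and $T_H(Z)^{+}T_H(Z) = I_n$ (since $T_H(Z)$ has full column rank $n$). As a bonus, the entire path stays inside $\mathcal{W}$ because the hidden weights, and hence $T_H(Z)$, are unchanged along $w(t)$.

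\textbf{Main obstacle.} The construction itself is short; the substantive content is packaged into the definition of $\mathcal{W}$ and into Claim \ref{representation claim of G}, which together guarantee that $T_H(Z)$ has rank $n$ at the starting point. Concretely, the only step that requires care is verifying that (i) the full-rank condition on $T_H(Z)$ at $w_0$ is exactly what makes the equation $W_H T_H(Z) = Y$ solvable for every $Y$, so the right pseudoinverse $T_H(Z)^{+}$ is well-defined, and (ii) freezing the hidden layers preserves $\mathcal{W}$-membership throughout the path. Neither introduces a genuine difficulty, so the main work is really the bookkeeping to check continuity and the endpoint conditions, which the explicit affine-in-$Y(t)$ formula above handles at once.
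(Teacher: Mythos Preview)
Your proof is correct and follows essentially the same approach as the paper: freeze the hidden-layer weights $W_1,\dots,W_{H-1}$ so that $T_H(Z)$ stays full rank, and move only $W_H$ to track $Y(t)$. The paper's proof is shorter and merely asserts that ``since $T_H(Z)$ is full rank \dots\ we can continuously change $W_H$''; your explicit affine formula $W_H(t) = W_{H,0} + (Y(t)-Y(0))\,T_H(Z)^{+}$ makes the continuity and endpoint checks transparent, and your observation that the path remains in $\mathcal{W}$ is a nice bonus the paper does not state.
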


Assumption \ref{D repres assumption} can be shown to hold under a similar condition to that in Claim \ref{representation claim of G}.

\begin{claim}\label{representation claim of D}
 Consider a fully connected neural network $ f_{\theta}( z ) = \theta_H \sigma( \theta_{H-1} \dots
 \theta_2 \sigma( \theta_1  z ) )  $ that maps $ u  \in \mathbb{R}^{d } $ to  $ f_{\theta}( u ) \in \mathbb{R}  $
 and suppose Assumptions \ref{assump 1 of overpara} and \ref{assump 2 of overpara} hold. 
 Then Assumption \ref{D repres assumption}  holds. 
 \end{claim}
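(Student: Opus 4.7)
The plan is to reduce Assumption \ref{D repres assumption} to a linear interpolation problem at the last layer, and then to a rank condition on the last-hidden-layer feature map, which can be handled by the same analyticity argument already used for the generator in Claim \ref{representation claim of G}. Specifically, write $f_{\theta}(u) = \theta_H \, T_H(u)$, where $T_H(u) = \sigma(\theta_{H-1} \sigma(\dots \sigma(\theta_1 u)))\in \mathbb{R}^{d_H}$ is the output of the last hidden layer, and $\theta_H \in \mathbb{R}^{1 \times d_H}$ is a row vector. Given distinct inputs $v_1,\dots,v_{2n}$, collect the last-hidden-layer features into the matrix $T_H(V) = [T_H(v_1), \dots, T_H(v_{2n})] \in \mathbb{R}^{d_H \times 2n}$. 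The interpolation equations $f_{\theta}(v_i) = b_i$ for all $i$ become the single linear system $\theta_H \, T_H(V) = (b_1,\dots,b_{2n})$.

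The first step is then to observe that whenever $T_H(V)$ has rank $2n$ (which is possible because $d_H \geq 2n$ by Assumption \ref{assump 1 of overpara}), the above linear system is solvable for $\theta_H$ for every target vector $(b_1,\dots,b_{2n})$; for example, one can take $\theta_H = (b_1,\dots,b_{2n}) T_H(V)^{\dagger}$. Hence it suffices to exhibit weights $\theta_1, \dots, \theta_{H-1}$ for which $T_H(V)$ has full column rank $2n$. The second step is to prove this rank statement by exactly the argument used in Claim \ref{representation claim of G}: the entries of $T_H(V)$ are analytic functions of the weights $(\theta_1,\dots,\theta_{H-1})$ (since $\sigma$ is analytic by Assumption \ref{assump 2 of overpara}), so the set of weights making $T_H(V)$ rank deficient is either all of the weight space or a proper analytic variety of measure zero. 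To rule out the first possibility, one exhibits a single weight configuration at which $T_H(V)$ has rank $2n$; this uses the hypothesis that all derivatives $\sigma^{(k)}(0)$ for $k=0,1,\dots,2n$ are non-zero, expanding $\sigma$ in its Taylor series at $0$ and showing, via a Vandermonde-type computation on the distinct inputs $v_1,\dots,v_{2n}$, that one can realize a full-rank feature matrix. This is exactly the mechanism behind Claim \ref{representation claim of G} applied with $\bar{n}=2n$ and with the inputs $v_1,\dots,v_{2n}$ in place of $z_1,\dots,z_n$; the only difference is that the discriminator maps to $\mathbb{R}$ rather than $\mathbb{R}^d$, which actually makes the final step easier since we only need to solve one linear system rather than $d$.

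The main obstacle I anticipate is the Vandermonde-type computation establishing that at least one weight assignment yields a full-rank $T_H(V)$. Carrying this out cleanly for multiple hidden layers requires a careful induction on depth: one shows that the outputs of the first hidden layer, evaluated at $v_1,\dots,v_{2n}$, can be made to be distinct vectors by a generic choice of $\theta_1$, and then propagates this ``distinctness implies achievable full rank'' property through each subsequent layer using analyticity and the non-vanishing of $\sigma^{(k)}(0)$. Since the analogous claim has already been used in Claim \ref{representation claim of G}, one can either invoke that argument verbatim (after replacing $n$ by $2n$ and the input set $\{z_i\}$ by $\{v_i\}$) or quote the underlying lemma it relies on. Once full rank is secured, the remaining solution of the last-layer linear system is immediate.
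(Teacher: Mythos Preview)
Your proposal is correct and follows essentially the same approach as the paper. The paper's proof is a one-liner that simply invokes Lemma~\ref{lemma zero measure} (the zero-measure rank-deficiency lemma) applied to the discriminator network with $\bar{n}=2n$ inputs, which is exactly the full-rank-then-solve-linear-system mechanism you describe; you have merely unpacked the argument in more detail and routed it through Claim~\ref{representation claim of G}, which itself rests on the same lemma.
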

 
 The proofs of the claims are given in Appendix \ref{appen: repr claim proof}.

\iflonger 
Remark: Assumptions \ref{D repres assumption} and \ref{G repres assumption 2} just
state that $f_{\theta }$ and $ G_w $ can represent any data set with size no more than
 its final hidden layer. This is about the representation power of neural nets. 
The standard universal approximation theorem claims that wide neural networks 
 can represent any continuous function. A small difference with classical representation theory is that we require the exact equivalence $ G_w(z_i) = y_i $ for finite samples while classical representation theory
allows some error $ \| G_w(z_i) - y_i \| < \epsilon  $. 
\fi 

With these claims, we can immediately prove Prop. \ref{prop: first realization using dense W}.

\textbf{Proof of Prop. \ref{prop: first realization using dense W}}: 
According to Claim  \ref{claim of continuous path, full rank case},
\ref{representation claim of G}, \ref{representation claim of D},
the assumptions of Prop. \ref{prop: first realization using dense W}
imply the assumptions of Prop. \ref{prop-formal: JSGAN param space}
and Prop. \ref{prop-formal: CPGAN param space}.
Therefore, the conclusions of Prop. \ref{prop-formal: JSGAN param space}
and Prop. \ref{prop-formal: CPGAN param space} hold. 
Since the conclusion of Prop. \ref{prop: first realization using dense W}
is the combination of the the conclusions of Prop. \ref{prop-formal: JSGAN param space}
and Prop. \ref{prop-formal: CPGAN param space},
 it also holds.  $\Box $

\subsection{Other Sufficient Conditions}

Assumption \ref{G repres assumption 2, dup} (path-keeping property) is the key assumption.
Various results in neural-net theory can ensure this assumption (or its variant)
holds, and we have utilized one of the simplest such results in the last subsection.  
We recommend to check \cite{sun2020global} which describes a bigger picture
about various landscape results. 
In this subsection, we briefly discuss other possible results applicable to GAN.

We start with a strong conjecture about neural net landscape, which only requires
 a wide final hidden layer but no condition on the depth and activation. 
\begin{conjecture}\label{conjecture of deep net GMR}
Suppose $ g_{\theta} $ is a fully connected neural net with any depth and any continuous activation, and it satisfies Assumption \ref{assump 1 of overpara} (i.e. a mildly wide final hidden layer). 
Assume $\ell( y , \hat{y} ) $ is convex in $ \hat{y}$, then the empirical loss function of a supervised learning problem $ \sum_{i=1}^n \ell( y_i, g_{\theta}(x_i )  ) $ is global-min-reachable  for any point.
\end{conjecture}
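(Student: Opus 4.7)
The plan is to work in two spaces at once: the \emph{prediction space} $\hat{Y} = (g_\theta(x_1), \ldots, g_\theta(x_n)) \in \mathbb{R}^{d \times n}$, in which the empirical loss is convex, and the \emph{parameter space} $\theta$, in which we actually need to produce a non-increasing path. Since $\ell(y,\hat{y})$ is convex in $\hat{y}$ and the loss separates across samples, the function $\tilde{L}(\hat{Y}) = \sum_i \ell(y_i, \hat{y}_i)$ is convex on $\mathbb{R}^{d\times n}$. Starting from any $\hat{Y}_0$ and any minimizer $\hat{Y}^*$ of $\tilde{L}$, the straight line $\hat{Y}(t) = (1-t)\hat{Y}_0 + t\hat{Y}^*$ satisfies $\tilde{L}(\hat{Y}(t)) \leq (1-t)\tilde{L}(\hat{Y}_0) + t\tilde{L}(\hat{Y}^*)$, which is a non-increasing function of $t$. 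The entire proof then reduces to lifting this prediction-space segment to a continuous path $\theta(t)$ with $g_{\theta(t)}(X) = \hat{Y}(t)$ (or at least with $\tilde{L}(g_{\theta(t)}(X))$ non-increasing).

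The lifting machinery will exploit the mild-width assumption on the last hidden layer. Writing $g_\theta(x) = W_H T_H(x)$ with $T_H(x) \in \mathbb{R}^{d_H}$, $d_H \geq n$, whenever $T_H(X) := [T_H(x_1),\dots,T_H(x_n)] \in \mathbb{R}^{d_H \times n}$ has full column rank $n$, the linear system $W_H T_H(X) = \hat{Y}$ always admits a solution, and the minimum-norm selection $W_H(\hat{Y}) = \hat{Y}\,T_H(X)^\dagger$ depends continuously on $\hat{Y}$. So, if we freeze all pre-last-layer parameters at some $\theta_1$ for which $T_H(X)$ has full rank, then the map $t \mapsto (\text{frozen earlier layers},\ W_H(\hat{Y}(t)))$ gives the desired non-increasing lift of the second half of the path. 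Consequently, the problem reduces to two sub-problems: (i) continuously deforming from $\theta_0$ to some $\theta_1$ at which $T_H(X)$ has full column rank, without increasing the loss; and (ii) then applying the pseudoinverse lift above to drive the predictions to $\hat{Y}^*$.

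The main obstacle, and I believe the reason this is only a conjecture, is sub-problem (i). Under the stronger hypotheses used earlier (Assumption~\ref{assump 2 of overpara}, analytic activation with all derivatives at $0$ nonzero), Claim~\ref{representation claim of G} and Claim~\ref{claim of continuous path, full rank case} show the full-rank set is open and dense, so an arbitrarily small generic perturbation of the pre-last-layer weights puts $T_H(X)$ into general position; one can then trivially concatenate a loss-controlled perturbation with the pseudoinverse lift. With only continuous activations, density of full-rank configurations can fail catastrophically — a constant activation collapses every input to the same hidden representation, and more subtle continuous (but nowhere-analytic) activations could force $T_H(X)$ to be rank-deficient on an open set of parameters. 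So the hard step is a \emph{rank-repair} argument: starting from an arbitrary $\theta_0$, produce a continuous parameter-space path along which the loss does not increase and at the end of which $T_H(X)$ gains full column rank.

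A reasonable attack on the rank-repair step would be to proceed layer by layer: one first tries to perturb only the last hidden layer's incoming weights within a sublevel set of the loss, looking for a perturbation that strictly increases the rank of $T_H(X)$; if this fails, one descends to the preceding layer and repeats. Combined with compactness of the sublevel set intersected with a small ball, this inductive scheme would either terminate with full rank or exhibit a structural degeneracy of the activation that is ruled out by some minimal additional hypothesis. I expect that without even a weak non-degeneracy condition on $\sigma$ (e.g., $\sigma$ is non-constant on every interval) the conjecture is genuinely false, so any complete proof will likely refine the continuous-activation hypothesis slightly; making that refinement as mild as possible is, in my view, the central technical question.
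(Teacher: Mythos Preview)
The statement is a \emph{conjecture}, and the paper explicitly does not prove it: right after stating it, the authors write ``Unfortunately, we are not aware of any existing work that has proved Conjecture~\ref{conjecture of deep net GMR}.'' So there is no ``paper's own proof'' to compare against.

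That said, your analysis is accurate and aligns with the paper's reasoning. The machinery you describe --- work in prediction space where the loss is convex, take the straight-line segment to a minimizer, and lift it to parameter space via the last-layer pseudoinverse whenever $T_H(X)$ has full column rank --- is exactly what the paper uses to prove the weaker results it \emph{does} establish (Proposition~\ref{prop: first realization using dense W} via Claims~\ref{representation claim of G} and~\ref{claim of continuous path, full rank case}). You have correctly located the gap: with only a continuous activation, the full-rank set need not be dense (your constant-activation example is decisive), so the ``rank-repair'' step (i) cannot be handled by a generic-perturbation argument. The paper sidesteps this by either restricting to the dense set $\mathcal{W}$ under the analytic-activation Assumption~\ref{assump 2 of overpara}, or by citing the one-hidden-layer result of Venturi et al.\ and the set-wise local-minimum result of Li et al.\ as partial evidence. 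Your suggestion that a complete proof would need some mild non-degeneracy hypothesis on $\sigma$ (e.g., non-constant on every interval) is a reasonable conjecture about the conjecture, and is consistent with the paper's remark that ``the precise version of Conjecture~\ref{conjecture of deep net GMR} seems non-trivial to prove.''
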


We then describe a related conjecture for GAN, which is easy to prove if Conjecture 
\ref{conjecture of deep net GMR} holds. 

\textbf{Conjecture 1} (informal): Suppose $G_w$ is a fully connected net 
satisfying Assump. \ref{assump 1 of overpara} (i.e.
a mildly wide final hidden layer). 
Suppose $G_w$ and $f_{\theta}$ are expressive enough (i.e. Assump. \ref{G repres assumption 1} and Assump. \ref{D repres assumption} hold). 
Then the RpGAN loss has a benign landscape, in the sense that any point is GMR
for $ \varphi_{\rm R}( w )  $. In contrast, the SepGAN loss does not have this property.

Unfortunately, we are not aware of any existing work that has proved
Conjecture \ref{conjecture of deep net GMR}, thus we are not able to 
prove Conjecture 1 above for now. 
\citet{venturi2018spurious} proved a special case of Conjecture \ref{conjecture of deep net GMR} for $L = 1$ (one hidden layer),
and other works such as  \citet{li2018over} prove a weaker version
of Conjecture \ref{conjecture of deep net GMR}; see \cite{sun2020global} for other
related results. 
The precise version of Conjecture \ref{conjecture of deep net GMR} seems non-trivial to prove.

\iffalse 
These assumptions can be relaxed to a neural net with one wide layer (not necessarily the final hidden layer) and  activation functions being increasing functions in the layers above this wide layer as analyzed in \cite[Theorem 2]{li2018over}. Alternative conditions on the activation functions are provided in \cite{nguyen2017loss,nguyen2018loss}.
It is not hard to follow \cite[Theorem 2]{li2018over} and the result of  \cite{nguyen2018loss}.
 to present two different types of results; for simplicity,  we just follow the conditions of
 \cite[Theorem 1]{li2018over} to present one representative result. 
 \fi 

We list two results on GAN that can be derived from weaker versions of Conjecture \ref{conjecture of deep net GMR}; both results apply to the whole space instead of the dense subset $\mathcal{W}$.

  \textbf{Result 1} (1-hidden-layer):
Suppose $G_w$ is 1-hidden-layer network  with any continuous activation. Suppose 
it satisfies Assump. \ref{assump 1 of overpara} (i.e.
a mildly wide final hidden layer). 
Suppose $G_w$ and $f_{\theta}$ are expressive enough (i.e. Assump. \ref{G repres assumption 1} and Assump. \ref{D repres assumption} hold). 
Then the RpGAN loss satisfies GMR for any point.
This result is based on  \citet{venturi2018spurious}.

   \textbf{Result 2}: 
  Suppose $G_w$ is a fully connected network
  with any continuous activation and any number of layers.
  Suppose it satisfies Assump. \ref{assump 1 of overpara} (i.e.
a mildly wide final hidden layer). Suppose $G_w$ and $f_{\theta}$ are expressive enough (i.e. Assump. \ref{G repres assumption 1} and \ref{D repres assumption} hold). 
Then the RpGAN loss has no sub-optimal set-wise local minima (see \cite[Def. 1]{li2018over} for the definition). This  result is based on  \citet{li2018over}.

Due to space constraint, we do not present the proofs of the above two results
(combining them with GANs is somewhat cumbersome).
The high-level proof framework is similar to that of
 Prop. \ref{prop: first realization using dense W}.

\subsection{Proofs of Propositions for Parameter Space }

\textbf{Proof of Proposition \ref{prop-formal: JSGAN param space}.} 
The basic idea is to build a relation between
 the points in the parameter space to the points in the function space.

Denote 
$ \mathcal{L}_{\rm sep}( w ; \theta ) 
= \frac{1}{2 n} \sum_{i=1}^n   [ h_1 (   f_{ \theta}( x_i )  ) + h_2 ( -   f_{\theta} ( G_w( z_i ) )  )   ] $, 
then 
$    \varphi_{\rm sep}( w )  = \sup_{ \theta }  \mathcal{L}_{\rm sep} ( w ; \theta ). 
$ 
Denote 
$ L_{\rm sep}( Y ;  f )  = \frac{1}{2 n} \sum_{i=1}^n   
[ h_1 (  f(x_i) ) + h_2 (  - f( y_i) ) ]  $,
and $ \phi (Y, X) = \sup_f L_{\rm sep} (Y; f) .  $ 
Note that in the definition of the two functions above, the discriminator is hidden
 in the $ \sup$ operators, thus we have freedom to pick the discriminator values (unlike
 the generator space which we have to check all $w $ in the inverse of $Y$). 

Our goal is to analyze the landscape of   $  \varphi_{\rm sep}( w ) $,
based on the previously proved result on the landscape of $ \phi (Y, X) $.
We first show that the image of
$ \varphi_{\rm sep}( \hat{w }  ) $ is the same as that of  $ \phi_{\rm sep}( \hat{ Y } , X) $.

Define  $  G^{-1}(Y) \triangleq  \{   w:  G_w(z_i) =  y_i, i=1, \dots, n  \} . $ 
We first prove that
\begin{equation}\label{eq App J proof of value corresp}
	 \phi_{\rm sep}( \hat{ Y } , X)  = \varphi_{\rm sep}( \hat{w }  ), ~
	 \forall ~ \hat{w } \in G^{-1}( \hat{ Y } ).
\end{equation}
Suppose $ \phi_{\rm sep}(   \hat{ Y } , X)  = \alpha $. This implies that  $ L_{\rm sep}( \hat{Y} ;  f ) \leq \alpha $ for any $f$; in addition, for any $\epsilon > 0 $ there exists $ \hat{f} \in C ( \mathbb{R}^d ) $
such that  
\begin{equation}\label{eq App J temp L}
	 L_{\rm sep}( \hat{Y} ;  \hat{f} )   \geq  \alpha  - \epsilon .  
	\end{equation}

According to Assumption \ref{D repres assumption},  there exists $ \theta^* $ such that
$ f_{\theta^* }( x_i ) = \hat{f}( x_i ),  ~ \forall ~ i$, and 
$ f_{\theta^*  }( u )  = \hat{f}( u )  , \forall ~ u \in   \{ y_1, \dots, y_n \} \backslash \{ x_1 , \dots, x_n \}  $.
In other words, there exists $ \theta^* $ such that
 \begin{equation}\label{eq App Jmapping}
  f_{\theta^* }( x_i ) = \hat{f}( x_i ), ~   f_{\theta^* }( y_i ) = \hat{f}( y_i ) , ~ \forall ~ i  .
 \end{equation}
  \iflonger
  \footnote{ We can slightly modify Assumption \ref{D repres assumption}: we allow $v_i$'s are not distinct, but
 then we add the trivial requirement that $b_i = b_j$ whenever $v_i = v_j$. With this
modification, we can directly apply the assumption to obtain \eqref{eq App Jmapping}.   }
\fi
Then we have
{\equationsizeReg
\begin{align*}
 \mathcal{L}_{\rm sep}( \hat{w } ; \theta^*(\epsilon ) ) 
&  =  \frac{1}{ 2n } \sum_{i=1}^n   [ h_1 (   f_{  \theta^*   } ( x_i )  ) +
 h_2 ( -   f_{ \theta^* } ( G_{ \hat{w} } ( z_i ) )  )   ]    \overset{\rm (i) }{ = }     \sum_{i=1}^n   [  h_1 (   f_{  \theta^*   } ( x_i )  )  +
 h_2 ( -   f_{ \theta^* } ( \hat{y}_j  )  )  ] \\
 &    \overset{\rm (ii) }{ = }     \frac{1}{ 2n }  \sum_{i=1}^n   [ h_1 (  \hat{f} ( x_i )  ) + h_2 ( -  \hat{f} ( \hat{y}_i )   )   ]     =   L_{\rm sep}( \hat{Y} ;  \hat{f} )    \overset{\rm (iii) }{ \geq }   \alpha - \epsilon.
\end{align*}
}
In the above chain, (i) is due to the assumption $ \hat{w } \in G^{-1}( \hat{ Y } ) $ (which implies  
$G_{ \hat{w} } ( z_j  )  =  \hat{y}_j  $), 
(ii) is due to   the choice of $\theta^* $.
 (iii) is due to  \eqref{eq App J temp L}.

Therefore, we have 
$  \varphi_{\rm sep}( \hat{w } )  = \sup_{ \theta } 
       \mathcal{L}_{\rm sep} ( \hat{w } ; \theta )  \geq   \mathcal{L}_{\rm sep}( \hat{w } ; \theta^*(\epsilon ) )  \geq  \alpha -  \epsilon .
       $ 
  Since this holds for any $ \epsilon $, 
  we have $   \varphi_{\rm sep}( \hat{w } ) \geq \alpha . $
 Similarly, from $ \mathcal{L}_{\rm sep}(  \hat{w }  ;  \theta  ) 
 \leq \alpha $ we can obtain $  \varphi_{\rm sep}( \hat{w } ) \leq \alpha . $
Therefore  $
  \varphi_{\rm sep}( \hat{w }  ) = \alpha = \phi_{\rm sep}( \hat{ Y } , X) .
$
This finishes the proof of  (\ref{eq App J proof of value corresp}).

Define 
{\equationsizeReg
\begin{align*}
	Q(X) \triangleq  \{  Y = (y_1, \dots, y_n) \mid  y_i \in \{ x_1, \dots, x_n \} , i\in \{ 1, 2, \dots, n\}; 
	y_i = y_j \text{ for some } i \neq j \}. 
\end{align*}
}
\iflonger 
According to Thm. \ref{prop: GAN all values, extension}, 
$ Y^* $ is a global minimum of  $ \phi_{\rm sep}(Y , X) $ 
iff $ Y^* \in P(X) $. %
Thus any $ w^* \in G^{-1}(Y^*) $  is a global minimum of 
$  \varphi_{\rm sep}( w )  $. %
On the contrary, any global minimum $w^* $ such that $  \varphi_{\rm sep}( w^* ) 
 =  \tau_1 $ must satisfy $ ( G_{w^*}(z_1), \dots, G_{w^*}(z_n) ) \in P(X) $
since otherwise it will contradict Theorem \ref{prop: GAN all values, extension}. 
\fi 
Any $Y  \in Q(X)$ is a mode-collapsed pattern. 
According to Theorem \ref{prop: GAN all values, extension}, any $ Y \in  Q (X) $ is a strict local minimum of $ \phi_{\rm sep}(Y , X) $, and thus $Y$ is not GMR.
Therefore $ \hat{w} \in G^{-1} (Y) $ where $Y \in Q(X) $
is not GMR; this is because a non-decreasing path in the parameter space
 will be mapped to a non-decreasing path in the function space,
 causing contradiction. 
\iflonger 
Assume the contrary, that there is a continuous path $ w(t) $
from $ w(0)=  \hat{w}  $ to $ w(1) =  w^* $, where $ w^* $ is a certain global minimum,
along which the function value $  \varphi_{\rm sep}(   w(t) )   $ is non-increasing. 
Since $ G_{ w} (z_1, \dots, z_n ) $ is a continuous function of $ w $,
the path $ Y(t) =  G_{ w(t) }(z_1,\dots, z_n ), t \in [ 0 , 1 ] $ is a continuous 
 path in the space of $Y$. 
 The starting point is $  G_{ w(0) }(z_1,\dots, z_n ) = G_{\hat{w} }(z_1,\dots, z_n ) =  \hat{Y}  $,
 and the ending point is   $  G_{ w( 1 ) }(z_1,\dots, z_n ) = G_{ w^* }(z_1,\dots, z_n ) =  Y^* $.
Therefore, we have found a continuous path from $ \hat{Y} $ to $ Y^* $
such that the loss value  $  \varphi_{\rm sep}( \hat{w }  ) = \phi_{\rm sep}( \hat{ Y } , X) $
is non-increasing.  This contradicts the conclusion of Theorem \ref{prop: GAN all values, extension}  that any
 $ \hat{Y} \in Q (X )  $ is not a strict local-min. 
 Therefore, we have proved that  any $ \hat{w} \in G^{-1} (Y) $ where $Y \in \Omega $,
is not global-min-reachable.
\fi 
Finally, according to Assumption  \ref{G repres assumption 1}, for any $ Y $  there exists at least one pre-image $ w  \in G^{-1}( Y ) \cap  \mathcal{W}   $.
 There are $  ( n^n - n ! )  $ elements in $ Q(X)  $, thus 
 there are at least $ (n^n  - n! ) $ points in $  \mathcal{W}  $ that are not global-min-reachable. 
$\Box $

\textbf{Proof of Proposition \ref{prop-formal: CPGAN param space}.} 
Similar to Eq.~\eqref{eq App J proof of value corresp}, we have
$ \varphi_{\rm R} ( w)= \phi_{\rm R} (Y, X) $ for any $w \in G^{-1}(Y)$. 
\iflonger 
The rest of the proof is slightly different from that of Prop. \ref{prop-formal: JSGAN param space} because we want to prove the existence of a non-increasing path in the parameter space (a positive result). We explain how this is enabled by the assumptions. 
\fi 
We need to prove that there is a non-decreasing path from any $ w_0 \in \mathcal{W}  $ to $ w^* $, where $ w^* $ is a certain global minimum.  Let $ Y_0 = G_{ w_0 }(z_1,\dots, z_n ) $. According to Thm. \ref{prop: RS-GAN all values, extension}, there is a continuous path $Y(t)$ from $ Y_0 $ to $ Y^* $ along which the loss value $  \phi_{\rm R}( Y(t) , X) $ is non-increasing.
According to Assump. \ref{G repres assumption 2}, there is a continuous path $ w(t) $ such that $ w(0) = \hat{w} $, $ Y(t) =  G_{w(t)} (Z)  , t \in [ 0 , 1]$. Along this path, the value $ \varphi_{\rm R}( w(t) ) = \phi_{\rm R}( Y(t) , X)  $ is non-increasing, and at the end the function value $ \varphi_{\rm R}( w(1) ) = \phi_{\rm R}( Y^* , X) $ is the minimal value of $ \varphi_{\rm R}(w) $. 
 Thus the existence of such a path is proved. 
$\Box $

\subsection{A technical lemma}

We  present a technical lemma, that slightly generalizes
\cite[Proposition 1]{li2018over}.

\begin{assumption}\label{assump 3 general}
	$v_1 , v_2, \dots, v_{ \barn } \in \mathbb{R}^d $ are distinct, i.e., $v_i \neq v_j$ for any $ i \neq j$. 
\end{assumption}

\begin{lemma}\label{lemma zero measure}
	Define $ T_H ( V ) = (  \sigma( W_{ H -1} \dots W_2 
	\sigma( W_1  v_i ) ))_{i=1}^{\barn} \in \mathbb{R}^{d_H \times \barn }  $. 
	Suppose Assumptions \ref{assump 1 of overpara}, \ref{assump 2 of overpara}
	and \ref{assump 3 general} hold. 
	Then the set    $ \Omega = \{  (W_1, \dots, W_{H-1}) :   \text{rank}(T_H( V ) ) 
	<  \barn  \} $
	has zero measure. 
\end{lemma}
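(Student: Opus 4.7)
\textbf{Proof plan for Lemma \ref{lemma zero measure}.}

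The plan is to exploit the fact that each entry of $T_H(V)$ is a real-analytic function of the weights $(W_1,\dots,W_{H-1})$, since $\sigma$ is analytic by Assumption \ref{assump 2 of overpara} and compositions/products of analytic functions are analytic. Consequently, the determinant of any $\bar n\times\bar n$ submatrix of $T_H(V)$ (of which there are finitely many, since $d_H\ge\bar n$ by Assumption \ref{assump 1 of overpara}) is a real-analytic function on $\mathbb{R}^K$, where $K=\sum_{k=1}^{H-1}\dim(W_k)$. The set $\Omega$ is precisely the locus where every such minor vanishes; in particular, $\Omega$ is contained in the zero set of any one chosen minor $\Delta(W_1,\dots,W_{H-1})$. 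Since a real-analytic function on $\mathbb{R}^K$ that is not identically zero vanishes only on a set of Lebesgue measure zero, it suffices to exhibit one concrete choice of weights at which some $\bar n\times\bar n$ minor of $T_H(V)$ is nonzero. The problem thus reduces to a pure existence question.

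I will construct such a point inductively on $H$. For the base case $H=2$, we have $T_2(V)=\sigma(W_1 V)$ with rows indexed by the neurons of the first hidden layer. Choose $w_0\in\mathbb{R}^{d_1}$ generically so that the scalars $\alpha_i:=w_0^{\top}v_i$ are pairwise distinct (this is achieved outside finitely many hyperplanes in $\mathbb{R}^{d_1}$, nonempty because the $v_i$'s are distinct by Assumption \ref{assump 3 general}), pick distinct scalars $s_1,\dots,s_{\bar n}$, and set the first $\bar n$ rows of $W_1$ to $\epsilon s_j w_0^{\top}$ for a small parameter $\epsilon>0$. The resulting $\bar n\times \bar n$ submatrix $M(\epsilon)$ has entries $M_{ji}(\epsilon)=\sigma(\epsilon s_j\alpha_i)=\sum_{k\ge0}a_k\epsilon^k s_j^k\alpha_i^k$ with $a_k=\sigma^{(k)}(0)/k!\ne 0$ for $0\le k\le\bar n$ (Assumption \ref{assump 2 of overpara}). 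Writing $M(\epsilon)=S\,D(\epsilon)\,A^{\top}$ with $S_{jk}=s_j^k$, $A_{ik}=\alpha_i^k$, and $D(\epsilon)=\operatorname{diag}(a_k\epsilon^k)$ (each a $\bar n\times\infty$ factorization) and applying the Cauchy--Binet formula, one gets
\[
\det M(\epsilon)\;=\;\sum_{\mathcal K}\det S_{\mathcal K}\,\Bigl(\prod_{k\in\mathcal K}a_k\epsilon^k\Bigr)\,\det A_{\mathcal K},
\]
where $\mathcal K$ ranges over $\bar n$-subsets of $\mathbb{Z}_{\ge 0}$. The unique $\mathcal K$ minimizing $\sum_{k\in\mathcal K}k$ is $\{0,1,\dots,\bar n-1\}$ (giving exponent $\binom{\bar n}{2}$), and for that $\mathcal K$ both $\det S_{\mathcal K}$ and $\det A_{\mathcal K}$ are standard Vandermonde determinants, nonzero because the $s_j$'s and the $\alpha_i$'s are distinct. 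All other $\mathcal K$ contribute strictly higher powers of $\epsilon$, so $\det M(\epsilon)=c\,\epsilon^{\binom{\bar n}{2}}+O(\epsilon^{\binom{\bar n}{2}+1})$ with $c\ne 0$, hence $\det M(\epsilon)\ne 0$ for all sufficiently small $\epsilon>0$.

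For the inductive step $H\ge 3$, assume we have exhibited weights $W_1^{\ast},\dots,W_{H-2}^{\ast}$ making $T_{H-1}(V)$ have rank $\bar n$. Then its $\bar n$ columns $u_1,\dots,u_{\bar n}\in\mathbb{R}^{d_{H-1}}$ are linearly independent, hence in particular pairwise distinct, and we can apply the base-case construction to the matrix with columns $u_i$ (via the weight $W_{H-1}$) to produce $W_{H-1}^{\ast}$ for which some $\bar n\times\bar n$ submatrix of $\sigma(W_{H-1}^{\ast}[u_1,\dots,u_{\bar n}])=T_H(V)$ has nonzero determinant. Combined with the analyticity argument of the first paragraph, this proves the lemma. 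The main technical obstacle I expect is the careful bookkeeping in the Cauchy--Binet step to confirm that only the Vandermonde-aligned term $\mathcal K=\{0,\dots,\bar n-1\}$ contributes to the lowest power $\epsilon^{\binom{\bar n}{2}}$ and that no cancellation occurs; this is essentially the content of \cite[Proposition 1]{li2018over}, and the nondegeneracy of all $a_k$ on the range $0\le k\le\bar n$ granted by Assumption \ref{assump 2 of overpara} is precisely what keeps the leading coefficient nonzero.
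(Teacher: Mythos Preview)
Your overall strategy is sound and is a genuinely different route from the paper's. The paper does not reprove the full-rank statement from scratch; instead it observes that distinct vectors can be linearly mapped to vectors with one distinct coordinate, absorbs that linear map into $W_1$, and then invokes \cite[Proposition~1]{li2018over} as a black box, transferring the zero-measure conclusion back through the invertible change of variables. Your approach, by contrast, unpacks the content of that proposition directly via analyticity plus an explicit Vandermonde/Cauchy--Binet construction. That is more self-contained and makes the role of Assumption~\ref{assump 2 of overpara} transparent.

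There is, however, one technical gap in your induction. Your inductive hypothesis is that $T_{H-1}(V)$ can be made to have rank $\bar n$, which presupposes $d_{H-1}\ge\bar n$. But Assumption~\ref{assump 1 of overpara} only guarantees $d_H\ge\bar n$ for the \emph{last} hidden layer; intermediate layers may be narrower, in which case rank $\bar n$ at layer $H-1$ is impossible and the induction breaks. You in fact only \emph{use} that the columns $u_1,\dots,u_{\bar n}$ of $T_{H-1}(V)$ are pairwise distinct, so the fix is to weaken the inductive hypothesis to distinctness and propagate that layer by layer: if the columns of $T_{k}(V)$ are distinct, pick a single row $w$ of $W_{k}$ generically so that the scalars $w^{\top}u_i$ are distinct, and scale so they lie in a neighborhood of $0$ on which $\sigma$ is injective (possible since $\sigma'(0)\ne 0$); then the columns of $T_{k+1}(V)$ are already distinguished by that single coordinate. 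Only at the final layer do you need $d_H\ge\bar n$ to run the Cauchy--Binet argument and obtain full rank. (A minor point: $w_0$ should live in the input space $\mathbb{R}^{d}$, not $\mathbb{R}^{d_1}$.)
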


This claim is slightly different from \cite[Proposition 1]{li2018over}, which requires the input vectors to have one distinct dimension (i.e., there exists $j$ such that $v_{1j}, \dots, v_{\barn, j}$ are distinct);
 here we only require the input vectors to be distinct.
 It is not hard to  link ``distinct vectors'' to ``vectors with one distinct dimension'' 
by a variable transformation.

\begin{claim}
	Suppose $ v_1,  \dots, v_m \in \mathbb{R}^d $ are distinct. 
Then for generic matrix $W \in \mathbb{R}^{ d \times d }$,
for the vectors $ \bar{v}_i = W v_i \in \mathbb{R}^d, i=1, \dots, n$, there exists $j$ such that $ \bar{v}_{1j}, \dots, \bar{v}_{ m, j}$ are distinct. 
\end{claim}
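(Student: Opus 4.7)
The plan is to observe that the claim can be reduced to a statement about a single row of $W$, and then to invoke the elementary fact that a finite union of proper hyperplanes in $\mathbb{R}^d$ has Lebesgue measure zero. In particular I will not need to use the full matrix $W$; making the first coordinate alone separate the points will already suffice.

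Write $W$ row-wise as $W = (w_1, w_2, \dots, w_d)^\top$ with $w_j \in \mathbb{R}^d$. Then the $j$-th coordinate of $\bar v_i = W v_i$ is $\bar v_{i,j} = \langle w_j, v_i\rangle$. I will focus on $j=1$. For each pair of indices $i \neq k$, the condition $\bar v_{i,1} = \bar v_{k,1}$ is equivalent to $w_1 \in H_{ik}$, where
\[
H_{ik} \;\triangleq\; \{\, w \in \mathbb{R}^d \mid \langle w,\, v_i - v_k\rangle = 0 \,\}.
\]
Since the $v_i$ are distinct, $v_i - v_k \neq 0$ whenever $i \neq k$, so each $H_{ik}$ is a proper linear hyperplane of $\mathbb{R}^d$ and hence has Lebesgue measure zero.

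The remaining step is just a union bound: the set of ``bad'' first rows, for which coordinate $j=1$ fails to separate the points, is
\[
\Omega_1 \;=\; \bigcup_{1 \le i < k \le m} H_{ik},
\]
a finite union of measure-zero sets, hence itself measure zero (and contained in a proper algebraic subvariety of $\mathbb{R}^d$, which is the relevant notion of ``generic''). Therefore, for every $W$ whose first row $w_1$ lies in the complement of $\Omega_1$ (a generic condition in $\mathbb{R}^{d\times d}$), the scalars $\bar v_{1,1}, \dots, \bar v_{m,1}$ are pairwise distinct, so $j=1$ is the desired coordinate and the claim follows.

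There is essentially no obstacle in this argument: the only thing to note is that ``generic'' here should be interpreted in the sense consistent with Lemma~\ref{lemma zero measure}, namely that the exceptional set lies in a finite union of proper hyperplanes and hence has zero Lebesgue measure in $\mathbb{R}^{d\times d}$ (the other rows of $W$ being completely free).
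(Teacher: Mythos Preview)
Your proof is correct and follows essentially the same approach as the paper: both focus on the first row of $W$, identify the bad set as the finite union of hyperplanes $\{w:\langle w,v_i-v_k\rangle=0\}$ over pairs $i\neq k$, and conclude by the zero-measure property of this union. Your write-up is in fact a bit more careful about what ``generic'' means, but the underlying argument is identical.
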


\begin{proof}
Define the set $ \Omega_0 = \{ u \mid  u \in \mathbb{R}^{ 1 \times d }, \exists i \neq j \text{ s.t. } u^T v_i = u^T v_j \} $. This is the union of $d(d-1)$ hyperplanes $ \Omega_{ij} \triangleq \{ u \mid  u \in \mathbb{R}^{ 1 \times d }, u^T v_i = u^T v_j  \} $. Each hyperplane $\Omega_{ij}$ is a zero-measure set, thus the union of them $\Omega_0$  is also a zero-measure set. 
Let $u$ be the first row of $W$, then $u$ is generic vector and thus not
in  $ \Omega_0 $, which implies  $ \bar{v}_{1 1}, \dots, \bar{v}_{ m, 1}$ are distinct. 
\end{proof}

\iflonger 
This means in the neural network problem, as long as the input vectors are distinct, then we can transform them into input vectors with a dinstinct dimension. 
For instance,
 a neural network $ f(W_2, W_1) = W_2 \sigma( W_1 X ) $ can be transformed
 to $ g(W_2, \hat{W}_1) = W_2 \sigma ( \hat{W}_1 \hat{X}) $ where $\hat{X} = A X $ and $\hat{W_1} = W_1 A^{-1}$,
 and $A$  is a  $d \times d$  generic matrix. 
This transformation does not affect the topological property (it may affect quantitative properties though), thus all existing results on the topological properties of $g$ apply to $f$. For instance, if there is a decreasing path in the space of $(W_2, \hat{W}_1)$, then after the continuous transformation of $\eta (W_2, \hat{W_1}) = (W_2, \hat{W_1} A^{-1}) = (W_2, W_1)$ we obtain a decreasing path in the space of $(W_2, W_1)$. 
\fi

\textbf{Proof of Lemma \ref{lemma zero measure}:}
 Pick a generic matrix $A \in \mathbb{R}^{ d_v \times d_v } $, then
  $ \bar{v}_i  = A v_i  \in \mathbb{R}^{ d_v \times 1 } $ has one distinct dimension,
  i.e.,  there exists $j$ such that $ \bar{v}_{1j}, \dots, \bar{v}_{\barn, j}$ are distinct. 
 In addition, we can assume $A $ is full rank (since it is generic).  
	Define 
	\[  \bar{T}_H ( \bar{V} ) = (  \sigma( W_{ H -1} \dots W_2 \sigma( \bar{W}_1  \bar{v}_1 ) ), \dots, 
\sigma( W_{ H -1} \dots W_2 \sigma( \bar{W}_1  \bar{v}_{\barn} )  )
\in \mathbb{R}^{d_H \times \barn }  . \] 
According to \cite[Prop. 1]{li2018over}, the set $ \bar{\Omega} = \{  ( \bar{W}_1, W_2, W_3, \dots, W_{H-1}):  \text{rank}( \bar{T}_H ( \bar{V} ) ) <  \barn  \} $ has zero measure. 
With the transformation $ \eta_0 ( \bar{W}_1 ) =  \bar{W}_1 A^{-1}  $, 
we have $  \sigma( W_{ H -1} \dots W_2 \sigma( \bar{W}_1  \bar{v}_i ) )
 =   \sigma( W_{ H -1} \dots W_2 \sigma(W_1 v_i  ) )  ,  ~ \forall ~ i  $ and
 thus $  \bar{T}_H ( \bar{V} )   =   T_H( V ) . $
 Define $ \eta ( \bar{W_1}, W_2, \dots, W_m ) = (  \bar{W}_1 A^{-1} , W_2, \dots, W_m ) $,
then $\eta$ is a homeomorphism between $  \bar{\Omega} $ and $ \Omega$. 
Therefore the set    $ \Omega = \{  (W_1, \dots, W_{H-1}) :   \text{rank}(  T_H( V ) ) <  \barn  \}   $
has zero measure.   $\Box $

\subsection{Proof of claims}\label{appen: repr claim proof}
\textbf{Proof of Claim \ref{representation claim of G}:}
According to Lemma \ref{lemma zero measure}, $\mathcal{W}$ is a dense subset of 
$ \mathbb{R}^J $ (in fact, $\Omega$ is defined for a general neural network,
and  $\mathcal{W}$ is defined for the generator network, thus an instance of $\Omega$). 
As a result, there exists $ (W_1, \dots, W_{H-1}) $ such that
$ T_H(Z) $ has rank at least  $ n $.    %
Thus for any $ y_1, y_2, \dots, y_n \in \mathbb{R}^d  $, 
there exists $W_H$ such that $ W_H T_H(Z) = (y_1, \dots, y_n) $. 
$\Box $

\textbf{Proof of Claim \ref{claim of continuous path, full rank case}: }
 For any continuous path $Y(t), t \in [0, 1]$ in the space $ \mathbb{R}^{d \times n}  $, any $w_0 \in G^{-1}(Y (0) ) $ and any $\epsilon > 0$,
our goal is to show that there exists a continuous path $ w(t), t \in [0,1]  $ such that  $ w( 0 ) = w_0  $
 and $ Y(t) = G_{w(t)}(Z)  , t \in [0, 1] $.

Due to the assumption of $w_0 \in \mathcal{W}$, we know that
$ w_0 $ corresponds to a rank-$n$ post-activation matrix $ T_{ H } (Z) $.
Suppose $ w_0 = ( W_1, \dots, W_{H} ) $ 
and $T_{H }(Z) = (T_{H } (z_1),
\dots, T_{H }(z_n)) \in \mathbb{R}^{ d_{H }  \times n }  $ has rank $ n $.
Since $ T_H(Z) $ is full rank, for any path
from $ Y(0) $ to  $ Y(1)$, we can continuously change 
$ W_H  $ such that the output of $ G_w(Z) $ changes from $ Y(0) $ to $ Y(1) $.
Thus there exists a continuous path $ w(t), t \in [0,1]  $ such that 
$   w(0) = w_0  $ and $ Y(t) = G_{w(t)}(Z)  , t \in [0, 1] $. 
$\Box $

\textbf{Proof of Claim \ref{representation claim of D}:}
This is a direct application of  Lemma \ref{lemma zero measure}.
Different from Claim \ref{claim of continuous path, full rank case}, here we
apply  Lemma \ref{lemma zero measure} to the discriminator network.   $\Box $

\section{Discussion of Wasserstein GAN}\label{appen: W-GAN}

W-GAN is a popular formulation of GAN, so a natural question
is whether we can prove a similar landscape result for W-GAN. 
Consider W-GAN formulation (empirical version)
$
\min_{Y} \phi_{\rm W}(Y, X),
$ 
where
\[ 
\phi_{\rm W}(Y, X) = \max_{ | f|_L \leq 1 } 
\frac{1}{n} \sum_{i=1}^n [ f(x_i)  - f(y_i) ].
\] 
For simplicity we consider the same number
of generated samples and true samples.
It can be viewed as a special case of RpGAN where $h(t) = - t $;
it can also be viewed as a special case of 
SepGAN where $h_1 (t) = h_2(t) = - t$.

However, the major complication is the Lipschitz constraint.
It makes the computation of the function values much harder. 
For the case of $n = 2$, the function value of $ \phi_{\rm W}(Y, X )  $
is provided in the following claim. 
\begin{claim}
Suppose $n = 2$. 
 Denote $ a_1 = x_1, a_2 = x_2, a_3 = y_1, a_4 = y_2 $. 
 The value of $ \phi_{\rm W}(Y, X )  $ is
\begin{align*}
  \max_{ u_1, u_2, u_3 , u_4 \in \mathbb{R} }  &  u_1 + u_2 - u_3 - u_4,  \\
  \text{s.t.} \quad  &  | u_i - u_j |  \leq  \| a_i - a_j \|,  \forall i, j \in \{ 1, 2, 3, 4 \}. 
\end{align*}
\end{claim}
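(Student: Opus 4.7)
The plan is to establish the identity in two directions: the sup over $1$-Lipschitz $f$ is at most the value of the linear program (LP) on the right, and at least that value. The LP is a finite-dimensional relaxation obtained by ``forgetting'' $f$ everywhere except at the four points $a_1,\dots,a_4$, so the inequality ``sup $\leq$ LP'' is the easy direction, while ``sup $\geq$ LP'' requires constructing an admissible extension. Note that with $n=2$ the objective $\frac{1}{n}\sum_i [f(x_i)-f(y_i)]$ reduces to $\tfrac12(f(a_1)+f(a_2)-f(a_3)-f(a_4))$, so up to the harmless factor $\tfrac12$ (which appears to be absorbed into the statement), this is exactly $u_1+u_2-u_3-u_4$ with $u_i = f(a_i)$.

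For the upper bound, I would fix any $1$-Lipschitz $f:\mathbb{R}^d\to\mathbb{R}$ and set $u_i = f(a_i)$ for $i=1,2,3,4$. The Lipschitz condition gives $|u_i-u_j|=|f(a_i)-f(a_j)|\leq \|a_i-a_j\|$ for all pairs, so $(u_1,\dots,u_4)$ is feasible for the LP. Taking the supremum over $f$ yields $\phi_{\rm W}(Y,X)\leq$ LP-value (modulo the $\tfrac12$ scaling).

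For the lower bound, given any feasible $(u_1,u_2,u_3,u_4)$, I would invoke the McShane–Whitney extension theorem: define
\[
f(x) \;=\; \min_{1\leq i\leq 4}\bigl(u_i + \|x-a_i\|\bigr),
\]
which is the largest $1$-Lipschitz function satisfying $f(a_j)\leq u_j$ at each $a_j$. A direct check shows $f$ is $1$-Lipschitz (minimum of $1$-Lipschitz functions) and, crucially, $f(a_j)=u_j$: the choice $i=j$ gives $f(a_j)\leq u_j$, and for any $i\neq j$ the pairwise constraint $u_j - u_i \leq \|a_i-a_j\|$ forces $u_i+\|a_j-a_i\|\geq u_j$. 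Plugging this $f$ into $\phi_{\rm W}(Y,X)$ recovers $u_1+u_2-u_3-u_4$ (up to the $\tfrac12$), yielding the matching lower bound.

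The main ``obstacle'' is not really an obstacle but a conceptual point: one must observe that the only constraints the Lipschitz function places on its values at a finite set of points are the pairwise inequalities $|u_i-u_j|\leq \|a_i-a_j\|$, and that these are sufficient (via the explicit McShane formula) to guarantee a global $1$-Lipschitz extension. Once this is in place, the proof is essentially a one-line LP duality style argument. The more interesting question, which this proof sets up but does not resolve, is how to analyze the resulting LP as a function of $Y$ in order to mimic the landscape statements of Theorems~\ref{prop: GAN all values, extension} and~\ref{prop: RS-GAN all values, extension}; the piecewise-linear, nonsmooth dependence on $Y$ through the pairwise distances $\|a_i-a_j\|$ is what makes the full W-GAN landscape analysis much harder than the separable or Rp cases handled earlier.
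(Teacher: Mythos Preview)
Your argument is correct: the two directions via restriction and McShane--Whitney extension are exactly the standard route, and your verification that $f(a_j)=u_j$ is clean. The paper itself does not give a proof of this claim at all --- it simply writes ``This claim is not hard to prove, and we skip the proof here'' --- so there is nothing to compare against. Your observation about the missing $\tfrac12$ factor is also accurate: with the paper's definition $\phi_{\rm W}(Y,X)=\tfrac1n\sum_i[f(x_i)-f(y_i)]$ and $n=2$, the LP value should be scaled by $\tfrac12$; the claim as stated drops this constant, which is harmless for the qualitative discussion that follows but is a genuine slip in the formula.
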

This claim is not hard to prove, and we skip the proof here.

This claim indicates that computing 
$ \phi_{\rm W}(Y, X )  $ is equivalent to solving a linear program (LP).
Solving LP itself is computationally feasible, 
but our landscape analysis requires to infer about the global
landscape of $ \phi_{\rm W}(Y, X )  $ as a function of $ Y $.
In classical optimization, it is possible to 
state that the optimal value of an LP is a convex function
of a certain parameter (e.g. the coefficient of the objective).
But in our LP $y_i$'s appear in multiple positions of the LP,
and we are not aware of an existing result that can be readily applied. 
 
Similar to  Kantorovich-Rubinstein Duality,
we can write down the dual problem of the LP 
where the objective is  linear combination of $ \| a_i - a_j  \| $.
However, it is still not clear what to say about the global landscape,
 due to the lack of closed-form solutions. 

Finally, we remark that although W-GAN has a strong theoretical appeal, it did not replace JS-GAN or simple variants of JS-GAN in recent GAN models. For instance, SN-GAN \cite{miyato2018spectral} and BigGAN \cite{brock2018large} use hinge-GAN.

\newpage

\begin{table}[t]
\tiny 
\begin{minipage}{.47\textwidth}
\centering
\begin{tabular}{cc}
\toprule
\textbf{(a) Generator} & \textbf{(b) Discriminator} \\
\midrule
$z \in \mathbb{R}^{128} \sim {\mathcal N}(0, I)$ & image $x \in [-1, 1]^{H \times W \times 3}$\\\midrule
128 $\rightarrow h \times w \times$ 512/c, dense, linear & $3 \times 3$, stride 1 conv, 64/c  \\\midrule
$4 \times 4$, stride 2 deconv, 256/c, BN, ReLU & $4 \times 4$, stride 2 conv, 128/c  \\
& $3 \times 3$, stride 1 conv, 128/c \\\midrule
$4 \times 4$, stride 2 deconv, 128/c, BN, ReLU & $4 \times 4$, stride 2 conv, 256/c \\
& $3 \times 3$, stride 1 conv, 256/c \\\midrule
$4 \times 4$, stride 2 deconv, 64/c, BN, ReLU & $4 \times 4$, stride 2 conv, 512/c \\
& $3 \times 3$, stride 1 conv, 512/c \\\midrule
$3 \times 3$, stride 1 conv, 3, Tanh  & $h \times w \times 512/c \rightarrow s$, linear\\
\bottomrule
\end{tabular}
\captionsetup{font={scriptsize}} 
\caption{CNN models for CIFAR-10 and STL-10 used in our experiments on image Generation. h = w = 4, H = W = 32 for CIFAR-10. h = w = 6, H = W = 48 for STL-10. c=1, 2 and 4 for the regular, 1/2 and 1/4 channel structures respectively.
All layers of D use LReLU-0.1 (except the final dense `'linear'' layer).
}
\label{table: CNN structure}
\end{minipage}
\hfill 
\begin{minipage}{.47\textwidth}
\centering
\begin{tabular}{cc}
\toprule
\textbf{(a) Generator} & \textbf{(b) Discriminator}\\
\midrule
$z \in {\mathbb{R}}^{128} \sim {\mathcal N}(0, I)$ &  $x \in [-1, 1]^{256 \times 256 \times 3}$\\\midrule
reshape $\rightarrow$ $128 \times 1 \times 1$ & $4 \times 4$, stride 2 conv, 32,  \\\midrule
$4 \times 4$, stride 1 deconv, BN, 1024  & $4 \times 4$, stride 2 conv, 64 \\\midrule
$4 \times 4$, stride 2 deconv, BN, 512 & $4 \times 4$, stride 2 conv, 128 \\\midrule
$4 \times 4$, stride 2 deconv, BN, 256 & $4 \times 4$, stride 2 conv, 256 \\\midrule
$4 \times 4$, stride 2 deconv, BN, 128 & $4 \times 4$, stride 2 conv, 512 \\\midrule
$4 \times 4$, stride 2 deconv, BN, 64 & $4 \times 4$, stride 2 conv, 1024 \\\midrule
$4 \times 4$, stride 2 deconv, BN, 32  & dense $\rightarrow$ 1 \\\midrule
$4 \times 4$, stride 2 deconv, 3, Tanh & \\
\bottomrule
\end{tabular}
\captionsetup{font={scriptsize}} 
\caption{CNN model architecture for size 256 LSUN used in our experiments on high resolution image generation. All layers of G use ReLU (except one layer with Tanh);
all layers of D use LReLU-0.1.
}
\label{table: lsun_cnn_model}
\end{minipage}%
\end{table}

\begin{table}[t]
\tiny 
\begin{minipage}{.5\textwidth}
\centering
\begin{tabular}{cc}
\toprule
\textbf{(a) Generator} & \textbf{(b) Discriminator}\\
\midrule
$z \in \mathbb{R}^{128} \sim {\mathcal N}(0, I)$ & image $x \in [-1, 1]^{32 \times 32 \times 3}$\\\midrule
dense,  $4 \times 4 \times 256$/c & ResBlock down 128/c\\\midrule
ResBlock up 256/c & ResBlock down 128/c \\\midrule
ResBlock up 256/c & ResBlock down 128/c \\\midrule
ResBlock up 256/c & ResBlock down 128/c \\\midrule
BN, ReLU, $3 \times 3$ conv, 3 Tanh & LReLU 0.1 \\\midrule
 & Global sum pooling  \\\midrule
&dense $\rightarrow$ 1\\
\bottomrule
\end{tabular}
\captionsetup{font={scriptsize}}
\caption{Resnet architecture for CIFAR-10. c=1, 2 and 4 for the regular, 1/2 and 1/4 channel structures respectively.}
\label{table: cifar_regualar_resnet }
\end{minipage}\qquad
\begin{minipage}{.5\textwidth}
\centering
\begin{tabular}{cc}
\toprule
\textbf{(a) Generator} & \textbf{(b) Discriminator}\\
\midrule
$z \in \mathbb{R}^{128} \sim {\mathcal N}(0, I)$ & image $x \in [-1, 1]^{48 \times 48 \times 3}$\\\midrule
dense,  $6 \times 6 \times 512$/c & ResBlock down 64/c\\\midrule
ResBlock up 256/c & ResBlock down 128/c \\\midrule
ResBlock up 128/c & ResBlock down 256/c \\\midrule
ResBlock up 64/c & ResBlock down 512/c \\\midrule
BN, ReLU, $3 \times 3$ conv, 3 Tanh &ResBlock down 1024/c\\\midrule
& LReLU 0.1 \\\midrule
 & Global sum pooling  \\\midrule
&dense $\rightarrow$ 1\\
\bottomrule
\end{tabular}
\captionsetup{font={scriptsize}}
\caption{Resnet architecture for STL-10. c=1, 2 and 4 for the regular, 1/2 and 1/4 channel structures respectively.}
\label{table: stl_regular_resnet }
\end{minipage}%
\end{table}

\iflonger 
\begin{figure}[t]
\centering
\begin{tabular}{ccc}
 \includegraphics[height = 4 cm, width = 2.5cm]{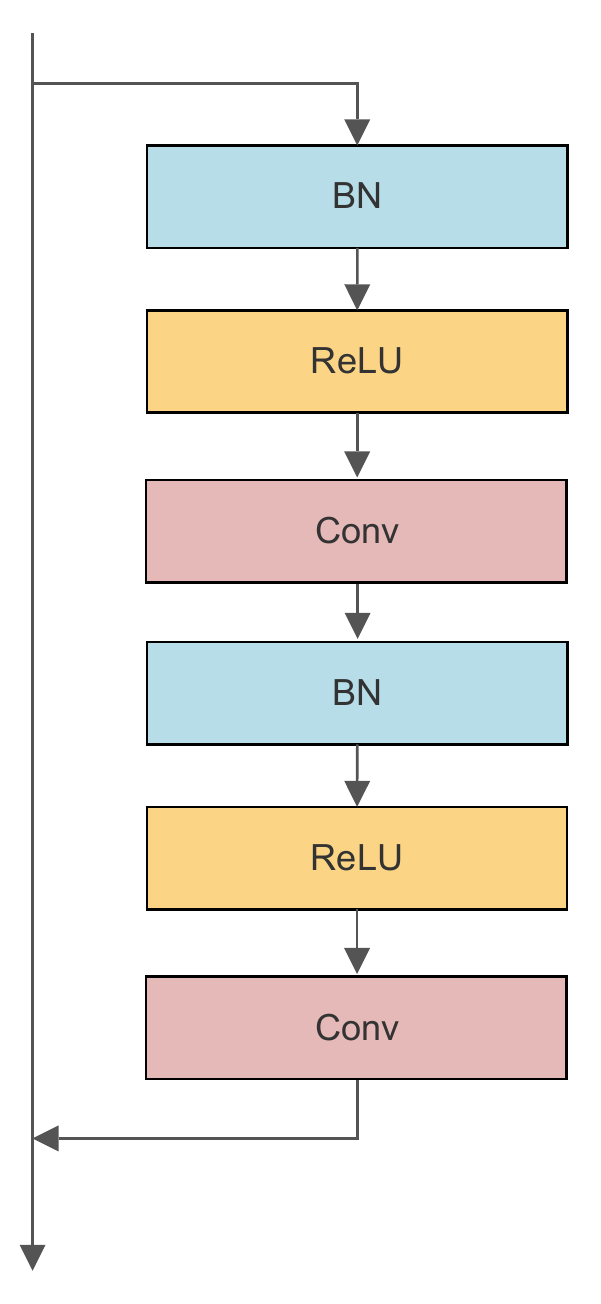} & 
 \hspace{2cm} &\includegraphics[height = 4 cm, width = 2.5cm]{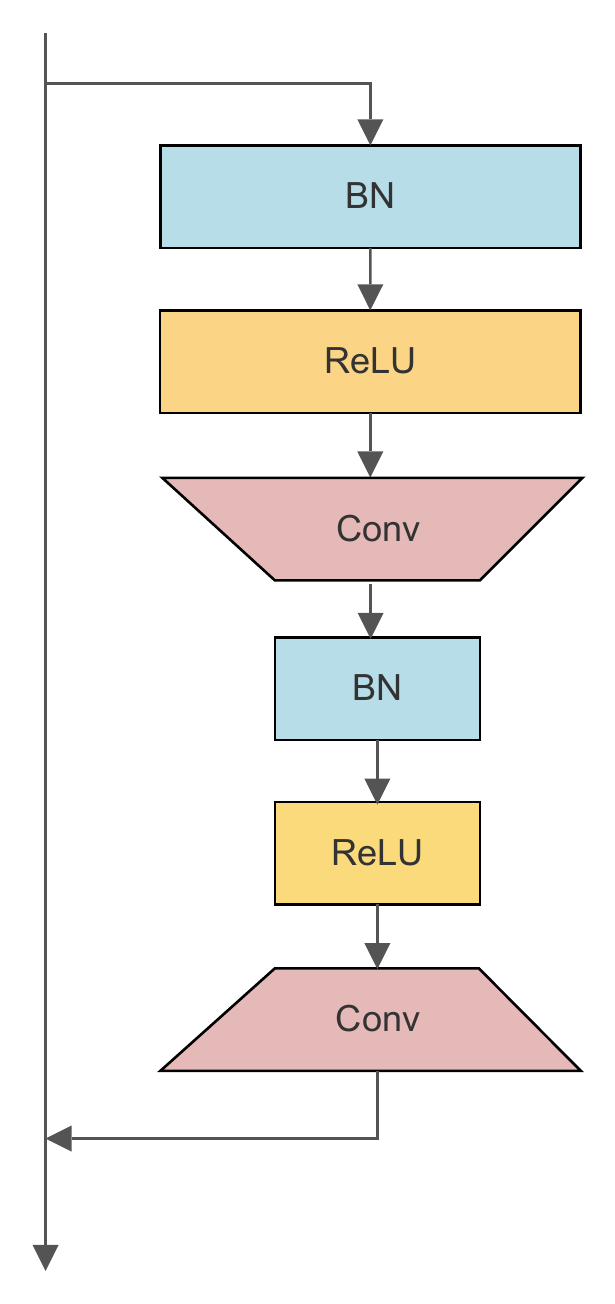}\\
 (a) && (b)
\end{tabular}
\vspace{-0.3cm}
\caption{(a) Regular ResBlock structure. (b) BottleNeck ResBlock structure. 
}
\label{fig:resblockstructure}
\vspace{0.5cm}
\end{figure}
\fi

\begin{table}[t]
\tiny
\begin{minipage}{.5\textwidth}
\centering
\begin{tabular}{cc}
\toprule
\textbf{(a) Generator} & \textbf{(b) Discriminator}\\
\midrule
$z \in \mathbb{R}^{128} \sim {\mathcal N}(0, I)$ & image $x \in [-1, 1]^{32 \times 32 \times 3}$\\\midrule
dense,  $4 \times 4 \times 128$ & BRes down (64, 32, 64) \\\midrule
BRes up (128, 64, 128) & BRes down (64, 32, 64)   \\\midrule
BRes up (128, 64, 128) & BRes down (64, 32, 64) \\\midrule
BRes up (128, 64, 128) & BRes down (64, 32, 64) \\\midrule
BN, ReLU, $3 \times 3$ conv, 3 Tanh & LReLU 0.1  \\\midrule
& Global sum pooling  \\\midrule
& dense $\rightarrow$ 1\\
\bottomrule
\end{tabular}
\captionsetup{font={scriptsize}}
\caption{BottleNeck Resnet models for CIFAR-10. BRes refers to BottleNeck ResBlock. 
BRes $(a, b, c)$ refers to the Bottleneck resblock with (input, hidden and output) being $(a, b , c)$.
}
\label{table: cifar_bottleneck_resnet }
\end{minipage}\qquad
\begin{minipage}[c]{.5\textwidth}
\centering
\begin{tabular}{cc}
\toprule
\textbf{(a) Generator} & \textbf{(b) Discriminator} \\
\midrule
$z \in {\mathbb{R}}^{128} \sim {\mathcal N}(0, I)$ & image $x \in [-1, 1]^{48 \times 48 \times 3}$ \\ \midrule
dense,  $6 \times 6 \times 256$ & BRes down (3, 16, 32)  \\\midrule
BRes up (256, 64, 128) & BRes down (32, 16, 64)   \\\midrule
BRes up (128, 32, 64)& BRes down (64, 32, 128)  \\\midrule
BRes up (64, 16, 32) & BRes down (128, 64, 256)  \\\midrule
BN, ReLU, $3 \times 3$ conv, 3 Tanh & BRes down (256, 128, 512)  \\\midrule
& LReLU 0.1 \\\midrule
& Global sum pooling  \\\midrule
& dense $\rightarrow$ 1\\
\bottomrule
\end{tabular}
\captionsetup{font={scriptsize}}
\caption{BottleNeck Resnet models for STL-10. 
}
\label{table: stl_bottleneck_resnet }
\end{minipage}
\end{table}

\begin{table}[t]
\tablesize 
\begin{minipage}{.5\textwidth}
\centering
\begin{tabular}{cccc}
\toprule
\multicolumn{4}{c}{\textbf{RS-GAN generator learning rate}} \\
\midrule
& & CIFAR-10 & STL-10\\
\multirow{4}{*}{CNN} & No normalization & 2e-4 & 5e-4\\
& Regular + SN & 5e-4 & 5e-4\\
& channel/2 + SN & 5e-4 & 5e-4\\
& channel/4 + SN & 2e-4 & 5e-4\\ \midrule
\multirow{4}{*}{ResNet} & Regular+SN & 1.5e-3 & 1e-3\\
& channel/2 + SN & 1.5e-3 & 1e-3\\
& channel/4 + SN & 1e-3 & 5e-4\\ 
& BottleNeck & 1e-3 & 1e-3\\
\bottomrule
\end{tabular}
\end{minipage}%
\begin{minipage}[c]{.6\textwidth}
\centering
\begin{tabular}{lc}
\toprule
\multicolumn{2}{c}{ \textbf{WGAN-GP Hyper-parameters }} \\
\midrule
generator learning rate &1e-4 \\
discriminator learning rate &1e-4\\
$\beta_1$ & 0.5\\
$\beta_2$ & 0.9\\
Gradient penalty $\lambda$ &10\\
\# D iterations per G iteration & 5 \\
\bottomrule
\end{tabular}
\end{minipage}
\caption{Learning rate for RS-GAN in each setting. Hyper-parameters used for WGAN-GP  }
\label{Tab:lrsetting}
\end{table}

\fi 

\end{document}